\definecolor{darkblue}{rgb}{0.0,0.0,0.65}
\definecolor{darkred}{rgb}{0.65,0.0,0.0}
\definecolor{darkgreen}{rgb}{0.0,0.5,0.0}
\definecolor{tab:blue}{RGB}{31,119,180}  
\definecolor{tab:red}{RGB}{214,39,40}  
\definecolor{tab:green}{RGB}{44,160,44}  
\definecolor{tab:orange}{RGB}{255,127,14}  
\definecolor{background}{HTML}{EDE7DE}
\theoremstyle{plain}
\newtheorem{theorem}{Theorem}[section]
\newtheorem{proposition}[theorem]{Proposition}
\newtheorem{lemma}[theorem]{Lemma}
\newtheorem{corollary}[theorem]{Corollary}
\theoremstyle{definition}
\newtheorem{definition}[theorem]{Definition}
\newtheorem{assumption}[theorem]{Assumption}
\newtheorem{example}[theorem]{Example}
\theoremstyle{remark}
\def\bignorm#1{\left\lVert#1\right\rVert}
\def\bigabs#1{\left|#1\right|}
\def\bigopen#1{\left(#1\right)}
\def\1{\bm{1}}
\DeclareMathAlphabet{\mathsfit}{\encodingdefault}{\sfdefault}{m}{sl}
\SetMathAlphabet{\mathsfit}{bold}{\encodingdefault}{\sfdefault}{bx}{n}
\def\gB{{\mathcal{B}}}
\def\gG{{\mathcal{G}}}
\def\sN{{\mathbb{N}}}
\def\sR{{\mathbb{R}}}
\newcommand{\Ls}{\mathcal{L}}
\newcommand{\R}{\mathbb{R}}
\DeclareMathOperator*{\argmax}{arg\,max}
\DeclareMathOperator*{\argmin}{arg\,min}
\DeclareMathOperator{\sign}{sign}
\pgfplotsset{compat=1.18}
\setlist{nolistsep}
\setlist{leftmargin=*} 
\def\ba{{\boldsymbol a}}
\def\bdel{{\boldsymbol \delta}}
\def\bu{{\mathbf u}}
\def\bm{{\mathbf m}}
\def\bx{{\mathbf x}}
\def\bb{{\mathbf b}}
\def\bc{{\mathbf c}}
\def\be{{\mathbf e}}
\def\bp{{\mathbf p}}
\def\bP{{\mathbf P}}
\def\bw{{\mathbf w}}
\def\bd{{\mathbf d}}
\def\bv{{\mathbf v}}
\def\bg{{\mathbf g}}
\def\bM{{\mathbf M}}
\DeclareMathOperator{\diag}{diag}
\DeclareMathOperator{\proxy}{Prx}
\crefname{assumption}{Assumption}{Assumptions}
\title{Implicit Bias of Per-sample Adam on Separable Data: Departure from the Full-batch Regime}
\author{Beomhan Baek\thanks{Authors contributed equally to this paper.}~~\thanks{Work done as an undergraduate intern at KAIST.} \\
Seoul National University \\
\texttt{bhbaek2001@snu.ac.kr} \\
\And
Minhak Song\footnotemark[1],\; Chulhee Yun\\
KAIST \\
\texttt{\{minhaksong,chulhee.yun\}@kaist.ac.kr}
}
\begin{document}

\maketitle

\begin{abstract}
Adam~\citep{kingma2014adam} is the de facto optimizer in deep learning, yet its theoretical understanding remains limited. Prior analyses show that Adam favors solutions aligned with $\ell_\infty$-geometry, but these results are restricted to the full-batch regime. In this work, we study the implicit bias of incremental Adam (using one sample per step) for logistic regression on linearly separable data, and show that its bias can deviate from the full-batch behavior. As an extreme example, we construct datasets on which incremental Adam provably converges to the $\ell_2$-max-margin classifier, in contrast to the $\ell_\infty$-max-margin bias of full-batch Adam. For general datasets, we characterize its bias using a proxy algorithm for the $\beta_2 \to 1$ limit. This proxy maximizes a \emph{data-adaptive} Mahalanobis-norm margin, whose associated covariance matrix is determined by a \emph{data-dependent} dual fixed-point formulation. We further present concrete datasets where this bias reduces to the standard $\ell_2$- and $\ell_\infty$-max-margin classifiers. As a counterpoint, we prove that Signum~\citep{bernstein2018sign} converges to the $\ell_\infty$-max-margin classifier for any batch size. Overall, our results highlight that the implicit bias of Adam crucially depends on both the batching scheme and the dataset, while Signum remains invariant.
\end{abstract}

\section{Introduction}

The \emph{implicit bias} of optimization algorithms plays a crucial role in training deep neural networks~\citep{vardi2023on}. Even without explicit regularization, these algorithms steer learning toward solutions with specific structural properties. In over-parameterized models, where the training data can be perfectly classified and many global minima exist, the implicit bias dictates which solutions are selected. Understanding this phenomenon has become central to explaining why over-parameterized models often generalize well despite their ability to fit arbitrary labels~\citep{zhang2017understanding}.

A canonical setting for studying implicit bias is linear classification on separable data with logistic loss. In this setup, achieving zero training loss requires the model’s weights to diverge to infinity, making the \emph{directional convergence}---which defines the decision boundary---the key object of study. Seminal work by \citet{soudry2018the} establishes that gradient descent (GD) directionally converges to the $\ell_2$-max-margin solution. This foundational result has inspired extensive research extending the analysis to neural networks, alternative optimizers, and other loss functions~\citep{gunasekar2018implicit,ji2018gradient,ji2020directional,lyu2020gradient,chizat2020implicit,yun2021a}. In this work, we revisit the simplest setting---linear classification on separable data---to examine how the choice of optimizer shapes implicit bias.

Among modern optimization algorithms, Adam~\citep{kingma2014adam} is one of the most widely used, making its implicit bias particularly important to understand. \citet{zhang2024the} show that, unlike GD, full-batch Adam converges in direction to the $\ell_\infty$-max-margin solution. This behavior is closely related to sign gradient descent (SignGD), which can be interpreted as normalized steepest descent in the $\ell_\infty$-norm and is also known to converge to the $\ell_\infty$-max-margin direction~\citep{gunasekar2018characterizing,fan2025implicitbiasspectraldescent}. \citet{xie2025adam} further attribute Adam's empirical success in language model training to its ability to exploit the favorable $\ell_\infty$-geometry of the loss landscape.

Yet, prior work on implicit bias in linear classification has almost exclusively focused on the full-batch setting. In contrast, modern training relies on stochastic mini-batches, a regime where theoretical understanding remains limited. Notably, \citet{nacson2019stochastic} show that SGD with an arbitrary batch size preserves the same $\ell_2$-max-margin bias as GD, suggesting that mini-batching may not alter an optimizer’s implicit bias. But does this extend to adaptive methods such as Adam? 

\begin{center}
\emph{Does Adam’s characteristic $\ell_\infty$-bias persist under the mini-batch setting?}
\end{center}

Perhaps surprisingly, we find that the answer is \emph{no}. Our experiments (\cref{fig:gaussian_cossim}) illustrate that when trained on Gaussian data, full-batch Adam converges to the $\ell_\infty$-max-margin direction, whereas Adam variants with a batch size of $1$ converge to a different direction, which is even closer to the $\ell_2$-max-margin solution. To explain this phenomenon, we develop a theoretical framework for analyzing the implicit bias of mini-batch Adam and focus on the batch size $1$ case as a representative contrast to the full-batch regime. To the best of our knowledge, this work provides the first theoretical evidence describing the \emph{data-dependent} characterization of the implicit bias of Adam with a batch size of $1$.

\begin{figure}[t]
    \vspace{-10pt}
    \centering
    \includegraphics[width=\linewidth]{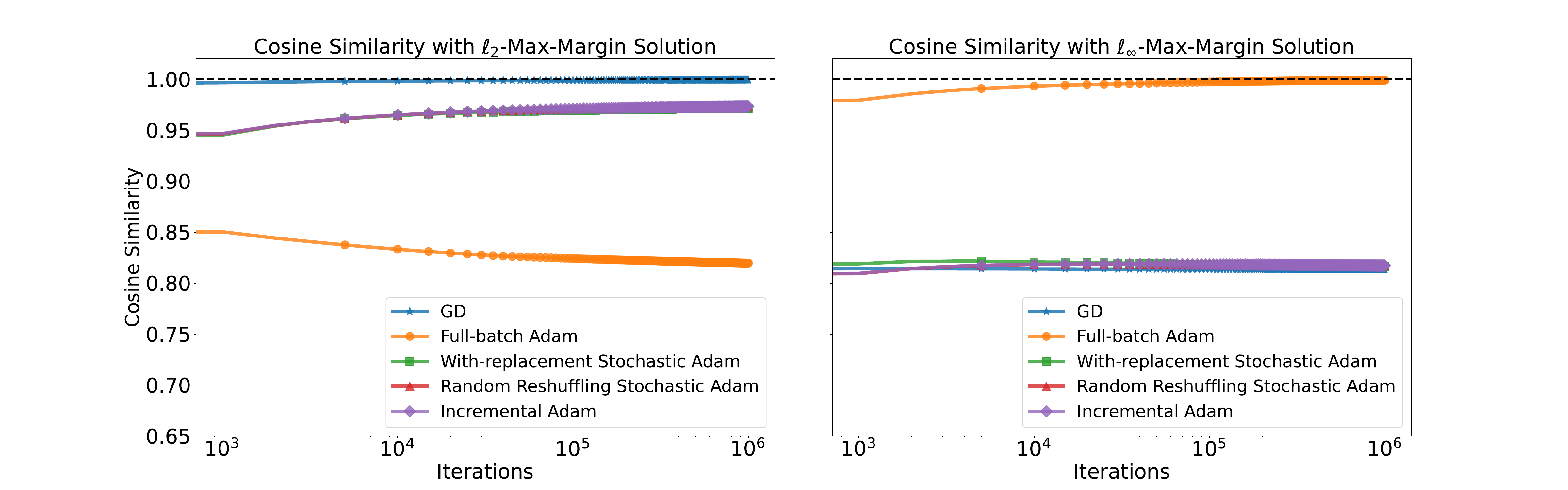}
    \vspace{-15pt}
    \caption{\textbf{Mini-batch Adam loses the $\ell_\infty$-max-margin bias of full-batch Adam.} Cosine similarity between the weight vector and the $\ell_2$-max-margin (left) and $\ell_\infty$-max-margin (right) solutions in a linear classification task on $10$ data points drawn from the $50$-dimensional standard Gaussian. Full-batch Adam with $(\beta_1, \beta_2)=(0.9, 0.95)$ converges to the $\ell_\infty$-max-margin solution, whereas mini-batch variants with a batch size of $1$ converge to a different direction (see \cref{sec:generalization} for the detailed characterization). See \cref{appendix:exp_detail} for experimental details.}
    \label{fig:gaussian_cossim}
    \vspace{-5pt}
\end{figure}

Our contributions are summarized as follows:
\begin{itemize}
    \item We analyze \emph{incremental Adam}, which processes one sample per step in a cyclic order. Despite its momentum-based updates, we show that its epoch-wise dynamics can be approximated by a recurrence depending only on the current iterate, which becomes a key tool in our analysis (see \cref{sec:approx_Adam}).
    \item We demonstrate a sharp contrast between full-batch and mini-batch Adam using a family of structured datasets, \emph{Scaled Rademacher (SR) data}. On SR data, we prove that incremental Adam converges to the $\ell_2$-max-margin solution, while full-batch Adam converges to the $\ell_\infty$-max-margin solution (see \cref{sec:structured}).
    \item For general dataset, we introduce a \emph{uniform-averaging proxy} that characterizes the limiting behavior of incremental Adam as $\beta_2 \to 1$. We identify its convergence direction as the solution of a \emph{data-adaptive} margin-maximization problem, induced by a Mahalanobis norm whose covariance matrix is determined by a \emph{data-dependent} dual fixed-point equation. We further present concrete datasets where this bias reduces to the standard $\ell_2$- and $\ell_\infty$-max-margin classifiers (see \cref{sec:generalization}).
    \item Finally, we prove that Signum (SignSGD with momentum; \citet{bernstein2018sign}), unlike Adam, maintains its bias toward the $\ell_\infty$-max-margin solution for \emph{any} batch size when the momentum parameter is sufficiently close to $1$ (see \cref{sec:signum}).
\end{itemize}

\section{How Can We Approximate Without-replacement Adam?} \label{sec:approx_Adam}

\paragraph{Notation.}
For a vector $\bv$, let $\bv[k]$ denote its $k$-th entry. For a matrix $\bM$, let $\bM[i, j]$ denote its $(i, j)$-th entry. We use $\Delta^{N-1}$ to denote the probability simplex in $\sR^N$. Let $[N]=\{0, 1, \cdots, N-1\}$ denote the set of the first $N$ non-negative integers. For a PSD matrix $\bM$, define the Mahalanobis norm as $\|\bx\|_\bM \triangleq \sqrt{\bx^\top \bM \bx}$. For vectors, $\sqrt{\cdot}$, $(\cdot)^2$, and $\frac{\cdot}{\cdot}$ operations are applied entry-wise unless stated otherwise. Given two functions $f(t), g(t)$, we denote $f(t) = \mathcal{O}(g(t))$ if there exist $C, T>0$ such that $t \geq T$ implies $|f(t)| \leq C|g(t)|$. For two vectors $\bv$ and $\bw$, we denote $\bv \propto \bw$ if $\bv = c\cdot\bw$ for a \textit{positive} scalar $c>0$. Let $r=a \bmod b$ denote the remainder when dividing $a$ by $b$, i.e., $0\leq r<b$.

\vspace{-3pt}
\paragraph{Algorithms.}
\begin{figure}[t]
\vspace{-10pt}
    \centering
    \begin{minipage}{0.49\textwidth}
        \begin{algorithm}[H]
        \caption{\texttt{Det-Adam}}
        \label{alg:det_adam}
        \begin{algorithmic}[1]
        \renewcommand{\algorithmicrequire}{\textbf{Hyperparams:}}
        \renewcommand{\algorithmicensure}{\textbf{Input:}}
        \REQUIRE Learning rate schedule $\{\eta_t\}_{t=0}^{T-1}$, momentum parameters $\beta_1, \beta_2 \in [0, 1)$
        \ENSURE Initial weight $\bw_0$, dataset $\{\bx_i\}_{i\in [N]}$
        \STATE Initialize momentum $\bm_{-1}=\bv_{-1}=\mathbf{0}$
        \FOR{$t = 0, 1, 2, \dots, T-1$}
            \STATE $\bg_t \leftarrow \nabla \Ls(\bw_{t})$
            \STATE $\bm_t \leftarrow \beta_1 \bm_{t-1} + (1-\beta_1) \bg_{t}$
            \STATE $\bv_{t} \leftarrow \beta_2 \bv_{t-1} + (1-\beta_2) 
            \bg_{t}^2$
            \STATE $\bw_{t+1} \leftarrow \bw_{t} - \eta_t \frac{\bm_{t}}{\sqrt{\bv_{t}}}$
        \ENDFOR
        \STATE \textbf{return} $\mathbf{w}_T$
        \end{algorithmic}
        \end{algorithm}
    \end{minipage}
    \hfill
    \begin{minipage}{0.49\textwidth}
        \begin{algorithm}[H]
        \caption{\texttt{Inc-Adam}}
        \label{alg:inc_adam}
        \begin{algorithmic}[1]
        \renewcommand{\algorithmicrequire}{\textbf{Hyperparams:}}
        \renewcommand{\algorithmicensure}{\textbf{Input:}}
        \REQUIRE Learning rate schedule $\{\eta_t\}_{t=0}^{T-1}$, momentum parameters $\beta_1, \beta_2 \in [0, 1)$
        \ENSURE Initial weight $\bw_0$, dataset $\{\bx_i\}_{i\in [N]}$
        \STATE Initialize momentum $\bm_{-1}=\bv_{-1}=\mathbf{0}$
        \FOR{$t = 0, 1, 2, \dots, T-1$}
            \STATE $\bg_t \leftarrow \nabla \Ls_{i_t}(\bw_{t}),\quad i_t = t \bmod N$
            \STATE $\bm_t \leftarrow \beta_1 \bm_{t-1} + (1-\beta_1) \bg_{t}$
            \STATE $\bv_{t} \leftarrow \beta_2 \bv_{t-1} + (1-\beta_2) 
            \bg_{t}^2$
            \STATE $\bw_{t+1} \leftarrow \bw_{t} - \eta_t \frac{\bm_{t}}{\sqrt{\bv_{t}}}$
        \ENDFOR
        \STATE \textbf{return} $\mathbf{w}_T$
        \end{algorithmic}
        \end{algorithm}
    \end{minipage}
    \vspace{-10pt}
\end{figure}

We focus on incremental Adam (\texttt{Inc-Adam}), which processes mini-batch gradients sequentially from indices $0$ to $N-1$ in each epoch. Studying \texttt{Inc-Adam} provides a tractable way to understand the implicit bias of mini-batch Adam under various sampling schemes: our experiments show that \texttt{Inc-Adam}'s directional convergence closely aligns with that of Adam with a batch size of $1$ under both sampling with replacement and random-reshuffling. Sharing the same mini-batch accumulation mechanism, \texttt{Inc-Adam} serves as a faithful surrogate for theoretical analysis. Pseudocodes for \texttt{Inc-Adam} and full-batch deterministic Adam (\texttt{Det-Adam}) are given in \cref{alg:det_adam,alg:inc_adam}.

\vspace{-5pt}
\paragraph{Stability Constant $\epsilon$.~}  In practice, we often consider an additional $\epsilon$ term for numerical stability and update with $\bw_{t+1}=\bw_t-\eta_t\frac{\bm_t}{\sqrt{\bv_t+\epsilon}}$. In fact, when investigating the asymptotic behavior of Adam, the stability constant significantly affects the converging direction, since $\bv_t\rightarrow 0$ as $t\rightarrow \infty$ and $\epsilon$ dominates $\bv_t$. \citet{wang2021the} investigate RMSprop and Adam with the stability constant, yielding their directional convergence to $\ell_2$-max-margin solution. More recent approaches, however, point out that analyzing Adam without the stability constant is more suitable for describing its intrinsic behavior \citep{xie2024implicit,zhang2024the,fan2025implicitbiasspectraldescent}. We adopt this view and consider the version of Adam without $\epsilon$.

\vspace{-5pt}
\paragraph{Problem Settings.}
We primarily focus on binary linear classification tasks. To be specific, training data are given by $\{(\bx_i, y_i)\}_{i \in [N]}$, where $\bx_i \in \sR^d$, $y_i \in \{-1, +1\}$, and $N$ is the number of data points. We aim to find a linear classifier $\bw$ which minimizes the loss
\begin{align*}
    \Ls(\bw) = \frac{1}{N}\sum_{i \in [N]} \ell(y_i \langle\bw,  \bx_i\rangle)=\frac{1}{N}\sum_{i \in [N]}\Ls_i(\bw),
\end{align*}
where $\ell:\sR \rightarrow \sR$ is a surrogate loss for classification accuracy and $\Ls_i(\bw)=\ell(y_i \langle \bw, \bx_i\rangle)$ denotes the loss value on the $i$-th data point. Without loss of generality, we assume $y_i=+1$, since we can newly define $\tilde
{\bx}_i=y_i \bx_i$. In this paper, we consider two loss functions $\ell \in \{\ell_{\text{exp}}, \ell_{\text{log}}\}$, where $\ell_{\text{exp}}(z)=\exp(-z)$ denotes the exponential loss and $\ell_{\text{log}}(z) = \log(1+e^{-z})$ denotes the logistic loss. Given a sequence of vectors $\{\bv_t\}_{t=0}^\infty$, for notational simplicity, let $\bv_r^s \triangleq \bv_{rN+s}$ denote the $s$-th element in the $r$-th group of $N$ consecutive terms.

To investigate the implicit bias of Adam variants, we make the following assumptions.

\begin{assumption}[Separable data]\label{ass:linsep}
    There exists $\bw\in\R^d$ such that $\bw^\top\bx_i>0, \; \forall i\in[N]$.
\end{assumption}

\begin{assumption}[Nonzero coordinates]\label{ass:nonzero}
    For all $i \in [N]$ and $k \in [d]$, $\bx_i[k] \neq 0$.
\end{assumption}

\begin{assumption}[Learning rate schedule]\label{ass:lr}
The sequence of learning rates, $\{\eta_t\}_{t=1}^\infty$, satisfies
\begin{enumerate}[label=(\alph*)]
    \item $\{\eta_t\}_{t=1}^\infty$ is decreasing in $t$, $\sum_{t=1}^\infty \eta_t = \infty$, and $\lim_{t \rightarrow \infty} \eta_t = 0$.
    \item For all $\beta \in (0,1), c_1>0$, there exist $t_1 \in \sN_+, c_2 > 0$ such that $\sum_{\tau=0}^t\beta^\tau(e^{c_1\sum_{\tau'=1}^\tau \eta_{t - \tau'}}-1) \leq c_2 \eta_t$ for all $t \geq t_1$.
\end{enumerate}
\end{assumption}

\cref{ass:linsep} guarantees linear separability of the data. \cref{ass:nonzero} holds with probability $1$ if the data is sampled from a continuous distribution. \cref{ass:lr} originates from \citet{zhang2024the} and it takes a crucial role to bound the error from the movement of weights. We note that a polynomial decaying learning rate schedule $\eta_t = (t+2)^{-a}, a \in (0, 1]$ satisfies \cref{ass:lr}, which is proved by Lemma C.1 in \citet{zhang2024the}.

The dependence of the Adam update on the full gradient history makes its asymptotic analysis largely intractable. We address this challenge with the following propositions, which show that the \textit{epoch-wise} updates of \texttt{Inc-Adam} and the updates of \texttt{Det-Adam} can be approximated by a function that depends only on the current iterate. Detailed proofs are deferred to \cref{appendix:approx_Adam}.

\begin{restatable}{proposition}{approxDeterAdam}\label{prop:approxDeterAdam}
    Let $\{\bw_t\}_{t=0}^\infty$ be the iterates of \textnormal{\texttt{Det-Adam}} with $\beta_1 \leq \beta_2$. Then, under \cref{ass:nonzero,ass:lr}, if $\lim_{t\rightarrow\infty}\frac{\eta_t^{1/2}\Ls(\bw_t)}{|\nabla\Ls(\bw_t)[k]|}=0$, then the update of $k$-th coordinate $\bw_{t+1}[k]-\bw_t[k]$ can be represented by
    \begin{align}
        \bw_{t+1}[k]-\bw_t[k]=-\eta_t \bigopen{\sign(\nabla \Ls(\bw_t)[k])+\epsilon_t},
    \end{align}
    for some $\lim_{t\rightarrow\infty} \epsilon_t=0$.
\end{restatable}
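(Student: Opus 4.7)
The plan is to expose each coordinate of $\bm_t$ and $\bv_t$ as a ``frozen'' term depending only on $\bg_t[k]$ plus an error from gradient drift over past steps, and then show the drift is negligible relative to the frozen term. Unrolling the exponential moving averages and writing $\bg_\tau[k] = \bg_t[k] + \Delta_{\tau,t}[k]$ yields
\begin{align*}
\bm_t[k] &= \bg_t[k]\,(1-\beta_1^{t+1}) + R_1, \\
\bv_t[k] &= \bg_t[k]^2\,(1-\beta_2^{t+1}) + 2\bg_t[k]\,R_2 + R_3,
\end{align*}
where $R_1$ and $R_2$ are the exponentially weighted sums of $\Delta_{\tau,t}[k]$ with weights $(1-\beta_j)\beta_j^{t-\tau}$ for $j=1,2$, and $R_3$ is the analogous weighted sum of $\Delta_{\tau,t}[k]^2$.

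To bound the drift, I would first prove a uniform $\ell_\infty$ bound on Adam's normalized update, $|\bm_s[k']/\sqrt{\bv_s[k']}| \leq C$, by weighted Cauchy--Schwarz on $\bm_s[k']$ with weights $\beta_1^{s-\tau}/\sqrt{\beta_2^{s-\tau}}$; summability of the resulting geometric series requires $\beta_1^2 < \beta_2$, which is implied by $\beta_1 \leq \beta_2 < 1$. This gives $\|\bw_\tau - \bw_t\|_\infty \leq C A_{\tau,t}$ with $A_{\tau,t} := \sum_{s=\tau}^{t-1}\eta_s$. For exp/log loss one further obtains $\Ls(\bw_\tau) \leq \Ls(\bw_t)\,e^{C X A_{\tau,t}}$ with $X := \max_i \|\bx_i\|_1$ (directly from the multiplicative form of $\exp$, extended to $\log(1+e^{-z})$ via the inequality $\log(1+cu) \leq c\log(1+u)$ for $c\geq 1$, $u\geq 0$). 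Applying the mean value theorem to $\nabla\Ls(\cdot)[k]$ together with the Hessian row bound $\|[\nabla^2\Ls(\bw)]_{k,\cdot}\|_1 \leq \|\bx_\cdot[k]\|_\infty X\,\Ls(\bw)$ then gives
\[
|\Delta_{\tau,t}[k]| \leq C_k\,\Ls(\bw_t)\,A_{\tau,t}\,e^{C X A_{\tau,t}}.
\]

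The crux is to collapse the weighted drift sums into order $\eta_t$. Using the elementary inequalities $xe^x \leq e^{2x}-1$ and $(e^x-1)^2 \leq e^{2x}-1$, both the drift bound and its square take exactly the shape $\beta_j^\tau (e^{c_1 \sum \eta}-1)$ controlled by \cref{ass:lr}(b). Invoking that assumption with $c_1 = 2CX$ for $R_1,R_2$ and $c_1 = 4CX$ for $R_3$ yields $|R_1|, |R_2| = \gO(\eta_t \Ls(\bw_t))$ and $|R_3| = \gO(\eta_t \Ls(\bw_t)^2)$. The hypothesis $\eta_t^{1/2}\Ls(\bw_t)/|\bg_t[k]| \to 0$ then makes the three errors $o(|\bg_t[k]|)$, $o(|\bg_t[k]|)$, and $o(\bg_t[k]^2)$ respectively, so
\[
\frac{\bm_t[k]}{\sqrt{\bv_t[k]}} = \sign(\bg_t[k])\,\frac{(1-\beta_1^{t+1})+o(1)}{\sqrt{(1-\beta_2^{t+1})+o(1)}} \;\longrightarrow\; \sign(\bg_t[k]),
\]
which rearranges into the stated expression with some $\epsilon_t \to 0$. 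The main obstacle is this last step: the factor $A_{\tau,t} e^{C X A_{\tau,t}}$ can grow rapidly with $t-\tau$, and one must carefully balance it against the exponentially decaying weights $\beta_j^{t-\tau}$. \cref{ass:lr}(b) is engineered precisely for sums of this form, and the one-line reduction $x e^x \leq e^{2x}-1$ is the key algebraic move that brings the Taylor-remainder factor into the exact shape that the assumption controls.
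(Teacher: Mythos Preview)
Your proposal is correct and follows essentially the same strategy as the paper: approximate $\bm_t[k]$ and $\bv_t[k]$ by the ``frozen'' quantities $(1-\beta_1^{t+1})\nabla\Ls(\bw_t)[k]$ and $(1-\beta_2^{t+1})\nabla\Ls(\bw_t)[k]^2$, bound the drift errors using the uniform $\ell_\infty$ update bound (the paper's \cref{lemma:update_bound}) together with \cref{ass:lr}(b), and then divide. The paper's proof is much shorter because it outsources the momentum approximation step to Lemma~6.1 of \citet{zhang2024the}, which delivers exactly the bounds $|\bm_t[k]-(1-\beta_1^{t+1})\nabla\Ls(\bw_t)[k]|\leq c_m\eta_t\gG(\bw_t)$ and $|\sqrt{\bv_t[k]}-\sqrt{1-\beta_2^{t+1}}|\nabla\Ls(\bw_t)[k]||\leq c_v\sqrt{\eta_t}\gG(\bw_t)$. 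One minor technical difference worth noting: you reach the shape required by \cref{ass:lr}(b) via the mean value theorem plus the reduction $xe^x\leq e^{2x}-1$, whereas the paper's machinery (cf.\ \cref{lemma:approx_momentums} and \cref{lemma:losses_prop}) uses the ratio bound $|\ell'(z_1)/\ell'(z_2)-1|\leq e^{|z_1-z_2|}-1$ directly, which already has the right multiplicative form and avoids the extra algebraic step. Both routes work; the ratio bound is slightly cleaner, while your approach is more self-contained.
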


\begin{restatable}{proposition}{approxAdam}\label{prop:approxAdam}
    Let $\{\bw_t\}_{t=0}^\infty$ be the iterates of \textnormal{\texttt{Inc-Adam}} with $\beta_1 \leq \beta_2$. Then, under \cref{ass:nonzero,ass:lr}, the epoch-wise update $\bw_{r+1}^0-\bw_{r}^0$ can be represented by
    \begin{align}\label{eq:approxIncAdam}
        \bw_{r+1}^0-\bw_{r}^0 = -\eta_{rN}\bigopen{C_{\text{inc}}(\beta_1, \beta_2)\sum_{i \in [N]}\frac{\sum_{j \in [N]}\beta_1^{(i, j)}\nabla\Ls_{j}(\bw_r^0)}{\sqrt{\sum_{j \in [N]}\beta_2^{(i, j)}\nabla\Ls_{j}(\bw_r^0)^2}} + \boldsymbol{\epsilon}_r},
    \end{align}
    where $\beta_1^{(i, j)} = \beta_1^{(i-j)\bmod N}, \beta_2^{(i, j)} = \beta_2^{(i-j)\bmod N}$, $C_{\text{inc}}(\beta_1, \beta_2)=\frac{1-\beta_1}{1-\beta_1^N}\sqrt{\frac{1-\beta_2^N}{1-\beta_2}}$ is a function of $\beta_1, \beta_2$, and $\lim_{r\rightarrow \infty} 
    \boldsymbol{\epsilon}_r = \mathbf{0}$. If $\eta_t = (t+2)^{-a}$ for some $a \in (0, 1]$, then $\|\boldsymbol{\epsilon}_r\|_\infty = \mathcal{O}(r^{-a/2})$.
\end{restatable}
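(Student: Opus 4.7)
My plan is to unroll $\bm_t$ and $\bv_t$ as exponentially weighted sums of past per-sample gradients, freeze every such gradient at its value at the epoch-start iterate $\bw_r^0$, and collapse the resulting geometric series in the epoch index. At $t=rN+i$,
\begin{align*}
\bm_t=(1-\beta_1)\sum_{s=0}^{t}\beta_1^{t-s}\nabla\Ls_{s\bmod N}(\bw_s),\qquad\bv_t=(1-\beta_2)\sum_{s=0}^{t}\beta_2^{t-s}\nabla\Ls_{s\bmod N}(\bw_s)^2.
\end{align*}
Re-indexing $s=r'N+j$ with $j\in[N]$, the coefficient on $\nabla\Ls_j$ from epoch $r'\le r$ (with the constraint $j\le i$ when $r'=r$) is $\beta_1^{(r-r')N+(i-j)}$. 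Summing the $r'$-geometric series and letting $r\to\infty$ consolidates the two cases (current and past epochs) into $\tfrac{1-\beta_1}{1-\beta_1^N}\beta_1^{(i-j)\bmod N}$, and analogously $\tfrac{1-\beta_2}{1-\beta_2^N}\beta_2^{(i-j)\bmod N}$ for $\bv$. Dividing, summing over $i\in[N]$, and pulling $\eta_{rN}$ out front then yields the stated formula with the prefactor $C_{\text{inc}}(\beta_1,\beta_2)=\tfrac{1-\beta_1}{1-\beta_1^N}\sqrt{\tfrac{1-\beta_2^N}{1-\beta_2}}$.

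The bulk of the work is quantifying the error from the gradient freezing. Because $\beta_1\le\beta_2$ forces $\bm_\tau/\sqrt{\bv_\tau}$ to have coordinate-wise bounded entries, each per-step displacement is $O(\eta_\tau)$, so $\|\bw_{r'N+j}-\bw_r^0\|=O\bigl(\sum_{\tau=r'N}^{rN-1}\eta_\tau\bigr)$. For exponential and logistic losses this translates into a \emph{multiplicative} gradient perturbation of the form $\exp(\mathcal{O}(\sum\eta))-1$; weighting by $\beta^{(r-r')N}$ and summing across past epochs, \cref{ass:lr}(b) directly delivers a relative error of $O(\eta_{rN})$ in both $\bm_{rN+i}$ and $\bv_{rN+i}$. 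I would then push this through the $\sqrt{\bv}$ denominator via $\sqrt{\bv^*+\Delta}=\sqrt{\bv^*}(1+O(\Delta/\bv^*))$, absorb the intra-epoch variation $\eta_{rN+i}-\eta_{rN}$ into the residual using the slow decay of the schedule, and conclude. For $\eta_t=(t+2)^{-a}$, the cumulative bound gives $O(\eta_{rN})=O(r^{-a})$ errors in the momentum terms, and the $\sqrt{\bv}$ amplification turns this into the claimed $\|\boldsymbol{\epsilon}_r\|_\infty=\mathcal{O}(r^{-a/2})$ bound.

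The main obstacle I anticipate is controlling the $\sqrt{\bv_t}$ step in the separable regime, where both the leading term $\bv^*$ and the drift perturbation $\Delta$ tend to zero as $r\to\infty$: a valid Taylor expansion requires $\Delta/\bv^*$ to stay small, not merely $\Delta$. Establishing this reduces to showing that the instantaneous contribution of $\nabla\Ls_j(\bw_r^0)^2$ dominates the cumulative drift-induced perturbation, which is precisely what the relative-error form of \cref{ass:lr}(b) is designed to supply. Once this relative bound is in hand, the rest is routine bookkeeping of geometric sums and Taylor remainders, paralleling the argument for \cref{prop:approxDeterAdam} but with cyclic per-sample indexing replacing the stationary full-batch case.
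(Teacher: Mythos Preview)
Your proposal is correct and mirrors the paper's proof: unroll the momenta, freeze all past per-sample gradients at $\bw_r^0$, and control the freezing error multiplicatively via the bounded-update lemma (a consequence of $\beta_1\le\beta_2$) together with \cref{ass:lr}(b); the paper organizes the error as (A) drift of weights across the full history, (B) intra-epoch drift $\bw_t\mapsto\bw_r^0$, and (C) the $\beta^{t+1}$ tail from extending to an infinite geometric sum, but your scheme is equivalent. One minor inconsistency worth flagging: the Taylor step you propose, $\sqrt{\bv^*+\Delta}=\sqrt{\bv^*}(1+O(\Delta/\bv^*))$ with $\Delta/\bv^*=O(\eta_{rN})$, would actually deliver $\|\boldsymbol\epsilon_r\|_\infty=O(r^{-a})$, sharper than the claimed $O(r^{-a/2})$; the paper instead uses the cruder $|\sqrt{a}-\sqrt{b}|\le\sqrt{|a-b|}$, which is where the square-root loss originates, so your remark about a ``$\sqrt{\bv}$ amplification'' to $r^{-a/2}$ is not consistent with your own Taylor route (either argument establishes the stated bound).
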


\vspace{-3pt}
\paragraph{Discrepancy between \texttt{Det-Adam} and \texttt{Inc-Adam}.} \cref{prop:approxDeterAdam,prop:approxAdam} reveal a fundamental discrepancy between the behavior of \texttt{Det-Adam} and one of $\texttt{Inc-Adam}$. \cref{prop:approxDeterAdam} demonstrates that \texttt{Det-Adam} can be approximated by SignGD, which has been reported by previous works \citep{balles2018dissecting,zou2023understanding}. Note that the condition is not satisfied when $\nabla \Ls(\bw_t)[k]$ decays at a rate on the order of $\eta_t^{1/2} \Ls(\bw_t)$, which often calls for a more detailed analysis (see \citet[Lemma 6.2]{zhang2024the}). Such an analysis establishes that \texttt{Det-Adam} asymptotically finds an $\ell_\infty$-max-margin solution, a property that holds regardless of the choice of momentum hyperparameters satisfying $\beta_1 \leq \beta_2$ \citep{zhang2024the}.

In stark contrast, our epoch-wise analysis illustrates that \texttt{Inc-Adam}'s updates more closely follow a weighted, preconditioned GD. This makes its behavior highly dependent on both the momentum parameters and the current iterate. The discrepancy originates from the use of mini-batch gradients; the preconditioner tracks the sum of squared mini-batch gradients, which diverges from the squared full-batch gradient. This discrepancy results in the highly complex dynamics of \texttt{Inc-Adam}, which are investigated in subsequent sections.

\section{Warmup: Structured Data}\label{sec:structured}

\paragraph{Eliminating Coordinate-Adaptivity.} 
To highlight the fundamental discrepancy between \texttt{Det-Adam} and \texttt{Inc-Adam}, we construct a scenario that completely nullifies the coordinate-wise adaptivity of \texttt{Inc-Adam}'s preconditioner by introducing the following family of structured datasets.

\begin{definition}
    We define \textit{Scaled Rademacher (SR) data} as a set of vectors $\{\bx_i\}_{i \in [N]}$ which satisfy $|\bx_i [k]| = |\bx_i [l]|, \forall k, l \in [d]$, for each $i \in [N]$. We also assume that SR data satisfy \cref{ass:nonzero,ass:linsep}, unless otherwise specified.
\end{definition}

Applying \cref{prop:approxAdam} to the SR dataset, we obtain the following corollary.

\begin{restatable}{corollary}{approxGRdata}\label{cor:approxGRdata}
    Consider \textnormal{\texttt{Inc-Adam}} iterates $\{\bw_t\}_{t=0}^\infty$ on SR data. Then, under \cref{ass:nonzero,ass:lr}, the epoch-wise update $\bw_{r+1}^0 - \bw_r^0$ can be approximated by weighted normalized GD, i.e., 
    \begin{align}\label{eq:GRdata}
        \bw_{r+1}^0-\bw_{r}^0 = -\eta_{rN} \bigopen{\sum_{i \in [N]}\frac{a_i(r)}{\|\nabla\Ls(\bw_r^0)\|_2}\nabla\Ls_i(\bw_r^0)+ \boldsymbol{\epsilon}_r},
    \end{align}
    where $\lim_{r \rightarrow \infty}\boldsymbol{\epsilon}_r = \mathbf{0}$ and $c_1 \leq a_i(r) \leq c_2$ for some positive constants $c_1, c_2$ only depending on $\beta_1, \beta_2, \{\bx_i\}_{i \in [N]}$. If $\eta_t = (t+2)^{-a}$ for some $a \in (0, 1]$, then $\|\boldsymbol{\epsilon}_r\|_\infty = \mathcal{O}(r^{-a/2})$.
\end{restatable}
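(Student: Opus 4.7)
The plan is to specialize Proposition~\ref{prop:approxAdam} to GR data by exploiting the fact that, for GR inputs, the entry-wise square $\bx_j^2$ is a constant multiple of the all-ones vector. First I would write $\nabla\Ls_j(\bw)=\ell'(\langle\bw,\bx_j\rangle)\bx_j$, so that $\nabla\Ls_j(\bw)^2=\ell'(\langle\bw,\bx_j\rangle)^2\bx_j^2=\ell'(\langle\bw,\bx_j\rangle)^2 |\bx_j[0]|^2\cdot\mathbf{1}$. Consequently,
$$\sqrt{\sum_{j\in [N]}\beta_2^{(i,j)}\nabla\Ls_j(\bw_r^0)^2}=\sqrt{\alpha_i(r)}\cdot\mathbf{1},\qquad \alpha_i(r)\triangleq \sum_{j\in [N]}\beta_2^{(i,j)}\ell'(\langle\bw_r^0,\bx_j\rangle)^2 |\bx_j[0]|^2,$$
is a scalar-times-$\mathbf{1}$ quantity. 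Substituting into \eqref{eq:approxIncAdam} and swapping the order of summation rewrites the main term as $C_{\text{inc}}(\beta_1,\beta_2)\sum_{j\in[N]} b_j(r)\,\nabla\Ls_j(\bw_r^0)$, where $b_j(r)\triangleq \sum_{i\in [N]}\beta_1^{(i,j)}/\sqrt{\alpha_i(r)}$.

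Next I would show that $b_j(r)\cdot\|\nabla\Ls(\bw_r^0)\|_2$ is bounded above and below by positive constants uniformly in $j$ and $r$. This follows from two observations. (i) The $\alpha_i(r)$'s are comparable across $i$: since $\beta_2^{(i,j)}\in[\beta_2^{N-1},1]$, we have $\alpha_i(r)/\alpha_k(r)\in[\beta_2^{N-1},\beta_2^{-(N-1)}]$ uniformly in $r$. (ii) $\sqrt{\alpha_i(r)}\asymp\|\nabla\Ls(\bw_r^0)\|_2$: the upper bound comes from $\|\nabla\Ls(\bw_r^0)\|_2\leq \tfrac{1}{N}\sum_j |\ell'_j|\cdot\|\bx_j\|_2\leq C\sqrt{\sum_j(\ell'_j)^2}$ via Cauchy--Schwarz (writing $\ell'_j$ for $\ell'(\langle\bw_r^0,\bx_j\rangle)$), while the lower bound uses the separator $\bw^*$ from Assumption~\ref{ass:linsep}:
$$\|\nabla\Ls(\bw_r^0)\|_2\geq \frac{\langle \bw^*,-\nabla\Ls(\bw_r^0)\rangle}{\|\bw^*\|_2}\geq \frac{\min_j \langle \bw^*,\bx_j\rangle}{N\|\bw^*\|_2}\sum_j|\ell'_j|\geq c\sqrt{\sum_j(\ell'_j)^2},$$
where the last step uses $\sum_j|\ell'_j|\geq \max_j|\ell'_j|\geq N^{-1/2}\sqrt{\sum_j(\ell'_j)^2}$. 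Combined with the fact that the entries $|\bx_j[0]|^2$ are nonzero and bounded (Assumption~\ref{ass:nonzero}), this gives $\sqrt{\alpha_i(r)}\asymp\|\nabla\Ls(\bw_r^0)\|_2$.

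Finally, observing that $\sum_{i\in[N]}\beta_1^{(i,j)}=(1-\beta_1^N)/(1-\beta_1)$ is a strictly positive constant \emph{independent of $j$}, the two bounds above give $b_j(r)\asymp 1/\|\nabla\Ls(\bw_r^0)\|_2$ uniformly in $j,r$. Setting $a_j(r)\triangleq C_{\text{inc}}(\beta_1,\beta_2)\cdot b_j(r)\cdot \|\nabla\Ls(\bw_r^0)\|_2$, we obtain positive constants $c_1,c_2$ depending only on $\beta_1,\beta_2,\{\bx_i\}_{i\in[N]}$ with $c_1\leq a_j(r)\leq c_2$, and the main term takes exactly the form $\frac{\sum_i a_i(r)\nabla\Ls_i(\bw_r^0)}{\|\nabla\Ls(\bw_r^0)\|_2}$ required in \eqref{eq:GRdata}. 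Since the rewriting of the main term is algebraic, the error $\boldsymbol{\epsilon}_r$ and its rate $\mathcal{O}(r^{-a/2})$ are inherited directly from Proposition~\ref{prop:approxAdam}. The main delicacy is the two-sided comparison in step (ii): the lower bound relies crucially on linear separability, because without it the cancellation in $\sum_j \ell'_j \bx_j$ could make $\|\nabla\Ls\|_2$ asymptotically smaller than $\sqrt{\sum_j(\ell'_j)^2}$ as the $\ell'_j$'s spread out during training.
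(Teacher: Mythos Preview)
Your proposal is correct and matches the paper's approach: both exploit the GR property to collapse the per-coordinate denominator in Proposition~\ref{prop:approxAdam} to a scalar, swap the summation order, and bound the resulting weights above via Cauchy--Schwarz and below via the separability-based lower bound $\|\nabla\Ls(\bw)\|_2\geq\tfrac{\gamma}{N}\sum_j|\ell'_j|$. The only cosmetic difference is that you explicitly fold the constant $C_{\text{inc}}(\beta_1,\beta_2)$ into $a_j(r)$, whereas the paper leaves it outside.
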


Although using a structured dataset simplifies the denominator in \cref{eq:approxIncAdam}, the dynamics are still governed by weighted GD, which requires a careful analysis. Prior work studies the implicit bias of weighted GD, particularly in the context of importance weighting \citep{xu2021understanding,zhai2023understanding}, but these analysis typically assume that the weights are constant or convergent. In our setting, the weight $a_i(r)$ varies with the epoch count $r$. We address this challenge and characterize the implicit bias of \texttt{Inc-Adam} on the SR data as follows.

\begin{restatable}{theorem}{GRdata}\label{thm:GR_data}
    Consider \textnormal{\texttt{Inc-Adam}} iterates $\{\bw_t\}_{t=0}^\infty$ with $\beta_1 \leq \beta_2$ on SR data under \cref{ass:nonzero,ass:linsep,ass:lr}. If (a) $\Ls(\bw_t) \rightarrow 0$ as $t \rightarrow \infty$ and (b) $\eta_t=(t+2)^{-a}$ for $a \in (2/3, 1]$, then it satisfies
    \begin{align*}
        \lim_{t\rightarrow\infty}\frac{\bw_t}{\|\bw_t\|_2} = \hat{\bw}_{\ell_2},
    \end{align*}
    where $\hat{\bw}_{\ell_2}$ denotes the (unique) $\ell_2$-max-margin solution of SR data $\{\bx_i\}_{i \in [N]}$.
\end{restatable}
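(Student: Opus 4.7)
The plan is to leverage \cref{cor:approxGRdata} to reduce \texttt{Inc-Adam} on GR data to a perturbed weighted normalized gradient descent on $\Ls$, and then adapt classical implicit-bias arguments to establish directional convergence to $\hat{\bw}_{\ell_2}$. The first step is to rewrite the epoch-wise update as
\begin{equation*}
\bw_{r+1}^0 - \bw_r^0 = -\eta_{rN}\bigopen{\frac{\sum_{i \in [N]} a_i(r)\nabla\Ls_i(\bw_r^0)}{\|\nabla\Ls(\bw_r^0)\|_2} + \boldsymbol{\epsilon}_r}
\end{equation*}
with bounded weights $c_1 \leq a_i(r) \leq c_2$ and error $\|\boldsymbol{\epsilon}_r\|_\infty = \mathcal{O}(r^{-a/2})$. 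Combined with $\eta_{rN} = \Theta(r^{-a})$, the cumulative per-epoch error contribution scales as $\mathcal{O}(r^{-3a/2})$, which is summable precisely when $a > 2/3$; this explains the restriction on the learning-rate exponent.

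Next I would use the hypothesis $\Ls(\bw_t) \to 0$ together with $-\ell'(z) \sim e^{-z}$ as $z \to \infty$ for both $\ell_{\text{exp}}$ and $\ell_{\text{log}}$ to argue that $\bw_t^\top \bx_i \to \infty$ for every $i$, and hence $\|\bw_t\|_2 \to \infty$. In this asymptotic regime the exponential decay of $|\nabla\Ls_i(\bw_r^0)|$ concentrates the weighted gradient on the samples with smallest current margin, so up to lower-order terms the update direction is a positive combination of the asymptotic support vectors with coefficients bounded in $[c_1, c_2]$. The core argument then follows a margin-based Lyapunov approach in the spirit of normalized gradient methods: define the smoothed margin $\tilde\gamma(\bw) = -\log\Ls(\bw)/\|\bw\|_2$, show it is non-decreasing along iterates up to a summable error, and conclude $\tilde\gamma(\bw_r^0) \to \gamma^\star = 1/\|\hat{\bw}_{\ell_2}\|_2$. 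Strict convexity and uniqueness of the $\ell_2$-max-margin solution then yield the claimed directional convergence.

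The main obstacle will be handling the time-varying, non-convergent weights $a_i(r) \in [c_1, c_2]$, which break direct application of classical analyses that assume either uniform or convergent reweighting of sample gradients. The rescue should come from the self-correcting nature of exponential-type losses on separable data: if $\bw_r^0/\|\bw_r^0\|_2$ drifts away from $\hat{\bw}_{\ell_2}/\|\hat{\bw}_{\ell_2}\|_2$, the violated support vectors accumulate exponentially larger gradient mass, and this amplification should dominate any bounded multiplicative perturbation introduced by $a_i(r)$. Turning this intuition into a telescoping potential-decrease bound that simultaneously tolerates the $\mathcal{O}(r^{-3a/2})$ update error, the oscillation of the $a_i(r)$'s, and the slow norm growth $\|\bw_r^0\|_2 = \Theta(r^{1-a})$ (or $\log r$ when $a = 1$) is where the main technical effort lies. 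A plausible route is to split the dynamics into a radial component that drives $\|\bw_r^0\|_2 \to \infty$ and a tangential component that measures misalignment with $\hat{\bw}_{\ell_2}$, then to show the tangential potential contracts by a factor outpacing the per-step perturbation, summed over $r$.
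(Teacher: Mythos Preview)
Your starting point matches the paper exactly: invoke \cref{cor:approxGRdata}, note that $\eta_{rN}\|\boldsymbol{\epsilon}_r\|_2 = \mathcal{O}(r^{-3a/2})$ is summable iff $a>2/3$, and use $\Ls(\bw_t)\to 0$ to force $\|\bw_t\|_2\to\infty$. From there, however, you and the paper diverge.

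You propose a smoothed-margin Lyapunov $\tilde\gamma(\bw)=-\log\Ls(\bw)/\|\bw\|_2$ and a radial/tangential decomposition, relying on a ``self-correcting'' exponential-gradient argument to absorb the time-varying weights $a_i(r)\in[c_1,c_2]$. The paper instead follows the regularization-path framework of Ji--Telgarsky. It introduces the weighted loss $\Ls^{\ba}(\bw)=\sum_i a_i\Ls_i(\bw)$ and proves two lemmas that are \emph{uniform in} $\ba\in[c_1,c_2]^N$: (i) the constrained minimizer $\bar\bw^{\ba}(B)=\argmin_{\|\bw\|_2\le B}\Ls^{\ba}(\bw)$ satisfies $\bar\bw^{\ba}(B)/B\to\hat\bu$ uniformly over all such $\ba$; (ii) consequently, for any $\alpha>0$ there is $\rho(\alpha)$ with $\|\bw\|_2>\rho(\alpha)\Rightarrow\Ls^{\ba}((1+\alpha)\|\bw\|_2\hat\bu)\le\Ls^{\ba}(\bw)$ for every $\ba\in[c_1,c_2]^N$. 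Convexity then gives $\langle\nabla\Ls^{\ba(r)}(\bw_r^0),\bw_r^0-(1+\alpha)\|\bw_r^0\|_2\hat\bu\rangle\ge 0$, which lower-bounds $\langle\bw_{r+1}^0-\bw_r^0,\hat\bu\rangle$ by $\tfrac{1}{1+\alpha}(\|\bw_{r+1}^0\|_2-\|\bw_r^0\|_2)$ minus two summable errors (one $\mathcal{O}(r^{-2a})$ from the squared step, one $\mathcal{O}(r^{-3a/2})$ from $\boldsymbol{\epsilon}_r$). Telescoping yields $\liminf_r\langle\bw_r^0/\|\bw_r^0\|_2,\hat\bu\rangle\ge 1-\epsilon$ for arbitrary $\epsilon$.

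The crucial difference is how the varying $a_i(r)$ are neutralized. The paper's device is \emph{uniformity} of the regularization path over the compact box $[c_1,c_2]^N$: once $\|\bw_r^0\|$ is large enough, the comparison point $(1+\alpha)\|\bw_r^0\|_2\hat\bu$ beats $\bw_r^0$ for \emph{whatever} weight vector $\ba(r)$ is active at epoch $r$, so the convexity inequality applies epoch by epoch with no tracking of how $\ba(r)$ evolves. Your smoothed-margin route lacks an analogous tool: the potential $\tilde\gamma$ is built from the unweighted loss $\Ls$, but the update direction is aligned with $\nabla\Ls^{\ba(r)}$, and you have not supplied a mechanism guaranteeing that a $\nabla\Ls^{\ba(r)}$-step increases the $\Ls$-based margin up to summable error. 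The ``self-correcting'' intuition is plausible but is not a proof; making it rigorous would likely require something equivalent in strength to the paper's uniform-in-$\ba$ lemma. If you want to pursue your route, the missing ingredient is a quantitative statement of the form $\langle -\nabla\Ls^{\ba}(\bw),\hat\bu\rangle \ge \tfrac{1}{1+\alpha}\langle -\nabla\Ls^{\ba}(\bw),\bw/\|\bw\|_2\rangle$ for all $\ba\in[c_1,c_2]^N$ once $\|\bw\|$ is large, which is exactly what the paper's Lemma~E.2 delivers via convexity.
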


The analysis in \cref{thm:GR_data} relies on \cref{cor:approxGRdata}, which ensures that the weights $a_i(r)$ are bounded by two positive constants, $c_1$ and $c_2$. This condition is crucial to prevent any individual data from having a vanishing contribution, which could cause the \texttt{Inc-Adam} iterates to deviate from the $\ell_2$-max-margin direction. Furthermore, the controlled learning rate schedule is key to bounding the $\boldsymbol{\epsilon}_r$ term in our analysis. The proof and further discussion are deferred to \cref{appendix:structured}. As shown in \cref{fig:rademacher}, our experiments on SR data confirm that mini-batch Adam with a batch size of $1$ converges in direction to the $\ell_2$-max-margin classifier, in contrast to the $\ell_\infty$-bias of full-batch Adam.

Notably, \cref{thm:GR_data} holds for any choice of momentum hyperparameters satisfying $\beta_1\leq \beta_2$; see \cref{fig:add_rademacher} in \cref{appendix:additional_exps} for empirical evidence. This invariance of the bias arises from the structure of SR data, which removes the coordinate adaptivity that momentum hyperparameters would normally affect. For general datasets, the invariance no longer holds; the adaptivity persists and varies with the choice of momentum hyperparameters, as discussed in \cref{appendix:discussions}. In the next section, we introduce a proxy algorithm to study the regime where $\beta_2$ is close to $1$ and characterize its implicit bias.

\begin{figure}[tb]
\vspace{-10pt}
    \centering
    \includegraphics[width=\linewidth]{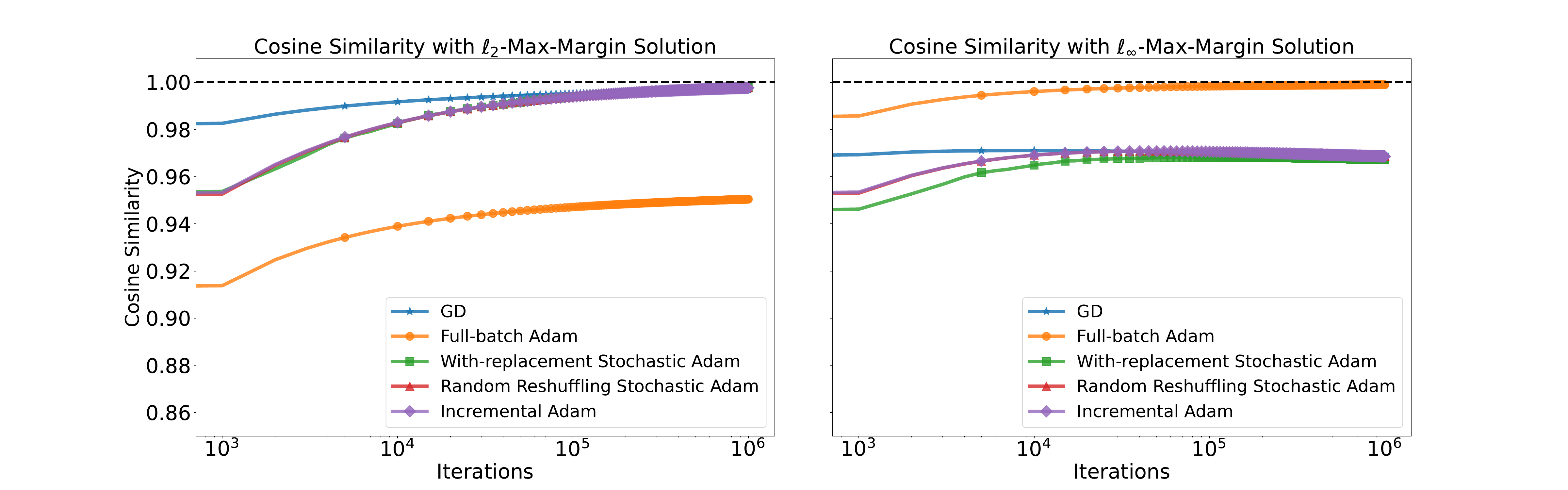}
    \vspace{-15pt}
    \caption{ \textbf{Mini-batch Adam converges to the $\ell_2$-max-margin solution on the SR dataset.} We train on the dataset $\bx_0=(1,1,1,1)$, $\bx_1 = (2,2,2,-2)$, $\bx_2 = (3,3,-3,-3)$, and $\bx_1 = (4,-4,4,-4)$. Variants of mini-batch Adam with a batch size of $1$ consistently converge to the $\ell_2$-max-margin direction, while full-batch Adam converges to the $\ell_\infty$-max-margin direction.}
    \label{fig:rademacher}
    \vspace{-5pt}
\end{figure}

\section{Generalization: \texttt{AdamProxy}}\label{sec:generalization}

\paragraph{Uniform-Averaging Proxy.} 
A key challenge in characterizing the limiting predictor of \texttt{Inc-Adam} for a general datasets is that its approximated update (\cref{prop:approxAdam}) is difficult to analyze directly. To address this, we study a simpler \textit{uniform-averaging} proxy, derived in \cref{prop:approxAdamProxy} under the limit $\beta_2 \rightarrow 1$. This approximation is well-motivated, as $\beta_2$ is typically chosen close to $1$ in practice. 

\begin{restatable}{proposition}{approxAdamProxy}\label{prop:approxAdamProxy}
    Let $\{\bw_t\}_{t=0}^\infty$ be the iterates of \textnormal{\texttt{Inc-Adam}} with $\beta_1 \leq \beta_2$. Then, under \cref{ass:nonzero,ass:lr}, the epoch-wise update $\bw_{r+1}^0-\bw_{r}^0$ can be expressed as
    \begin{align*}
        \bw_{r+1}^0-\bw_{r}^0 = -\eta_{rN} \bigopen{\sqrt{\frac{1-\beta_2^N}{1-\beta_2}} \frac{\nabla \Ls(\bw_r^0)}{\sqrt{\sum_{i=1}^N \nabla\Ls_i(\bw_r^0)^2}}+\boldsymbol{\epsilon}_{\beta_2}(r)},
    \end{align*}
    where $\limsup_{r\rightarrow\infty}\|\boldsymbol{\epsilon}_{\beta_2}(r)\|_{\infty} \leq \epsilon(\beta_2)$ and $\lim_{\beta_2 \rightarrow 1} \epsilon(\beta_2) =0$.
\end{restatable}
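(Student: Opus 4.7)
The plan is to derive \cref{prop:approxAdamProxy} directly from \cref{prop:approxAdam} by showing that, as $\beta_2\to 1$, the coordinate-wise preconditioner in \eqref{eq:approxIncAdam} becomes asymptotically independent of the epoch-internal index $i$. The key observation is that $\beta_2^{(i,j)}=\beta_2^{(i-j)\bmod N}\to 1$ uniformly, so the denominator $\sqrt{\sum_{j}\beta_2^{(i,j)}\nabla\Ls_j(\bw_r^0)^2}$ converges to $\sqrt{\sum_j\nabla\Ls_j(\bw_r^0)^2}$. Once the denominator is $i$-independent, the outer sum can be evaluated explicitly via the cyclic identity $\sum_{i\in[N]}\beta_1^{(i,j)}=\sum_{s=0}^{N-1}\beta_1^s=\frac{1-\beta_1^N}{1-\beta_1}$, which holds because $i\mapsto(i-j)\bmod N$ is a bijection on $[N]$.

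Concretely, I would perform the coordinate-wise decomposition
\begin{equation*}
\frac{\sum_j\beta_1^{(i,j)}\nabla\Ls_j[k]}{\sqrt{\sum_j\beta_2^{(i,j)}\nabla\Ls_j[k]^2}}
=\frac{\sum_j\beta_1^{(i,j)}\nabla\Ls_j[k]}{\sqrt{\sum_j\nabla\Ls_j[k]^2}}\cdot\bigopen{1+\rho_{i,k}},
\end{equation*}
for each coordinate $k$, where $\rho_{i,k}\triangleq\frac{\sqrt{\sum_j\nabla\Ls_j[k]^2}}{\sqrt{\sum_j\beta_2^{(i,j)}\nabla\Ls_j[k]^2}}-1$ and the $\bw_r^0$ arguments are suppressed. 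Summing the leading factor over $i$ and multiplying by $C_{\text{inc}}(\beta_1,\beta_2)$, the cyclic identity cancels the $\frac{1-\beta_1}{1-\beta_1^N}$ prefactor inside $C_{\text{inc}}$, leaving exactly the target main term $\sqrt{(1-\beta_2^N)/(1-\beta_2)}\cdot\sum_j\nabla\Ls_j(\bw_r^0)/\sqrt{\sum_j\nabla\Ls_j(\bw_r^0)^2}$, which matches the stated expression after absorbing the normalization $\nabla\Ls=\tfrac{1}{N}\sum_j\nabla\Ls_j$.

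For the error control, the bound $\max_{i,j}(1-\beta_2^{(i,j)})\leq 1-\beta_2^{N-1}$, together with \cref{ass:nonzero} (which guarantees $\sum_j\nabla\Ls_j[k]^2>0$ at every iterate, since $\nabla\Ls_j[k]=\ell'(\bw^\top\bx_j)\bx_j[k]\neq 0$ for $\ell\in\{\ell_{\mathrm{exp}},\ell_{\mathrm{log}}\}$), yields $|\rho_{i,k}|=\mathcal{O}(1-\beta_2^{N-1})$ uniformly in $i,k,r$. Combined with the Cauchy--Schwarz bound $\bigabs{\sum_j\beta_1^{(i,j)}\nabla\Ls_j[k]}\leq\sqrt{N}\sqrt{\sum_j\nabla\Ls_j[k]^2}$, the summation over $i$ contributing at most a factor of $N$, and the uniform bound $C_{\text{inc}}(\beta_1,\beta_2)\leq\sqrt{N}$, the total approximation error per coordinate is $\mathcal{O}(N^2(1-\beta_2^{N-1}))$, uniform in $r$. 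Folding in the residual $\boldsymbol{\epsilon}_r$ from \cref{prop:approxAdam}, which satisfies $\lim_{r\to\infty}\boldsymbol{\epsilon}_r=\mathbf{0}$, one obtains $\boldsymbol{\epsilon}_{\beta_2}(r)$ with $\limsup_{r\to\infty}\|\boldsymbol{\epsilon}_{\beta_2}(r)\|_\infty\leq\epsilon(\beta_2)=\mathcal{O}(N^2(1-\beta_2^{N-1}))$, which tends to $0$ as $\beta_2\to 1$.

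The principal obstacle is securing this error bound \emph{uniformly} in $r$: the gradients $\nabla\Ls_j(\bw_r^0)$ can take arbitrary magnitudes as the iterates diverge during training on separable data, so any bound that scales with them would be useless. The Cauchy--Schwarz step circumvents this by cancelling the gradient magnitudes entirely, reducing the control to quantities depending only on $\beta_1,\beta_2,N$. A secondary care point is the relative-perturbation estimate $|\rho_{i,k}|=\mathcal{O}(1-\beta_2^{N-1})$; this remains valid for all $\beta_2\in[0,1)$ because the ratio $\sum_j(1-\beta_2^{(i,j)})\nabla\Ls_j[k]^2/\sum_j\nabla\Ls_j[k]^2$ is bounded strictly below $1$ by $1-\beta_2^{N-1}<1$, so the Taylor expansion of $1/\sqrt{1-\cdot}$ has a bounded leading constant.
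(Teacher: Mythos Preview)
Your proposal is correct and follows essentially the same route as the paper's proof: start from \cref{prop:approxAdam}, replace the $\beta_2^{(i,j)}$-weighted denominator by the uniform one, collapse the numerator via the cyclic identity $\sum_{i\in[N]}\beta_1^{(i,j)}=\frac{1-\beta_1^N}{1-\beta_1}$ so that the $\beta_1$-factors in $C_{\text{inc}}$ cancel, and control the residual by a Cauchy--Schwarz bound that makes the gradient magnitudes drop out. The only cosmetic difference is the factoring: the paper writes the discrepancy as $\bigabs{\tfrac{A_i}{B_i}}\cdot\bigabs{1-\tfrac{B_i}{C}}$ with $B_i=\sqrt{\sum_j\beta_2^{(i,j)}\nabla\Ls_j[k]^2}$ and bounds the first factor by $\sqrt{\sum_j(\beta_1^{(i,j)})^2/\beta_2^{(i,j)}}$, whereas you write it as $\bigabs{\tfrac{A_i}{C}}\cdot|\rho_{i,k}|$ and bound the first factor by $\sqrt{N}$; both yield $\epsilon(\beta_2)$ of the same order $1-\sqrt{\beta_2^{N-1}}$ as $\beta_2\to 1$.
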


\begin{definition}
    We define an update of \texttt{AdamProxy} as
    \begin{equation}\label{eq:AdamProxy}
        \begin{split}
            &\bdel_t = \proxy(\bw_t)\triangleq \frac{\nabla \Ls(\bw_t)}{\sqrt{\sum_{i=1}^N \nabla\Ls_i(\bw_t)^2}}, \\
        &\bw_{t+1} = \bw_t - \eta_t \bdel_t.
        \end{split}
    \end{equation}
\end{definition}

\begin{restatable}[Loss convergence]{proposition}{AdamProxyMinimizesLoss}\label{prop:AdamProxyMinimizesLoss}
    Under \cref{ass:linsep,ass:nonzero}, there exists a positive constant $\eta>0$ depending only on the dataset $\{\bx_i\}_{i \in [N]}$, such that if the learning rate schedule satisfies $\eta_t \leq \eta$ and $\sum_{t=0}^\infty \eta_t = \infty$, then \textnormal{\texttt{AdamProxy}} iterates minimize the loss, i.e., $\lim_{t\rightarrow\infty} \Ls(\bw_t) = 0$.
\end{restatable}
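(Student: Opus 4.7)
The plan is to establish a contraction of the form
\[ \Ls(\bw_{t+1}) \leq (1 - C_1\eta_t + C_2\eta_t^2)\,\Ls(\bw_t) \]
with data-dependent constants $C_1, C_2 > 0$. Taking $\eta := C_1/(2C_2)$ (shrunk further if needed to keep higher-order Taylor remainders under control) makes the factor at most $1 - C_1\eta_t/2$, and telescoping with $\sum_t\eta_t = \infty$ yields $\Ls(\bw_t) \leq \Ls(\bw_0)\exp\!\bigl(-(C_1/2)\sum_{s<t}\eta_s\bigr) \to 0$. Achieving this requires two ingredients: a PL-type inequality bounding the alignment of $\bdel_t$ with $\nabla\Ls$ from below by $C_1\,\Ls(\bw_t)$, and a self-bounded descent lemma whose quadratic error is proportional to $\Ls$ rather than to a fixed constant independent of the loss.

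For the PL bound, write $\bdel_t[k] = \nabla\Ls(\bw_t)[k]/\sqrt{D_k(\bw_t)}$ with $D_k(\bw) := \sum_i \nabla\Ls_i(\bw)[k]^2$. The elementary estimate $\sqrt{D_k}\leq\sqrt{\sum_k D_k} = \sqrt{\sum_i \|\nabla\Ls_i\|_2^2}$ already gives
\[ \langle\nabla\Ls(\bw_t),\,\bdel_t\rangle \;=\; \sum_k \frac{\nabla\Ls(\bw_t)[k]^2}{\sqrt{D_k(\bw_t)}} \;\geq\; \frac{\|\nabla\Ls(\bw_t)\|_2^{\,2}}{\sqrt{\sum_i \|\nabla\Ls_i(\bw_t)\|_2^{\,2}}}. \]
By separability (\cref{ass:linsep}) with unit separator $\bw^*$ of margin $\gamma > 0$, $\|\nabla\Ls(\bw_t)\|_2 \geq |\bw^{*\top}\nabla\Ls(\bw_t)| \geq \gamma\bar p_t$ where $\bar p_t := \frac{1}{N}\sum_i|\ell'(\bw_t^\top\bx_i)|$; conversely, $\sqrt{\sum_i\|\nabla\Ls_i\|_2^2}\leq N\bar p_t \max_i\|\bx_i\|_2$. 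For $\ell_{\exp}$, $\bar p_t = \Ls(\bw_t)$ identically, yielding $C_1 = \gamma^2/(N\max_i\|\bx_i\|_2)$.

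For the descent lemma, coordinate-wise Cauchy--Schwarz gives $\|\bdel_t\|_\infty\leq 1/\sqrt{N}$, so $|\bdel_t^\top\bx_i|\leq B := \max_i\|\bx_i\|_1/\sqrt{N}$. Under $\ell_{\exp}$, the multiplicative identity $\Ls_i(\bw_{t+1}) = \Ls_i(\bw_t)\exp(\eta_t\,\bdel_t^\top\bx_i)$ combined with $e^u \leq 1 + u + \tfrac{e}{2}u^2$ for $|u|\leq 1$ gives, after averaging and using $\frac{1}{N}\sum_i \Ls_i(\bw_t)\,\bdel_t^\top\bx_i = -\langle\nabla\Ls(\bw_t),\bdel_t\rangle$,
\[ \Ls(\bw_{t+1}) \;\leq\; \Ls(\bw_t) - \eta_t\langle\nabla\Ls(\bw_t),\bdel_t\rangle + \tfrac{eB^2}{2}\,\eta_t^{\,2}\Ls(\bw_t), \]
valid whenever $\eta_t B\leq 1$. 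Combining with the PL bound and telescoping closes the proof for exponential loss.

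The main obstacle is extending this to $\ell_{\log}$: the self-bounding properties $|\ell_{\log}'(z)|\asymp\ell_{\log}(z)$ and $\ell_{\log}''(z)\lesssim\ell_{\log}(z)$ hold only on $z\geq 0$, so neither the PL constant nor the quadratic error cleanly scales with $\Ls$ while some $\bw_t^\top\bx_i$ remains negative. I would handle this by splitting the trajectory into two phases: first show that after finitely many steps the iterate enters $\{\bw : \bw^\top\bx_i\geq 0\ \forall i\}$, by exploiting the global $1/4$-smoothness of $\ell_{\log}$, the uniform bound $\|\bdel_t\|_2\leq\sqrt{d/N}$, and strict positivity of $\langle\nabla\Ls,\bdel_t\rangle$ in the misclassified regime to extract net decrease of $\Ls_t$; then on that region rerun the self-bounded analysis using $\sigma(-z)\geq\ell_{\log}(z)/2$ and $\ell_{\log}''(z)\leq 2\ell_{\log}(z)$. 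Closing the transient phase cleanly---rather than the PL/descent mechanics themselves---is the technical hurdle, and is precisely where the data-dependent choice of $\eta$ in the statement is used.
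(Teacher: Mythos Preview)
Your argument is correct: for exponential loss it is complete and yields the explicit rate $\Ls(\bw_t)\le\Ls(\bw_0)\exp\bigl(-\tfrac{C_1}{2}\sum_{s<t}\eta_s\bigr)$, and for logistic loss the two-phase plan works (outside the positive-margin region some $|\ell'_i|\ge 1/2$, hence $\bar p_t\ge 1/(2N)$ and additive descent $\Ls_{t+1}\le\Ls_t-c_0\eta_t+O(\eta_t^2)$ drives $\Ls_t$ below $\log 2/N$ in finite time; thereafter $|\ell'(z)|\ge\ell(z)/2$ and $\ell''(z)\le 2\ell(z)$ on $z\ge 0$ restore the multiplicative contraction, and monotonicity prevents re-entry). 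The paper takes a different, loss-agnostic route: it identifies $\bdel_t$ as the normalized steepest-descent direction in a varying energy norm $\|\cdot\|_{\bP(\bw_t)}$ that is uniformly equivalent to $\|\cdot\|_2$ (\cref{lemma:AdamProxy_property}), so that $\langle\nabla\Ls,\bdel_t\rangle=\|\nabla\Ls\|_{\bP^{-1}(\bw_t)}$; the second-order Taylor remainder is controlled by $C_H\eta_t^2\gG(\bw_t)$ and then absorbed into the first-order term via the single inequality $\gamma_{\bw_t}\gG(\bw_t)\le\|\nabla\Ls\|_{\bP^{-1}(\bw_t)}$ from \cref{lemma:proxy}, which holds for \emph{both} losses without any phase split. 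Telescoping gives $\sum_t\eta_t\|\nabla\Ls(\bw_t)\|_{\bP^{-1}}<\infty$, whence $\nabla\Ls(\bw_t)\to 0$ and $\Ls(\bw_t)\to 0$. Your approach is more elementary (no energy-norm framework) and extracts a quantitative rate for $\ell_{\exp}$; the paper's $\gG$-machinery treats both losses in one stroke at the cost of a less explicit conclusion.
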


To characterize the convergence direction of \texttt{AdamProxy}, we further assume that the weights $\{\bw_t\}_{t=0}^\infty$ and the updates $\{\bdel_t\}_{t=0}^\infty$ converge in direction.

\begin{assumption} \label{ass:adamproxy} 
We assume that: (a) learning rates $\{\eta_t\}_{t=0}^\infty$ satisfy the conditions in \cref{prop:AdamProxyMinimizesLoss}, (b) $\exists \lim_{t \rightarrow \infty} \frac{\bw_t}{\|\bw_t\|_2} \triangleq \hat{\bw}$,  and (c) $\exists \lim_{t \rightarrow \infty} \frac{\bdel_t}{\|\bdel_t\|_2} \triangleq \hat{\bdel}$.
\end{assumption}

\begin{restatable}{lemma}{DirectionProxy}\label{lemma:direction_proxy}
    Under \cref{ass:linsep,ass:nonzero,ass:adamproxy}, there exists $\bc=(c_0, \cdots, c_{N-1}) \in \Delta^{N-1}$ such that the limit direction $\hat{\bw}$ of \textnormal{\texttt{AdamProxy}} satisfies
    \begin{align} \label{eq:direction_proxy}
        \hat{\bw} \propto \frac{\sum_{i\in[N]} c_i\bx_i}{\sqrt{\sum_{i\in[N]} c_i^2 \bx_i^2}},
    \end{align}
    and $c_i=0$ for $i \notin S$, where $S=\argmin_{i \in [N]} \hat{\bw}^\top\bx_i$ is the index set of support vectors of $\hat{\bw}$.
\end{restatable}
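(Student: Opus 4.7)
The plan is to rewrite the AdamProxy direction $-\bdel_t$ as a normalized convex combination indexed by a probability vector $\bc(\bw_t) \in \Delta^{N-1}$, then transfer the assumed directional limit of $\bdel_t$ into that of $\bw_t$ via the cumulative-update identity, and finally extract a subsequential limit of $\bc(\bw_t)$ whose support is identified by an exponential-decay argument.

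For the reformulation, since $\ell'(z) < 0$ for both $\ell_{\mathrm{exp}}$ and $\ell_{\mathrm{log}}$, I would set $\alpha_i(\bw) \triangleq -\ell'(\bw^\top\bx_i) > 0$ so that $\nabla\Ls_i(\bw) = -\alpha_i(\bw)\bx_i$, and $c_i(\bw) \triangleq \alpha_i(\bw)/\sum_j \alpha_j(\bw)$. A direct calculation yields
\begin{equation*}
-\bdel_t = \frac{1}{N}\cdot\frac{\sum_{i\in[N]} c_i(\bw_t)\,\bx_i}{\sqrt{\sum_{i\in[N]} c_i(\bw_t)^2\,\bx_i^2}},
\end{equation*}
with $\bc(\bw_t) \in \Delta^{N-1}$. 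By \cref{ass:nonzero} the entry-wise denominator is strictly positive on $\Delta^{N-1}$, and by \cref{ass:linsep} the numerator $\sum_i c_i \bx_i$ is also nonzero on $\Delta^{N-1}$ (any separating direction $\bw^\ast$ satisfies $\bw^{\ast\top}\sum_i c_i\bx_i > 0$). Hence $\bc \mapsto \|\bdel\|_2$ is continuous and strictly positive on the compact simplex, yielding a uniform lower bound $\|\bdel_t\|_2 \geq C > 0$.

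The main step, and the one I expect to be most delicate, is establishing $\hat{\bw} = -\hat{\bdel}$. Let $S_T \triangleq \sum_{t=0}^{T-1}\eta_t\bdel_t$, so that $\bw_T = \bw_0 - S_T$. Writing $\mu_t \triangleq \eta_t\|\bdel_t\|_2$ and $\phi_t \triangleq \bdel_t/\|\bdel_t\|_2$, the lower bound on $\|\bdel_t\|_2$ together with \cref{ass:lr} gives $\sum_t \mu_t = \infty$, while \cref{ass:adamproxy}(c) gives $\phi_t \to \hat{\bdel}$. A Stolz-Ces\`aro / weighted-average argument then shows $\sum_{t<T} \mu_t\phi_t / \sum_{t<T} \mu_t \to \hat{\bdel}$ and $\|S_T\|_2 / \sum_{t<T} \mu_t \to 1$, so $S_T/\|S_T\|_2 \to \hat{\bdel}$. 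Since loss convergence on separable data forces $\|\bw_T\|_2 \to \infty$, the identity $\bw_T = \bw_0 - S_T$ then yields $\hat{\bw} = -\hat{\bdel}$.

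Finally, by compactness of $\Delta^{N-1}$, I would pass to a subsequence $t_k \uparrow \infty$ with $\bc(\bw_{t_k})\to\bc\in\Delta^{N-1}$. Continuity of the reformulation gives the claimed form $\hat{\bw}\propto \sum_i c_i\bx_i/\sqrt{\sum_i c_i^2\bx_i^2}$. For the support property, fix $j^\ast\in S$ and $i\notin S$; since $\|\bw_t\|_2\to\infty$ and $\bw_t/\|\bw_t\|_2\to\hat{\bw}$, we get $\bw_t^\top(\bx_i-\bx_{j^\ast})\to +\infty$. Because $-\ell'(z)=\Theta(e^{-z})$ as $z\to\infty$ for both loss choices, this forces $\alpha_i(\bw_t)/\alpha_{j^\ast}(\bw_t)\to 0$ and hence $c_i(\bw_t)\to 0$, giving $c_i=0$ for $i\notin S$. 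The main obstacle is the direction-transfer step above, which hinges on the uniform positivity of $\|\bdel_t\|_2$; the rest is either algebraic reformulation or standard exponential-tail analysis.
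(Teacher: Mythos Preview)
Your proposal is correct and follows essentially the same approach as the paper: both use Stolz--Ces\`aro to transfer the directional limit from $\bdel_t$ to $\bw_t$, both rely on compactness of $\Delta^{N-1}$ together with continuity of the map $\bc\mapsto\frac{\sum_i c_i\bx_i}{\sqrt{\sum_i c_i^2\bx_i^2}}$ to extract the form of $\hat{\bw}$, and both use the exponential-tail ratio $\alpha_i/\alpha_{j^\ast}\to 0$ to kill the non-support coordinates. Your organization is slightly more streamlined---working directly with $\bc(\bw_t)$ on the full simplex avoids the paper's explicit dominant/residual split $\bdel_t=\bd(t)+\mathbf{r}(t)$ and its auxiliary limit-point lemma---but the ingredients are the same.
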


Prior research on the implicit bias of optimizers has predominantly focused on characterizing the convergence direction through the formulation of a corresponding optimization problem. For example, the solution to the $\ell_p$-max-margin problem,
\begin{align*}
    \max_{\bw\in \sR^d} \frac{1}{2}\|\bw\|_p^2 \quad \text{subject to} \quad \bw^\top \bx_i -1 \geq 0,\; \forall i \in[N],
    \end{align*}
describes the implicit bias of the steepest descent algorithm with respect to the $\ell_p$-norm in linear classification tasks \citep{gunasekar2018characterizing}. However, \cref{eq:direction_proxy} does not correspond to the KKT conditions of a conventional optimization problem. To address this, we introduce a novel framework to describe the convergence direction, based on a \textit{parametric} optimization problem combined with \textit{fixed-point analysis} between dual variables.

\begin{definition}
    Given $\bc \in \Delta^{N-1}$, we define a parametric optimization problem $P_{\text{Adam}}(\bc)$ as 
    \begin{align}
        P_{\text{Adam}}(\bc): \min_{\bw\in \sR^d} \frac{1}{2}\|\bw\|_{\bM(\bc)}^2 \quad \text{subject to} \quad \bw^\top\bx_i-1 \geq 0,\; \forall i\in[N],
    \end{align}
    where $\bM(\bc) = \diag(\sqrt{\sum_{j \in [N]}c_j^2\bx_j^2}) \in \R^{d \times d}$. We define $\bp(\bc)$ as the set of global optimizers of $P_{\text{Adam}}(\bc)$ and $\bd(\bc)$ as the set of corresponding dual solutions. Let $S(\bw) = \{i \in [N] \mid \bw^\top\bx_i = 1\}$ denote the index set for the support vectors for any $\bw \in \bp(\bc)$.
\end{definition}

\begin{assumption}[Linear Independence Constraint Qualification] \label{ass:LICQ}
    For any $\bc \in \Delta^{N-1}$ and $\bw \in \bp(\bc)$, the set of support vectors $\{\bx_i\}_{i \in S(\bw)}$ is linearly independent.
\end{assumption}

\cref{ass:LICQ} ensures the uniqueness of the dual solution for $P_{\text{Adam}}(\bc)$, which is essential for our framework. This assumption naturally holds in the overparameterized regime where the dataset $\{\bx_i\}_{i \in [N]}$ consists of linearly independent vectors.

\begin{restatable}{theorem}{fixedpt}\label{thm:fixed_pt}
    Under \cref{ass:linsep,ass:LICQ}, $P_{\text{Adam}}(\bc)$ admits unique primal and dual solutions, so that $\bp(\bc)$ and  $\bd(\bc)$ can be regarded as vector-valued functions. Moreover, under \cref{ass:linsep,ass:nonzero,ass:adamproxy,ass:LICQ}, the following hold:
    \begin{enumerate}[label=(\alph*)]
        \item $\bp:\Delta^{N-1} \rightarrow \R^d$ is continuous.
        \item $\bd:\Delta^{N-1} \rightarrow \R_{\geq 0}^N\backslash\{\mathbf{0}\}$ is continuous. Consequently, the map $T(\bc)\triangleq \frac{\bd(\bc)}{\|\bd(\bc)\|_1}$ is continuous.
        \item The map $T:\Delta^{N-1} \rightarrow \Delta^{N-1}$ admits at least one fixed point.
        \item There exists $\bc^* \in \{\bc \in \Delta^{N-1}:T(\bc)=\bc\}$ such that the convergence direction $\hat{\bw}$ of $\textnormal{\texttt{AdamProxy}}$ is proportional to $\bp(\bc^*)$.
    \end{enumerate}
\end{restatable}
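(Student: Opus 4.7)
The plan is to establish the four claims sequentially, with (a) and (b) providing the analytic groundwork (existence, uniqueness, and continuity of the primal/dual solution maps), (c) following by direct application of Brouwer's fixed point theorem, and (d) connecting the abstract fixed-point structure to the actual \texttt{AdamProxy} iterates by leveraging \cref{lemma:direction_proxy} and verifying KKT conditions.

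For (a), I would first observe that $\bM(\bc) \succ 0$ for every $\bc \in \Delta^{N-1}$: since $\bc$ sums to $1$, some $c_j > 0$, and combined with $\bx_j[k] \neq 0$ from \cref{ass:nonzero}, we obtain $\sum_{j \in [N]} c_j^2 \bx_j[k]^2 > 0$ for every coordinate $k$. Hence the objective is strongly convex; feasibility comes from \cref{ass:linsep}, giving existence and uniqueness of $\bp(\bc)$. Continuity of $\bp$ then follows from Berge's maximum theorem: the feasible polytope $\{\bw : \bw^\top \bx_i \geq 1\}$ is \emph{independent} of $\bc$, and the objective $(\bw,\bc) \mapsto \tfrac{1}{2} \bw^\top \bM(\bc) \bw$ is jointly continuous, so the singleton-valued argmin depends continuously on $\bc$.

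For (b), uniqueness of $\bd(\bc)$ follows directly from \cref{ass:LICQ}: the stationarity condition $\bM(\bc)\bp(\bc) = \sum_{i \in S(\bp(\bc))} d_i \bx_i$ uniquely determines $\{d_i\}_{i \in S}$ by linear independence. For continuity, I would use a subsequence argument: given $\bc_n \to \bc_0$, continuity of $\bp$ and of the constraint functions implies $S(\bp(\bc_n)) \subseteq S(\bp(\bc_0))$ for all large $n$. The boundedness of $\{\bd(\bc_n)\}$ is the heart of the argument; if it fails, normalize by $\|\bd(\bc_n)\|_1$, pass to a subsequential limit $\hat{\bd}$ with $\|\hat{\bd}\|_1 = 1$, $\hat{\bd} \geq 0$, and support in $S(\bp(\bc_0))$, then dividing stationarity by $\|\bd(\bc_n)\|_1$ and taking limits yields $\sum_{i \in S(\bp(\bc_0))} \hat{d}_i \bx_i = 0$, contradicting LICQ. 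Once boundedness is established, any cluster point of $\bd(\bc_n)$ satisfies the KKT system at $\bc_0$, hence equals $\bd(\bc_0)$ by uniqueness. Non-triviality ($\bd(\bc) \neq \mathbf{0}$) is immediate, since $\bw = \mathbf{0}$ is infeasible, so at least one constraint must be active with a strictly positive multiplier by stationarity. Continuity of $T$ then follows from continuity of $\bd$ and the fact that $\bd$ never vanishes.

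For (c), $T$ is a continuous self-map of the compact convex simplex $\Delta^{N-1}$, so Brouwer's fixed point theorem yields at least one fixed point. For (d), apply \cref{lemma:direction_proxy} to obtain $\bc^* \in \Delta^{N-1}$ satisfying $\hat{\bw} \propto \bM(\bc^*)^{-1} \sum_{i \in [N]} c_i^* \bx_i$ with $c_i^* = 0$ for $i \notin S$, where $S$ is the set of support indices of $\hat{\bw}$. Rescale by $\alpha = 1/\min_i \hat{\bw}^\top \bx_i > 0$ so that $\alpha \hat{\bw}$ has unit margin on $S$ and strict margin elsewhere. Then $\alpha \hat{\bw} = \bM(\bc^*)^{-1} \sum_i d_i \bx_i$ with $d_i = \beta c_i^*$ for a suitable $\beta > 0$. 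The KKT conditions for $P_{\text{Adam}}(\bc^*)$ are all verified: primal feasibility by choice of $\alpha$, dual feasibility since $\bd \geq 0$, stationarity by construction, and complementary slackness because $c_i^* = 0$ whenever $i \notin S$. By primal-dual uniqueness from parts (a) and (b), $\bp(\bc^*) = \alpha \hat{\bw}$ and $\bd(\bc^*) = \beta \bc^*$, so $T(\bc^*) = \beta \bc^* / \beta = \bc^*$ and $\hat{\bw} \propto \bp(\bc^*)$.

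The main obstacle will be part (b), specifically the continuity of $\bd$ at parameters $\bc_0$ where the active set changes. LICQ plays a dual role here: it rules out unbounded multiplier sequences by blocking nontrivial non-negative linear dependencies among active constraints, and it pins down any limiting dual uniquely. Everything else reduces to standard convex duality, parametric optimization (Berge), and Brouwer's theorem, with (d) being a direct KKT verification using the precise form of $\hat{\bw}$ supplied by \cref{lemma:direction_proxy}.
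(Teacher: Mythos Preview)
Your overall architecture matches the paper exactly: strict convexity of the objective from $\bM(\bc)\succ 0$, LICQ for dual uniqueness, Brouwer for (c), and the KKT verification in (d) using \cref{lemma:direction_proxy} is line-for-line the paper's argument (with your $\alpha,\beta$ playing the role of their $1/\gamma,k/\gamma$). The differences lie in how you execute (a) and (b).

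For (a), the paper does not invoke Berge but argues directly by contradiction: take $\bc_k\to\bar\bc$ with $\bp(\bc_k)$ bounded away from $\bp(\bar\bc)$, and split into the cases where $\{\bp(\bc_k)\}$ is bounded (extract a limit, which must be optimal, contradiction) or unbounded (normalize, pass to a limit $\bar\bv$ on the sphere, and use $\bar\bv^\top\bM(\bar\bc)\bar\bv\le 0$ together with positive definiteness to force $\bar\bv=0$). Your Berge route is cleaner in principle, but note that the feasible set is not compact, so the classical statement does not apply out of the box; you need to observe that the smallest eigenvalue of $\bM(\bc)$ is bounded away from zero uniformly over the compact simplex, so the argmin lies in a fixed ball and Berge can be applied after intersecting with it. This is the coercivity fact that the paper's ``unbounded case'' handles explicitly.

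For (b), the paper takes a more constructive route than your sequential-compactness argument: once $S(\bp(\bc))\subseteq S(\bp(\bc_0))$ on a neighborhood, it writes stationarity as $\bM(\bc)\bp(\bc)=\bX\bd(\bc)$ with $\bX$ the \emph{fixed} matrix of support vectors at $\bc_0$, and then inverts using the left inverse $(\bX^\top|_{\operatorname{im}\bX^\top})^{-1}$ to get $\bd(\bc)$ as an explicit continuous function of $\bc$. Your boundedness-by-contradiction argument (normalize, pass to a nontrivial nonnegative combination $\sum_i\hat d_i\bx_i=0$, contradict LICQ) is equally valid and is the standard sensitivity-analysis move; the paper's formula-based approach has the advantage of giving local Lipschitz continuity for free, while yours works in settings where no closed form is available.
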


\cref{thm:fixed_pt} shows how the parametric optimization problem $P_{\text{Adam}}(\bc)$ captures the characterization from \cref{lemma:direction_proxy}. The central idea is to treat the vector $\bc$ from \cref{eq:direction_proxy} in a dual role: as both the parameter of $P_{\text{Adam}}(\bc)$ and as its corresponding dual variable. The convergence direction is then identified at the point where these two roles coincide, leading naturally to the fixed-point formulation. Detailed proofs are deferred to \cref{appendix:generalization}.

To computationally identify the convergence direction of \texttt{AdamProxy} based on \cref{thm:fixed_pt}, we introduce the fixed-point iteration described in \cref{alg:fixed_point}. Numerical experiments confirm that the resulting solution accurately predicts the limiting directions of both \texttt{AdamProxy} and \texttt{Inc-Adam} (see \cref{ex:gaussian}). However, the complexity of the mapping $T$ makes it challenging to establish a formal convergence guarantee for \cref{alg:fixed_point}. A rigorous analysis is left for future work.

\begin{algorithm}[t]
        \caption{Fixed-Point Iteration}
        \label{alg:fixed_point}
        \begin{algorithmic}[1]
        \renewcommand{\algorithmicensure}{\textbf{Input:}}
        \ENSURE Dataset $\{\bx_i\}_{i \in [N]}$, initialization $\bc_0 \in \Delta^{N-1}$, threshold $\epsilon_{\text{thr}}>0$
        \REPEAT
            \STATE Solve $P_{\text{Adam}}(\bc_0): \min \frac{1}{2}\|\bw\|_{\bM(\bc_0)} \; \text{subject to} \; \bw^\top\bx_i-1\geq0, \forall i\in[N]$
            \STATE $\bw \leftarrow \text{Primal}(P_{\text{Adam}})$
            \STATE $\bc_1 \leftarrow \text{Dual}(P_{\text{Adam}})$
            \STATE $\delta \leftarrow \|\bc_1 - \bc_0\|_2$
            \STATE $\bc_0 \leftarrow \bc_1$
        \UNTIL{$\delta \leq \epsilon_{\text{thr}}$}
        \STATE \textbf{return} $\bw$
        \end{algorithmic}
\end{algorithm}

\vspace{-3pt}
\paragraph{Data-dependent Limit Directions.}
We illustrate how structural properties of the data shape the limit direction of \texttt{AdamProxy} through three case studies. These examples demonstrate that both \texttt{AdamProxy} and \texttt{Inc-Adam} converge to directions that are intrinsically data-dependent.

\begin{example}[Revisiting SR data]
    For SR data $\{\bx_i\}_{i \in [N]}$, the matrix $\bM(\bc)$ reduces to a scaled identity for every $\bc \in \Delta^{N-1}$. Hence, the parametric optimization problem $P_{\text{Adam}}(\bc)$ narrows down to the standard SVM formulation
    \begin{align*}
        \min \frac{1}{2}\|\bw\|_2^2 \quad \text{subject to} \quad \bw^\top\bx_i-1\geq0,\; \forall i\in[N].
    \end{align*}
   Therefore, \cref{thm:fixed_pt} implies that \textnormal{\texttt{AdamProxy}} converges to the $\ell_2$-max-margin solution. This finding is consistent with \cref{thm:GR_data}, which establishes the directional convergence of \textnormal{\texttt{Inc-Adam}} on SR data. Together, these results indicate that the structural property of SR data that eliminates coordinate adaptivity persists in the limit $\beta_2 \to 1$.
\end{example}

\begin{example}[Revisiting Gaussian data] \label{ex:gaussian}
    We next validate the fixed-point characterization in \cref{thm:fixed_pt} using the Gaussian dataset from \cref{fig:gaussian_cossim}. The theoretical limit direction is given by the fixed point of $T$ defined in \cref{thm:fixed_pt}, which we compute via the iteration in \cref{alg:fixed_point}. As shown in \cref{fig:gaussian_revisited}, both \texttt{AdamProxy} and mini-batch Adam variants with a batch size of $1$ converge to the predicted solution, confirming the fixed-point formulation and the effectiveness of \cref{alg:fixed_point}. Furthermore, this demonstrates that, depending on the dataset, the limit direction of mini-batch Adam may differ from both the conventional $\ell_2$- and $\ell_\infty$-max-margin solutions.
\end{example}

\begin{figure}[t]
\vspace{-10pt}
    \centering
    \includegraphics[width=\linewidth]{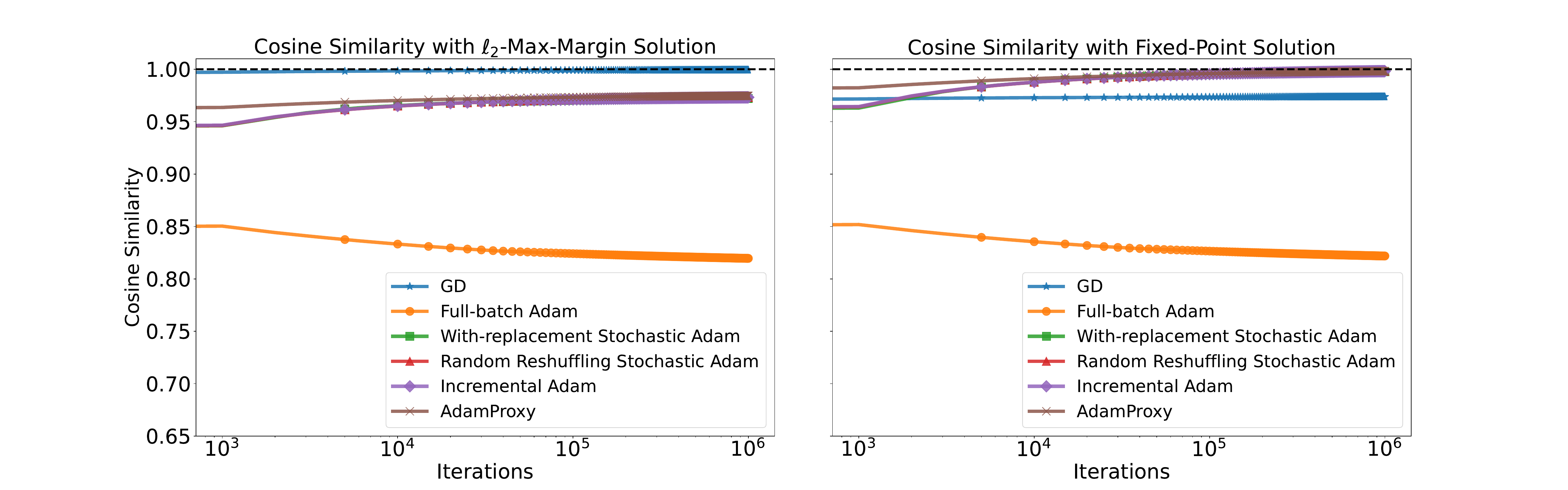}
    \vspace{-15pt}
    \caption{ \textbf{Mini-batch Adam converges to the fixed-point solution on Gaussian data.} We train on the same Gaussian data as in \cref{fig:gaussian_cossim} and plot the cosine similarity of the weight vector with the $\ell_2$-max-margin solution (left) and the fixed-point solution (right). The results show that variants of mini-batch Adam with a batch size of $1$ converge to the fixed-point solution obtained by \cref{alg:fixed_point}, consistent with our theoretical prediction (\cref{thm:fixed_pt}).}
    \label{fig:gaussian_revisited}
\end{figure} 

\begin{example}[Shifted-diagonal data] \label{ex:coor_aligned}
    Consider $N=d$ and $\{\bx_i\}_{i \in [d]} \subseteq \R^d$ with $\bx_i = x_i \be_i + \delta \sum_{j \neq i} \be_j$ for some $\delta>0$ and $0< x_0<\cdots<x_{d-1}$. Then, the $\ell_\infty$-max-margin problem
    \begin{align*}
        \min \frac{1}{2}\|\bw\|_\infty^2 \quad \text{subject to} \quad \bw^\top \bx_i \geq 1,\; \forall i \in [N]
    \end{align*}
    has the solution $\hat{\bw}_\infty = (\frac{1}{x_0+(d-1)\delta}, \cdots, \frac{1}{x_0+(d-1)\delta}) \in \R^d$. Notice that $\bc^*=(1, 0, \cdots, 0) \in \Delta^{d-1}$ is a fixed point of $T$ in \cref{thm:fixed_pt}, and $\hat{\bw}_\infty=\bp(\bc^*)$; detailed calculations are deferred to \cref{appendix:generalization}.
    Consequently, the $\ell_\infty$-max-margin solution serves a candidate for the convergence direction of \textnormal{\texttt{AdamProxy}} as predicted by \cref{thm:fixed_pt}. To verify this, we run \textnormal{\texttt{AdamProxy}} and mini-batch Adam variants with a batch size of $1$ on shifted-diagonal data given by $\bx_0=(1,\delta, \delta, \delta)$, $\bx_1 = (\delta,2,\delta, \delta)$, $\bx_2 = (\delta, \delta, 4, \delta)$, and $\bx_3 = (\delta, \delta, \delta, 8)$ with $\delta=0.1$. As shown in \cref{fig:coor_aligned}, all mini-batch Adam variants converge to the $\ell_\infty$-max-margin solution, consistent with the theoretical prediction.
\end{example}

\begin{figure}[t]
\vspace{-5pt}
    \centering
    \includegraphics[width=\linewidth]{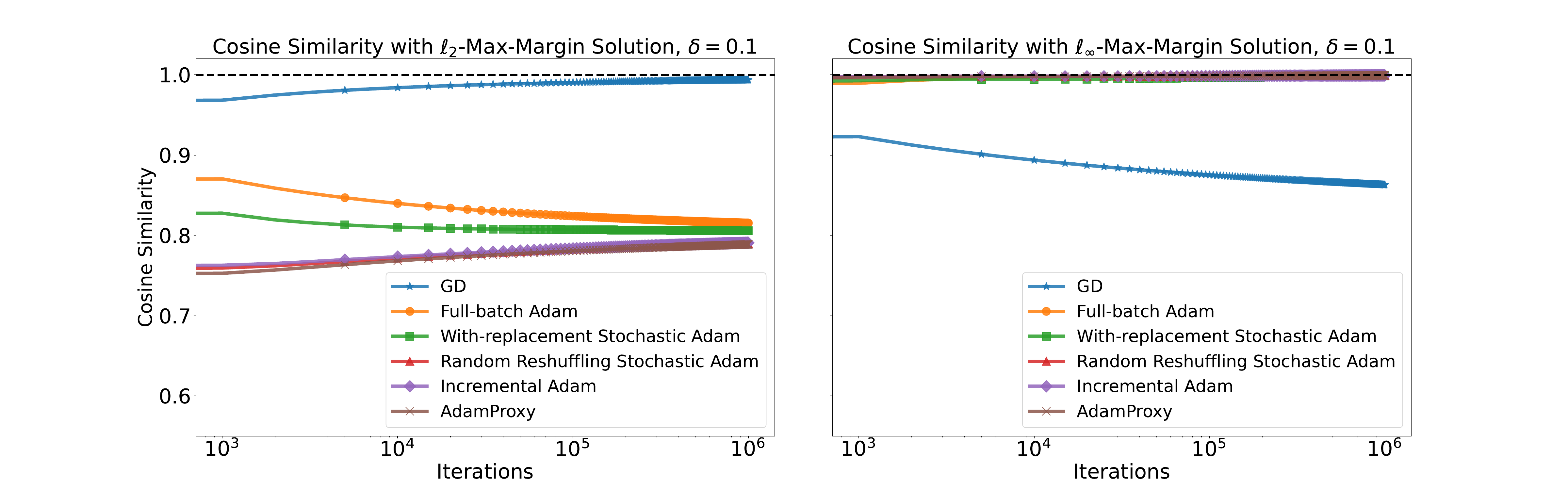}
    \vspace{-15pt}
    \caption{ \textbf{Mini-batch Adam converges to the $\ell_\infty$-max-margin solution on a shifted-diagonal dataset.} We train on the dataset $\bx_0=(1,\delta,\delta,\delta)$, $\bx_1=(\delta,2,\delta,\delta)$, $\bx_2=(\delta,\delta,4,\delta)$, and $\bx_3=(\delta,\delta,\delta,8)$ with $\delta=0.1$. Variants of mini-batch Adam with a batch size of $1$ converge to the $\ell_\infty$-max-margin direction.}
    \label{fig:coor_aligned}
    \vspace{-10pt}
\end{figure}

A key limitation of our analysis is that it assumes $\beta_2 \to 1$ and a batch size of $1$. In \cref{appendix:discussions}, we provide a preliminary analysis of how batch size and momentum hyperparameters affect the implicit bias of mini-batch Adam. In particular, \cref{appendix:general_beta2} explains why our fixed-point framework does not directly extend to general $\beta_2 < 1$.

\section{Signum can Retain $\ell_\infty$-bias under Mini-batch Regime}\label{sec:signum}

In the previous section, we showed that Adam loses its $\ell_\infty$-max-margin bias under mini-batch updates, drifting toward data-dependent solutions. This motivates the search for a \emph{SignGD-type} algorithm that preserves $\ell_\infty$-geometry even in the mini-batch regime. We prove that Signum~\citep{bernstein2018sign} satisfies this property: with momentum close to $1$, its iterates converge to the $\ell_\infty$-max-margin direction for \emph{arbitrary} mini-batch sizes.

\begin{restatable}{theorem}{Signum}\label{thm:signum}
Let $\delta > 0$. Then there exists $\epsilon > 0$ such that the iterates $\{\bw_t\}_{t=0}^\infty$ of \textnormal{\texttt{Inc-Signum}} (\cref{alg:inc_signum}) with batch size $b$ and momentum $\beta \in (1-\epsilon,1)$, under \cref{ass:linsep,ass:lr}, satisfy
\begin{align}
\liminf_{t \to \infty} \frac{\min_{i \in [N]} \bx_i^\top \bw_t}{\|\bw_t\|_\infty} \;\geq\; \gamma_\infty - \delta,
\end{align}
where
\[
\gamma_\infty \triangleq \max_{\|\bw\|_\infty \leq 1} \min_{i \in [N]} \bw^\top \bx_i, 
\quad 
D \triangleq \max_{i \in [N]} \|\bx_i\|_1,
\]
and
\[
\epsilon = \frac{1}{2D \cdot \tfrac Nb (\tfrac Nb -1)} \min\!\left\{\delta, \tfrac{\gamma_\infty}{2}\right\}
\quad \text{if } b < N, 
\qquad 
\epsilon = 1 \quad \text{if } b = N.
\]
\end{restatable}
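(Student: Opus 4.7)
The plan is to show that when $\beta$ is close to $1$, the \texttt{Inc-Signum} momentum $\bm_t$ differs from a full-batch momentum signal only by a controlled residual, so that the \texttt{Inc-Signum} update reduces asymptotically to full-batch SignGD. The $\ell_\infty$-max-margin conclusion then follows from the standard SignGD margin argument on separable data. The explicit form of $\epsilon$ arises by matching the residual bound, which scales as $(1-\beta)K(K-1)D$ with $K=N/b$, against the safety margin $\min\{\delta,\gamma_\infty/2\}$. When $b=N$ so that $K=1$, no residual appears and any $\beta\in[0,1)$ suffices, which is why $\epsilon$ can be taken as $1$.

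\textbf{Residual bound via epoch telescoping.} I would unroll $\bm_t=(1-\beta)\sum_{s=0}^t\beta^{t-s}\bg_s$ and group the $s$-sum by epoch with $s=rK+j$, $j\in[K]$. Introduce the idealized full-batch momentum $\tilde{\bm}_t\triangleq(1-\beta)\sum_{s=0}^t\beta^{t-s}\nabla\Ls(\bw_s)$. At a fixed iterate the $K$ mini-batch gradients inside one epoch sum exactly to $K\nabla\Ls$, so the mini-batch fluctuations would telescope perfectly if the exponential weights $\beta^{t-rK-j}$ were constant in $j$. Factoring out $\beta^{t-rK}$ from each epoch block, the within-epoch variation of the weights is at most $(K-1)(1-\beta)\beta^{t-rK}$; combined with the per-sample gradient bound $\|\bg_s\|_\infty\leq D=\max_i\|\bx_i\|_1$ (for logistic/exponential loss) and with within-epoch iterate drift $\bw_{rK+j}-\bw_{rK}=O(K\eta_{rK})$ absorbed into a lower-order term via \cref{ass:lr}, the per-epoch residual is $O((1-\beta)^2K(K-1)D\beta^{t-rK})$. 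Summing over $r$ using $\sum_r\beta^{rK}=1/(1-\beta^K)\sim 1/(K(1-\beta))$ yields $\|\bm_t-\tilde{\bm}_t\|_\infty\leq 2(1-\beta)K(K-1)D$. For $\beta\in(1-\epsilon,1)$ with the stated $\epsilon$ this residual is at most $\min\{\delta,\gamma_\infty/2\}$, and therefore $\sign(\bm_t[k])=\sign(\tilde{\bm}_t[k])$ on every coordinate with $|\tilde{\bm}_t[k]|$ above this threshold.

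\textbf{SignGD margin argument and main obstacle.} Since $\eta_t\to 0$, the moving average $\tilde{\bm}_t$ inherits the direction of the instantaneous full-batch gradient $\nabla\Ls(\bw_t)$ asymptotically, in the spirit of the reduction underlying \cref{prop:approxDeterAdam}. Hence on informative coordinates the \texttt{Inc-Signum} update coincides with the full-batch SignGD update $-\sign(\nabla\Ls(\bw_t))$. Invoking the standard potential argument for SignGD on separable data, together with the $\ell_\infty$--$\ell_1$ duality $\gamma_\infty=\min_{\bq\in\Delta^{N-1}}\|\sum_{i}q_i\bx_i\|_1$, yields $\liminf_t\min_i\bx_i^\top\bw_t/\|\bw_t\|_\infty\geq\gamma_\infty-\delta$ once the residual-induced sign flips are folded in. The main obstacle is making this last step quantitative: a sign flip on coordinate $k$ perturbs $\bx_i^\top\bw_t$ by at most $2\eta_t|\bx_i[k]|$, and summing over all low-magnitude coordinates gives a cumulative perturbation of at most $2\eta_t D$; the factor $\frac{1}{2DK(K-1)}$ appearing in $\epsilon$ is precisely what is needed to keep this cumulative effect below $\delta$ while preserving a full-batch margin gain of at least $\gamma_\infty-\delta/2$. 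A secondary but routine obstacle is rigorously controlling the within-epoch iterate drift, which proceeds via a bootstrap argument on the learning-rate schedule of \cref{ass:lr}.
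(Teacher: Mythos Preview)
Your high-level strategy of reducing \texttt{Inc-Signum} to full-batch SignGD via a momentum residual bound is correct, and your epoch-telescoping calculation lands on the same constant $(1-\beta)K(K-1)D$ that drives the paper's choice of $\epsilon$. However, there is a genuine gap in how you propose to use this bound.

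Your residual estimate $\|\bm_t-\tilde{\bm}_t\|_\infty\leq 2(1-\beta)K(K-1)D$ is an \emph{absolute} constant, obtained from the crude bound $\|\bg_s\|_\infty\leq D$. But the margin conclusion requires $\Ls(\bw_t)\to 0$, which forces $\nabla\Ls(\bw_s)\to 0$ and hence $\bm_t,\tilde{\bm}_t\to 0$. An absolute bound on their difference then becomes useless: for all large $t$, \emph{every} coordinate of $\tilde{\bm}_t$ falls below your fixed threshold, so the criterion ``$\sign(\bm_t[k])=\sign(\tilde{\bm}_t[k])$ whenever $|\tilde{\bm}_t[k]|$ exceeds the threshold'' is vacuous. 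Your downstream per-coordinate sign-flip bookkeeping (``cumulative perturbation at most $2\eta_t D$'') then says nothing, since all coordinates are in the low-magnitude regime and the sign of $\bm_t$ could be arbitrary as far as your bound allows.

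The paper's remedy is twofold. First, it compares $\bm_t$ directly to $\nabla\Ls(\bw_t)$ (not to an auxiliary full-batch momentum) and proves a \emph{relative} bound
\[
\|\bm_t-\nabla\Ls(\bw_t)\|_1\leq\bigl[(1-\beta)DK(K-1)+C_1\eta_t+C_2\beta^t\bigr]\,\gG(\bw_t),
\]
where $\gG(\bw)=-\tfrac1N\sum_i\ell'(\bw^\top\bx_i)$ scales with the gradient magnitude. Getting this requires bounding $|\ell'(\bw_s^\top\bx_i)|$ by $N\gG(\bw_t)$ times a drift factor controlled through \cref{ass:lr}, not by the absolute constant $1$. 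Second, the paper never reasons about individual coordinate signs: writing $\be_t=\bm_t-\nabla\Ls(\bw_t)$, it bounds
\[
\langle\nabla\Ls(\bw_t),\sign(\bm_t)\rangle=\|\bm_t\|_1-\langle\be_t,\sign(\bm_t)\rangle\geq\|\nabla\Ls(\bw_t)\|_1-2\|\be_t\|_1\geq\bigl[\gamma_\infty-2(1-\beta)DK(K-1)-o(1)\bigr]\gG(\bw_t),
\]
plugs this into a one-step descent inequality, and telescopes to obtain loss convergence and the margin bound. Your sign-flip perturbation argument is neither needed nor workable here; the inner-product lower bound is what carries the proof.
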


\Cref{thm:signum} demonstrates that, unlike Adam, Signum preserves $\ell_\infty$-max-margin bias for any batch size, provided momentum is sufficiently close to $1$. This generalizes the full-batch result of \citet{fan2025implicitbiasspectraldescent}. Moreover, the requirement $\beta \approx 1$ is not merely technical but \emph{necessary} in the mini-batch setting to ensure convergence to the $\ell_\infty$-max-margin solution; see \cref{fig:signum_batchsize} in \cref{appendix:additional_exps} for empirical evidence. As shown in \cref{fig:signum}, our experiments on the Gaussian dataset from \cref{fig:gaussian_cossim} show that \texttt{Inc-Signum} ($\beta=0.99$) maintains $\ell_\infty$-bias, regardless of the choice of batch size. Proofs and further discussion are deferred to \cref{appendix:signum}.

\begin{figure}[t]
    \vspace{-10pt}
    \centering
    \includegraphics[width=\linewidth]{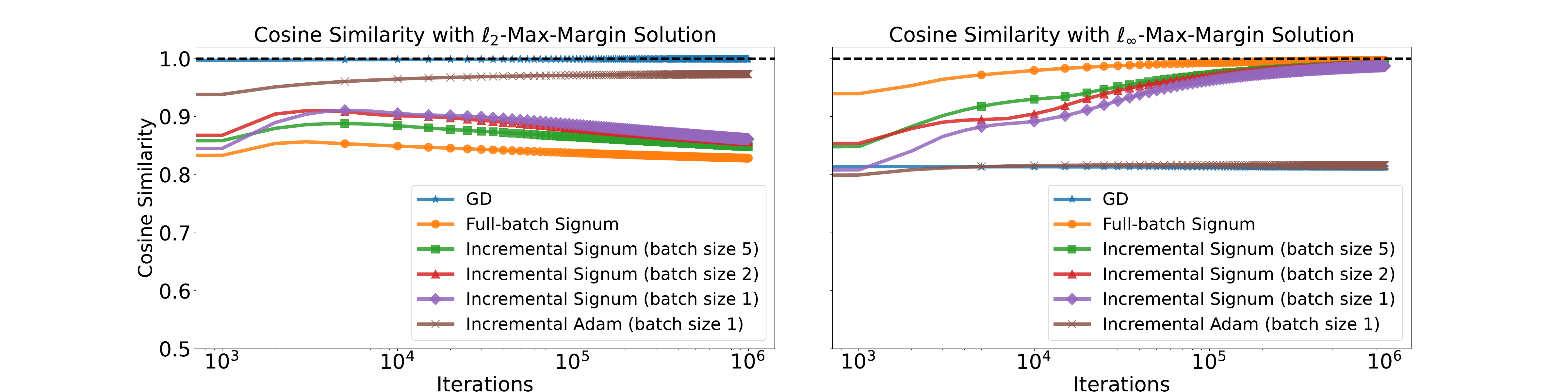}
    \vspace{-15pt}
    \caption{\textbf{Mini-batch Signum converges to the $\ell_\infty$-max-margin solution.} We train on the same Gaussian data ($N=10$, $d=50$) as in \cref{fig:gaussian_cossim}, using full-batch Signum and incremental Signum with $\beta=0.99$, for batch sizes $b\in \{5,2,1\}$. Across all batch sizes, incremental Signum consistently converges to the $\ell_\infty$-max-margin solution, in sharp contrast to incremental Adam.}
    \label{fig:signum}
    \vspace{-5pt}
\end{figure}

\section{Related Work}

\vspace{-3pt}
\paragraph{Understanding Adam.}  
Adam~\citep{kingma2014adam} and its variant AdamW~\citep{loshchilov2018decoupled} are standard optimizers for large-scale models, particularly in domains like language modeling where SGD often falls short. A significant body of research seeks to explain this empirical success. One line focuses on convergence guarantees. The influential work of \citet{j.2018on} demonstrates Adam’s failure to converge on certain convex problems, which motivates numerous studies establishing its convergence under various practical conditions~\citep{defossez2022a, zhang2022adam, li2023convergence, hong2024on, ahn2024adam, jin2025comprehensiveframeworkanalyzingconvergence}. Another line investigates why Adam outperforms SGD, attributing its success to robustness against heavy-tailed gradient noise~\citep{zhang2020why}, better adaptation to ill-conditioned landscapes~\citep{jiang2023how, pan2023toward}, and effectiveness in contexts of heavy-tailed class imbalance or gradient/Hessian heterogeneity~\citep{kunstner2024heavy, zhang2024why, tomihari2025understanding}. \citet{ahn2024linear} further observe that this performance gap arises even in shallow linear Transformers. Recent works investigate how the choice of momentum hyperparameters \citep{orvieto2025in} and the rotation operation \citep{zhang2025understanding} affect the performance of Adam.

\vspace{-3pt}
\paragraph{Implicit Bias and Connection to $\ell_\infty$-Geometry.}
A growing body of work examines Adam’s implicit bias and its connection to $\ell_\infty$-geometry. This link is motivated by Adam’s similarity to SignGD~\citep{balles2018dissecting, bernstein2018sign}, which performs normalized steepest descent under the $\ell_\infty$-norm. \citet{kunstner2023noise} show that the performance gap between Adam and SGD increases with batch size, while SignGD achieves performance similar to Adam in the full-batch regime, supporting this connection. \citet{zhang2024the} prove that Adam without a stability constant converges to the $\ell_\infty$-max-margin solution in separable linear classification, later extended to multi-class classification by \citet{fan2025implicitbiasspectraldescent}. \citet{tsilivis2025flavors} investigate implicit bias of steepest descent in homogeneous neural networks, supporting that SignGD describes a typical dynamics of Adam. Complementing these results, \citet{xie2024implicit} show that AdamW implicitly solves an $\ell_\infty$-norm-constrained optimization problem, connecting its dynamics to the Frank-Wolfe algorithm. Exploiting this $\ell_\infty$-geometry is argued to be a key factor in Adam’s advantage over SGD, particularly for language model training~\citep{xie2025adam}. \citet{vasudeva2025the} examine how Adam and GD show different implicit biases when training two-layer ReLU networks, describing Adam's richer and more diverse decision boundary.

\section{Discussion and Future Work}

We studied the convergence directions of Adam and Signum for logistic regression on linearly separable data in the mini-batch regime. Unlike full-batch Adam, which always converges to the $\ell_\infty$-max-margin solution, mini-batch Adam exhibits data-dependent behavior, revealing a richer implicit bias, while Signum consistently preserves the $\ell_\infty$-max-margin bias across all batch sizes. 

\vspace{-3pt}
\paragraph{Toward understanding the Adam--SGD gap.}
Empirical evidence shows that Adam’s advantage over SGD is most pronounced in large-batch training, while the gap diminishes with smaller batches~\citep{kunstner2023noise,sreckovic2025your,marek2025small}. Our results suggest a possible explanation: the $\ell_\infty$-adaptivity of Adam, proposed as the source of its advantage~\citep{xie2025adam}, may vanish in the mini-batch regime. An important direction for future work is to investigate whether this loss of $\ell_\infty$-adaptivity extends beyond linear models and how it interacts with practical large-scale training.

\vspace{-3pt}
\paragraph{Limitations.}
Our analysis for general dataset relies on the asymptotic regime $\beta_2 \to 1$ and on incremental Adam as a tractable surrogate. Extending the framework to incorporate general $\beta_2<1$, larger batch sizes, and common sampling schemes (e.g., random reshuffling) would make the theory more complete. See \cref{appendix:discussions} for further discussion. Developing additional theoretical tools that can be applied under weaker assumptions also remains an important direction.

\section*{Acknowledgments}
This work was supported by the National Research Foundation of Korea (NRF) grants funded by the Korean government (MSIT) (No.\ RS-2023-00211352; No.\ RS-2024-00421203) and the InnoCORE program of the Ministry of Science and ICT (No.\ N10250156).

\section*{Reproducibility Statement}
All assumptions and theorems for our theoretical results are stated in the main paper, with their complete proofs deferred to \cref{appendix:approx_Adam,appendix:structured,appendix:generalization,appendix:signum}. The primary experimental setups are described in the main paper and \cref{appendix:exp_detail}, while details for supplementary experiments are provided in \cref{appendix:additional_exps,appendix:exp_detail}.

\section*{Declaration of LLM Usage}
The authors utilized LLMs to improve the grammar and readability of this manuscript. The core conceptualization, analysis, and writing of the content were performed exclusively by the authors.

\bibliography{refs}
\bibliographystyle{iclr2026_conference}

\newpage
\appendix

\renewcommand{\appendixpagename}{\centering \LARGE Appendix}
\appendixpage

\startcontents[section]
\printcontents[section]{l}{1}{\setcounter{tocdepth}{2}}
\newpage

\section{Further Discussion} \label{appendix:discussions}
\subsection{Effect of Hyperparameters on Mini-batch Adam} \label{appendix:batch_sizes}

The scope of our analysis does not fully encompass the effects of batch sizes and momentum hyperparameters on the limit direction of mini-batch Adam. To motivate further investigation, this section presents preliminary empirical evidence that shows the sensitivity of the limit direction to these choices.

\paragraph{Effect of Batch Size.}
To investigate the effect of batch size on the limiting behavior of mini-batch Adam, we run incremental Adam on the Gaussian data with $N=10, d=50$, varying batch sizes among 1, 2, 5, and 10. \cref{fig:adam_batchsize} shows that as the batch size increases, the cosine similarity between the iterate and $\ell_\infty$-max-margin solution increases. This result suggests that the choice of batch size does affect the limiting behavior of mini-batch Adam, wherein larger batch sizes yield dynamics that converge towards those of the full-batch regime. A formal characterization of this dependency presents a compelling direction for future research.

\begin{figure}[tbhp]
    \centering
    \includegraphics[width=\linewidth]{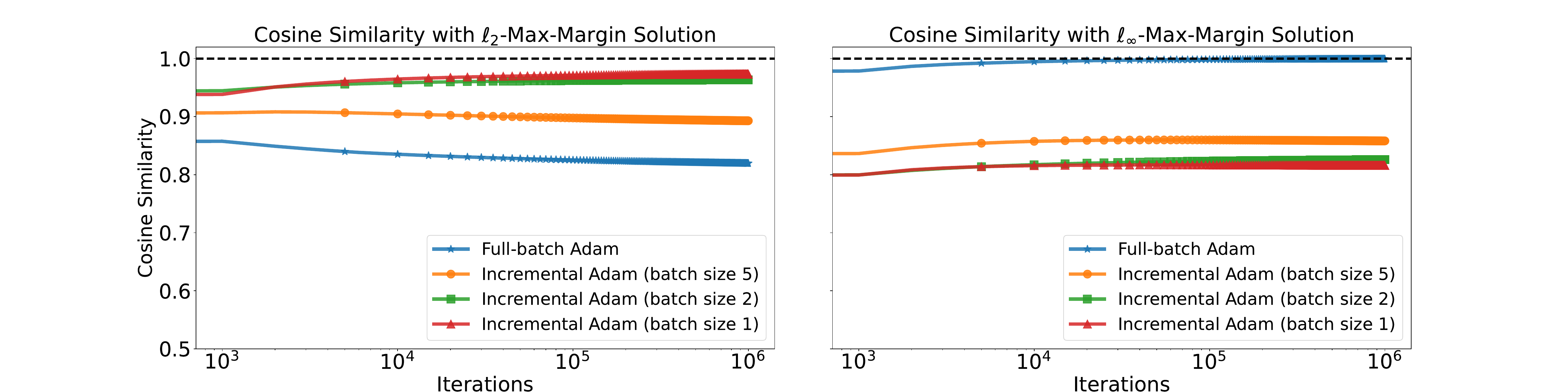}
    \caption{ \textbf{The choice of batch size influences the limit direction of mini-batch Adam. } We train on the same Gaussian data ($N=10, d=50$) as in \cref{fig:gaussian_cossim} and plot the cosine similarity of the weight vector with the $\ell_2$-max-margin solution (left) and the $\ell_\infty$-max-margin solution (right), varying batch sizes in $\{1, 2, 5, 10\}$. As the choice of batch size becomes closer to 10 (full-batch), the limit direction aligns closer to $\ell_\infty$-max-margin solution.}
    \label{fig:adam_batchsize}
\end{figure}

\paragraph{Effect of Momentum Hyperparameters.}
\cref{thm:fixed_pt} characterizes the limit direction of \texttt{AdamProxy}, which approximates mini-batch Adam with a batch size of one in the high-$\beta_2$ regime. We investigate how this approximation fails in the different choice of momentum hyperparameters. Revisiting the Gaussian data with $N=10, d=50$, we run mini-batch Adam with a batch size of 1 (including \texttt{Inc-Adam}) using LR schedule $\eta_t = \mathcal{O}(t^{-0.8})$, varying the momentum hyperparameters $(\beta_1, \beta_2) \in \{(0.1, 0.95), (0.5, 0.95), (0.9, 0.95), (0.1, 0.1), (0.1, 0.5), (0.1, 0.9)\}$.

The first experiment investigates the influence of $\beta_1$ by varying $\beta_1 \in \{0.1, 0.5, 0.9\}$ while maintaining a high choice of $\beta_2=0.95$. The results, presented in \cref{fig:beta1_sweep}, demonstrate that $\beta_1$ does not affect the convergence direction. This finding validates \cref{prop:approxAdamProxy}, which posits that our \texttt{AdamProxy} framework accurately models the high-$\beta_2$ regime, regardless of the choice of $\beta_1$. 

Conversely, the choice of $\beta_2$ shows to be critical. We sweep $\beta_2 \in \{0.1, 0.5, 0.9\}$ while maintaining $\beta_1=0.1$ and plot the cosine similarities in \cref{fig:beta2_sweep}. The results illustrate that for choices of $\beta_2 \in \{0.1, 0.5\}$, the trajectory of mini-batch Adam deviates from the fixed-point solution of \cref{thm:fixed_pt}. It indicates that the high-$\beta_2$ condition is crucial for the approximation via \texttt{AdamProxy} and characterizing the limit direction of mini-batch Adam in the low-$\beta_2$ regime remains an important future direction.

\begin{figure}[tb]
\begin{minipage}{\textwidth}
        \includegraphics[width=\linewidth]{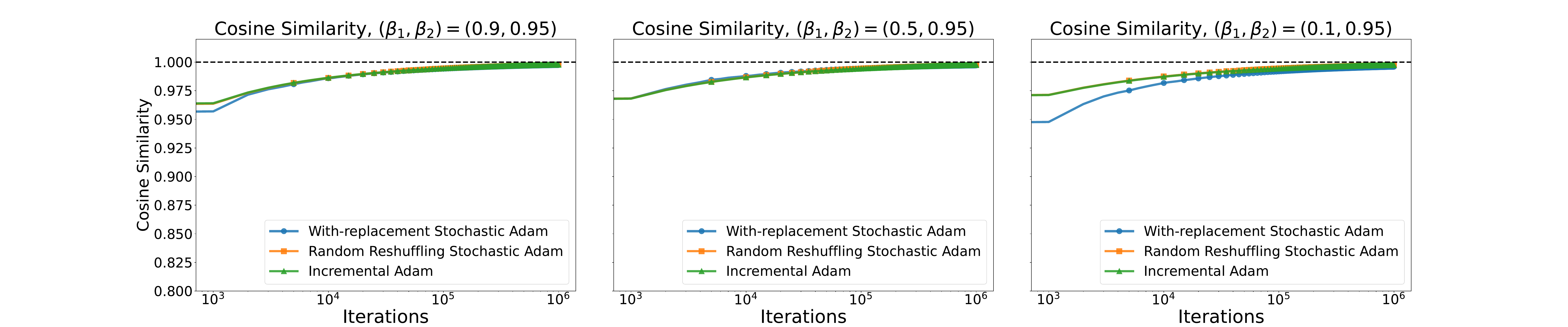}
        \caption{\textbf{$\beta_1$ does not affect the convergence direction of mini-batch Adam for large $\beta_2$.} We train on the same Gaussian data as in \cref{fig:gaussian_cossim}, varying $\beta_1 \in \{0.9, 0.5, 0.1\}$ with fixed $\beta_2 = 0.95$, and plot the cosine similarity between the weight vector and the fixed-point solution~(\cref{alg:fixed_point}). All mini-batch Adam variants with a batch size of $1$ consistently converge to the fixed-point solution.}
        \label{fig:beta1_sweep}
\end{minipage}

\begin{minipage}{\textwidth}
    \includegraphics[width=\linewidth]{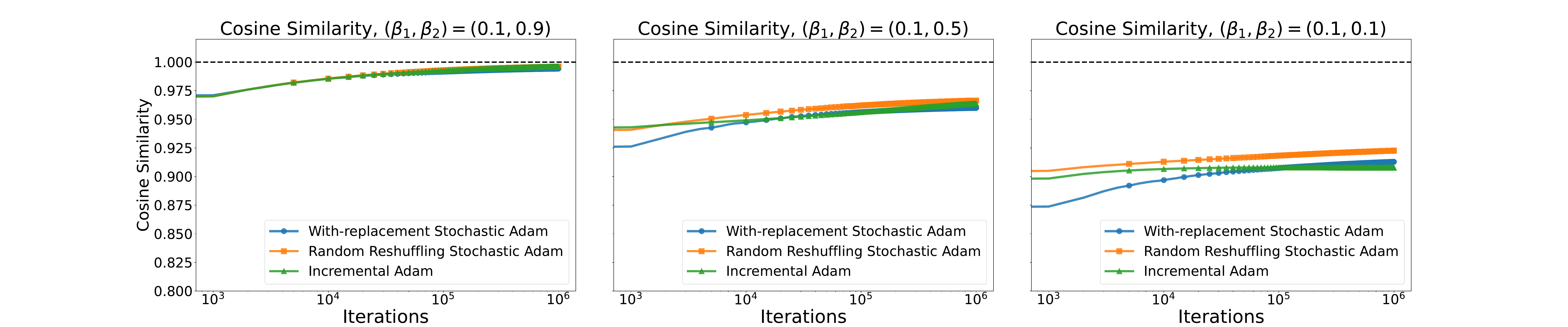}
    \caption{\textbf{$\beta_2$ affects the convergence direction of mini-batch Adam.} We train on the same Gaussian data as in \cref{fig:gaussian_cossim}, varying $\beta_2 \in \{0.9, 0.5, 0.1\}$ with fixed $\beta_1 = 0.1$, and plot the cosine similarity between the weight vector and the fixed-point solution~(\cref{alg:fixed_point}). Mini-batch Adam variants with a batch size of $1$ deviate increasingly from the fixed-point solution as $\beta_2$ decreases.}
    \label{fig:beta2_sweep}
\end{minipage}
\end{figure}

\subsection{Can We Directly Analyze \texttt{Inc-Adam} for General $\beta_2$?} \label{appendix:general_beta2}
As empirically demonstrated in \cref{appendix:batch_sizes}, the selection of $\beta_2$ alters the limiting behavior of \texttt{Inc-Adam}. This observation motivates an inquiry into whether our fixed-point formulation can be directly generalized to accommodate general choices of $\beta_2$, based on a more \textit{general} proxy algorithm. We proceed by outlining the technical challenges that prevent such a direct application of our framework, even under a stronger assumption on $\beta_1$ and the behavior of $\bw_r$.

Let $\{\bw_t\}$ be the \texttt{Inc-Adam} iterates with $\beta_1=0$. For simplicity, we only consider the epoch-wise update and denote $\bw_r=\bw_r^0, \eta_r=C_{\text{inc}}(0, \beta_2)\eta_{rN}$ as an abuse of notation. By \cref{prop:approxAdam}, $\bw_r$ can be written by 
\begin{align*}
    \bdel_r &\triangleq \underbrace{\sum_{i \in [N]}\frac{\nabla\Ls_{i}(\bw_r)}{\sqrt{\sum_{j \in [N]}\beta_2^{(i, j)}\nabla\Ls_{j}(\bw_r)^2}}}_{(\spadesuit)} + \boldsymbol{\epsilon}_r\\
    \bw_{r+1}-\bw_r &= -\eta_r \bdel_r
\end{align*}
for some $\boldsymbol{\epsilon}_r \rightarrow \mathbf{0}$. Note that $(\spadesuit)$ replaces \texttt{AdamProxy} in \cref{sec:generalization}, incorporating the rich behavior induced by a general $\beta_2$. Then, we provide a preliminary characterization of the limit direction of \texttt{Inc-Adam} as follows.

\begin{restatable}{lemma}{DirectionNewproxy}\label{lemma:direction_newproxy}
    Suppose that (a) $\Ls(\bw_r) \rightarrow 0$ and (b) $\bw_r=\|\bw_r\|_2 \hat{\bw}+\boldsymbol{\rho}(r)$ for some $\hat{\bw}$ with $\exists\lim_{r\rightarrow}\boldsymbol{\rho}(r)$. Then, under \cref{ass:linsep,ass:nonzero}, there exists $\bc=(c_0, \cdots, c_{N-1}) \in \Delta^{N-1}$ such that the limit direction $\hat{\bw}$ of \textnormal{\texttt{Inc-Adam}} with $\beta_1=0$ satisfies
    \begin{align} \label{eq:direction_newproxy}
        \hat{\bw} \propto \sum_{i\in[N]}\frac{ c_i\bx_i}{\sqrt{\sum_{j\in[N]} \beta_2^{(i, j)}c_j^2 \bx_j^2}},
    \end{align}
    and $c_i=0$ for $i \notin S$, where $S=\argmin_{i \in [N]} \hat{\bw}^\top\bx_i$ is the index set of support vectors of $\hat{\bw}$.
\end{restatable}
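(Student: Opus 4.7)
The plan is to adapt the proof of \cref{lemma:direction_proxy} almost verbatim, with the only new ingredient being that the denominators $\sqrt{\sum_j \beta_2^{(i,j)} c_j^2 \bx_j^2}$ now depend on the outer index $i$. I would first define $\alpha_i(r) \triangleq |\ell'(\bw_r^\top \bx_i)|$ and $A(r) \triangleq \sum_{k\in[N]} \alpha_k(r)$, and set $c_i(r) \triangleq \alpha_i(r)/A(r)$, so that $\bc(r) \in \Delta^{N-1}$. Writing $\nabla \Ls_i(\bw_r) = -\alpha_i(r)\bx_i$ and noting that the vector $\sqrt{\sum_j \beta_2^{(i,j)} \alpha_j(r)^2 \bx_j^2}$ is entry-wise homogeneous of degree $1$ in $(\alpha_j(r))_j$, the scalar $A(r)$ cancels out and
\begin{align*}
\bdel_r \;=\; -\sum_{i\in[N]} \frac{c_i(r)\,\bx_i}{\sqrt{\sum_{j\in[N]} \beta_2^{(i,j)}\, c_j(r)^2\, \bx_j^2}} \;+\; \boldsymbol{\epsilon}_r.
\end{align*}
By compactness of $\Delta^{N-1}$, some subsequence $r_k\to\infty$ satisfies $\bc(r_k)\to\bc$ for a suitable $\bc \in \Delta^{N-1}$.

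Next I would show $c_i=0$ for $i\notin S$. Assumption (a) together with \cref{ass:linsep} forces $\|\bw_r\|_2 \to \infty$, and assumption (b) gives $\bw_r = \|\bw_r\|_2\,\hat\bw + \boldsymbol{\rho}(r)$ with $\boldsymbol{\rho}(r)$ bounded. For both $\ell_{\text{exp}}$ and $\ell_{\text{log}}$, one has $|\ell'(z)| \asymp e^{-z}$ as $z\to\infty$, so that for any $i \notin S$ and a fixed $i^* \in S$,
\begin{align*}
\frac{\alpha_i(r)}{\alpha_{i^*}(r)} \;\leq\; C \exp\!\bigl(-\|\bw_r\|_2\,(\hat\bw^\top\bx_i - \hat\bw^\top\bx_{i^*})\bigr) \;\to\; 0,
\end{align*}
where the exponent is strictly positive since $i \notin S$. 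This forces $c_i(r_k) \to 0$, hence $c_i=0$.

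The final step extracts the direction of $\hat\bw$ via Stolz--Ces\`aro. Summing $\bw_{r+1}-\bw_r = -\eta_r \bdel_r$ and dividing by $\|\bw_r\|_2$ gives
\begin{align*}
\frac{\bw_r}{\|\bw_r\|_2} \;=\; \frac{\bw_0 + \sum_{s<r}\eta_s \Bigl(\, \sum_i \tfrac{c_i(s)\,\bx_i}{\sqrt{\sum_j \beta_2^{(i,j)}\, c_j(s)^2\, \bx_j^2}} - \boldsymbol{\epsilon}_s\Bigr)}{\|\bw_r\|_2}.
\end{align*}
Applying Stolz--Ces\`aro coordinatewise along $\{r_k\}$, with the diverging sequence $\sum_{s<r}\eta_s$ (valid by \cref{ass:lr}), and using continuity of the summand in $\bc$---ensured because $\beta_2\in(0,1)$ makes every $\beta_2^{(i,j)}>0$ and \cref{ass:nonzero} keeps the radicand entry-wise bounded away from $0$ whenever $\bc\neq\vzero$---yields the desired proportionality.

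The main obstacle is the Stolz--Ces\`aro step, since $\bc(s)$ is only known to converge along a subsequence. One must argue that the time-averaged summand along $\{r_k\}$ still coincides (in direction) with the expression evaluated at $\bc$. Following the strategy in the proof of \cref{lemma:direction_proxy}, this is handled by showing that \emph{every} subsequential limit of $\bc(r)$ produces a right-hand side parallel to $\hat\bw$, and then invoking uniqueness of $\hat\bw$ from assumption~(b) to conclude. The remaining bookkeeping---controlling $\boldsymbol{\epsilon}_r$ uniformly via $\eta_r\to 0$ and \cref{ass:lr}(b), and unifying the two loss choices $\ell_{\text{exp}},\ell_{\text{log}}$ through the common asymptotics $|\ell'(z)|\asymp e^{-z}$---parallels the arguments in \cref{prop:approxAdam,lemma:direction_proxy} and should be routine.
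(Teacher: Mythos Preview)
The main gap is that you read assumption~(b) as ``$\boldsymbol{\rho}(r)$ bounded,'' whereas it actually states that $\lim_{r\to\infty}\boldsymbol{\rho}(r)=:\hat{\boldsymbol{\rho}}$ exists. This is a strictly stronger hypothesis than the one in \cref{lemma:direction_proxy}, and it is precisely what dissolves the Stolz--Ces\`aro obstacle you flag. Indeed, for $i,i^*\in S$ one has $\bw_r^\top(\bx_i-\bx_{i^*})=\boldsymbol{\rho}(r)^\top(\bx_i-\bx_{i^*})\to\hat{\boldsymbol{\rho}}^\top(\bx_i-\bx_{i^*})$, so (using $|\ell'(z)|\asymp e^{-z}$) the ratios $\alpha_i(r)/\alpha_{i^*}(r)$ converge and hence the \emph{entire} sequence $\bc(r)$ converges, not just a subsequence. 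Consequently $\bdel_r-\boldsymbol{\epsilon}_r$ converges to some $\hat\bdel$, and Stolz--Ces\`aro applies to the full sequence as in the paper's proof, giving $\hat\bw\propto\hat\bdel$ directly. The paper exploits exactly this: after peeling off the residual over $i\notin S$, it simply passes to the limit via continuity, with $c_i\propto\exp(-\hat{\boldsymbol{\rho}}^\top\bx_i)$ for $i\in S$.

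Without this observation, your proposed fix---``every subsequential limit of $\bc(r)$ produces a right-hand side parallel to $\hat\bw$''---is not actually justified by what you wrote. Stolz--Ces\`aro along a subsequence does not, in general, link a subsequential limit of $\bdel_r$ to $\hat\bw$; and the mechanism you cite from \cref{lemma:direction_proxy} relies on the \emph{assumed} convergence of $\bdel_t/\|\bdel_t\|_2$ (part~(c) of \cref{ass:adamproxy}), which is not available here. So as written, the subsequence route has a genuine hole, whereas the intended route via $\boldsymbol{\rho}(r)\to\hat{\boldsymbol{\rho}}$ closes it immediately.
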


We recall that the fixed-point formulation in \cref{thm:fixed_pt} arises from constructing an optimization problem whose KKT conditions are given by \cref{eq:direction_proxy} fixing the $c_i$'s in the denominator; the convergence direction is then characterized when the dual solutions of the KKT conditions coincide with the $c_i$'s in the denominator. Therefore, to establish an analogous fixed-point type characterization, we should construct an optimization problem whose solution is given by $\bw^*=\sum_{i\in[N]}\frac{d_i\bx_i}{\sqrt{\sum_{j\in[N]} \beta_2^{(i, j)}c_j^2 \bx_j^2}}$ with dual variables $d_i \geq 0$ satisfying that $d_j =0$ for $j \in S=\argmin_{i \in [N]}{\bw^*}^\top \bx_i$.

However, this cannot be formulated via KKT conditions of an optimization problem. The index set $S$ indicates support vectors with respect to $\bx_i$, while our dual variables are multiplied to $\frac{\bx_i}{\sqrt{\sum_{j\in[N]} \beta_2^{(i, j)}c_j^2 \bx_j^2}}=\tilde{\bx}_i(\bc)$. A notable direction for future work is to generalize the proposed methodology for arbitrary values of $\beta_2$.

\section{Additional Experiments}\label{appendix:additional_exps}
\paragraph{Supplementary Experiments in \cref{sec:structured}.}
To investigate the universality of \cref{thm:GR_data} with respect to the choice of the momentum hyperparameters, we run mini-batch Adam (with batch size 1) on SR dataset $\bx_0=(1,1,1,1)$, $\bx_1 = (2,2,2,-2)$, $\bx_2 = (3,3,-3,-3)$, and $\bx_3 = (4,-4,4,-4)$, varying the momentum hyperparameters $(\beta_1, \beta_2) \in \{(0.1, 0.1), (0.5, 0.5), (0.9, 0.95)\}$. \cref{fig:add_rademacher} demonstrates that its limiting behavior toward $\ell_2$-max-margin solution consistently holds on the broad choices of $(\beta_1, \beta_2)$. 

\paragraph{Supplementary Experiments in \cref{sec:signum}.}
\cref{thm:signum} demonstrates that \texttt{Inc-Signum} maintains its bias to $\ell_\infty$-max-margin solution, while the momentum hyperparameter $\beta$ should be close enough to 1 depending on the choice of batch size; the gap between $\beta$ and 1 should decrease as batch size $b$ decreases. To investigate this dependency, we run \texttt{Inc-Signum} on the same Gaussian data as in \cref{fig:gaussian_cossim}, varying batch size $b \in \{1, 2, 5, 10\}$ and the momentum hyperparameter $\beta \in \{0.5, 0.9, 0.95, 0.99\}$. \cref{fig:signum_batchsize} shows that to maintain the $\ell_\infty$-bias, the choice of $\beta$ should be closer to 1 as the batch size decreases.

\begin{figure}[tbhp] 
    \begin{subfigure}{\textwidth}
        \centering
        \includegraphics[width=\linewidth]{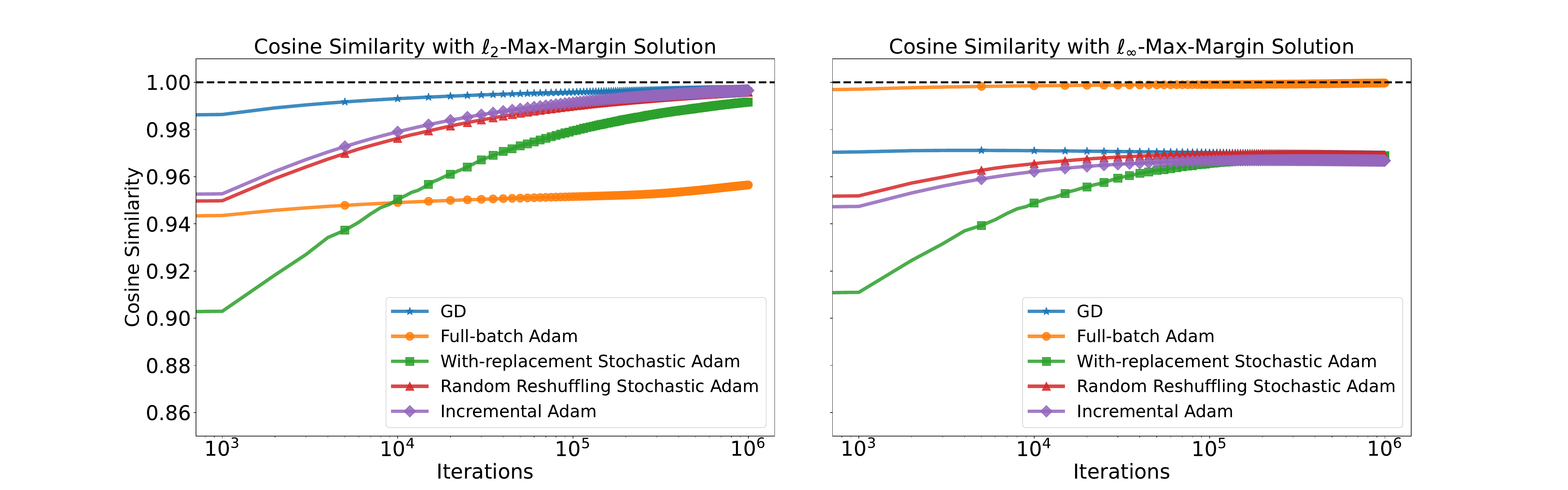}
        \caption{$(\beta_1, \beta_2) = (0.1, 0.1)$}
    \end{subfigure}
    \begin{subfigure}{\textwidth}
        \centering
        \includegraphics[width=\linewidth]{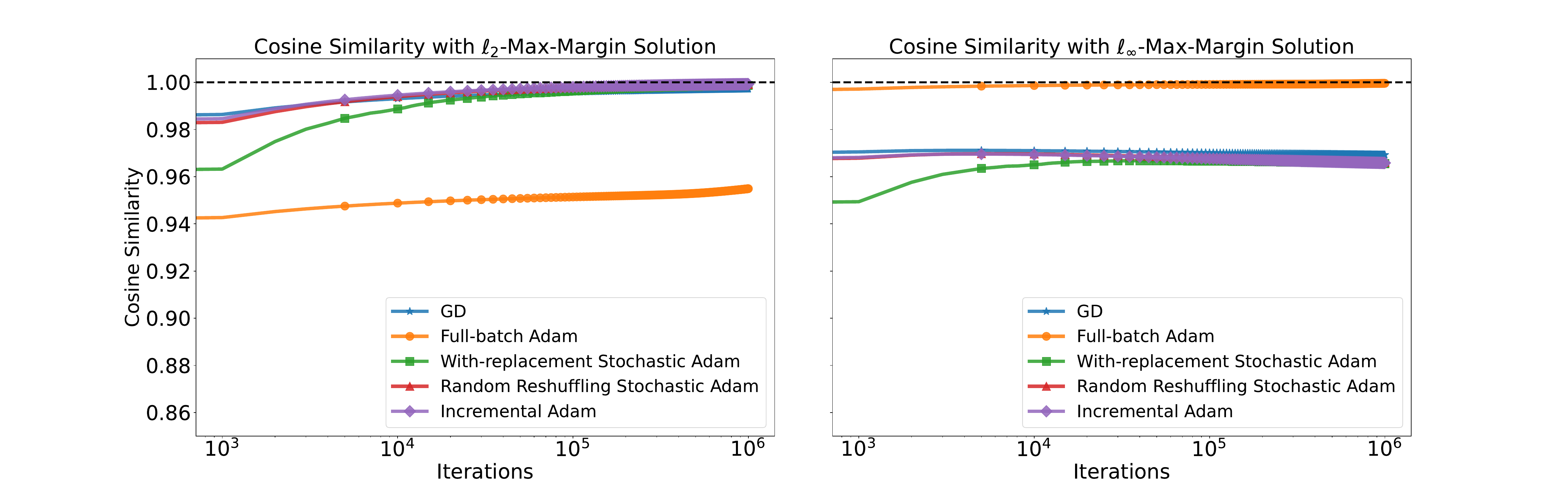}
        \caption{$(\beta_1, \beta_2) = (0.5, 0.5)$}
    \end{subfigure}
    \begin{subfigure}{\textwidth}
        \centering
        \includegraphics[width=\linewidth]{figures/GenRad_n4_d4_b1-0.9_b2-0.95_adam_cossim.pdf}
        \caption{$(\beta_1, \beta_2) = (0.9, 0.95)$}
    \end{subfigure}
    \caption{\textbf{Mini-batch Adam converges to the max $\ell_2$-margin solution for SR data.} We train on SR dataset $\bx_0=(1,1,1,1)$, $\bx_1 = (2,2,2,-2)$, $\bx_2 = (3,3,-3,-3)$, and $\bx_3 = (4,-4,4,-4)$, varying the momentum hyperparameters. In all tested configurations, the family of mini-batch Adam algorithms with a batch size of $1$ converge to the $\ell_2$ max-margin solution, which deviate significantly from the $\ell_\infty$ bias of full-batch Adam.}
    \label{fig:add_rademacher}
\end{figure}

\begin{figure}[tbhp] 
    \begin{subfigure}{\textwidth}
        \centering
        \includegraphics[width=\linewidth]{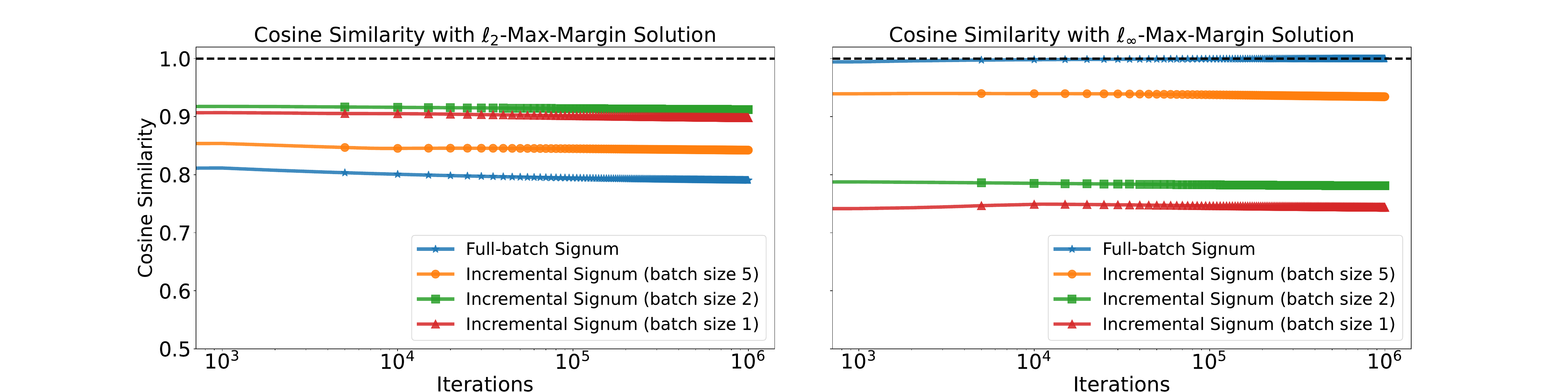}
        \caption{$\beta=0.5$}
    \end{subfigure}
    \begin{subfigure}{\textwidth}
        \centering
        \includegraphics[width=\linewidth]{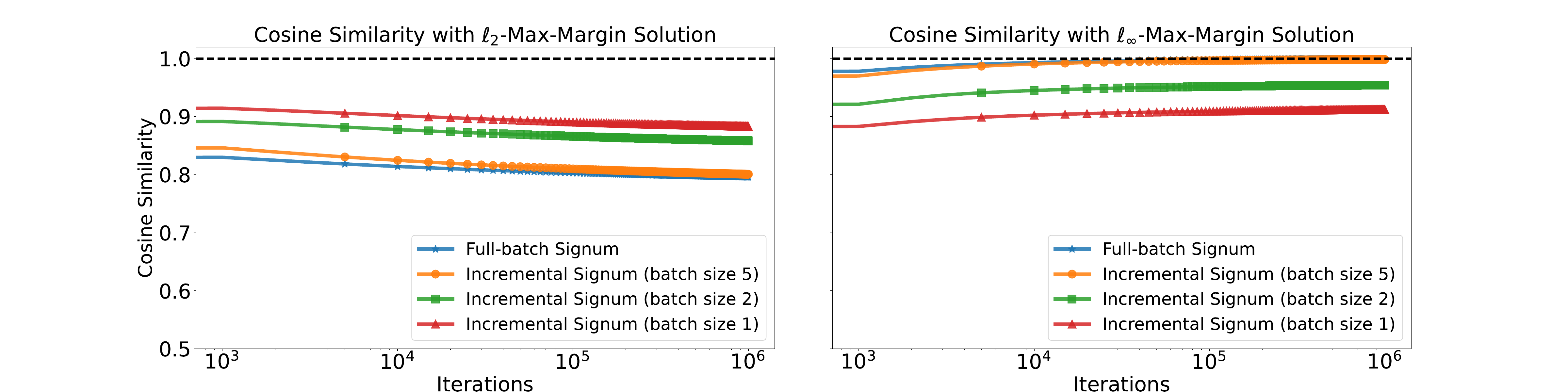}
        \caption{$\beta=0.9$}
    \end{subfigure}
    \begin{subfigure}{\textwidth}
        \centering
        \includegraphics[width=\linewidth]{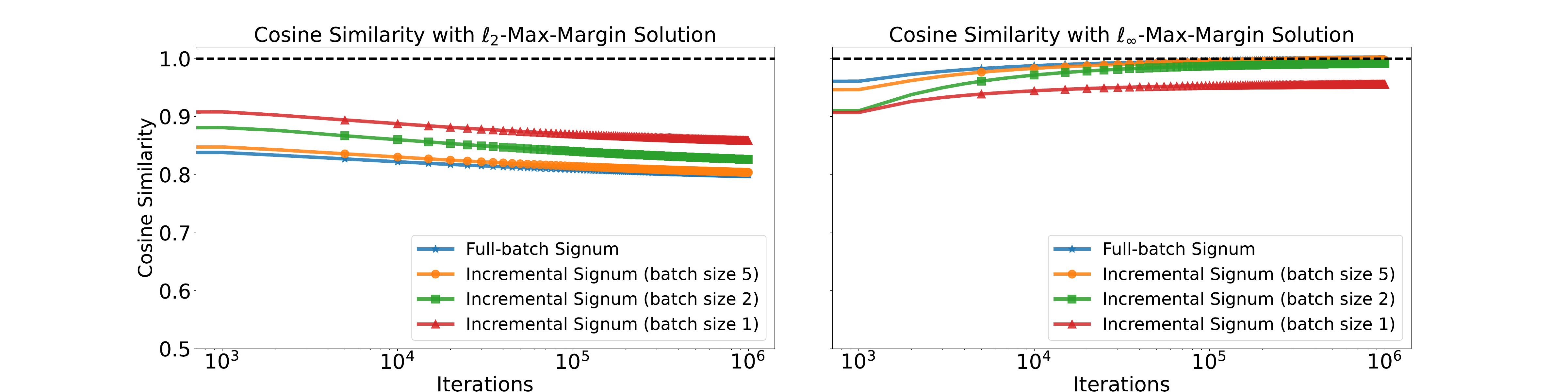}
        \caption{$\beta=0.95$}
    \end{subfigure}
    \begin{subfigure}{\textwidth}
        \centering
        \includegraphics[width=\linewidth]{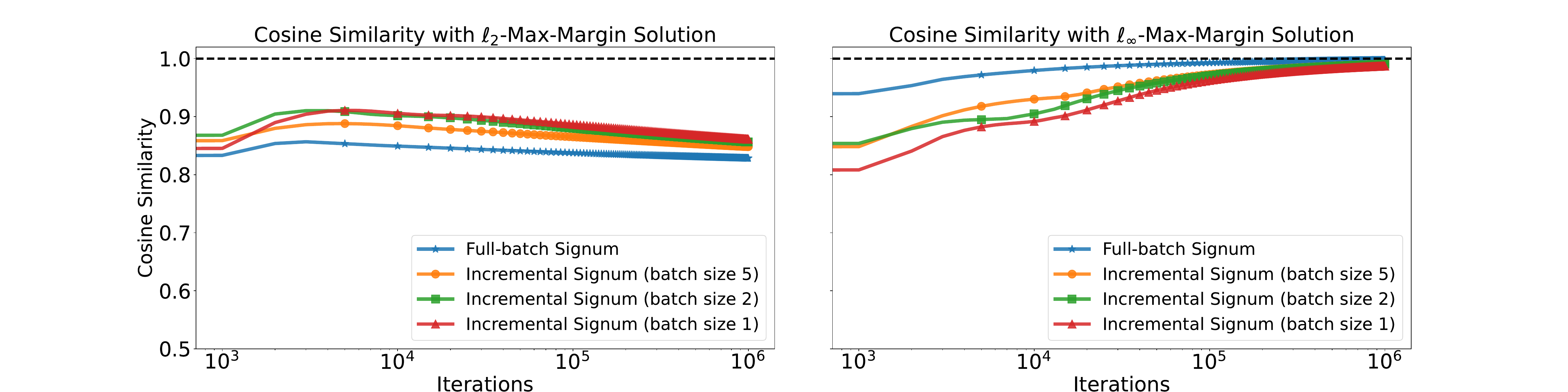}
        \caption{$\beta=0.99$}
    \end{subfigure}
    \caption{\textbf{Effect of Batch Size on \texttt{Inc-Signum}. } We run \texttt{Inc-Signum} on the same Gaussian data ($N=10, d=50$) as in \cref{fig:gaussian_cossim} and plot the cosine similarity of the weight vector with the $\ell_2$-max-margin solution (left) and the $\ell_\infty$-max-margin solution (right), varying batch size $b \in \{1, 2, 5, 10\}$ and the momentum hyperparameter $\beta \in \{0.5, 0.9, 0.95, 0.99\}$. As the batch size decreases, we should choose $\beta$ closer to 1 to maintain the limit direction toward $\ell_\infty$-max-margin solution.}
    \label{fig:signum_batchsize}
\end{figure}

\section{Experimental Details} \label{appendix:exp_detail}
This section provides details for the experiments presented in the main text and appendix.

We generate synthetic separable data as follows:
\begin{itemize}
\item \textbf{Gaussian data (\cref{fig:gaussian_cossim,fig:gaussian_revisited,fig:signum,fig:adam_batchsize,fig:beta1_sweep,fig:beta2_sweep,fig:signum_batchsize}):} Samples are drawn from the standard Gaussian distribution $\mathcal{N}(0, I)$. We set the dimension $d = 50$ and sample $N = 10$ points, ensuring a positive margin so that the data is linearly separable.
\item \textbf{Scaled Rademacher (SR) data (\cref{fig:rademacher,fig:add_rademacher}):} We use $\bx_0=(1,1,1,1)$, $\bx_1=(2,2,2,-2)$, $\bx_2=(3,3,-3,-3)$, and $\bx_3=(4,-4,4,-4)$.
\item \textbf{Shifted-diagonal data (\cref{fig:coor_aligned}):} We use $\bx_0=(1,\delta,\delta,\delta)$, $\bx_1=(\delta,2,\delta,\delta)$, $\bx_2=(\delta,\delta,4,\delta)$, and $\bx_3=(\delta,\delta,\delta,8)$ with $\delta=0.1$.
\end{itemize}

We minimize the exponential loss using various algorithms. Momentum hyperparameters are $(\beta_1, \beta_2) = (0.9, 0.95)$ for Adam and $\beta = 0.99$ for Signum unless specified otherwise. For Adam and Signum variants, we use a learning rate schedule $\eta_t = \eta_0 (t+2)^{-a}$ with $\eta_0 = 0.1$ and $a = 0.8$, following our theoretical analysis. Gradient descent uses a fixed learning rate $\eta_t = \eta_0 = 0.1$. Margins with respect to different norms are computed using CVXPY~\citep{diamond2016cvxpy}.

The fixed-point solution (\cref{thm:fixed_pt}) is obtained via fixed-point iteration (\cref{alg:fixed_point}) for \cref{fig:gaussian_revisited,fig:beta1_sweep,fig:beta2_sweep}. We initialize $\bc_0 = (1/N, \dots, 1/N) \in \Delta^{N-1}$, set the threshold $\epsilon_{\textrm{thr}} = 10^{-8}$, and converge to the fixed-point solution within 20 iterations in all settings.

\newpage
\section[Missing Proofs in Section \ref{sec:approx_Adam}]{Missing Proofs in \cref{sec:approx_Adam}} \label{appendix:approx_Adam}
In this section, we provide the omitted proofs in \cref{sec:approx_Adam}, which describes asymptotic behaviors of \texttt{Det-Adam} and \texttt{Inc-Adam}. We first introduce \cref{lemma:update_bound} originated from \citet[Lemma A.2]{zou2023understanding}, which gives a coordinate-wise upper bound of updates of both \texttt{Det-Adam} and \texttt{Inc-Adam}. Then, we prove \cref{prop:approxDeterAdam,prop:approxAdam} by approximating two momentum terms.

\paragraph{Notation.}
In this section, we introduce the proxy function $\gG: \mathbb R^d \to \mathbb R$ defined as
\begin{align*}
    \gG(\bw) := -\frac{1}{N} \sum_{i \in [N]} \ell'(\bw^\top \bx_i).
\end{align*}

\begin{lemma}[Lemma A.2 in \citet{zou2023understanding}]\label{lemma:update_bound}
    Assume $\beta_1^2 \leq \beta_2$ and let $\alpha = \sqrt{\frac{\beta_2(1-\beta_1)^2}{(1-\beta_2)(\beta_2-\beta_1^2)}}$. Then, for both \textnormal{\texttt{Det-Adam}} and \textnormal{\texttt{Inc-Adam}} iterates, $\bm_t[k] \leq \alpha \sqrt{\bv_t[k]}$ for all $k \in [d]$.
\end{lemma}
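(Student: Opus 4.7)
The plan is to unroll the two momentum recursions as explicit weighted sums of past gradients and then extract the claimed bound via a single application of Cauchy--Schwarz. Since the recursions defining $\bm_t$ and $\bv_t$ have identical form for \texttt{Det-Adam} and \texttt{Inc-Adam}, differing only in whether $\bg_t$ is a full-batch or per-sample gradient, one argument handles both algorithms simultaneously with no modification.

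First I would use $\bm_{-1}=\bv_{-1}=\mathbf{0}$ to write
\begin{align*}
\bm_t[k] = (1-\beta_1)\sum_{\tau=0}^{t}\beta_1^{t-\tau}\bg_\tau[k],
\qquad
\bv_t[k] = (1-\beta_2)\sum_{\tau=0}^{t}\beta_2^{t-\tau}\bg_\tau[k]^2.
\end{align*}
To couple the two sums, I would split each summand in $\bm_t[k]$ as
\begin{align*}
\beta_1^{t-\tau}\bg_\tau[k] = \bigl(\beta_1^{t-\tau}\beta_2^{-(t-\tau)/2}\bigr)\cdot\bigl(\beta_2^{(t-\tau)/2}\bg_\tau[k]\bigr)
\end{align*}
and apply Cauchy--Schwarz, which gives
\begin{align*}
\bm_t[k] \;\leq\; (1-\beta_1)\sqrt{\sum_{\tau=0}^{t}\bigl(\tfrac{\beta_1^2}{\beta_2}\bigr)^{t-\tau}}\;\sqrt{\sum_{\tau=0}^{t}\beta_2^{t-\tau}\bg_\tau[k]^2}.
\end{align*}
The choice of weighting $\beta_2^{(t-\tau)/2}$ is the one observation that makes the right-hand factor reconstruct exactly $\sqrt{\bv_t[k]/(1-\beta_2)}$.

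The hypothesis $\beta_1^2 \leq \beta_2$ is precisely what ensures the geometric series on the left factor has common ratio in $[0,1]$, so it is bounded by $\tfrac{\beta_2}{\beta_2-\beta_1^2}$. Multiplying the two resulting factors together collapses to
\begin{align*}
(1-\beta_1)\sqrt{\tfrac{\beta_2}{(\beta_2-\beta_1^2)(1-\beta_2)}} \;=\; \alpha,
\end{align*}
yielding $\bm_t[k] \leq \alpha\sqrt{\bv_t[k]}$ (and in fact the same bound on $|\bm_t[k]|$, since Cauchy--Schwarz controls the absolute value). There is no real obstacle in the proof: the entire content is spotting the weighting that exposes $\bv_t[k]$ after Cauchy--Schwarz, and the hypothesis $\beta_1^2 \leq \beta_2$ is used solely to guarantee convergence of the resulting geometric series. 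Because the argument never inspects the structure of $\bg_t$, it applies uniformly to both \texttt{Det-Adam} and \texttt{Inc-Adam}.
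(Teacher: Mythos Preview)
Your proof is correct and essentially identical to the paper's: both unroll the momentum recursions and apply Cauchy--Schwarz with the weighting $\beta_2^{(t-\tau)/2}$ so that one factor becomes $\sqrt{\bv_t[k]/(1-\beta_2)}$ and the other is a geometric series with ratio $\beta_1^2/\beta_2$, bounded using $\beta_1^2\leq\beta_2$. The only differences are cosmetic (the paper indexes by lag $\tau$ rather than time $t-\tau$ and starts from $|\bm_t[k]|$ directly).
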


\begin{proof}
    Following the proof of \citet[Lemma A.2]{zou2023understanding}, we can easily show that the given upper bound holds for both \texttt{Det-Adam} and \texttt{Inc-Adam}. We prove the case of \texttt{Inc-Adam}, while it naturally extends to \texttt{Det-Adam}. By Cauchy-Schwartz inequality, we get
    \begin{align*}
        |\bm_t[k]|&=|\sum_{\tau=0}^t \beta_1^\tau (1-\beta_1)\nabla \Ls_{i_{t-\tau}}(\bw_{t-\tau})[k]| \\
        &\leq \sum_{\tau=0}^t \beta_1^\tau (1-\beta_1)|\nabla \Ls_{i_{t-\tau}}(\bw_{t-\tau})[k]| \\
        & \leq  \left(\sum_{\tau=0}^t \beta_2^\tau (1-\beta_2)|\nabla \Ls_{i_{t-\tau}}(\bw_{t-\tau})[k]|^2\right)^{1/2} \left(\sum_{\tau=0}^t \frac{\beta_1^{2\tau}(1-\beta_1)^2}{\beta_2^\tau(1-\beta_2)}\right)^{1/2} \quad (\text{CS inequality}) \\
        &\leq\alpha\sqrt{\bv_t[k]}.
    \end{align*}
    The last inequality is from 
    \begin{align*}
        \sum_{\tau=0}^t \frac{\beta_1^{2\tau}(1-\beta_1)^2}{\beta_2^\tau(1-\beta_2)} \leq \frac{(1-\beta_1)^2}{1-\beta_2} \sum_{\tau=0}^\infty\left(\frac{\beta_1^2}{\beta_2}\right)^\tau = \frac{\beta_2(1-\beta_1)^2}{(1-\beta_2)(\beta_2-\beta_1^2)} = \alpha^2,
    \end{align*}
    where the infinite sum is bounded from $\beta_1^2 \leq \beta_2$.
\end{proof}

\subsection[Proof of Proposition \ref{prop:approxDeterAdam}]{Proof of \cref{prop:approxDeterAdam}}
\approxDeterAdam*
\begin{proof}
    We recall Lemma 6.1 in \citet{zhang2024the}, stating that
    \begin{align*}
        &\bigabs{\bm_t[k]-(1-\beta_1^{t+1})\nabla \Ls(\bw_t)[k]} \leq c_m\eta_t\gG(\bw_t), \\
        &\bigabs{\sqrt{\bv_t[k]} - \sqrt{1-\beta_2^{t+1}}\bigabs{\nabla \Ls(\bw_t)[k]}}\leq c_v\sqrt{\eta_t}\gG(\bw_t)
    \end{align*}
    for all $t>t_1$ and $k \in [d]$. Based on these results, we can rewrite $\bm_r^s[k]$ and $\sqrt{\bv_r^s[k]}$ as
    \begin{align*}
        \bm_t[k] = (1-\beta_1^{t+1})\nabla \Ls(\bw_t)[k] + \epsilon_\bm(t)\gG(\bw_t), \\
        \sqrt{\bv_t[k]} = \sqrt{1-\beta_2^{t+1}}\bigabs{\nabla \Ls(\bw_t)[k]} + \epsilon_\bv(t)\gG(\bw_t),
    \end{align*}
    where $\epsilon_\bm(t)=\mathcal{O}(\eta_t), \epsilon_\bv(t) =\mathcal{O}(\sqrt{\eta_t})$. Note that $\frac{\gG(\bw_t)}{\Ls(\bw_t)} \leq 1$ from \cref{lemma:proxy} and $\bigabs{\frac{a+\epsilon_1}{b+\epsilon_2}-\frac{a}{b}}\leq\bigabs{\frac{\epsilon_1}{b+\epsilon_2}}+\bigabs{\frac{a}{b}\cdot\frac{\epsilon_2}{b+\epsilon_2}} \leq \bigabs{\frac{\epsilon_1}{b}}+\bigabs{\frac{a}{b}\cdot\frac{\epsilon_2}{b}}$ for positive numbers $\epsilon_1, \epsilon_2, b$. Therefore, if $\lim_{t \rightarrow \infty}\frac{\eta_t^{1/2}\Ls(\bw_t)}{|\nabla\Ls(\bw_t)[k]|} = 0$, then we get
    \begin{align*}
        &\bigabs{\frac{\bm_t[k]}{\sqrt{\bv_t[k]}}-\frac{1-\beta_1^{t+1}}{\sqrt{1-\beta_2^{t+1}}} \sign\bigopen{\nabla\Ls(\bw_t)[k]}} \\
        &\leq \underbrace{\bigabs{\frac{\epsilon_\bm(t)\gG(\bw_t)}{\sqrt{1-\beta_2^{t+1}}\bigabs{\nabla \Ls(\bw_t)[k]}}}}_{\rightarrow 0} + \bigabs{\underbrace{\frac{1-\beta_1^{t+1}}{\sqrt{1-\beta_2^{t+1}}} \sign\bigopen{\nabla\Ls(\bw_t)[k]}}_{\text{bounded}} \cdot \underbrace{\frac{\epsilon_\bv(t)\gG(\bw_t)}{\sqrt{1-\beta_2^{t+1}}\bigabs{\nabla \Ls(\bw_t)[k]}}}_{\rightarrow 0}} \\
        & \rightarrow 0.
    \end{align*}
    From $\beta_1^t, \beta_2^t \rightarrow 0$, we get $\bw_{t+1}[k]-\bw_t[k] = -\eta_t\frac{\bm_t[k]}{\sqrt{\bv_t[k]}} = \eta_t \bigopen{\sign\bigopen{\nabla \Ls(\bw_t)[k]} + \epsilon_t}$ for some $\lim_{t\rightarrow\infty} \epsilon_t=0$.
\end{proof}

\subsection[Proof of Proposition \ref{prop:approxAdam}]{Proof of \cref{prop:approxAdam}}
To prove \cref{prop:approxAdam}, we start by characterizing the first and second momentum terms $\bm_t, \bv_t$ in \texttt{Inc-Adam}, which track the exponential moving averages of the historical mini-batch gradients and square gradients. As mentioned before, a key technical challenge of analyzing Adam is its dependency in the full gradient history. The following lemma approximates momentum terms with respect to a function of the \textit{first} iterate in each epoch $\bw_r^0$, which is crucial for our \textit{epoch-wise} analysis.

\begin{lemma}\label{lemma:approx_momentums}
    Under \cref{ass:nonzero,ass:lr}, there exists $t_1$ only depending on $\beta_1, \beta_2$ and the dataset, such that
    \begin{align*}
        \bigabs{\bm_r^s[k]-\frac{1-\beta_1}{1-\beta_1^N}\sum_{j \in [N]}\beta_1^{(s, j)}\nabla\Ls_{j}(\bw_r^0)[k]}
        &\leq \epsilon_\bm(t)\max_{j \in [N]}\bigabs{\nabla \Ls_j(\bw_r^0)[k]}, \\
        \bigabs{\bv_r^s[k]-\frac{1-\beta_2}{1-\beta_2^N}\sum_{j \in [N]}\beta_2^{(s, j)}\nabla\Ls_{j}(\bw_r^0)[k]^2}
        &\leq \epsilon_\bv(t)\max_{j \in [N]}\bigabs{\nabla \Ls_j(\bw_r^0)[k]}^2,
    \end{align*}
    for all $r,s$ satisfying $rN+s>t_1$ and $k \in [d]$, where 
    \begin{align*}
        &\epsilon_\bm(t) \triangleq (1-\beta_1)e^{\alpha ND \eta_{rN}}c_2\eta_t+(e^{\alpha ND \eta_{rN}}-1)+\beta_1^{t+1},\\
        &\epsilon_\bv(t) \triangleq 3(1-\beta_2)e^{2\alpha ND \eta_{rN}}c_2'\eta_t+3(e^{2\alpha ND \eta_{rN}}-1)+\beta_2^{t+1},
    \end{align*}
    $D = \max_{j\in[N]} \|\bx_j\|_1$, and $c_2, c_2'$ are constants only depend on $\beta_1, \beta_2$, and the dataset.
\end{lemma}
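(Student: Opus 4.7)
The plan is to unroll the momentum recursions, reindex the resulting sums by sample identity, and then control the error from replacing past gradients with gradients evaluated at $\bw_r^0$. With $t = rN+s$, the first moment expands as
\[
\bm_t = (1-\beta_1) \sum_{\tau=0}^{t} \beta_1^\tau \nabla \Ls_{i_{t-\tau}}(\bw_{t-\tau}),
\]
and under the cyclic rule, sample $j \in [N]$ contributes at lags $(s-j)\bmod N, (s-j)\bmod N + N, \ldots$ from time $t$. Summing the corresponding geometric series over all past epochs produces exactly the target coefficient $\frac{1-\beta_1}{1-\beta_1^N}\beta_1^{(s,j)}$, so what remains is to bound (i) the truncation tail, whose total weight $(1-\beta_1)\sum_{\tau>t}\beta_1^\tau = \beta_1^{t+1}$ yields the $\beta_1^{t+1}$ summand in $\epsilon_\bm(t)$ after dominating each $|\nabla\Ls_j(\bw_r^0)[k]|$ by its max, and (ii) the per-step gradient discrepancy $|\nabla\Ls_j(\bw_{t-\tau})[k] - \nabla\Ls_j(\bw_r^0)[k]|$.

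For (ii), I would use that both $\ell_{\exp}$ and $\ell_{\log}$ satisfy the multiplicative bound $|\nabla\Ls_j(\bw)[k]-\nabla\Ls_j(\bw')[k]|\leq |\nabla\Ls_j(\bw')[k]|\,(e^{D\|\bw-\bw'\|_\infty}-1)$ with $D=\max_j \|\bx_j\|_1$ (for $\ell_{\exp}$ this is an equality; for $\ell_{\log}$ it follows from $|\ell'_{\log}(z_1)/\ell'_{\log}(z_2)|\leq e^{|z_1-z_2|}$). The displacement $\bw_{t-\tau}-\bw_r^0$ splits into a \emph{within-epoch} part $\bw_t-\bw_r^0$, bounded in $\ell_\infty$ by $\alpha N\eta_{rN}$ via \cref{lemma:update_bound}, and an \emph{inter-epoch} tail $\bw_t-\bw_{t-\tau}$, bounded by $\alpha\sum_{\tau'=1}^{\tau}\eta_{t-\tau'}$. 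Writing $e^{a+b}-1 = e^a(e^b-1)+(e^a-1)$ with $a=\alpha D\cdot N\eta_{rN}$ and $b=\alpha D\sum_{\tau'=1}^\tau \eta_{t-\tau'}$ cleanly splits the per-step error into an $e^{\alpha ND\eta_{rN}}(e^{\alpha D\sum_{\tau'}\eta_{t-\tau'}}-1)$ term and a standalone $(e^{\alpha ND\eta_{rN}}-1)$ term; multiplying by $(1-\beta_1)\beta_1^\tau$, summing in $\tau$, and invoking \cref{ass:lr}(b) with $c_1=\alpha D$ on the first while using $(1-\beta_1)\sum_\tau\beta_1^\tau\leq 1$ on the second produces exactly the claimed $\epsilon_\bm(t)$.

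The second-moment estimate follows the same template applied to squared gradients. The identity $|a^2-b^2|=|a-b|\,|a+b|$ combined with $|a+b|\leq |b|(1+e^{|E_\tau|})\leq 3|b|$, valid once $|E_\tau|\leq \ln 2$ (which holds for $t$ large enough given $\eta_t\to 0$), gives $|\nabla\Ls_j(\bw_{t-\tau})[k]^2-\nabla\Ls_j(\bw_r^0)[k]^2|\leq 3|\nabla\Ls_j(\bw_r^0)[k]|^2(e^{2|E_\tau|}-1)$, which explains both the factor of $3$ and the doubled exponent $2\alpha ND\eta_{rN}$ appearing in $\epsilon_\bv(t)$ after applying \cref{ass:lr}(b) with $\beta=\beta_2$ and $c_1=2\alpha D$. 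The main obstacle I anticipate is the bookkeeping needed to align the two-scale displacement bound with the structure of \cref{ass:lr}(b), and to select a single $t_1$ for which the hypothesis of that assumption is simultaneously in force for the $(\beta_1,\alpha D)$ and $(\beta_2,2\alpha D)$ instantiations together with $|E_\tau|\leq \ln 2$; all of these reduce to taking the maximum of finitely many thresholds, so no new technical ingredients beyond the decomposition above should be required.
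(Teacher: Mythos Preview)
Your decomposition is essentially the paper's: it too splits the error into a weight-drift part ($\bw_{t-\tau}$ versus $\bw_t$), an intra-epoch part ($\bw_t$ versus $\bw_r^0$), and the geometric tail, obtaining the same three summands in $\epsilon_\bm(t)$ and $\epsilon_\bv(t)$. Your use of $e^{a+b}-1=e^a(e^b-1)+(e^a-1)$ on the displacement bound is just a repackaging of the paper's triangle inequality on the gradient differences, so the routes coincide.

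There is one genuine slip in the second-moment step. You assert that $|E_\tau|\le\ln 2$ holds for all $\tau$ once $t$ is large, but $E_\tau$ contains the term $\alpha D\sum_{\tau'=1}^{\tau}\eta_{t-\tau'}$, which for $\tau$ comparable to $t$ is unbounded (recall $\sum\eta_t=\infty$). So the bound $|a+b|\le 3|b|$ cannot be justified uniformly in $\tau$ this way. The fix is immediate: from $|a/b-1|\le e^{|E_\tau|}-1$ and $a/b\le e^{|E_\tau|}$ you get $|a^2-b^2|=|a-b|\,|a+b|\le b^2(e^{|E_\tau|}-1)(1+e^{|E_\tau|})=b^2(e^{2|E_\tau|}-1)$ with no smallness assumption and no factor $3$ needed (the paper instead invokes \cref{lemma:losses_prop2}, which yields the same bound up to the constant $3$). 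With this correction your plan goes through and matches the paper's proof.
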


\begin{proof}
    Consider $t = rN+s$ and the gradient at time $t$ is sampled from data with index $s$ in $r$-th epoch. Then we can decompose the error between $\bm_r^s[k]$ and $\frac{1-\beta_1}{1-\beta_1^N}\sum_{j \in [N]}\beta_1^{(s, j)}\nabla\Ls_{j}(\bw_r^0)[k]$ as 
    \begin{align*}
        &|\bm_r^s[k]-\frac{1-\beta_1}{1-\beta_1^N}\sum_{j \in [N]}\beta_1^{(s, j)}\nabla\Ls_{j}(\bw_r^0)[k]| \\
        =& |\sum_{\tau=0}^{t}\beta_1^\tau(1-\beta_1)\nabla \Ls_{i_{t-\tau}}(\bw_{t-\tau})[k] - \frac{1-\beta_1}{1-\beta_1^N}\sum_{j \in [N]}\beta_1^{(s, j)}\nabla\Ls_{j}(\bw_r^0)[k]| \\
        \leq& \underbrace{|\sum_{\tau=0}^{t}\beta_1^\tau(1-\beta_1)\nabla \Ls_{i_{t-\tau}}(\bw_{t-\tau})[k] - \sum_{\tau=0}^{t}\beta_1^\tau(1-\beta_1)\nabla \Ls_{i_{t-\tau}}(\bw_t)[k]|}_{(A): \text{ error from movement of weights}} \\
        &+ \underbrace{|\sum_{\tau=0}^{t}\beta_1^\tau(1-\beta_1)\nabla \Ls_{i_{t-\tau}}(\bw_t)[k] - \sum_{\tau=0}^{t}\beta_1^\tau(1-\beta_1)\nabla \Ls_{i_{t-\tau}}(\bw_r^0)[k]|}_{(B): \text{ error between $\bw_t$ and $\bw_r^0$}}\\
        &+ \underbrace{|\sum_{\tau=0}^{t}\beta_1^\tau(1-\beta_1)\nabla \Ls_{i_{t-\tau}}(\bw_r^0)[k] - \frac{1-\beta_1}{1-\beta_1^N}\sum_{j \in [N]}\beta_1^{(s, j)}\nabla\Ls_{j}(\bw_r^0)[k]|}_{(C): \text{ error from infinite-sum approximation}}.
    \end{align*}
    Note that
    \begin{align*}
        (A)&\leq \sum_{\tau=0}^t \beta_1^\tau(1-\beta_1)|\ell'(\bw_{t-\tau}^\top \bx_{i_{t-\tau}}) - \ell'(\bw_t^\top \bx_{i_{t-\tau}})||\bx_{i_{t-\tau}}[k]| \\
        &= \sum_{\tau=0}^t \beta_1^\tau(1-\beta_1)\left|\frac{\ell'(\bw_{t-\tau}^\top \bx_{i_{t-\tau}})}{\ell'(\bw_t^\top \bx_{i_{t-\tau}})} - 1\right||\ell'(\bw_t^\top \bx_{i_{t-\tau}})||\bx_{i_{t-\tau}}[k]| \\
        & \overset{(*)}{\leq} (1-\beta_1)\max_{j \in [N]}|\nabla \Ls_j(\bw_t)[k]|\sum_{\tau=0}^t \beta_1^\tau(e^{\alpha D\sum_{\tau'=1}^\tau \eta_{t-\tau'}}-1)|\\
        & \overset{(**)}{\leq} (1-\beta_1)c_2\eta_t\max_{j \in [N]}|\nabla \Ls_j(\bw_t)[k]|, \\
        & \overset{({***})}{\leq}(1-\beta_1)e^{\alpha ND \eta_{rN}}c_2\eta_t\max_{j \in [N]}|\nabla \Ls_j(\bw_r^0)[k]|
    \end{align*}
    for some $c_2>0$ and $t > t_1$. Here, $(*)$ is from \cref{lemma:losses_prop} and
    \begin{align*}
       e^{|(\bw_t-\bw_{t-\tau})^\top \bx_{i_{t-\tau}}|}-1 \leq e^{\|\bw_t-\bw_{t-\tau}\|_\infty \|\bx_{i_{t-\tau}}\|_1}-1 \leq e^{\alpha D\sum_{\tau'=1}^\tau \eta_{t-\tau'}}-1.
    \end{align*}
    Also, $(**)$ is from \cref{ass:lr}, and $({**}{*})$ is from
    \begin{align*}
    \max_{j \in [N]} |\nabla \Ls_j(\bw_t)[k]|
    &\leq \max_{j \in [N]} |\nabla \mathcal{L}_j(\bw_r^0)[k]| \cdot \max_{j \in [N]} \left| \frac{\nabla \mathcal{L}_j(\bw_t)[k]}{\nabla \mathcal{L}_j(\bw_r^0)[k]} \right| \\
    &= \max_{j \in [N]} |\nabla \mathcal{L}_j(\bw_r^0)[k]| \cdot \max_{j \in [N]} \left| \frac{\ell'(\bw_t^\top \bx_j)}{\ell'({\bw_r^{0}}^\top \bx_j)} \right| \\
    &\leq e^{\alpha N D \eta_{rN}} \max_{j \in [N]} |\nabla \mathcal{L}_j(\bw_r^0)[k]|,
\end{align*}
where the last inequality is from \cref{lemma:losses_prop} and  
\begin{align*}
    \max_{j \in [N]} \left| \frac{\ell'(\bw_t^\top \bx_j)}{\ell'({\bw_r^{0}}^\top \bx_j)} \right| \leq\max_{j \in [N]} e^{\bigabs{(\bw_t-\bw_r^0)^\top \bx_j}} \leq e^{\alpha ND\eta_{rN}}.
\end{align*}

    Also, observe that
    \begin{align*}
        (B)&\leq \sum_{\tau=0}^t \beta_1^\tau(1-\beta_1)|\ell'(\bw_t^\top \bx_{i_{t-\tau}}) - \ell'({\bw_r^0}^\top \bx_{i_{t-\tau}})||\bx_{i_{t-\tau}}[k]| \\
        &= \sum_{\tau=0}^t \beta_1^\tau(1-\beta_1)\left|\frac{\ell'(\bw_t^\top \bx_{i_{t-\tau}})}{\ell'({\bw_r^0}^\top \bx_{i_{t-\tau}})} - 1\right||\ell'({\bw_r^0}^\top \bx_{i_{t-\tau}})||\bx_{i_{t-\tau}}[k]| \\
        & \overset{(*)}{\leq} (1-\beta_1)\max_{j \in [N]}|\nabla \Ls_j(\bw_r^0)[k]| (e^{\alpha ND \eta_{rN}}-1)\sum_{\tau=0}^t \beta_1^\tau \\
        & \overset{(**)}{\leq} (e^{\alpha ND \eta_{rN}}-1)\max_{j \in [N]}|\nabla \Ls_j(\bw_r^0)[k]|,
    \end{align*}
    where $(*)$ is from \cref{lemma:losses_prop} and 
    \begin{align*}
        \left|\frac{\ell'(\bw_t^\top \bx_{i_{t-\tau}})}{\ell'({\bw_r^0}^\top \bx_{i_{t-\tau}})} - 1\right| \leq 
        e^{|(\bw_t-\bw_r^0)^\top \bx_{i_{t-\tau}}|}-1 \leq e^{\|\bw_t-\bw_r^0\|_\infty \|\bx_{i_{t-\tau}}\|_1} \leq e^{\alpha ND \eta_{rN}}-1,
    \end{align*}
    and $(**)$ is from $\sum_{\tau=0}^t \beta_1^\tau \leq \frac{1}{1-\beta_1}$.

    Furthermore,
    \begin{align*}
        (C) &= \bigabs{\sum_{\tau=0}^{t}\beta_1^\tau(1-\beta_1)\nabla \Ls_{i_{t-\tau}}(\bw_r^0)[k] - \sum_{\tau=0}^\infty\beta_1^\tau(1-\beta_1)\nabla\Ls_{i_{t-\tau}}(\bw_r^0)[k]} \\
        &\leq \sum_{\tau=t+1}^\infty\beta_1^\tau(1-\beta_1) \bigabs{\nabla \Ls_{i_{t-\tau}}(\bw_r^0)[k]} \\
        & \leq \beta_1^{t+1}\max_{j \in [N]}|\nabla \Ls_j(\bw_r^0)[k]|.
    \end{align*}
    Therefore, we can conclude that
    \begin{align*}
        &|\bm_r^s[k]-\frac{1-\beta_1}{1-\beta_1^N}\sum_{j \in [N]}\beta_1^{(s, j)}\nabla\Ls_{j}(\bw_r^0)[k]| \\
        &\leq \underbrace{\bigopen{(1-\beta_1)e^{\alpha ND \eta_{rN}}c_2\eta_t+(e^{\alpha ND \eta_{rN}}-1)+\beta_1^{t+1}}}_{\triangleq \epsilon_\bm(t)}\max_{j \in [N]}|\nabla \Ls_j(\bw_r^0)[k]|.
    \end{align*}
    
    Similarly,
    \begin{align*}
        &|\bv_r^s[k]-\frac{1-\beta_2}{1-\beta_2^N}\sum_{j \in [N]}\beta_2^{(s, j)}\nabla\Ls_{j}(\bw_r^0)[k]^2| \\
        =& |\sum_{\tau=0}^{t}\beta_2^\tau(1-\beta_2)\nabla \Ls_{i_{t-\tau}}(\bw_{t-\tau})[k]^2 - \frac{1-\beta_2}{1-\beta_2^N}\sum_{j \in [N]}\beta_2^{(s, j)}\nabla\Ls_{j}(\bw_r^0)[k]^2| \\
        \leq& \underbrace{|\sum_{\tau=0}^{t}\beta_2^\tau(1-\beta_2)\nabla \Ls_{i_{t-\tau}}(\bw_{t-\tau})[k]^2 - \sum_{\tau=0}^{t}\beta_2^\tau(1-\beta_2)\nabla \Ls_{i_{t-\tau}}(\bw_t)[k]^2|}_{(D): \text{ error from movement of weights}} \\
        &+ \underbrace{|\sum_{\tau=0}^{t}\beta_2^\tau(1-\beta_2)\nabla \Ls_{i_{t-\tau}}(\bw_t)[k]^2 - \sum_{\tau=0}^{t}\beta_2^\tau(1-\beta_2)\nabla \Ls_{i_{t-\tau}}(\bw_r^0)[k]^2|}_{(E): \text{ error between $\bw_t$ and $\bw_r^0$}}\\
        &+ \underbrace{|\sum_{\tau=0}^{t}\beta_2^\tau(1-\beta_2)\nabla \Ls_{i_{t-\tau}}(\bw_r^0)[k]^2 - \frac{1-\beta_2}{1-\beta_2^N}\sum_{j \in [N]}\beta_2^{(s, j)}\nabla\Ls_{j}(\bw_r^0)[k]^2|}_{(F): \text{ error from infinite-sum approximation}}.
    \end{align*}
    Observe that
    \begin{align*}
        (D)&\leq \sum_{\tau=0}^t \beta_2^\tau(1-\beta_2)|\ell'(\bw_{t-\tau}^\top \bx_{i_{t-\tau}})^2 - \ell'(\bw_t^\top \bx_{i_{t-\tau}})^2||\bx_{i_{t-\tau}}[k]|^2 \\
        &= \sum_{\tau=0}^t \beta_2^\tau(1-\beta_2)\left| \bigopen{\frac{\ell'(\bw_{t-\tau}^\top \bx_{i_{t-\tau}})}{\ell'(\bw_t^\top \bx_{i_{t-\tau}})}}^2 - 1\right||\ell'(\bw_t^\top \bx_{i_{t-\tau}})|^2|\bx_{i_{t-\tau}}[k]|^2 \\
        & \overset{(*)}{\leq} 3(1-\beta_2)\max_{j \in [N]}|\nabla \Ls_j(\bw_t)[k]|^2\sum_{\tau=0}^t \beta_2^\tau(e^{2\alpha D\sum_{\tau'=1}^\tau \eta_{t-\tau'}}-1)|\\
        & \overset{(**)}{\leq} 3(1-\beta_2)c_2'\eta_t\max_{j \in [N]}|\nabla \Ls_j(\bw_t)[k]|^2, \\
        & \overset{(***)}{\leq}3(1-\beta_2)e^{2\alpha ND \eta_{rN}}c_2'\eta_t\max_{j \in [N]}|\nabla \Ls_j(\bw_r^0)[k]|^2
    \end{align*}
    for some $c_2'>0$ and $t >t_1'$. Here, $(*)$ is from \cref{lemma:losses_prop2} and 
    \begin{align*}
        \left|\bigopen{\frac{\ell'(\bw_{t-\tau}^\top \bx_{i_{t-\tau}})}{\ell'({\bw_t}^\top \bx_{i_{t-\tau}})}}^2 - 1\right| \leq 3(e^{2|(\bw_t-\bw_r^0)^\top \bx_{i_{t-\tau}}|}-1) \leq 3(e^{2\alpha D\sum_{\tau'=1}^\tau \eta_{t-\tau'}}-1),
    \end{align*}
    $(**)$ is from \cref{ass:lr}, and $({**}{*})$ can be derived similarly. Also, we get
    \begin{align*}
        (E)&\leq \sum_{\tau=0}^t \beta_2^\tau(1-\beta_2)|\ell'(\bw_t^\top \bx_{i_{t-\tau}})^2 - \ell'({\bw_r^0}^\top \bx_{i_{t-\tau}})^2||\bx_{i_{t-\tau}}[k]|^2 \\
        & \leq 3(e^{2\alpha ND \eta_{rN}}-1)\max_{j \in [N]}|\nabla \Ls_j(\bw_r^0)[k]|^2, \\
        (F) &= \bigabs{\sum_{\tau=0}^{t}\beta_2^\tau(1-\beta_2)\nabla \Ls_{i_{t-\tau}}(\bw_r^0)[k]^2 - \sum_{\tau=0}^\infty\beta_2^\tau(1-\beta_2)\nabla\Ls_{i_{t-\tau}}(\bw_r^0)[k]^2} \\
        &\leq \sum_{\tau=t+1}^\infty\beta_2^\tau(1-\beta_2) \bigabs{\nabla \Ls_{i_{t-\tau}}(\bw_r^0)[k]}^2 \\
        & \leq \beta_2^{t+1}\max_{j \in [N]}|\nabla \Ls_j(\bw_r^0)[k]|^2,
    \end{align*}
    which can also be derived similarly to the previous part.
    Therefore, we can conclude that
    \begin{align*}
        &|\bv_r^s[k]-\frac{1-\beta_2}{1-\beta_2^N}\sum_{j \in [N]}\beta_2^{(s, j)}\nabla\Ls_{j}(\bw_r^0)[k]^2| \\
        &\leq \underbrace{\bigopen{3(1-\beta_2)e^{2\alpha ND \eta_{rN}}c_2'\eta_t+3(e^{2\alpha ND \eta_{rN}}-1)+\beta_2^{t+1}}}_{\triangleq \epsilon_\bv(t)}\max_{j \in [N]}|\nabla \Ls_j(\bw_r^0)[k]|^2.
    \end{align*}
\end{proof}

Notice that $\epsilon_\bm(t)$ and $\epsilon_\bv(t)$ defined in \cref{lemma:approx_momentums} converge to 0 as $t\rightarrow \infty$, implying that each coordinate of two momentum terms can be effectively approximated by a weighted sum of mini-batch gradients and gradient squares, which emphasizes the discrepancy with \texttt{Det-Adam} and \texttt{Inc-Adam}. We also mention that the bound depends on $\max_{j \in [N]}|\nabla \Ls_j(\bw_r^0)[k]|$, which converges to 0 as $\Ls(\bw_r^0) \rightarrow 0$. Such approaches provide tight bounds, which enables the asymptotic analysis of \texttt{Inc-Adam}.

\approxAdam*
\begin{proof}
    Since both $\bv_r^s[k]$ and $\frac{1-\beta_2}{1-\beta_2^N}\sum_{j \in [N]}\beta_2^{(s, j)}\nabla\Ls_{j}(\bw_r^0)[k]^2$ are positive and $|a^2-b^2| =|a-b||a+b|\geq |a-b|^2$ holds for two positive numbers $a$ and $b$, \cref{lemma:approx_momentums} implies that  
    \begin{align*}
        \bigabs{\sqrt{\bv_r^s[k]}-\sqrt{\frac{1-\beta_2}{1-\beta_2^N}} \sqrt{\sum_{j \in [N]}\beta_2^{(s, j)}\nabla\Ls_{j}(\bw_r^0)[k]^2}} \leq \sqrt{\epsilon_\bv(t)}\max_{j \in [N]}|\nabla \Ls_j(\bw_r^0)[k]|.
    \end{align*}
    Therefore, we can rewrite $\bm_r^s[k]$ and $\sqrt{\bv_r^s[k]}$ as
    \begin{align*}
        &\bm_r^s[k] = \underbrace{\frac{1-\beta_1}{1-\beta_1^N}\sum_{j \in [N]}\beta_1^{(s, j)}\nabla\Ls_{j}(\bw_r^0)[k]}_{(a)} + \underbrace{\epsilon_\bm'(t)\max_{j \in [N]}|\nabla \Ls_j(\bw_r^0)[k]|}_{(\epsilon_1)},\\
        &\sqrt{\bv_r^s[k]}=\underbrace{\sqrt{\frac{1-\beta_2}{1-\beta_2^N}} \sqrt{\sum_{j \in [N]}\beta_2^{(s, j)}\nabla\Ls_{j}(\bw_r^0)[k]^2}}_{(b)} + \underbrace{\sqrt{\epsilon_\bv'(t)}\max_{j \in [N]}|\nabla \Ls_j(\bw_r^0)[k]|}_{(\epsilon_2)},
    \end{align*}
    for some error terms $\epsilon_\bm'(t), \epsilon_\bv'(t)$ such that $|\epsilon_\bm'(t)| \leq \epsilon_\bm(t), |\epsilon_\bv'(t)| \leq \epsilon_\bv(t)$. Note that $\bigabs{\frac{a+\epsilon_1}{b+\epsilon_2}-\frac{a}{b}}\leq\bigabs{\frac{\epsilon_1}{b+\epsilon_2}}+\bigabs{\frac{a}{b}\cdot\frac{\epsilon_2}{b+\epsilon_2}} \leq \bigabs{\frac{\epsilon_1}{b}}+\bigabs{\frac{a}{b}\cdot\frac{\epsilon_2}{b}}$ for positive numbers $\epsilon_1, \epsilon_2, b$. Thus, we can conclude that
    \begin{align}\label{eq:update_error}
        \bigabs{\frac{\bm_r^s[k]}{\sqrt{\bv_r^s[k]}}-\frac{(a)}{(b)}} \leq \bigabs{\frac{(\epsilon_1)}{(b)}} + \bigabs{\frac{(a)}{(b)} \cdot \frac{(\epsilon_2)}{(b)}} \rightarrow 0,
    \end{align}
    since 
    \begin{align*}
        &\bigabs{\frac{(\epsilon_1)}{(b)}} \leq 
        \frac{1}{\sqrt{\frac{1-\beta_2}{1-\beta_2^N}} \sqrt{\beta_2^N}}\epsilon_\bm(t) \rightarrow 0,\\
        &\bigabs{\frac{(a)}{(b)}} \leq \frac{\frac{1-\beta_1}{1-\beta_1^N}}{\sqrt{\frac{1-\beta_2}{1-\beta_2^N}}} \sqrt{N}, \\
        &\bigabs{\frac{(\epsilon_2)}{(b)}} \leq \frac{1}{\sqrt{\frac{1-\beta_2}{1-\beta_2^N}} \sqrt{\beta_2^N}}\sqrt{\epsilon_\bv(t)} \rightarrow 0.
    \end{align*}
    
    Now consider the epoch-wise update. From above results, we get
    \begin{align} \label{eq:error_epoch}
        \bw_{r+1}^0[k]-\bw_r^0[k]&=-\sum_{s=0}^{N-1} \eta_s \frac{\bm_r^s[k]}{\sqrt{\bv_r^s[k]}} \notag \\
        &= -\sum_{s=0}^{N-1}\eta_{r_N+s} \bigopen{C_{\text{inc}}(\beta_1, \beta_2)\frac{\sum_{j \in [N]}\beta_1^{(s, j)}\nabla\Ls_{j}(\bw_r^0)[k]}{\sqrt{\sum_{j \in [N]}\beta_2^{(s, j)}\nabla\Ls_{j}(\bw_r^0)[k]^2}} + \boldsymbol{\epsilon}_{rN+s}[k]},
    \end{align}

    for some $\boldsymbol{\epsilon}_t \rightarrow \mathbf{0}$. Since $\lim_{t\rightarrow\infty} \eta_t=0$, the difference between $\eta_{rN+s}$ for different $s \in [N]$ converges to 0, which proves the claim.

    Next, we consider the case $\eta_t=(t+2)^{-a}$ for some $a \in (0, 1]$.  Then it is clear that
    \begin{align*}
        &\epsilon_\bm(t)=(1-\beta_1)e^{\alpha ND \eta_{rN}}c_2\eta_t+(e^{\alpha ND \eta_{rN}}-1)+\beta_1^{t+1} = \mathcal{O}(t^{-a}), \\
        &\epsilon_\bv(t)=3(1-\beta_2)e^{2\alpha ND \eta_{rN}}c_2'\eta_t+3(e^{2\alpha ND \eta_{rN}}-1)+\beta_2^{t+1} = \mathcal{O}(t^{-a}),
    \end{align*}
    where $D = \max_{j\in[N]} \|\bx_j\|_1$.
    Therefore, from \cref{eq:update_error}, we get
    \begin{align*}
        \bigabs{\frac{\bm_r^s[k]}{\sqrt{\bv_r^s[k]}}-C_{\text{inc}}(\beta_1, \beta_2)\frac{\sum_{j \in [N]}\beta_1^{(s, j)}\nabla\Ls_{j}(\bw_r^0)[k]}{\sqrt{\sum_{j \in [N]}\beta_2^{(s, j)}\nabla\Ls_{j}(\bw_r^0)[k]^2}}} = \mathcal{O}(t^{-a/2}),
    \end{align*}
    which implies $\boldsymbol{\epsilon}_t[k] = \mathcal{O}(t^{-a/2})$ in \cref{eq:error_epoch}. Note that
    \begin{align*}
        &\sum_{s=0}^{N-1}\eta_{r_N+s} \bigopen{\underbrace{C_{\text{inc}}(\beta_1, \beta_2)\frac{\sum_{j \in [N]}\beta_1^{(s, j)}\nabla\Ls_{j}(\bw_r^0)[k]}{\sqrt{\sum_{j \in [N]}\beta_2^{(s, j)}\nabla\Ls_{j}(\bw_r^0)[k]^2}}}_{\triangleq p(s)} + \boldsymbol{\epsilon}_{rN+s}[k]} \\
        &=\eta_{rN}\sum_{s=0}^{N-1}\bigopen{p(s) + \underbrace{\frac{\eta_{rN+s}-\eta_{rN}}{\eta_{rN}}p(s) + \frac{\eta_{rN+s}}{\eta_{rN}}\boldsymbol{\epsilon}_{rN+s}[k]}_{\triangleq \boldsymbol{\epsilon}_{rN+s}'[k]}}.
    \end{align*}

    Furthermore,
    \begin{align*}
        \frac{\eta_{rN}-\eta_{(r+1)N}}{\eta_{rN}} = 1-\bigopen{1+\frac{N}{rN+2}}^{-a} = \mathcal{O}(r^{-1}),
    \end{align*}
    from \cref{lemma:bernoulli}. Since $p(s)$ is upper bounded by a constant from CS inequality, we get $\boldsymbol{\epsilon}_{rN+s}'[k] = \mathcal{O}(r^{-a/2})$, which ends the proof.
\end{proof}

\section[Missing Proofs in Section \ref{sec:structured}]{Missing Proofs in \cref{sec:structured}} \label{appendix:structured}
In this section, we provide the omitted proofs in \cref{sec:structured}. We first introduce the proof of \cref{cor:approxGRdata} describing how SR datasets eliminate coordinate-adaptivity of \texttt{Inc-Adam}. Then, we review previous literature on the limit direction of weighted GD and prove \cref{thm:GR_data}.

\subsection[Proof of Corollary \ref{cor:approxGRdata}]{Proof of \cref{cor:approxGRdata}}
\approxGRdata*
\begin{proof}
    Given SR data $\{\bx_i\}_{i \in [N]}$, let $x_i=|\bx_i[0]|$. Notice that
    \begin{align*}
        \sum_{i \in [N]}\frac{\sum_{j \in [N]}\beta_1^{(i, j)}\nabla\Ls_{j}(\bw_r^0)}{\sqrt{\sum_{j \in [N]}\beta_2^{(i, j)}\nabla\Ls_{j}(\bw_r^0)^2}} &= \sum_{i \in [N]}\frac{\sum_{j \in [N]}\beta_1^{(i, j)}\nabla\Ls_{j}(\bw_r^0)}{\sqrt{\sum_{l \in [N]}\beta_2^{(i, l)}|\ell'(\langle \bw_r^0, \bx_l \rangle)|^2x_l^2}} \\
        &=\sum_{i \in [N]}\sum_{j \in [N]}\frac{\beta_1^{(i, j)}}{\sqrt{\sum_{l \in [N]}\beta_2^{(i, l)}|\ell'(\langle \bw_r^0, \bx_l \rangle)|^2x_l^2}} \nabla\Ls_{j}(\bw_r^0) \\
        &=\sum_{j \in [N]} \bigopen{\sum_{i \in [N]}\frac{\beta_1^{(i, j)}}{\sqrt{\sum_{l \in [N]}\beta_2^{(i, l)}|\ell'(\langle \bw_r^0, \bx_l \rangle)|^2x_l^2}}} \nabla\Ls_{j}(\bw_r^0) \\
        &=\sum_{j \in [N]} \underbrace{\bigopen{\sum_{i \in [N]}\frac{\beta_1^{(i, j)}\|\nabla\Ls(\bw_r^0)\|_2}{\sqrt{\sum_{l \in [N]}\beta_2^{(i, l)}|\ell'(\langle \bw_r^0, \bx_l \rangle)|^2x_l^2}}}}_{a_j(r)} \frac{\nabla\Ls_{j}(\bw_r^0)}{\|\nabla \Ls(\bw_r^0)\|_2}.
    \end{align*}
    Therefore, it is enough to show that $a_j(r)$ is bounded. Note that
    \begin{align*}
        a_j(r) \leq \frac{N}{\sqrt{\beta_2^{N-1}}}\frac{\|\nabla\Ls(\bw_r^0)\|_2}{\sqrt{\sum_{l \in [N]}|\ell'(\langle \bw_r^0, \bx_l \rangle)|^2x_l^2}}
        &= \frac{1}{\sqrt{\beta_2^{N-1}}}\frac{\|\sum_{l \in [N]} |\ell'(\langle \bw_r^0, \bx_l \rangle)|\bx_l\|_2}{\sqrt{\sum_{l \in [N]}|\ell'(\langle \bw_r^0, \bx_l \rangle)|^2x_l^2}} \\
        & \leq \frac{\sqrt{d}}{\sqrt{\beta_2^{N-1}}} \frac{\sum_{l \in [N]}|\ell'(\langle \bw_r^0, \bx_l \rangle)|x_l}{\sqrt{\sum_{l \in [N]}|\ell'(\langle \bw_r^0, \bx_l \rangle)|^2x_l^2}} \leq \frac{\sqrt{dN}}{\sqrt{\beta_2^{N-1}}}.
    \end{align*}
    To find lower bound of $a_j(r)$, we use \cref{ass:linsep}. Take $\bv \in \R^d$ such that $\|\bv\|_2=1$ and $\bv^\top \bx_i > 0, \forall i \in [N]$. Let $\gamma \triangleq \min_{i \in [N]} \bv^\top \bx_i >0$. Note that
    \begin{align*}
        (-\bv)^\top \nabla \Ls(\bw_r^0) = \frac{1}{N}\sum_{l\in[N]} (-\ell'(\langle \bw_r^0, \bx_l \rangle)) \cdot \bv^\top \bx_i \geq \frac{\gamma}{N} \sum_{l\in[N]} |\ell'(\langle \bw_r^0, \bx_l \rangle)|,
    \end{align*}
    and by CS inequality,
    \begin{align}\label{eq:lowerbound_grad}
        \|\nabla\Ls(\bw_r^0)\|_2=\|-\bv\|_2\|\nabla\Ls(\bw_r^0)\|_2 \geq \langle -\bv, \nabla\Ls(\bw_r^0) \rangle \geq \frac{\gamma}{N} \sum_{l\in[N]} |\ell'(\langle \bw_r^0, \bx_l \rangle)|.
    \end{align}
    Therefore, we can conclude that
    \begin{align*}
        a_j(r) \geq N\beta_1^{N-1}\frac{\|\nabla\Ls(\bw_r^0)\|_2}{\sqrt{\sum_{l \in [N]}|\ell'(\langle \bw_r^0, \bx_l \rangle)|^2x_l^2}} & \overset{(*)}{\geq} \gamma \beta_1^{N-1}\frac{\sum_{l\in[N]} |\ell'(\langle \bw_r^0, \bx_l \rangle)|}{\sqrt{\sum_{l \in [N]}|\ell'(\langle \bw_r^0, \bx_l \rangle)|^2x_l^2}} \\
        &\geq \frac{\gamma \beta_1^{N-1}}{\max_{l \in [N]}x_l}
    \end{align*}
    where $(*)$ is from \cref{eq:lowerbound_grad}. Now we can take $c_1 = \frac{\gamma \beta_1^{N-1}}{\max_{l \in [N]}x_l}$ and $c_2 = \frac{\sqrt{dN}}{\sqrt{\beta_2^{N-1}}}$ only depending on $\beta_1, \beta_2, \{\bx_i\}$.
\end{proof}

\subsection[Proof of Theorem \ref{thm:GR_data}]{Proof of \cref{thm:GR_data}} \label{subsec:GR_data}

\paragraph{Related Work.}
We now turn to the proof of \cref{thm:GR_data}, building upon the foundational work of \citet{ji2020regularization}, who characterized the convergence direction of GD via its regularization path. Subsequent research has extended this characterization to weighted GD, which optimizes the weighted empirical risk $\Ls_{\mathbf{q}(t)}(\bw) = \sum_{i \in [N]} q_i(t)\ell(\bw^\top \bx_i)$. \citet{xu2021understanding} proved that weighted GD converges to $\ell_2$-max-margin direction on the same linear classification task when the weights are fixed during training. This condition was later relaxed by \citet{zhai2023understanding}, who demonstrated that the same convergence guarantee holds provided the weights converge to a limit, i.e., $\exists \lim_{t \rightarrow \infty} \mathbf{q}(t) = \hat{\mathbf{q}}$.

Our setting, however, introduces distinct technical challenges. First, the weights are bounded but not guaranteed to converge. The most relevant existing result is Theorem 7 in \citet{zhai2023understanding}, which establishes the same limit direction but requires the stronger combined assumptions of lower-bounded weights, loss convergence, and directional convergence of the iterates. A further complication in our analysis is an additional error term, $\boldsymbol{\epsilon}_r$ in \cref{cor:approxGRdata}, which must be carefully controlled. Our fine-grained analysis overcomes these issues by extending the methodology of \citet{ji2020regularization}, enabling us to manage the error term under the sole, weaker assumption of loss convergence.

\begin{definition}
    Given $\ba=(a_1, \cdots, a_N) \in \R^N$, we define $\ba$-weighted loss as $\Ls^\ba(\bw) \triangleq \sum_{i \in [N]}a_i\Ls_i(\bw)$. We denote the regularized solution as $\bar{\bw}^\ba(B) \triangleq \argmin_{\|\bw\|_2\ \leq B} \Ls^\ba(\bw)$.
\end{definition}

By introducing $\ba$-weighted loss, we can regard weighted GD as vanilla GD with respect to weighted loss. To follow the line of \citet{ji2020regularization}, we show that the regularization path converges in direction to $\ell_2$-max-margin solution, regardless of the choice of the weight vector $\ba$ if it is bounded by two positive constants, and such convergence is uniform; we can take sufficiently large $B$ to be close the $\ell_2$ solution \textit{for any} $\ba \in [c_1, c_2]^N$.

\begin{lemma}[Adaptation of Proposition 10 in \citet{ji2020regularization}] \label{lemma:reg_path}
    Let $\hat{\bu}=\argmax_{\|\bv\|_2 \leq 1} \min_{i \in [N]} \langle \bv, \bx_i\rangle$ be the (unique) $\ell_2$-max-margin solution and $c_1, c_2$ be two positive constants. Then, for any $\ba \in [c_1, c_2]^N$, 
    \begin{align*}
        \lim_{B \rightarrow \infty} \frac{\bar{\bw}^\ba(B)}{B} = \hat{\bu}.
    \end{align*}
    Furthermore, given $\epsilon >0$, there exists $M(c_1, c_2, \epsilon, N)>0$ only depending on $c_1, c_2, \epsilon, N$ such that $B > M$ implies $\|\frac{\bar{\bw}^\ba(B)}{B}-\hat{\bu}\| < \epsilon$ for any $\ba \in [c_1, c_2]^N$.
\end{lemma}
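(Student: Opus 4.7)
The plan is to adapt the sandwich argument used by \citet{ji2020regularization} for the unweighted regularization path, observing that the relevant constants can be made to depend only on $c_1, c_2, N$ rather than on the specific weight vector $\ba$.

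First, I would verify that $\|\bar{\bw}^\ba(B)\|_2 = B$ for all $B$ large enough: because the data are linearly separable and $\ell \in \{\ell_{\text{exp}}, \ell_{\text{log}}\}$ is strictly decreasing with $\ell(z) \to 0$ as $z \to \infty$, the objective $\Ls^\ba$ has no interior stationary point (the negative gradient is a strictly positive combination of the $\bx_i$, which lie in an open half-space by \cref{ass:linsep}), so the constrained minimizer must sit on the boundary of the ball. Next I would bound the optimal objective value from both sides. Evaluating at $B\hat{\bu}$ yields
\begin{align*}
\Ls^\ba(\bar{\bw}^\ba(B)) \le \Ls^\ba(B\hat{\bu}) = \sum_{i \in [N]} a_i\,\ell(B\hat{\bu}^\top \bx_i) \le N c_2\,\ell(B\gamma),
\end{align*}
where $\gamma = \min_i \hat{\bu}^\top \bx_i$. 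Keeping only one term of the sum together with $a_i \ge c_1$ gives the matching lower bound
\begin{align*}
\Ls^\ba(\bar{\bw}^\ba(B)) \ge c_1\,\ell\!\left(\min_{i \in [N]} \bar{\bw}^\ba(B)^\top \bx_i\right).
\end{align*}
Inverting the monotone relation using the asymptotics of $\ell$ (for exponential loss this is exact; for logistic loss use $\tfrac12 e^{-z} \le \ell_{\text{log}}(z) \le e^{-z}$ for $z \ge 0$), I obtain
\begin{align*}
\min_{i \in [N]} \bar{\bw}^\ba(B)^\top \bx_i \;\ge\; B\gamma - C,
\end{align*}
where the constant $C = C(c_1, c_2, N)$ depends only on $c_1, c_2, N$ and, crucially, not on the particular $\ba$.

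Dividing by $\|\bar{\bw}^\ba(B)\|_2 = B$ then shows that $\bar{\bw}^\ba(B)/B$ is a unit vector whose hard margin on $\{\bx_i\}_{i \in [N]}$ is at least $\gamma - C/B$, uniformly over $\ba \in [c_1, c_2]^N$. The uniqueness of $\hat{\bu}$ closes the argument by a standard compactness step: if no uniform $M(c_1, c_2, \epsilon, N)$ existed, one could extract sequences $B_k \to \infty$, $\ba_k \in [c_1, c_2]^N$, and unit vectors $\bu_k = \bar{\bw}^{\ba_k}(B_k)/B_k$ with $\|\bu_k - \hat{\bu}\|_2 \ge \epsilon$; passing to a subsequential limit $\bu^\star$ would give $\|\bu^\star\|_2 \le 1$ and $\min_i {\bu^\star}^\top \bx_i \ge \gamma$, forcing $\bu^\star = \hat{\bu}$ and contradicting $\|\bu^\star - \hat{\bu}\|_2 \ge \epsilon$. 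Taking $M = C/\epsilon'$ for an appropriate $\epsilon' = \epsilon'(c_1, c_2, \epsilon, N)$ obtained from this compactness argument then yields the uniform threshold asserted in the lemma.

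The main technical point to watch, rather than a conceptual obstacle, is to keep every intermediate constant explicit in $c_1, c_2, N$; in particular, when inverting the logistic loss one must first ensure $B\gamma - C \ge 0$ (automatic for large $B$) before invoking the asymptotic lower bound on $\ell_{\text{log}}$, and the quantitative compactness step needs the modulus relating margin deficit to angular distance from $\hat{\bu}$ to be independent of $\ba$, which it is because the constraint set $\{\bu : \|\bu\|_2 \le 1\}$ and the data $\{\bx_i\}_{i \in [N]}$ are themselves $\ba$-independent. Once this bookkeeping is in place, the whole proof is a streamlined, weight-uniform variant of Proposition~10 in \citet{ji2020regularization}.
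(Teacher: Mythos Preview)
Your proposal is correct and follows essentially the same route as the paper: sandwich $\Ls^\ba(\bar{\bw}^\ba(B))$ between $c_1\,\ell(\min_i \bar{\bw}^\ba(B)^\top\bx_i)$ and $Nc_2\,\ell(B\hat\gamma)$, invert to get a margin lower bound with an $\ba$-independent constant, and then use uniqueness of the $\ell_2$-max-margin direction to conclude uniform directional convergence. The only cosmetic differences are that the paper does not bother showing $\|\bar{\bw}^\ba(B)\|_2=B$ (the constraint $\le B$ already suffices), and for the logistic case it invokes a dedicated inversion lemma rather than the sandwich $\tfrac12 e^{-z}\le\ell_{\log}(z)\le e^{-z}$; your explicit compactness argument is just a spelled-out version of the paper's one-line appeal to uniqueness.
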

\begin{proof}
    We first have to show the uniqueness of $\ell_2$-max-margin solution. This proof was introduced by \citet[Proposition 10]{ji2020regularization}, but we provide it for completeness. Suppose that there exist two distinct unit vectors $\bu_1$ and $\bu_2$ such that both of them achieve the max-margin $\hat{\gamma}$. Take $\bu_3=\frac{\bu_1+\bu_2}{2}$ as a middle point of $\bu_1$ and $\bu_2$. Then we get 
    \begin{align*}
        \bu_3^\top \bx_i = \frac{1}{2}(\bu_1^\top \bx_i + \bu_2^\top \bx_i) \geq \hat{\gamma},
    \end{align*}
    for all $i \in [N]$, which implies that $\min_{i \in [N]}\bu_3^\top \bx_i \geq \hat{\gamma}.$ Since $\bu_1 \neq \bu_2$, we get $\|\bu_3\| < 1$, implying that $\frac{\bu_3}{\|\bu_3\|}$ achieves a larger margin than $\hat{\gamma}$. This makes a contradiction.
    
    Now we prove the main claim. Let $\hat{\gamma} = \min_{i \in [N]}\langle \hat{\bu}, \bx_i \rangle$ be the margin of $\hat{\bu}$. Then, it satisfies
    \begin{align} \label{eq:ineq_marginloss}
        c_1 \ell(\min_{i\in[N]}\langle \bar{\bw}^\ba(B), \bx_i \rangle) \leq \Ls^\ba(\bar{\bw}^\ba(B)) \leq \Ls^\ba(B\hat{\bu}) \leq Nc_2\ell(B\hat{\gamma}).
    \end{align}
    For $\ell=\ell_{\text{exp}}$, we get $\min_{i \in [N]} \langle \bar{\bw}^\ba(B), \bx_i \rangle \geq B \hat{\gamma} - \log \frac{Nc_2}{c_1}$, which implies
    \begin{align}\label{eq:margin_lowerbound}
        \min_{i \in [N]} \langle \frac{\bar{\bw}^\ba(B)}{B}, \bx_i \rangle \geq \hat{\gamma} - \frac{1}{B}\log \frac{Nc_2}{c_1}.
    \end{align}
    Since $\ell_2$-max-margin solution is unique, $\frac{\bar{\bw}^\ba(B)}{B}$ converges to $\hat{\bu}$. Note that the lower bound in \cref{eq:margin_lowerbound} does not depend on $\ba \in [c_1, c_2]^N$. Therefore, the choice of $M$ in \cref{lemma:reg_path} only depends on $c_1, c_2, \epsilon, N$.

    For $\ell=\ell_{\text{log}}$, \cref{eq:ineq_marginloss} implies that $\ell(\min_{i\in[N]}\langle \bar{\bw}^\ba(B), \bx_i \rangle) \leq \frac{Nc_2}{c_1}\ell(B\hat{\gamma})$. Notice that $\frac{Nc_2}{c_1}>1$ and $\min_{i\in[N]}\langle \bar{\bw}^\ba(B), \bx_i \rangle>0, B\hat{\gamma} >0$ hold for sufficiently large $B$ from \cref{lemma:small_loss_linsep}. From \cref{lemma:logistic}, we get
    \begin{align*}
        \min_{i \in [N]} \langle \frac{\bar{\bw}^\ba(B)}{B}, \bx_i \rangle \geq \hat{\gamma} - \frac{1}{B}\log (2^\frac{Nc_2}{c_1}-1).
    \end{align*}
    Following the proof of the previous part, we can easily show that the statement also holds in this case.
\end{proof}

\begin{lemma}[Adaptation of Lemma 9 in \citet{ji2020regularization}]\label{lemma:rho}
    Let $\alpha, c_1, c_2>0$ be given. Then, there exists $\rho(\alpha)>0$ such that $\|\bw\|_2 > \rho(\alpha) \Rightarrow \Ls^\ba((1+\alpha)\|\bw\|_2\hat{\bu}) \leq \Ls^\ba(\bw)$ for any $\ba \in [c_1, c_2]^N$.
\end{lemma}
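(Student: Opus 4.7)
The plan is to sandwich both sides of the desired inequality $\Ls^\ba((1+\alpha)\|\bw\|_2\hat{\bu}) \leq \Ls^\ba(\bw)$ by scalar quantities that depend on $\|\bw\|_2$ only through $\|\bw\|_2\hat{\gamma}$, where $\hat{\gamma} \triangleq \min_{i \in [N]} \hat{\bu}^\top \bx_i > 0$. Once both sides are reduced to decreasing functions of $\|\bw\|_2$, choosing $\rho(\alpha)$ to push the upper bound below the lower bound is immediate, and the bound can be made uniform in $\ba \in [c_1, c_2]^N$ because the weight vector only enters through the two constants.

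For the upper bound on $\Ls^\ba((1+\alpha)\|\bw\|_2\hat{\bu})$, I would use the margin property $\hat{\bu}^\top \bx_i \geq \hat{\gamma}$ for all $i$, together with the monotonicity of $\ell$ and $a_i \leq c_2$, to obtain
\begin{align*}
\Ls^\ba\bigl((1+\alpha)\|\bw\|_2\hat{\bu}\bigr) \leq Nc_2\,\ell\bigl((1+\alpha)\|\bw\|_2\hat{\gamma}\bigr).
\end{align*}
For the lower bound on $\Ls^\ba(\bw)$, I would exploit the optimality of $\hat{\bu}$: the unit vector $\bw/\|\bw\|_2$ cannot realize a margin exceeding $\hat{\gamma}$, so some index $i^*$ satisfies $\bw^\top \bx_{i^*} \leq \|\bw\|_2\hat{\gamma}$. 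Using $a_{i^*} \geq c_1$ and the monotonicity of $\ell$ yields
\begin{align*}
\Ls^\ba(\bw) \geq a_{i^*}\,\ell(\bw^\top \bx_{i^*}) \geq c_1\,\ell(\|\bw\|_2\hat{\gamma}).
\end{align*}
It therefore suffices to find $\rho(\alpha) > 0$ such that $\|\bw\|_2 > \rho(\alpha)$ implies $Nc_2\,\ell((1+\alpha)\|\bw\|_2\hat{\gamma}) \leq c_1\,\ell(\|\bw\|_2\hat{\gamma})$.

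The loss-specific step is routine. For $\ell = \ell_{\exp}$ the comparison reduces to $Nc_2\,e^{-\alpha\hat{\gamma}\|\bw\|_2} \leq c_1$, so $\rho(\alpha) = \tfrac{1}{\alpha\hat{\gamma}}\log(Nc_2/c_1)$ works (taking any positive value if $Nc_2 \leq c_1$). For $\ell = \ell_{\log}$, I would invoke the sandwich $\tfrac{1}{2}e^{-z} \leq \ell_{\log}(z) \leq e^{-z}$ valid for $z$ sufficiently large (via \cref{lemma:logistic}), which reduces the problem to the exponential-loss case up to an extra factor of $2$; concretely, one can take $\rho(\alpha) = \max\{z_0/\hat{\gamma},\,\tfrac{1}{\alpha\hat{\gamma}}\log(2Nc_2/c_1)\}$, where $z_0$ is the threshold beyond which the sandwich applies.

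No substantive obstacle is expected: the argument is essentially a margin comparison combined with the exponential tail of $\ell$. The key structural point, which parallels Lemma 9 of \citet{ji2020regularization} but must be made explicit here, is that neither bound depends on $\ba$ beyond $c_1$ and $c_2$, so the same $\rho(\alpha)$ serves all $\ba \in [c_1, c_2]^N$---this uniformity is exactly what the downstream convergence analysis in \cref{thm:GR_data} will consume.
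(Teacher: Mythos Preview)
Your proof is correct and takes a genuinely different, more elementary route than the paper. The paper routes through the regularization path: it invokes \cref{lemma:reg_path} to get uniform closeness of $\bar{\bw}^\ba(\|\bw\|_2)/\|\bw\|_2$ to $\hat{\bu}$, then chains $\Ls^\ba((1+\alpha)\|\bw\|_2\hat{\bu}) \leq \Ls^\ba(\bar{\bw}^\ba(\|\bw\|_2)) \leq \Ls^\ba(\bw)$, where the last inequality is just optimality of the constrained minimizer. You instead sandwich both sides directly by scalar losses at the margin, using only that $\hat{\bu}$ realizes $\hat{\gamma}$ on every datum while $\bw/\|\bw\|_2$ cannot exceed $\hat{\gamma}$ on at least one. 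Your approach is self-contained, yields an explicit $\rho(\alpha)$, and avoids the regularization-path detour entirely; the paper's approach is shorter \emph{given} \cref{lemma:reg_path}, but that lemma's own proof already contains essentially the same margin inequality you use. Both correctly isolate the uniformity in $\ba$ through $c_1,c_2$ alone, which is the point downstream consumers need. One minor quibble: the sandwich $\tfrac{1}{2}e^{-z}\leq \ell_{\log}(z)\leq e^{-z}$ for $z\geq 0$ is elementary (from $\log(1+x)\leq x$ and $\log(1+x)\geq x-x^2/2$), but \cref{lemma:logistic} is not the right citation for it---that lemma gives a different comparison.
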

\begin{proof}
    Let $\hat{\bu}$ be the $\ell_2$-max-margin solution and $\hat{\gamma}=\max_{i \in [N]}\langle \hat{\bu}, \bx_i \rangle$ be its margin. From the uniform convergence in \cref{lemma:reg_path}, we can choose $\rho(\alpha)$ large enough so that
    \begin{align*}
        \|\bw\|_2 > \rho(\alpha) \Rightarrow \bignorm{\frac{\bar{\bw}^\ba(\|\bw\|_2)}{\|\bw\|_2} - \hat{\bu}}_2 \leq \alpha \hat{\gamma},
    \end{align*}
    \textit{for any} $\ba \in [c_1, c_2]^N$. For $1 \leq i \leq n$, we get
    \begin{align*}
    \langle \bar{\mathbf{w}}^\ba(\|\mathbf{w}\|_2), \mathbf{x}_i \rangle 
    &= \langle \bar{\mathbf{w}}^\ba(\|\mathbf{w}\|_2) - \|\mathbf{w}\|_2\hat{\mathbf{u}}, \mathbf{x}_i \rangle + \langle \|\mathbf{w}\|_2\hat{\mathbf{u}}, \mathbf{x}_i \rangle \\
    &\le \alpha \hat{\gamma} \|\mathbf{w}\|_2 + \langle \|\mathbf{w}\|_2\hat{\mathbf{u}}, \mathbf{x}_i \rangle \\
    &\le (1 + \alpha) \|\mathbf{w}\|_2 \langle \hat{\mathbf{u}}, \mathbf{x}_i \rangle.
\end{align*}
    This implies that
    \begin{align*}
        \Ls^\ba((1+\alpha)\|\bw\|_2\hat{\bu}) \leq \Ls^\ba(\bar{\bw}^\ba(\|\bw\|_2)) \leq \Ls^\ba(\bw),
    \end{align*}
    for any $\ba \in [c_1, c_2]^N$.
\end{proof}

\GRdata*
\begin{proof}
    From \cref{cor:approxGRdata}, we can rewrite the update as
    \begin{align*}
        \bw_{r+1}^0-\bw_r^0&=-\frac{\eta_{rN}}{\|\nabla\Ls(\bw_r^0)\|_2} \sum_{i \in [N]}a_i(r)\nabla\Ls_i(\bw_r^0) - \eta_{rN}\boldsymbol{\epsilon}_r \\
        &= -\frac{\eta_{rN}}{\|\nabla\Ls(\bw_r^0)\|_2} \nabla \Ls^{\ba(r)}(\bw_r^0) - \eta_{rN} \boldsymbol{\epsilon}_r,
    \end{align*}
    where $c_1\leq a_i(r) \leq c_2$ for some positive constants $c_1, c_2$ and $\lim_{r\rightarrow\infty}\boldsymbol{\epsilon}_r=\mathbf{0}$.

    First, we show that $\lim_{r\rightarrow\infty}\frac{\bw_r^0}{\|\bw_r^0\|_2} = \hat{\bw}_{\ell_2}$. Let $\epsilon>0$ be given. Then, we can take $\alpha=\frac{\epsilon}{1-\epsilon}$ so that $\frac{1}{1+\alpha}=1-\epsilon$. Since $\|\bw_t\|_2\rightarrow\infty$, we can choose $r_0$ such that $t\geq r_0N \implies \|\bw_t\|_2 > \max\{\rho(\alpha), 1\}$, where $\rho(\alpha)$ is given by \cref{lemma:rho}. Then for any $r \geq r_0$, we get
    \begin{align*}
        \langle \nabla \Ls^\ba(\bw_r^0),\bw_r^0-(1+\alpha)\|\bw_r^0\|_2\hat{\bu} \rangle \geq \Ls^\ba(\bw_r^0) - \Ls^\ba((1+\alpha)\|\bw_r^0\|_2 \hat{\bu} \rangle \geq 0,
    \end{align*}
    which implies
    \begin{align*}
        \langle \nabla \Ls^\ba(\bw_r^0), \bw_r^0 \rangle \geq (1+\alpha)\|\bw_r^0\|_2 \langle \nabla \Ls^\ba(\bw_r^0), \hat{\bu} \rangle.
    \end{align*}
    Therefore, we get
    \begin{align*}
        &\langle \bw_{r+1}^0-\bw_r^0, \hat{\bu} \rangle \\
        &= \langle -\frac{\eta_{rN}}{\|\nabla\Ls(\bw_r^0)\|_2}\nabla\Ls^{\ba(r)}(\bw_r^0), \hat{\bu} \rangle + \langle -\eta_{rN}\boldsymbol{\epsilon}_r,\hat{\bu} \rangle \\
        &\geq \frac{1}{(1+\alpha)\|\bw_r^0\|_2}\langle -\frac{\eta_{rN}}{\|\nabla\Ls(\bw_r^0)\|_2}\nabla\Ls^{\ba(r)}(\bw_r^0), \bw_r^0 \rangle + \langle -\eta_{rN}\boldsymbol{\epsilon}_r,\hat{\bu} \rangle \\
        &=\frac{1}{(1+\alpha)\|\bw_r^0\|_2}\langle \bw_{r+1}^0-\bw_r^0, \bw_r^0 \rangle+\frac{1}{(1+\alpha)\|\bw_r^0\|_2}\langle \eta_{rN} c, \bw_r^0 \rangle + \langle -\eta_{rN}\boldsymbol{\epsilon}_r,\hat{\bu} \rangle \\
        &=\frac{1}{(1+\alpha)\|\bw_r^0\|_2}\bigopen{\frac{1}{2}\|\bw_{r+1}^0\|_2^2 - \frac{1}{2}\|\bw_r^0\|_2^2 - \frac{1}{2}\|\bw_{r+1}^0-\bw_{r}^0\|_2^2} + \langle -\eta_{rN}\boldsymbol{\epsilon}_r, \hat{\bu} - \frac{\bw_r^0}{(1+\alpha)\|\bw_r^0\|_2} \rangle \\
        & \geq \frac{1}{(1+\alpha)\|\bw_r^0\|_2}\bigopen{\frac{1}{2}\|\bw_{r+1}^0\|_2^2 - \frac{1}{2}\|\bw_r^0\|_2^2 - \frac{1}{2}\|\bw_{r+1}^0-\bw_{r}^0\|_2^2} -2\eta_{rN} \|\boldsymbol{\epsilon}_r\|_2,
    \end{align*}
    where the last inequality is from $\langle\eta_{rN}\boldsymbol{\epsilon}_r, \hat{\bu} - \frac{\bw_r^0}{(1+\alpha)\|\bw_r^0\|_2} \rangle \leq \eta_{rN} \|\boldsymbol{\epsilon}_r\|_2\bignorm{\hat{\bu} - \frac{\bw_r^0}{(1+\alpha)\|\bw_r^0\|}}_2 \leq 2 \eta_{rN} \|\boldsymbol{\epsilon}_r\|_2$.
    
    Note that 
    \begin{align*}
        \frac{\frac{1}{2}\|\bw_{r+1}^0\|_2^2 - \frac{1}{2}\|\bw_r^0\|_2^2}{\|\bw_r^0\|_2} \geq \|\bw_{r+1}^0\|_2 - \|\bw_r^0\|_2.
    \end{align*}
    Furthermore, 
    \begin{align*}
        \frac{\|\bw_{r+1}^0-\bw_{r}^0\|_2^2}{2(1+\alpha)\|\bw_r^0\|_2} \leq \frac{\|\bw_{r+1}^0-\bw_{r}^0\|_2^2}{2} &\leq \frac{1}{2}\bigopen{\eta_{rN}^2 \frac{\|\nabla \Ls^{\ba(r)}(\bw_r^0)\|^2}{\|\nabla \Ls(\bw_r^0)\|_2^2} + \eta_{rN} \|\boldsymbol{\epsilon}_r\|_2^2}\\
        & \leq c_3 r^{-2a},
    \end{align*}
    for some $c_3>0$ and sufficiently large $r$, since $\eta_{rN} = \mathcal{O}(r^{-a})$, $\|\boldsymbol{\epsilon}_r\|=\mathcal{O}(r^{-a/2})$, and $\frac{\|\nabla \Ls^{\ba(r)}(\bw_r^0)\|^2}{\|\nabla \Ls(\bw_r^0)\|^2}$ is upper bounded from
    \begin{align*}
        \frac{\|\nabla \Ls^{\ba(r)}(\bw_r^0)\|_2^2}{\|\nabla \Ls(\bw_r^0)\|_2^2} \overset{(*)}{\leq} \frac{\bigopen{c_2\sqrt{d}\max_{i \in [N]}x_i\sum_{i \in [N]}|\ell'(\langle \bw_r^0, \bx_l \rangle)|}^2}{\bigopen{\frac{\gamma}{N}\sum_{i \in [N]}|\ell'(\langle \bw_r^0, \bx_i \rangle)|}^2} = \frac{c_2^2dN^2(\max_{i \in [N]}x_i)^2}{\gamma^2},
    \end{align*}
    with $\gamma = \min_{i \in [N]}\langle \hat{\bw}_{\ell_2}, \bx_i \rangle>0$. Note that $(*)$ is from
    \begin{align*}
        \|\nabla \Ls(\bw_r^0)\|_2^2 = \|\hat{\bw_{\ell_2}}\|_2^2 \|\nabla \Ls(\bw_r^0)\|_2^2 \geq \langle \hat{\bw_{\ell_2}}, \frac{1}{N}\sum_{i \in [N]} \ell'(\langle \bw_r^0, \bx_i \rangle)\bx_i\rangle^2 \geq \bigopen{\frac{\gamma}{N}\sum_{i \in [N]}|\ell'(\langle \bw_r^0, \bx_i \rangle)|}^2.
    \end{align*}
    
    Therefore, we get
    \begin{align*}
        \langle\bw_r^0-\bw_{r_0}^0, \hat{\bu} \rangle  &\geq \frac{\|\bw_r^0\|_2 - \|\bw_{r_0}^0\|_2}{1+\alpha} - \sum_{s=r_0}^rc_3s^{-2a} - 2\sum_{s=r_0}^r \eta_{sN} \|\boldsymbol{\epsilon}_s\|_2 \\
        &\geq (1-\epsilon)(\|\bw_r^0\|_2 - \|\bw_{r_0}^0\|_2) - \underbrace{\bigopen{\sum_{s=r_0}^\infty c_3s^{-2a} + \sum_{s=r_0}^\infty c_4 s^{-\frac{3}{2}a}}}_{=c_5<\infty}, \\
    \end{align*}
    since $\|\boldsymbol{\epsilon}_r\|=\mathcal{O}(r^{-a/2})$ and $a \in (2/3, 1]$.
    As a result, we can conclude that
    \begin{align*}
        \langle\frac{\bw_r^0}{\|\bw_r^0\|_2}, \hat{\bu} \rangle \geq \frac{(1-\epsilon)(\|\bw_r^0\|_2 - \|\bw_{r_0}^0\|_2) + \langle \bw_{r_0}^0, \hat{\bu} \rangle + c_5}{\|\bw_{r}\|_2},
    \end{align*}
    which implies
    \begin{align*}
        \liminf_{r \rightarrow \infty}\langle\frac{\bw_r^0}{\|\bw_r^0\|_2}, \hat{\bu} \rangle \geq 1-\epsilon.
    \end{align*}
    Since we choose $\epsilon>0$ arbitrarily, we get $\lim_{r\rightarrow\infty}\frac{\bw_r^0}{\|\bw_r^0\|_2} = \hat{\bw}_{\ell_2}$.
    
    Second, we claim that $\lim_{t\rightarrow\infty}\frac{\bw_t}{\|\bw_t\|_2} = \hat{\bw}_{\ell_2}$. It suffices to show that $\lim_{r\rightarrow\infty} \bignorm{\frac{\bw_r^0}{\|\bw_r^0\|_2} - \frac{\bw_r^s}{\|\bw_r^s\|_2}}_2=0$ for all $s \in [N]$. Note that
    \begin{align*}
        \bignorm{\frac{\bw_r^0}{\|\bw_r^0\|_2} - \frac{\bw_r^s}{\|\bw_r^s\|_2}}_2 &\leq \bignorm{\frac{\bw_r^0}{\|\bw_r^0\|_2} - \frac{\bw_r^0}{\|\bw_r^s\|_2}}_2 + \bignorm{\frac{\bw_r^0}{\|\bw_r^s\|_2} - \frac{\bw_r^s}{\|\bw_r^s\|_2}}_2 \\
        &\leq \frac{\|\bw_r^s\|_2-\|\bw_r^0\|_2}{\|\bw_r^s\|_2} + \frac{\|\bw_r^s-\bw_r^0\|_2}{\|\bw_r^s\|_2} \\
        &\leq 2\frac{\|\bw_r^s-\bw_r^0\|_2}{\|\bw_r^s\|_2} \rightarrow 0,
    \end{align*}
    which ends the proof.
\end{proof}

\section[Missing Proofs in Section \ref{sec:generalization}]{Missing Proofs in \cref{sec:generalization}}\label{appendix:generalization}
\subsection[Proof of Proposition \ref{prop:approxAdamProxy}]{Proof of \cref{prop:approxAdamProxy}}
\approxAdamProxy*
\begin{proof}
    Note that
    \begin{align*}
        \sum_{i \in [N]}\frac{\sum_{j \in [N]}\beta_1^{(i, j)}\nabla\Ls_{j}(\bw_r^0)[k]}{\sqrt{\sum_{j \in [N]}\nabla\Ls_{j}(\bw_r^0)[k]^2}} &= \frac{\sum_{j \in [N]} \bigopen{\sum_{i \in [N]}\beta_1^{(i, j)}\nabla\Ls_{j}(\bw_r^0)[k]}}{\sqrt{\sum_{j \in [N]}\nabla\Ls_{j}(\bw_r^0)[k]^2}} \\
        &= \frac{1-\beta_1^N}{1-\beta_1}\frac{\nabla \Ls(\bw_r^0)[k]}{\sqrt{\sum_{i=1}^N \nabla\Ls_i(\bw_r^0)[k]^2}}.
    \end{align*}

    Furthermore, 
    \begin{align*}
        &\bigabs{\frac{\sum_{j \in [N]}\beta_1^{(i, j)}\nabla\Ls_{j}(\bw_r^0)[k]}{\sqrt{\sum_{j \in [N]}\beta_2^{(i, j)}\nabla\Ls_{j}(\bw_r^0)[k]^2}} -\frac{\sum_{j \in [N]}\beta_1^{(i, j)}\nabla\Ls_{j}(\bw_r^0)[k]}{\sqrt{\sum_{j \in [N]}\nabla\Ls_{j}(\bw_r^0)[k]^2}}} \\
        &\leq \bigabs{\frac{\sum_{j \in [N]}\beta_1^{(i, j)}\nabla\Ls_{j}(\bw_r^0)[k]}{\sqrt{\sum_{j \in [N]}\beta_2^{(i, j)}\nabla\Ls_{j}(\bw_r^0)[k]^2}}} \bigabs{1-\frac{\sqrt{\sum_{j \in [N]}\beta_2^{(i, j)}\nabla\Ls_{j}(\bw_r^0)[k]^2}}{\sqrt{\sum_{j \in [N]}\nabla\Ls_{j}(\bw_r^0)[k]^2}}} \\
        &\leq \sqrt{\sum_{j \in [N]}\frac{{\beta_1^{(i, j)}}^2}{\beta_2^{(i, j)}}}\bigopen{1-\sqrt{\beta_2^{N-1}}} \leq \underbrace{\sqrt{\sum_{j \in [N]}\frac{1}{\beta_2^{(i, j)}}}\bigopen{1-\sqrt{\beta_2^{N-1}}}}_{\triangleq \epsilon(\beta_2)},
    \end{align*}
    where $\lim_{\beta_2 \rightarrow 1} \epsilon(\beta_2) =0$. Substituting to \cref{eq:approxIncAdam}, we get
    \begin{align*}
        \bw_{r+1}^0[k]-\bw_r^0[k] &= -\eta_{rN} \bigopen{C_{\text{inc}}(\beta_1, \beta_2)\frac{1-\beta_1^N}{1-\beta_1}\frac{\nabla \Ls(\bw_r^0)[k]}{\sqrt{\sum_{i=1}^N \nabla\Ls_i(\bw_r^0)[k]^2}}+\boldsymbol{\epsilon}_{\beta_2}(r)[k]} \\
        &=-\eta_{rN} \bigopen{C_{\text{proxy}}(\beta_2) \frac{\nabla \Ls(\bw_r^0)[k]}{\sqrt{\sum_{i=1}^N \nabla\Ls_i(\bw_r^0)[k]^2}}+\boldsymbol{\epsilon}_{\beta_2}(r)[k]},
    \end{align*}
    where $C_{\text{proxy}}(\beta_2)=\sqrt{\frac{1-\beta_2^N}{1-\beta_2}}$, $\limsup_{r\rightarrow\infty}\|\boldsymbol{\epsilon}_{\beta_2}(r)\|_{\infty} \leq N\epsilon(\beta_2)$, and $\lim_{\beta_2 \rightarrow 1} \epsilon(\beta_2) =0$.
\end{proof}

\subsection[Proof of Proposition \ref{prop:AdamProxyMinimizesLoss}]{Proof of \cref{prop:AdamProxyMinimizesLoss}}
To prove \cref{prop:AdamProxyMinimizesLoss}, we begin with identifying \texttt{AdamProxy} as normalized steepest descent with respect to an energy norm, where the inducing matrix depends on the current iterate and the dataset. The following lemma shows that the matrix is always non-degenerate; the energy norm is bounded above and below with respect to $\ell_2$-norm multiplied by two constants only depending on the dataset. This result takes a crucial role to make the convergence guarantee of $\texttt{AdamProxy}$.

\begin{lemma}\label{lemma:AdamProxy_property}
Consider \textnormal{\texttt{AdamProxy}} iterates $\{\bw_t\}$ under \cref{ass:linsep,ass:nonzero}. Then, it satisfies
    \begin{enumerate}[label=(\alph*)]
        \item $\displaystyle \proxy(\bw)= \argmin_{\|\bv\|_{\bP(\bw)}=1}\langle \nabla \Ls(\bw), \bv \rangle,$ where $\tilde{\bP}(\bw)=\diag\left(\sqrt{\sum_{i\in[N]}\nabla\Ls_i(\bw)^2}\right)$ and $\bP(\bw)=\frac{1}{\|\nabla\Ls(\bw)\|_{\tilde{\bP}^{-1}(\bw)}^2}\tilde{\bP}(\bw)$.
        \item There exist positive constants $c_1, c_2$ depending only on the dataset $\{\bx_i\}_{i \in [N]}$ such that $c_1\|\bv\|_2 \leq \|\bv\|_{\bP(\bw)} \leq c_2 \|\bv\|_2$ for all $\bv, \bw \in \R^d$.
    \end{enumerate}
\end{lemma}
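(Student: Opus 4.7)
For part (a), the plan is to solve the constrained optimization directly via Cauchy--Schwarz in the $\bP(\bw)$-inner product. Writing $\bg = \nabla \Ls(\bw)$, one first checks that the normalization constant in $\bP(\bw) = \tilde{\bP}(\bw)/\|\bg\|_{\tilde{\bP}^{-1}(\bw)}^2$ is chosen precisely so that $\proxy(\bw) = \tilde{\bP}^{-1}\bg$ has unit $\bP(\bw)$-norm, since
\begin{align*}
\proxy(\bw)^\top \bP(\bw) \proxy(\bw) = \frac{\bg^\top \tilde{\bP}^{-1}\tilde{\bP}\tilde{\bP}^{-1}\bg}{\|\bg\|_{\tilde{\bP}^{-1}}^2} = 1.
\end{align*}
Next, for any $\bv$ with $\|\bv\|_{\bP(\bw)} = 1$, Cauchy--Schwarz in the $\bP(\bw)$-inner product yields $|\langle \bg, \bv\rangle| = |\langle \bP^{-1/2}\bg, \bP^{1/2}\bv\rangle| \leq \|\bg\|_{\bP^{-1}}\|\bv\|_{\bP} = \|\bg\|_{\bP^{-1}}$, with equality iff $\bv$ is proportional to $\bP(\bw)^{-1}\bg$, which in turn is proportional to $\tilde{\bP}^{-1}\bg = \proxy(\bw)$. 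This identifies $\proxy(\bw)$ (up to sign) as the extremal direction, giving the claimed characterization.

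For part (b), the key tool is a scale invariance in the per-sample gradient magnitudes. Let $p_i(\bw) \triangleq |\ell'(\bw^\top \bx_i)|$, so that $\tilde{\bP}(\bw)[k,k]^2 = \sum_i p_i^2 \bx_i[k]^2$ and $\bg[k] = -\tfrac{1}{N}\sum_i p_i \bx_i[k]$. Rescaling all $p_i$ by a common positive factor multiplies both $\tilde{\bP}(\bw)[k,k]$ and $\|\bg\|_{\tilde{\bP}^{-1}(\bw)}^2$ by the same factor, so their ratio---the diagonal entry $\bP(\bw)[k,k]$---is invariant under this rescaling. Hence we may assume without loss of generality that $\max_i p_i(\bw) = 1$. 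Under this normalization, \cref{ass:nonzero} directly yields $m \leq \tilde{\bP}(\bw)[k,k] \leq M$ for $\bw$-independent constants $m \triangleq \min_{i,k}|\bx_i[k]|$ and $M \triangleq \max_k \sqrt{\sum_i \bx_i[k]^2}$, using the index $i^\ast$ achieving $p_{i^\ast}=1$ for the lower bound. To lower-bound the denominator $\|\bg\|_{\tilde{\bP}^{-1}}$ away from zero, I invoke \cref{ass:linsep}: fixing any $\bw_{\text{sep}}$ with margin $\gamma \triangleq \min_i \bw_{\text{sep}}^\top \bx_i > 0$, one has $|\langle \bw_{\text{sep}}, \bg\rangle| \geq \tfrac{\gamma}{N}\sum_i p_i \geq \gamma/N$, and Cauchy--Schwarz in the $\tilde{\bP}$-inner product then gives $\|\bg\|_{\tilde{\bP}^{-1}} \geq (\gamma/N)/\|\bw_{\text{sep}}\|_{\tilde{\bP}}$, where the denominator is uniformly bounded by a dataset-dependent quantity involving $M$ and $\|\bw_{\text{sep}}\|_2$. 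Combining these bounds sandwiches each entry $\bP(\bw)[k,k]$ between positive dataset-dependent constants, and the diagonality of $\bP(\bw)$ translates this into the claimed norm equivalence with $c_1, c_2$ depending only on $\{\bx_i\}_{i\in[N]}$.

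\textbf{Expected main obstacle.} The hardest step is the \emph{uniform} lower bound on $\|\bg\|_{\tilde{\bP}^{-1}}$, because along \texttt{AdamProxy} trajectories the per-sample magnitudes $p_i(\bw)$ can become arbitrarily non-uniform---some vanishing while others stay bounded. The scale-invariance reduction to $\max_i p_i = 1$ is what makes a uniform estimate tractable, effectively compactifying the relevant weight space; after this reduction, \cref{ass:linsep} supplies a uniformly positive linear functional of $\bg$ and \cref{ass:nonzero} prevents coordinate collapse, together producing the dataset-dependent constants $c_1, c_2$.
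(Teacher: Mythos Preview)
Your proposal is correct. Part (a) matches the paper's argument in substance: the paper phrases it as $\proxy(\bw)$ solving the unconstrained proximal problem $\argmin_\bv \langle\nabla\Ls(\bw),\bv\rangle + \tfrac12\|\bv\|_{\tilde\bP(\bw)}^2$ and then normalizes, which is equivalent to your direct Cauchy--Schwarz verification that $\proxy(\bw)$ has unit $\bP(\bw)$-norm and attains the extremum.

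For part (b), your route differs from the paper's in a useful way. The paper never isolates the scale invariance; it writes $r_i=|\ell'(\bw^\top\bx_i)|$ and tracks the homogeneity by hand. For the \emph{lower} bound on $\bP(\bw)[k,k]$ it bounds numerator and denominator separately so that the $r_i$'s cancel up to the ratio $\sum_i r_i^2/(\sum_i r_i)^2\geq 1/N$ (no use of \cref{ass:linsep} here); for the \emph{upper} bound it invokes \cref{ass:linsep} exactly as you do, via $\langle\bv,\nabla\Ls(\bw)\rangle\geq\gamma\sum_i r_i$ for a separating $\bv$. Your normalization $\max_i p_i=1$ achieves the same cancellation more transparently, effectively compactifying the problem. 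The one point you should make explicit is that the \emph{upper} bound on $\|\bg\|_{\tilde\bP^{-1}}^2$ (needed for the \emph{lower} bound on $\bP(\bw)[k,k]$) also follows immediately after your normalization, since then $|\bg[k]|\leq\tfrac1N\sum_i|\bx_i[k]|$ and $\tilde\bP[k,k]\geq m$; as written, ``combining these bounds'' only gives one direction outright. With that detail filled in, your argument is complete and arguably cleaner than the paper's direct computation.
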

\begin{proof}
\begin{enumerate}[label=(\alph*)]
    \item Note that $\proxy(\bw) = - \tilde{\bP}(\bw)^{-1} \nabla \Ls(\bw) = \argmin_\bv \langle \nabla \Ls(\bw), \bv \rangle + \frac{1}{2}\|\bv\|_{\tilde{\bP}(\bw)}^2$. Therefore, normalizing by $\|\nabla \Ls(\bw)\|_{\tilde{\bP}^{-1}(\bw)}^2$, we get $\displaystyle \proxy(\bw)= \argmin_{\|\bv\|_{\bP(\bw)}=1}\langle \nabla \Ls(\bw), \bv \rangle$

    \item It is enough to show that every element of $\bP(\bw)$ is bounded for some $c_1, c_2>0$. For simplicity, we denote $|\ell'(\bw^\top\bx_i)|=r_i$, $\min_{i \in [N], j\in[d]} \bigabs{\bx_i[j]} = B_1>0$ and $\max_{i \in [N], j\in[d]} \bigabs{\bx_i[j]} = B_2>0$.

    Note that
    \begin{align*}
        \bP(\bw)[k, k] &= \sqrt{\sum_{i \in [N]}r_i^2\bx_i[k]^2} \times \frac{1}{\sum_{j \in [d]} \frac{\nabla \Ls(\bw)[j]^2}{\sqrt{\sum_{i \in [N]}r_i^2 \bx_i[j]^2}}} \\
        &\geq B_1 \sqrt{\sum_{i \in [N]} r_i^2} \times \frac{1}{\sum_{j\in[d]} \frac{(\sum_{i\in[N]} r_i B_2)^2}{\sqrt{\sum_{i\in[N]} r_i^2 B_1^2}}} \\
        &= \frac{B_1^2}{B_2^2} \cdot \frac{1}{d} \frac{\sum_{i\in[N]} r_i^2}{(\sum_{i\in[N]} r_i)^2} \geq \frac{1}{Nd} \cdot \frac{B_1^2}{B_2^2}.
    \end{align*}

Let $\bv \in \mathbb{R}^d$ s.t. $\|\bv\|_2=1$ and $\bv^\top \bx_i > 0, \forall i \in [N]$ (since $\{\bx_i\}$ is linearly separable). Let $\min_{i \in [N]} \bv^\top \bx_i = \gamma > 0$. Then, we get $\bv^\top \nabla \Ls(\bw) = \sum_{i \in [N]} r_i \bv^\top \bx_i \geq \gamma \sum_{i \in [N]} r_i
$, which implies $\|\bv\|_{\tilde{\bP}(\bw)}^2 \|\nabla \Ls(\bw)\|_{\tilde{\bP}(\bw)^{-1}}^2 \geq \langle \bv, \nabla \Ls(\bw) \rangle^2 \geq \gamma^2 \left(\sum_{i \in [N]} r_i\right)^2$

Note that $\|\bv\|_{\tilde{\bP}(\bw)}^2 = \sum_{j \in [d]} \left(\sum_{i \in [N]} r_i^2 |\bx_i[j]|^2 \cdot \bv[j]^2 \right) \leq dB_2 \sqrt{\sum_{i \in [N]} r_i^2}$. To wrap up, we get
\begin{align*}
    \|\nabla \Ls(\bw)\|_{\tilde{\bP}(\bw)^{-1}}^2 \geq \frac{\gamma^2}{dB_2} \frac{(\sum_{i \in [N]} r_i)^2}{\sqrt{\sum_{i \in [N]} r_i^2}},
\end{align*}

and therefore, 

\begin{align*}
\bP(\bw)[k, k] &= \frac{\sqrt{\sum_{i \in [N]} r_i^2 \bx_i[k]^2}}{\|\nabla \Ls(\bw)\|_{\tilde{\bP}(\bw)^{-1}}^2} \leq \sqrt{\sum_{i \in [N]} r_i^2 \bx_i[k]^2} \frac{d B_2}{\gamma^2} \frac{\sqrt{\sum_{i \in [N]} r_i^2}}{(\sum_{i \in [N]} r_i)^2} \leq \frac{d B_2^2}{\gamma^2}.
\end{align*}

As a result, we can conclude that
\begin{align*}
    \frac{B_1^2}{dB_2^2N} \|\bv\| \leq \|\bv\|_{\bP(\bw)} \leq \frac{d B_2^2}{\gamma^2} \|\bv\|, \quad \forall \bv, \bw \in \mathbb{R}^d,
\end{align*}
and take $c_1 = \frac{B_1^2}{dB_2^2N}$ and $c_2 = \frac{d B_2^2}{\gamma^2}$.
\end{enumerate}
\end{proof}

\AdamProxyMinimizesLoss*
\begin{proof}
First, we start with the descent lemma for \texttt{AdamProxy}, following the standard techniques in the analysis of normalized steepest descent.

    Let $D = \sup_{\bw\in\R^d} \max_{i \in [N]} \|\bx_i\|_{\bP^{-1}(\bw)}$. Notice that $D \leq c_2 \max_{i \in [N]} \|\bx_i\|_2 < \infty$ by \cref{lemma:AdamProxy_property}. Also, we define 
    \begin{align*}
        \gamma_\bw =\max_{\|\bv\|_{\bP(\bw)}\leq1} \min_{i \in [N]}\bv^\top\bx_i
    \end{align*}
    be the $\|\cdot\|_{\bP(\bw)}$-max-margin. Also notice that $\bar{\gamma}\triangleq\sup_{\bw \in \R^d} \gamma_\bw <\infty$, since
    \begin{align*}
        \max_{\|\bv\|_{\bP(\bw)}\leq1} \min_{i \in [N]}\bv^\top\bx_i \leq \max_{\|\bv\|_2 \leq \frac{1}{c_1}} \min_{i \in [N]}\bv^\top\bx_i
    \end{align*}
    for any $\bw \in \R^d$ by \cref{lemma:AdamProxy_property}. Then, we get
    \begin{align*}
        \Ls(\bw_{t+1}) &= \Ls(\bw_t) + \eta_t \langle\nabla \Ls(\bw_t), \proxy(\bw_t) \rangle + \frac{\eta_t^2}{2}\proxy(\bw_t)^\top \nabla^2\Ls(\bw_t + \beta(\bw_{t+1} - \bw_t)) \proxy(\bw_t) \\
        &\overset{(*)}{\leq} \Ls(\bw_t) - \eta_t \|\nabla\Ls(\bw_t)\|_{\bP^{-1}(\bw_t)} + \frac{\eta_t^2D^2}{2}\sup\{\gG(\bw_t), \gG(\bw_{t+1}\} \\
        &\overset{(**)}{\leq} \Ls(\bw_t) - \eta_t\|\nabla\Ls(\bw_t)\|_{\bP^{-1}(\bw_t)} + \frac{\eta_t^2D^2e^{\eta_0D}}{2}\gG(\bw_t) \\
        &\overset{({**}*)}{\leq} \Ls(\bw_t) - \bigopen{\eta_t - \frac{\eta_t^2D^2e^{\eta_0D}}{2} \gamma_{\bw_t}} \|\nabla\Ls(\bw_t)\|_{\bP^{-1}(\bw_t)} \\
        & \leq \Ls(\bw_t) - \frac{\eta_t}{2}\|\nabla\Ls(\bw_t)\|_{\bP^{-1}(\bw_t)},
    \end{align*}
    for $\eta_t \leq \frac{1}{\bar{\gamma} D^2e^{\eta_0D}} \triangleq \eta$. Note that $(*)$ is from 
    \begin{align*}
        \proxy(\bw_t)^\top \nabla^2\Ls(\bw) \proxy(\bw_t) = \frac1N \sum_{i\in [N]} \ell''(\bw) (\proxy(\bw_t)^\top \bx_i)^2 \\\leq \frac{1}{N} \sum_{i\in [N]} \ell''(\bw) \|\proxy(\bw_t)\|_\infty^2 \|\bx_i\|_1^2 \leq D^2 \gG (\bw),
    \end{align*}
    where the last inequality is from \cref{lemma:proxy}, and $(**), ({**}*)$ are also from \cref{lemma:proxy}. Telescoping this inequality, we get
    \begin{align*}
        \frac{1}{2} \sum_{t=t_0}^T \eta_t \|\nabla\Ls(\bw_t)\|_{\bP^{-1}(\bw_t)} \leq \Ls(\bw_{t_0}) - \Ls(\bw_T) \leq \Ls(\bw_{t_0}),
    \end{align*}
    which implies $\sum_{t=t_0}^\infty \eta_t \|\nabla\Ls(\bw_t)\|_{\bP^{-1}(\bw_t)} < \infty$. Since $\sum_{t=t_0}^T \eta_t = \infty$, we get $\liminf_{t \rightarrow \infty}\|\nabla\Ls(\bw_t)\|_{\bP^{-1}(\bw_t)}= 0$. From \cref{lemma:AdamProxy_property}, we get $\liminf_{t\rightarrow\infty} \|\nabla \Ls(\bw_t)\|_2=0$, also implying $\liminf_{t\rightarrow\infty} \Ls(\bw_t)=0$. Since $\Ls(\bw_t)$ is monotonically decreasing, we get $\Ls(\bw_t) \rightarrow 0$.
\end{proof}

\subsection[Proof of Lemma \ref{lemma:direction_proxy}]{Proof of \cref{lemma:direction_proxy}}
\paragraph{Intuition.}
Before we provide a rigorous proof of \cref{lemma:direction_proxy}, we first demonstrate its intuitive explanation motivated by \citet{soudry2018the}. For simplicity, assume $\ell=\ell_{\exp}$ and let $\bw_t = g(t)\hat{\bw}+\boldsymbol{\rho}(t)$ where $g(t)=\|\bw_t\|_2 \rightarrow \infty$, $\boldsymbol{\rho}(t) \in \R^d$, and $\frac{1}{g(t)}\boldsymbol{\rho}(t) \rightarrow \mathbf{0}$. Then, the mini-batch gradient can be represented by
\begin{align*}
    \nabla \Ls_i(\bw)=-\exp(-\bw^\top \bx_i)\bx_i = - \exp(-g(t)\hat{\bw}^\top \bx_i) \exp(-\boldsymbol{\rho}(t)^\top \bx_i)\bx_i.
\end{align*}
As $g(t) \rightarrow \infty$, the coefficient exponentially decays to 0. It implies that only terms with the smallest $\hat{\bw}^\top \bx_i$ will contribute to the update of \texttt{AdamProxy}. Therefore, the limit direction $\hat{\bw}$ will be described by $\frac{\sum_{i \in [N]}c_i \bx_i}{\sqrt{\sum_{i \in [N]}c_i^2 \bx_i^2}}$ where $c_i$ is the contribution of the $i$-th sample to the update and it vanishes for $i \notin S$ where $S=\argmin_{i \in [N]} \hat{\bw}^\top \bx_i$.

Building upon this intuition, we first establish the following technical lemma, characterizing limit points of a sequence in a form of \texttt{AdamProxy}.

\begin{lemma} \label{lemma:limit_point}
    Let $(\ba(t))_{t \geq 0}$ be a sequence of real vectors in $\R_{>0}^N$ and $\{\bx_i\}_{i \in S} \subseteq \R^d$ be the dataset with nonzero entries for an index set $S \subseteq [N]$. Suppose that $\bb_t=\frac{\sum_{i \in S}a_i(t)\bx_i}{\sqrt{\sum_{i\in S}a_i(t)^2 \bx_i^2}}$ satisfies $\|\bb_t\|_2 \geq C>0$ for all $t\geq0$. Then every limit point of $\frac{\bb_t}{\|\bb_t\|_2}$ is positively proportional to $\frac{\sum_{i \in [N]}c_i\bx_i}{\sqrt{\sum_{i\in[N]}c_i^2 \bx_i^2}}$ for some $\bc \in \Delta^{N-1}$ satisfying $c_i=0$ for $i\notin S$.
\end{lemma}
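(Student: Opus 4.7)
The plan is to exploit a key scale-invariance: for any $\lambda > 0$, replacing $\ba(t)$ with $\lambda \ba(t)$ leaves $\bb_t$ unchanged, since both numerator and denominator (entry-wise) scale by $\lambda$. Thus, defining $\tilde{\bc}(t) \in \Delta^{N-1}$ by $\tilde c_i(t) = a_i(t)/\sum_{j \in S} a_j(t)$ for $i \in S$ and $\tilde c_i(t) = 0$ otherwise, I can rewrite
\begin{align*}
\bb_t \;=\; \frac{\sum_{i\in[N]} \tilde c_i(t)\bx_i}{\sqrt{\sum_{i\in[N]} \tilde c_i(t)^2\, \bx_i^2}} \;=:\; F(\tilde{\bc}(t)),
\end{align*}
with $\tilde{\bc}(t)$ living in the compact subset $\Delta_S \triangleq \{\bc \in \Delta^{N-1} : c_i = 0,\ \forall i \notin S\}$.

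Given any limit point $\hat{\bb}$ of $\bb_t/\|\bb_t\|_2$, the plan is to pick a subsequence $t_k$ along which $\bb_{t_k}/\|\bb_{t_k}\|_2 \to \hat{\bb}$, and then, by compactness of $\Delta_S$, extract a further subsequence (still denoted $t_k$) along which $\tilde{\bc}(t_k) \to \bc^* \in \Delta_S$. The next step is to verify that $F$ is continuous at every point of $\Delta^{N-1}$: for any $\bc \in \Delta^{N-1}$ there exists $i^*$ with $c_{i^*} > 0$, and then the denominator at coordinate $k$ satisfies $\sqrt{\sum_j c_j^2 \bx_j[k]^2} \geq c_{i^*}|\bx_{i^*}[k]| > 0$ by the nonzero-entries hypothesis, so $F$ is the ratio of two continuous functions with nonvanishing denominator.

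Continuity then yields $\bb_{t_k} = F(\tilde{\bc}(t_k)) \to F(\bc^*)$. The hypothesis $\|\bb_t\|_2 \geq C > 0$ passes to the limit, giving $\|F(\bc^*)\|_2 \geq C > 0$, so I may safely normalize: $\bb_{t_k}/\|\bb_{t_k}\|_2 \to F(\bc^*)/\|F(\bc^*)\|_2$. Hence $\hat{\bb} = F(\bc^*)/\|F(\bc^*)\|_2$ is positively proportional to $F(\bc^*) = \frac{\sum_{i\in[N]} c^*_i \bx_i}{\sqrt{\sum_{i\in[N]} (c^*_i)^2 \bx_i^2}}$, with $\bc^* \in \Delta^{N-1}$ and $c^*_i = 0$ for $i \notin S$, as required.

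The argument is essentially a compactness-plus-continuity routine, so I do not anticipate a serious obstacle. The only point requiring care is the well-definedness and continuity of $F$ at boundary points of $\Delta^{N-1}$ (those with many zero coordinates), which is handled by the nonzero-entries assumption combined with the fact that $\bc^* \neq \mathbf{0}$ since $\bc^* \in \Delta^{N-1}$. The lower bound $\|\bb_t\|_2 \geq C$ is essential in the final step to prevent the normalization from becoming ill-defined in the limit; without it one could not rule out $F(\bc^*) = \mathbf{0}$.
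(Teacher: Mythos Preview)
Your proposal is correct and follows essentially the same approach as the paper: normalize $\ba(t)$ onto the simplex via scale invariance, use continuity of the map $F$ (guaranteed by the nonzero-entries hypothesis) together with compactness to extract a convergent subsequence, and invoke the lower bound $\|\bb_t\|_2 \geq C$ to ensure the limit $F(\bc^*)$ is nonzero so that normalization is well-defined. The only cosmetic difference is that the paper works on $\Delta^{|S|-1}$ and extends to $\Delta^{N-1}$ at the end, whereas you embed into $\Delta^{N-1}$ from the outset via $\Delta_S$; your version is arguably cleaner about the normalization step.
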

\begin{proof}
    Define a function $F:\Delta^{|S|-1} \rightarrow \R^d$ as
    \begin{align*}
        F(\bd) = \frac{\sum_{i \in S}d_i\bx_i}{\sqrt{\sum_{i\in S}d_i^2 \bx_i^2}}.
    \end{align*}
    Since $\{\bx_i\}_{i \in S}$ has nonzero entries, $F$ is continuous. Let $A = \{\bd \in \Delta^{|S|-1}: \|F(\bd)\|_2 \geq C\}$. Since $F$ is continuous, $A$ is a closed subset of $\Delta^{|S|-1}$. Furthermore, since $\|\bdel_t\|_2 \geq C$ for all $t \geq 0$, $\{\ba(t)\}_{t \geq 0} \subseteq A$. 

    Now let $\hat{\bdel}$ be a limit point of $\frac{\bdel_t}{\|\bdel_t\|_2}$. Define a function $G:A \subseteq \Delta^{|S|-1} \rightarrow \R^d$ as
    \begin{align*}
        G(\bd) = \frac{1}{\bignorm{\frac{\sum_{i \in S}d_i\bx_i}{\sqrt{\sum_{i\in S}d_i^2 \bx_i^2}}}_2} \cdot \frac{\sum_{i \in S}d_i\bx_i}{\sqrt{\sum_{i\in S}d_i^2 \bx_i^2}}.
    \end{align*}
    Notice that $G$ is continuous on $A$ and $\hat{\bdel} = \lim_{t \rightarrow \infty}G(\ba(t))$. Since $A$ is bounded and closed, Bolzano-Weierstrass Theorem tells us that there exists a subsequence $\ba(t_n)$ such that $\exists \lim_{n \rightarrow \infty} \ba(t_n) = \bc \in A$. Therefore, we get
    \begin{align*}
        \hat{\bdel} = \lim_{n \rightarrow \infty}G(\ba(t_n)) = G(\lim_{n \rightarrow \infty} \ba(t_n)) = G(\bc).
    \end{align*}
    Hence, the limit point $\hat{\bdel}$ is proportional to $\frac{\sum_{i \in S}c_i\bx_i}{\sqrt{\sum_{i\in S}c_i^2 \bx_i^2}}$. Then we regard $\bc \in \Delta^{N-1}$ by taking $c_i=0$ for $i \notin S$.
\end{proof}

\DirectionProxy*
\begin{proof}
    We start with the case of $\ell = \ell_{\text{exp}}$. First step is to characterize $\hat{\bdel}$, the limit direction of $\bdel_t$. To begin with, we introduce some new notations.
    \begin{itemize}[label=$\cdot$]
        \item From \cref{ass:adamproxy}, let $\bw_t = g(t)\hat{\bw}+\boldsymbol{\rho}(t)$ where $g(t)=\|\bw_t\|_2 \rightarrow \infty$, $\boldsymbol{\rho}(t) \in \R^d$, and $\frac{1}{g(t)}\boldsymbol{\rho}(t) \rightarrow \mathbf{0}$.
        \item Let $\gamma = \min_i \langle \mathbf{x}_i, \hat{\mathbf{w}} \rangle, \bar{\gamma}_i = \langle \mathbf{x}_i, \hat{\mathbf{w}} \rangle, \bar{\gamma} = \min_{i \notin S} \langle \mathbf{x}_i, \hat{\mathbf{w}} \rangle$. Then it satisfies $S = \{ i \in [N] : \langle \mathbf{x}_i, \hat{\mathbf{w}} \rangle = \gamma\}$. Here, note that $\bar{\gamma} > \gamma > 0$. 
        \item Let $\boldsymbol{\alpha}(t) \in \mathbb{R}^N$ be $\alpha_i(t) = \exp(-\boldsymbol{\rho}(t)^\top \mathbf{x}_i)$.
        \item Let $B_0 = \max_i \|\mathbf{x}_i\|_2, B_1 = \min_{i \in [N], j\in[d]} |\mathbf{x}_i[j]|>0$, and $B_2 = \max_{i \in [N], j\in[d]} |\mathbf{x}_i[j]|$. 
    \end{itemize}
    Since $\|\boldsymbol{\rho}(t)\| / g(t) \rightarrow 0$ and $\gamma, \bar{\gamma}>0$, there exist $t_{\epsilon_1}, t_{\epsilon_2} >0$ such that 
    \begin{align*}
        \boldsymbol{\rho}(t)^\top \mathbf{x}_i \leq \|\boldsymbol{\rho}(t)\|_2 B_0 \leq \epsilon_1 \gamma g(t), \; \forall t > t_{\epsilon_1}, \forall i\in [N], \\
        \boldsymbol{\rho}(t)^\top \mathbf{x}_i \geq -\|\boldsymbol{\rho}(t)\|_2 B_0 \geq -\epsilon_2 \bar{\gamma} g(t), \; \forall t > t_{\epsilon_2}, \forall i\in [N],
    \end{align*}
    for all $\epsilon_1, \epsilon_2 > 0$. Then, we can decompose dominant and residual terms in the update rule.
    \begin{align*}
        \bdel_t &= \frac{\sum_{i \in S} \exp(-\gamma g(t))\exp(-\boldsymbol{\rho}(t)^\top \mathbf{x}_i) \mathbf{x}_i}{\sqrt{\sum_{i \in [N]}\exp(-2\bar{\gamma}_i g(t))\exp(-2\boldsymbol{\rho}(t)^\top \mathbf{x}_i) \mathbf{x}_i^2}} + \frac{\sum_{i \in S^\complement} \exp(-\bar{\gamma}_i g(t))\exp(-\boldsymbol{\rho}(t)^\top \mathbf{x}_i) \mathbf{x}_i}{\sqrt{\sum_{i \in [N]}\exp(-2\bar{\gamma}_i g(t))\exp(-2\boldsymbol{\rho}(t)^\top \mathbf{x}_i) \mathbf{x}_i^2}} \\ 
        &\triangleq \mathbf{d}(t) + \mathbf{r}(t).
    \end{align*}
    To investigate the limit direction of $\bdel_t$, we first show that $\bd(t)$ dominates $\mathbf{r}(t)$, i.e., $\lim_{t \rightarrow \infty} \frac{\|\mathbf{r}(t)\|_2}{\|\mathbf{d}(t)\|_2} =0$. Let $\bM_t=\diag\bigopen{\sqrt{\sum_{i \in [N]}\exp(-2\bar{\gamma}_i g(t))\exp(-2\boldsymbol{\rho}(t)^\top \mathbf{x}_i) \mathbf{x}_i^2}}$. Notice that
    \begin{align*}
        \|\bM_t \hat{\bw}\|_2\|\bd(t)\|_2 \geq \langle \bM_t \hat{\bw}, \bd(t) \rangle = \gamma \sum_{i \in S}\exp(-\gamma g(t))\exp(-\boldsymbol{\rho}(t)^\top \mathbf{x}_i).
    \end{align*}
    Since the diagonals of $\bM_t$ are upper bounded by $B_2 \sqrt{\sum_{i \in [N]}\exp(-2\bar{\gamma}_i g(t))\exp(-2\boldsymbol{\rho}(t)^\top \mathbf{x}_i)}$, we get
    \begin{align*}
        \|\bd(t)\|_2 \geq \frac{\gamma \sum_{i \in S}\exp(-\gamma g(t))\exp(-\boldsymbol{\rho}(t)^\top \mathbf{x}_i)}{B_2 \sqrt{\sum_{i \in [N]}\exp(-2\bar{\gamma}_i g(t))\exp(-2\boldsymbol{\rho}(t)^\top \mathbf{x}_i)}}.
    \end{align*}
    Also, notice that
    \begin{align*}
        \|\mathbf{r}(t)\|_2 \leq \frac{B_2\sum_{i \in S}\exp(-\gamma g(t))\exp(-\boldsymbol{\rho}(t)^\top \mathbf{x}_i)}{B_1\sqrt{\sum_{i \in [N]}\exp(-2\bar{\gamma}_i g(t))\exp(-2\boldsymbol{\rho}(t)^\top \mathbf{x}_i)}}.
    \end{align*}

    From the following inequalities
    \begin{align*}
        \sum_{i \in S} \exp(-\gamma g(t))\exp(-\boldsymbol{\rho}(t)^\top \mathbf{x}_i)
        &\geq \exp(-\gamma g(t)) \exp(-\epsilon_1 \gamma g(t)) \\
        &= \exp(-(1+\epsilon_1)\gamma g(t)),
    \end{align*}
    \begin{align*}
        \sum_{i \in S^\complement}\exp(-\bar{\gamma}_i g(t))\exp(-\boldsymbol{\rho}(t)^\top \mathbf{x}_i)&\leq N\exp(-\bar{\gamma}g(t))\exp(\epsilon_2 \bar{\gamma} g(t)) \\
        &= N\exp(-(1-\epsilon_2)\bar{\gamma}g(t)),
    \end{align*}
    we conclude that
    \begin{align*}
        \frac{\|\mathbf{r}(t)\|_2}{\|\mathbf{d}(t)\|_2} &= \frac{B_2^2}{\gamma B_1}\frac{\sum_{i \in S^\complement} \exp(-\gamma g(t))\exp(-\boldsymbol{\rho}(t)^\top \mathbf{x}_i)}{\sum_{i \in S} \exp(-\gamma g(t))\exp(-\boldsymbol{\rho}(t)^\top \mathbf{x}_i)} \\
        &\leq \frac{NB_2^2}{\gamma B_1}\exp(-\frac{1}{2}(\bar{\gamma}-\gamma)g(t)) \rightarrow 0.
    \end{align*}
    Next, we claim that every limit point of $\frac{\bd(t)}{\|\bd(t)\|_2}$ is positively proportional to $\frac{\sum_{i \in [N]}c_i \mathbf{x}_i}{\sqrt{\sum_{i \in [N]}c_i^2 \mathbf{x}_i^2}}$ for some $\bc=(c_0, \cdots, c_{N-1}) \in \Delta^{N-1}$ satisfying $c_i=0$ for $i \notin S$. Notice that
    \begin{align*}
        \mathbf{d}(t)[k] &= \frac{\sum_{i \in S} \exp(-\gamma g(t))\exp(-\boldsymbol{\rho}(t)^\top \mathbf{x}_i) \mathbf{x}_i[k]}{\sqrt{\sum_{i \in [N]}\exp(-2\bar{\gamma}_i g(t))\exp(-2\boldsymbol{\rho}(t)^\top \mathbf{x}_i) \mathbf{x}_i^2[k]}} \\
        &= \frac{\sum_{i \in S} \exp(-\gamma g(t))\exp(-\boldsymbol{\rho}(t)^\top \mathbf{x}_i) \mathbf{x}_i[k]}{\sqrt{\sum_{i \in S}\exp(-2\gamma g(t))\exp(-2\boldsymbol{\rho}(t)^\top \mathbf{x}_i) \mathbf{x}_i^2[k]+\sum_{i \in S^\complement}\exp(-2\bar{\gamma}_i g(t))\exp(-2\boldsymbol{\rho}(t)^\top \mathbf{x}_i) \mathbf{x}_i^2[k]}}\\
        &=\frac{\sum_{i \in S} \exp(-\gamma g(t))\exp(-\boldsymbol{\rho}(t)^\top \mathbf{x}_i) \mathbf{x}_i[k]}{\sqrt{\sum_{i \in S}\exp(-2\gamma g(t))\exp(-2\boldsymbol{\rho}(t)^\top \mathbf{x}_i) \mathbf{x}_i^2[k]}} \frac{1}{\sqrt{1+\frac{\sum_{i \in S^\complement}\exp(-2\bar{\gamma}_i g(t))\exp(-2\boldsymbol{\rho}(t)^\top \mathbf{x}_i) \mathbf{x}_i^2[k]}{\sum_{i \in S}\exp(-2\gamma g(t))\exp(-2\boldsymbol{\rho}(t)^\top \mathbf{x}_i) \mathbf{x}_i^2[k]}}}.
    \end{align*}
    Let $\bb_t =\frac{\sum_{i \in S} \exp(-\gamma g(t))\exp(-\boldsymbol{\rho}(t)^\top \mathbf{x}_i) \mathbf{x}_i}{\sqrt{\sum_{i \in S}\exp(-2\gamma g(t))\exp(-2\boldsymbol{\rho}(t)^\top \mathbf{x}_i) \mathbf{x}_i^2}}=\frac{\sum_{i \in S} \exp(-\boldsymbol{\rho}(t)^\top \mathbf{x}_i) \mathbf{x}_i}{\sqrt{\sum_{i \in S}\exp(-2\boldsymbol{\rho}(t)^\top \mathbf{x}_i) \mathbf{x}_i^2}}$. Since
    \begin{align*}
       \frac{\sum_{i \in S^\complement}\exp(-2\bar{\gamma}_i g(t))\exp(-2\boldsymbol{\rho}(t)^\top \mathbf{x}_i) \mathbf{x}_i^2[k]}{\sum_{i \in S}\exp(-2\gamma g(t))\exp(-2\boldsymbol{\rho}(t)^\top \mathbf{x}_i) \mathbf{x}_i^2[k]} \rightarrow 0,
    \end{align*}
    every limit point of $\frac{\mathbf{d}(t)}{\|\bd(t)\|_2}$ is represented by a limit point of $\frac{\bb_t}{\|\bb_t\|_2}$. Notice that $\bb_t$ is an update of \texttt{AdamProxy} under the dataset $\{\bx_i\}_{i \in S}$, which implies $\|\bb_t\|_2$ is lower bounded by a positive constant from \cref{lemma:AdamProxy_property}. Therefore, \cref{lemma:limit_point} proves the claim.
    
    Hence, we can characterize $\hat{\bdel}$ as
    \begin{align*}
        \hat{\bdel}=\lim_{t \rightarrow \infty} \frac{\bdel_t}{\|\bdel_t\|_2} &= \lim_{t \rightarrow \infty} \frac{\mathbf{d}(t) + \mathbf{r}(t)}{\|\mathbf{d}(t) + \mathbf{r}(t)\|_2} \\
        &= \lim_{t \rightarrow \infty} \frac{\mathbf{d}(t)}{\|\mathbf{d}(t) + \mathbf{r}(t)\|_2} + \lim_{t \rightarrow \infty}\frac{\mathbf{r}(t)}{\|\mathbf{d}(t) + \mathbf{r}(t)\|_2} \\
        &= \lim_{t \rightarrow \infty} \frac{\bd(t)}{\|\bd(t)\|_2}\propto \frac{\sum_{i \in [N]}c_i \mathbf{x}_i}{\sqrt{\sum_{i \in [N]}c_i^2 \mathbf{x}_i^2}},
    \end{align*}
    for some $\bc \in \Delta^{N-1}$ satisfying $c_i=0$ for $i \notin S$.
    
    Second step is to connect the limiting behavior of $\bdel_t$ to the limit direction $\hat{\bw}$ using Stolz-Cesaro theorem. From the first step, we can represent
    \begin{align*}
        \bdel_t = h(t)\hat\bdel + \boldsymbol{\sigma}(t),
    \end{align*}
    where $h(t) = \|\bdel_t\|_2$ and $\frac{1}{h(t)}\boldsymbol{\sigma}(t) \rightarrow 0$. Notice that $\bw_t - \bw_0=\sum_{s=0}^{t-1}\eta_sh(s)(\hat{\bdel}+\frac{1}{h(s)}\boldsymbol{\sigma}(t))$. Since $\hat{\bdel}+\frac{1}{h(s)}\boldsymbol{\sigma}(t)$ is bounded, we get $\sum_{s=0}^{t-1}\eta_sh(s) \rightarrow \infty$. Then we take
    \begin{align*}
        \ba_t &= \bw_t-\bw_0=\sum_{s=0}^{t-1}\eta_sh(s)(\hat{\bdel}+\frac{1}{h(s)}\boldsymbol{\sigma}(t)) \\
        b_t&=\sum_{s=0}^{t-1}\eta_sh(s).
    \end{align*}
    Then, $\{b_t\}_{t=1}^\infty$ is strictly monotone and diverging. Also, $\lim_{t \rightarrow \infty}\frac{\ba_{t+1}-\ba_t}{b_{t+1}-b_t}=\hat{\bdel}$. Then, by Stolz-Cesaro theorem, we get
    \begin{align*}
        \lim_{t\rightarrow \infty}\frac{\ba_t}{b_t}=\hat{\bdel}.
    \end{align*}
    This implies $\bw_t =  b_t \hat{\bdel} + \boldsymbol{\tau}(t)$ where $\frac{\boldsymbol{\tau}(t)}{b_t} \rightarrow 0$. Also notice that $\bw_t=g(t) \hat{\bw}+\boldsymbol{\rho}(t)$. Dividing by $g(t)$, we get 
    \begin{align*}
        \hat{\bw} = \lim_{t \rightarrow \infty} \frac{g(t) \hat{\bw} + \boldsymbol{\rho}(t)}{g(t)} = \lim_{t \rightarrow \infty} \frac{b_t}{g(t)}\bigopen{\hat{\bdel} + \frac{\boldsymbol{\tau}(t)}{b_t}}.
    \end{align*}
    Since $\ell_2$ norm is continuous, we get
    \begin{align*}
        1=\|\hat{\bw}\|_2=\lim_{t \rightarrow \infty} \frac{b_t}{g(t)} \bignorm{\hat{\bdel} + \frac{\boldsymbol{\tau}(t)}{b_t}}_2 = \lim_{t \rightarrow \infty} \frac{b_t}{g(t)},
    \end{align*}
    which implies $\hat{\bw} = \hat{\bdel}$.

        Then we move on to the case of $\ell=\ell_{\text{log}}$. This kind of extension is possible since the logistic loss has a similar tail behavior of the exponential loss, following the line of \citet{soudry2018the}. We adopt the same notation with previous part, and we decompose dominant and residual terms as follows:
    \begin{align*}
        \bdel_t &= \frac{\sum_{i \in S} |\ell'(\gamma g(t) + \boldsymbol{\rho}(t)^\top \mathbf{x}_i)| \mathbf{x}_i}{\sqrt{\sum_{i \in [N]}|\ell'(\bar{\gamma}_i g(t)+\boldsymbol{\rho}(t)^\top \mathbf{x}_i)|^2 \mathbf{x}_i^2}} + \frac{\sum_{i \in S^\complement} |\ell'(\bar{\gamma}_i g(t)+\boldsymbol{\rho}(t)^\top \mathbf{x}_i)|\bx_i}{\sqrt{\sum_{i \in [N]}|\ell'(\bar{\gamma}_i g(t)+\boldsymbol{\rho}(t)^\top \mathbf{x}_i)|^2 \mathbf{x}_i^2}} \\ 
        &\triangleq \mathbf{d}(t) + \mathbf{r}(t).
    \end{align*}
    Notice that $\lim_{z \rightarrow \infty} \frac{|\ell_{\text{log}}'(z)|}{|\ell_{\text{exp}}'(z)|}= \lim_{z \rightarrow \infty} \frac{1}{1+e^{-z}}=1$. Therefore, the limit behavior of $\bd(t)$ and $\mathbf{r}(t)$ is identical to the previous $\ell=\ell_{\text{exp}}$ case. This implies the same proof also holds for the logistic loss, which ends the proof.
\end{proof}

\subsection[Proof of Theorem \ref{thm:fixed_pt}]{Proof of \cref{thm:fixed_pt}}
\fixedpt*
\begin{proof}
    We first show that $P_{\text{Adam}}(\bc)$ has a unique solution and $\bp(\bc)$ can be identified as a vector-valued function. Since $\bM(\bc)$ is positive definite for every $\bc \in \Delta^{N-1}$, $\frac{1}{2}\|\bw\|_{\bM(\bc)}$ is strictly convex. Since the feasible set is convex, there exists a unique optimal solution of $P_{\text{Adam}}(\bc)$ and we can redefine $\bp(\bc)$ as a vector-valued function.

    Since the inequality constraints are linear, $P_{\text{Adam}}(\bc)$ satisfies Slater's condition, which implies that there exists a dual solution. From \cref{ass:LICQ}, such dual solution is unique.

    \begin{enumerate}[label=(\alph*)]
        \item Let $f(\bw, \bc)=\frac{1}{2}\|\bw\|_{\bM(\bc)}$ be the objective function of $P_{\text{Adam}}(\bc)$ and $F=\{\bw\in\R^d :\bw^\top\bx_i -1 \geq 0, \forall i\in[N]\}$ be the feasible set. It is clear that such $f$ is continuous on $\bw$ and $\bc$. Let $\bar{\bc} \in \Delta^{N-1}$ and assume $\bp$ is not continuous on $\bar{\bc}$. Then there exists $\{\mathbf{c}_k\} \subset \Delta^{N-1}$ such that $\lim_{k \rightarrow \infty} \mathbf{c}_k = \bar{\mathbf{c}}$ but $\|\bp(\mathbf{c}_{k}) - \bp(\bar{\mathbf{c}})\|_2 \geq \epsilon$ for some $\epsilon > 0$. We denote $\mathbf{w}_k = \bp(\mathbf{c}_k)$ and $\bar{\mathbf{w}} = \bp(\bar{\mathbf{c}})$.

    First, construct $\{\mathbf{u}_k\} \subset F$ such that $\lim_{k \rightarrow \infty} \mathbf{u}_k = \bar{\mathbf{w}}$. Then we get a natural relationship between $\bw_k$ and $\mathbf{u}_k$ as
    \begin{align*}
        \frac{1}{2} \mathbf{w}_k^\top \bM(\mathbf{c}_k)\mathbf{w}_k \leq \frac{1}{2}\mathbf{u}_k^\top \bM(\mathbf{c}_k)\mathbf{u}_k.
    \end{align*}
    
    Second, consider the case when $\{\mathbf{w}_k\}$ is bounded. Then we can take a subsequence $\mathbf{w}_{k_n} \rightarrow \mathbf{w}_0$. Since $\{\mathbf{w}_{k_n}\} \subset F$ and $F$ is closed, we get $\mathbf{w}_0 \in F$. Also, since $f$ is continuous, $f(\mathbf{w}_{k_n}, \mathbf{c}_{k_n}) \rightarrow f(\mathbf{w}_0, \bar{\mathbf{c}})$. Therefore,
    \begin{align*}
        f(\mathbf{w}_{k_n}, \mathbf{c}_{k_n}) \leq f(\bar{\mathbf{w}}, \mathbf{c}_{k_n}) \xrightarrow[n \rightarrow \infty]{}
        f(\mathbf{w}_0, \bar{\mathbf{c}}) \leq f(\bar{\mathbf{w}}, \bar{\mathbf{c}}),
    \end{align*}
    which implies $\mathbf{w}_0 = \bar{\mathbf{w}}$. This makes a contradiction to $\|\bp(\mathbf{c}_{k}) - \bp(\bar{\mathbf{c}})\|_2 = \|\mathbf{w}_k - \bar{\mathbf{w}}\|_2\geq \epsilon$.

    Lastly, consider the case when $\{\mathbf{w}_k\}$ is not bounded. By taking a subsequence, we can assume that $\|\mathbf{w}_k\|_2 \rightarrow \infty$ without loss of generality. Define $\mathbf{v}_k = \frac{\mathbf{w}_k}{\|\mathbf{w}_k\|_2}$. Since $\bv_k$ is bounded, we can take a convergent subsequence and consider $\lim_{k \rightarrow \infty} \mathbf{v}_k = \bar{\mathbf{v}}$ without loss of generality. Then,
    \begin{align*}
        \frac{1}{2} \mathbf{w}_k^\top \bM(\mathbf{c}_k)\mathbf{w}_k \leq \frac{1}{2}\mathbf{u}_k^\top \bM(\mathbf{c}_k)\mathbf{u}_k \Rightarrow \frac{1}{2} \mathbf{v}_k^\top \bM(\mathbf{c}_k)\mathbf{v}_k \leq \frac{1}{2}\left(\frac{\mathbf{u}_k}{\|\mathbf{w}_k\|_2}\right)^\top \bM(\mathbf{c}_k)\left(\frac{\mathbf{u}_k}{\|\mathbf{w}_k\|_2}\right).
    \end{align*}
    Since $f$ is continuous and $\{\mathbf{u}_k\}$ is bounded, we get
    \begin{align*}
        \frac{1}{2} \bar{\mathbf{v}}^\top \bM(\bar{\mathbf{c}})\bar{\mathbf{v}} = f(\bar{\mathbf{v}}, \bar{\mathbf{c}}) =\lim_{k \rightarrow \infty}f(\mathbf{v}_k, \mathbf{c}_k)=\lim_{k \rightarrow \infty}\frac{1}{2} \mathbf{v}_k^\top \bM(\mathbf{c}_k)\mathbf{v}_k \\
        \leq \limsup_{k \rightarrow \infty}\frac{1}{2}\left(\frac{\mathbf{u}_k}{\|\mathbf{w}_k\|}\right)^\top \bM(\mathbf{c}_k)\left(\frac{\mathbf{u}_k}{\|\mathbf{w}_k\|}\right)=0.
    \end{align*}
    Note that $\bM(\bar{\bc})$ is positive definite and $\frac{1}{2}\bar{\mathbf{v}}^\top \bM(\bar{\mathbf{c}})\bar{\mathbf{v}}=0$ implies $\bar{\bv}=0$, which makes a contradiction.

    \item Let $\bc_0 \in \Delta^{N-1}$ be given and take $\bw^* = \bp(\bc_0)$. From KKT conditions of $P_{\text{Adam}}(\bc_0)$, the dual solution $\bd(\bc_0)$ is given by
    \begin{align*}
        \bM(\bc_0)\bw^*=\sum_{i \in S(\bw^*)}d_i(\bc_0)\bx_i
    \end{align*}
    and such $d_i(\bc_0)\geq0$ is uniquely determined since $\{\bx_i\}_{i \in S(\bw^*)}$ is a set of linearly independent vectors by \cref{ass:LICQ}.

    Now we claim that $\bd(\bc)$ is continuous at $\bc = \bc_0$. Notice that $\min_{i \notin S(\bw^*)} {\bw^*}^\top \bx_i > 1$. Since $\bp$ is continuous at $\bc_0$, there exists $\delta>0$ such that $\bp(\bc)^\top\bx_i-1>0$ for $i \notin S(\bw^*)$ and $\bc \in \Delta^{N-1} \cap B_{\delta}(\bc_0)$. Therefore, $S(\bp(\bc)) \subseteq S(\bw^*)$ on $\bc \in \Delta^{N-1} \cap B_{\delta}(\bc_0)$. 
    
    Let $\mathbf{X}$ be a matrix whose columns are the support vectors of $\bw^*$. On $\bc\in \Delta^{N-1} \cap B_{\delta}(\bc_0)$, KKT conditions tells us that
    \begin{align*}
        \bM(\mathbf{c})\bp(\mathbf{c}) = \sum_{i \in S(\bp(\bc))} d_i(\bc) \bx_i \overset{(*)}{=} \sum_{i \in S(\bw^*)} d_i(\bc) \bx_i =\mathbf{X}\bd(\bc) \\
        \overset{(**)}{\Leftrightarrow} \bd(\bc) = (\mathbf{X}^\top|_{\operatorname{im}\mathbf{X}^\top})^{-1}\bM(\mathbf{c})\mathbf{p}(\mathbf{c}),
    \end{align*}
    where $(*)$ is from $S(\bp(\bc)) \subseteq S(\bw^*)$ and $(**)$ is from the linear independence of columns of $\mathbf{X}$. Notice that $\bM(\bc)$ and $\bw^*(\bc)$ are continuous on $\bc = \bc_0$, which implies that $\bd(\bc)$ is continuous on $\bc = \bc_0$.
    
    Since at least one of the dual solutions is strictly positive, $\bd$ is a continuous map from $\Delta^{N-1}$ to $\mathbb{R}_{\geq0}^N\backslash\{\mathbf{0}\}$. This implies that $T$ is continuous, since $\mathbf{d} \mapsto \frac{\mathbf{d}}{\sum_{i \in [N]} d_i}$ is continuous on $\mathbb{R}_{\geq0}^N\backslash\{\mathbf{0}\}$.

    \item Since $\Delta^{N-1}$ is a nonempty convex compact subset of $\R^N$, there exists a fixed point of $T$ by Brouwer fixed-point theorem.

    \item From \cref{lemma:direction_proxy}, there exists $\bc^* \in \Delta^{N-1}$ such that $\hat{\bw} \propto \frac{\sum_{i=1}^N c_i^*\bx_i}{\sqrt{\sum_{i=1}^N {c_i^*}^2 \bx_i^2}}$ with $c_i^*=0$ for $i \notin S'$ where $S'= \argmin_{i \in [N]}\hat{\bw}^\top \bx_i$. Then we take $\hat{\mathbf{w}} =\frac{\sum_{i \in S}kc^*_i \mathbf{x}_i}{\sqrt{\sum_{i \in S}{c_i^*}^2 \mathbf{x}_i^2}}$ for some $k>0$. We claim that such $\bc^*$ becomes a fixed point of $T$ and $\hat{\bw} \propto \bp(\bc^*)$. 
    
    Consider the optimization problem $P_{\text{Adam}}(\bc^*)$ and its unique primal solution $\bw^*=\bp(\bc^*)$. Notice that $\min_{i \in[N]}{\hat{\bw}}^\top\bx_i = \gamma>0$ since \texttt{AdamProxy} minimizes the loss. Therefore, $\bw^*=\frac{1}{\gamma}\hat{\bw}$ and $d_i(\bc^*)=\frac{kc_i^*}{\gamma}$ satisfy the following KKT conditions
    \begin{align*}
        &\bM(\mathbf{c}^*)\mathbf{w}^*=\sum_{i \in S^*}d_i\mathbf{x}_i, d_i \geq 0, \\
        &{\mathbf{w}^*}^\top \mathbf{x}_i -1 \geq 0, \forall i \in [N],
    \end{align*}
    where $S^*=\{i \in [N]:{\bw^*}^\top\bx_i-1=0\}$ is the index set of support vectors of $\bw^*$. This implies that $T(\bc^*)=\bc^*$ and $\hat{\bw}=\gamma \bw^* \propto \bw^*=\bp(\bc^*)$, which proves the claim.
    \end{enumerate}
\end{proof}

\subsection[Detailed Calculations of Example \ref{ex:coor_aligned}]{Detailed Calculations of \cref{ex:coor_aligned}}
Consider $N=d$ and $\{\bx_i\}_{i \in [d]} \subseteq \R^d$ where $\bx_i = x_i \be_i + \delta \sum_{j \neq i} \be_j$ for some $0<\delta$ and $0 < x_0<\cdots<x_{d-1}$. $\ell_\infty$-max-margin problem is given by
    \begin{align*}
        \min \|\bw\|_\infty \; \text{subject to} \; \bw^\top \bx_i \geq 1, \forall i \in [N].
    \end{align*}
(For the convenience of calculation, we use the objective $\|\bw\|_\infty$ rather than $\frac{1}{2}\|\bw\|_\infty^2$.) Its KKT conditions are given by
    \begin{align*}
        \partial \|\bw\|_\infty \ni \sum_{i \in [N]} \lambda_i \bx_i, \\
        \sum_{i \in [N]} \lambda_i(\bw^\top \bx_i -1) = 0,\\
        \lambda_i\geq0, \; {\bw}^\top\bx_i-1 \geq 0, \forall i \in [N].
    \end{align*}
    Note that $\bw^* = (\frac{1}{x_0+(d-1)\delta}, \cdots, \frac{1}{x_0+(d-1)\delta}) \in \R^d$ and $\boldsymbol{\lambda}^*=(\frac{1}{x_0+(d-1)\delta}, 0, \cdots ,0) \in \R^d$ satisfy the KKT conditions since
    \begin{align*}
        \partial \|\bw\|_\infty \Big|_{\bw=\bw^*} = \Delta^{d-1} \ni \frac{1}{x_0+(d-1)\delta}\bx_0 = \sum_{i \in [N]}\lambda_i^* \bx_i, \\
        \sum_{i \in [N]} \lambda_i^*({\bw^*}^\top \bx_i -1) = \lambda_1^* (\frac{x_0+(d-1)\delta}{x_0+(d-1)\delta}-1)=0,\\
        \lambda_i^* \geq 0, {\bw^*}^\top \bx_i-1 \geq 0, \forall i \in [N].
    \end{align*}

    Now we show that $\bc^*=(1, 0, \cdots, 0) \in \Delta^{d-1}$ is a fixed point of $T$ in \cref{thm:fixed_pt} and $\bw^*=\bp(\bc^*)$. Note that for $k=\frac{1}{x_0+(d-1)\delta}>0$, it satisfies
    \begin{align*}
        \bM(\bc^*) \bw^* = \diag(x_0, \delta, \cdots, \delta) \bw^* = k \bx_0 = k\sum_{i \in [N]} c_i^* \bx_i \\
        \sum_{i\in[N]}c_i^*({\bw^*}^\top \bx_i-1)=0, \\
        c_i^*\geq0, {\bw^*}^\top\bx_i-1 \geq 0, \forall i \in [N],
    \end{align*}
    which implies $T(\bc^*) = \bc^*$ and $\bw^*=\bp(\bc^*)$.

\section[Missing Proofs in Section \ref{sec:signum}]{Missing Proofs in \cref{sec:signum}} \label{appendix:signum}

\begin{algorithm}[th]
\caption{\texttt{Inc-Signum}}\label{alg:inc_signum}
\begin{algorithmic}[1]
\renewcommand{\algorithmicrequire}{\textbf{Hyperparams:}}
\renewcommand{\algorithmicensure}{\textbf{Input:}}
\REQUIRE Learning rate schedule $\{\eta_t\}_{t=0}^{T-1}$, momentum parameter $\beta\in [0, 1)$, batch size $b$
\ENSURE Initial weight $\bw_0$, dataset $\{\bx_i\}_{i\in [N]}$
\STATE Initialize momentum $\bm_{-1} = \mathbf{0}$
\FOR{$t = 0, 1, 2, \dots, T-1$}
    \STATE $\gB_t \leftarrow \{ (t \cdot b + i) \pmod N \}_{i=0}^{b-1}$ 
    \STATE $\bg_t \leftarrow \nabla \Ls_{\gB_t}(\bw_{t}) = \tfrac1b \sum_{i\in \gB_t} \ell'(\bw_t^\top \bx_i)\bx_i$ 
    \STATE $\bm_t \leftarrow \beta \bm_{t-1} + (1-\beta) \bg_{t}$ 
    \STATE $\bw_{t+1} \leftarrow \bw_{t} - \eta_t\, \mathrm{sign}(\bm_t)$
\ENDFOR
\STATE \textbf{return} $\bw_T$
\end{algorithmic}
\end{algorithm}

\paragraph{Related Work.}
Our proof of \cref{thm:signum} builds on standard techniques from the analysis of the implicit bias of normalized steepest descent on linearly separable data~\citep{gunasekar2018characterizing,zhang2024the,fan2025implicitbiasspectraldescent}. The most closely related result is due to \citet{fan2025implicitbiasspectraldescent}, who showed that full-batch Signum converges in direction to the maximum $\ell_\infty$-margin solution. \cref{thm:signum} extends this result to the mini-batch setting, establishing that the mini-batch variant of \texttt{Inc-Signum}~(\cref{alg:inc_signum}) also converges in direction to the maximum $\ell_\infty$-margin solution, provided the momentum parameter is chosen sufficiently close to $1$. 

\paragraph{Technical Contribution.}
The key technical contribution enabling the mini-batch analysis is \cref{lemma:ema_misalign}. Importantly, requiring momentum parameter $\beta$ close to $1$ is not merely a technical convenience but intrinsic to the mini-batch setting ($b < N$), as formalized in \cref{lemma:ema_misalign} and supported empirically in \cref{fig:signum_batchsize} of \cref{appendix:additional_exps}.

\paragraph{Implicit Bias of SignSGD.}
We note that as an extreme case, \texttt{Inc-Signum} with $\beta=0$ and a batch size of $1$ (i.e., SignSGD) has a simple implicit bias: its iterates converge in direction to $\sum_{i \in [N]} \mathrm{sign}(\bx_i)$, which corresponds to neither the $\ell_2$- nor the $\ell_\infty$-max-margin solution.

\paragraph{Notation.}
We introduce additional notation to analyze \texttt{Inc-Signum}~(\cref{alg:inc_signum}) with arbitrary mini-batch size $b$. Let $\gB_t \subseteq [N]$ denote the set of indices in the mini-batch sampled at iteration $t$. The corresponding mini-batch loss $\Ls_{\gB_t}(\bw)$ is defined as
\[
\Ls_{\gB_t}(\bw) \triangleq \frac{1}{|\gB_t|} \sum_{i \in \gB_t} \ell(\bw^\top \bx_i).
\]
We define the maximum normalized $\ell_\infty$-margin as
\[
\gamma_\infty \triangleq \max_{\|\bw\|_\infty \leq 1} \min_{i \in [N]} \bw^\top \bx_i > 0,
\]
and again introduce the proxy $\gG: \mathbb R^d \to \mathbb R$ defined as
\[
\gG(\bw) \triangleq -\frac{1}{N} \sum_{i \in [N]} \ell'(\bw^\top \bx_i).
\]
As before, we consider $\ell$ to be either the logistic loss $\ell_{\mathrm{log}}(z) = \log(1+\exp(-z))$ or the exponential loss $\ell_{\mathrm{exp}}(z) = \exp(-z)$. Finally, let $D$ be an upper bound on the $\ell_1$-norm of the data, i.e., $\|\bx_i\|_1 \leq D$ for all $i \in [N]$.

\begin{lemma}[Descent inequality]\label{lemma:descent_inequality}
    \textnormal{\texttt{Inc-Signum}} iterates $\{\bw_t\}$ satisfy
    \begin{align*}
        \Ls(\bw_{t+1}) \leq \Ls(\bw_t) - \eta_t \langle \nabla \Ls(\bw_t), \Delta_t\rangle + C_H \eta_t^2 \gG(\bw_t),\quad \Delta_t:=\mathrm{sign}(\bm_t),
    \end{align*}
    where $C_H=\frac{1}{2}D^2e^{\eta_0D}$.
\end{lemma}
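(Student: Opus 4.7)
The plan is to establish the bound via a standard second-order Taylor expansion of $\Ls$ along the \texttt{Inc-Signum} step $\bw_{t+1}-\bw_t = -\eta_t \Delta_t$, then control the Hessian term by exploiting two structural facts about the surrogate losses: (i) $\ell''(z) \leq |\ell'(z)|$ for both $\ell_{\exp}$ and $\ell_{\log}$, and (ii) ratios of the form $|\ell'(z-\delta)|/|\ell'(z)|$ are bounded by $e^{|\delta|}$. Together with $\|\Delta_t\|_\infty \leq 1$ and the $\ell_1$-diameter bound $\|\bx_i\|_1 \leq D$, these will give exactly the claimed constant $C_H = \tfrac{1}{2} D^2 e^{\eta_0 D}$.

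\textbf{Step 1: Taylor expansion.} By the mean-value form of Taylor's theorem, there exists $\bar{\bw}_t$ on the segment $[\bw_t, \bw_{t+1}]$ such that
\begin{equation*}
\Ls(\bw_{t+1}) = \Ls(\bw_t) - \eta_t \langle \nabla \Ls(\bw_t), \Delta_t\rangle + \tfrac{\eta_t^2}{2}\, \Delta_t^\top \nabla^2 \Ls(\bar{\bw}_t)\, \Delta_t.
\end{equation*}
Since $\nabla^2 \Ls(\bw) = \frac{1}{N}\sum_{i\in[N]} \ell''(\bw^\top \bx_i)\, \bx_i \bx_i^\top$, the Hessian term becomes $\frac{1}{N}\sum_{i\in[N]} \ell''(\bar{\bw}_t^\top \bx_i) (\Delta_t^\top \bx_i)^2$. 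Here $\|\Delta_t\|_\infty \leq 1$ gives $|\Delta_t^\top \bx_i| \leq \|\bx_i\|_1 \leq D$, so $(\Delta_t^\top \bx_i)^2 \leq D^2$.

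\textbf{Step 2: Control $\ell''$ via $|\ell'|$ and transfer from $\bar{\bw}_t$ to $\bw_t$.} For both loss functions a direct check gives $\ell''(z) \leq |\ell'(z)|$ (equality for $\ell_{\exp}$; for $\ell_{\log}$ we have $\ell''(z)/|\ell'(z)| = e^z/(1+e^z) \leq 1$). Next, I will show that $|\ell'(z-\delta)| \leq e^{|\delta|} |\ell'(z)|$. For $\ell_{\exp}$ this is immediate; for $\ell_{\log}$ it follows from $(1+e^z)/(1+e^{z-\delta}) \leq e^{|\delta|}$, which is a one-line calculation splitting on the sign of $\delta$. Applying this with $z=\bw_t^\top \bx_i$ and $\delta = (\bw_t - \bar{\bw}_t)^\top \bx_i$, and noting $|\delta| \leq \eta_t D \leq \eta_0 D$ (by monotonicity of $\{\eta_t\}$ from \cref{ass:lr}), yields $\ell''(\bar{\bw}_t^\top \bx_i) \leq e^{\eta_0 D} |\ell'(\bw_t^\top \bx_i)|$.

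\textbf{Step 3: Assemble.} Combining Steps 1--2,
\begin{equation*}
\Delta_t^\top \nabla^2 \Ls(\bar{\bw}_t)\, \Delta_t \;\leq\; \frac{D^2 e^{\eta_0 D}}{N} \sum_{i\in[N]} |\ell'(\bw_t^\top \bx_i)| \;=\; D^2 e^{\eta_0 D}\, \gG(\bw_t),
\end{equation*}
so the Taylor bound becomes exactly the claimed inequality with $C_H = \tfrac{1}{2} D^2 e^{\eta_0 D}$. There is no substantive obstacle here; the only care point is Step 2, where the transfer between $\bar{\bw}_t$ and $\bw_t$ must be done uniformly in $i$, but the universal ratio bound $|\ell'(z-\delta)|/|\ell'(z)| \leq e^{|\delta|}$ makes this routine.
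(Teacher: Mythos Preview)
Your proof is correct and essentially identical to the paper's: both do a second-order Taylor expansion, bound $(\Delta_t^\top\bx_i)^2\leq D^2$, use $\ell''\leq |\ell'|$, and transfer from the intermediate point back to $\bw_t$ via the ratio bound $|\ell'(z-\delta)|\leq e^{|\delta|}|\ell'(z)|$. The only cosmetic difference is that the paper packages your Step~2 as two properties of the proxy function $\gG$ (\cref{lemma:proxy}(c) and (d)) rather than inlining the per-$i$ inequalities.
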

\begin{proof}
    By Taylor's theorem,
    \begin{align*}
        \Ls(\bw_{t+1}) = \Ls(\bw_t - \eta_t\Delta_t) = \Ls(\bw_t) - \eta_t \langle \nabla \Ls (\bw_t), \Delta_t\rangle + \frac{1}{2}\eta_t^2 \Delta_t^\top \nabla^2 \Ls (\bw_t - \zeta \eta_t \Delta_t) \Delta_t, 
    \end{align*}
    for some $\zeta\in(0,1)$. Note that for any $\bw\in \mathbb R^d$,
    \[
    \Delta_t^\top \nabla^2 \Ls (\bw) \Delta_t = \frac1N \sum_{i\in [N]} \ell''(\bw^\top \bx_i) (\Delta_t^\top \bx_i)^2 \leq \frac{1}{N} \sum_{i\in [N]} \ell''(\bw^\top \bx_i) \|\Delta_t\|_\infty^2 \|\bx_i\|_1^2 \leq D^2 \gG (\bw),
    \]
    where we used $\gG (\bw)\geq \frac{1}{N}\sum_{i\in [N]} \ell''(\bw^\top \bx_i)$ from \cref{lemma:proxy}. Then,
    \begin{align*}
        \Ls(\bw_{t+1}) &\leq \Ls(\bw_t) - \eta_t \langle \nabla \Ls (\bw_t), \Delta_t\rangle + \frac{1}{2}\eta_t^2 \Delta_t^\top \nabla^2 \Ls (\bw_t - \zeta \eta_t \Delta_t) \Delta_t \\
        &\leq \Ls(\bw_t) - \eta_t \langle \nabla \Ls (\bw_t), \Delta_t\rangle  + \frac{1}{2} \eta_t^2 D^2 \gG(\bw_t - \zeta \eta_t \Delta_t) \\
        &\leq \Ls(\bw_t) - \eta_t \langle \nabla \Ls (\bw_t), \Delta_t\rangle + \frac{1}{2} \eta_t^2 D^2 e^{\eta_tD} \gG(\bw),
    \end{align*}
    where we used $\gG(\bw')\leq e^{D\|\bw'-\bw\|_\infty} \gG(\bw)$ for all $\bw,\bw'$ from \cref{lemma:proxy}. Finally, choosing $C_H := \frac{1}{2}D^2e^{\eta_0D}$, we obtain the desired inequality.
\end{proof}

\begin{lemma}[EMA misalignment]\label{lemma:ema_misalign}
    We denote $\be_t:=\bm_t - \nabla L(\bw_t)$. Suppose that $\beta \in (\frac{N-b}{N}, 1)$. Then, there exists $t_0\in \mathbb N$ such that for all $t\geq t_0$,
    \begin{align*}
        \|\be_t\|_1 = \|\bm_t - \nabla \Ls(\bw_t)\|_1 \leq \left[ (1-\beta)D \tfrac Nb (\tfrac Nb -1) + C_1\eta_t + C_2 \beta^t \right] \gG(\bw_t)
    \end{align*}
    where $C_1, C_2 > 0$ are constants determined by $\beta$, $N$, $b$, and $D$.
\end{lemma}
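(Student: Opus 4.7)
The plan is to decompose the EMA mismatch $\bm_t - \nabla \Ls(\bw_t)$ into three controlled sources of error: (i) a truncation error from initializing $\bm_{-1}=\mathbf{0}$, which will yield the $C_2\beta^t$ term; (ii) a weight-drift error from evaluating each past mini-batch gradient $\bg_{t-\tau}$ at $\bw_{t-\tau}$ rather than at $\bw_t$, which will yield the $C_1\eta_t$ term; and (iii) a within-epoch-window error from the non-uniformity of the geometric weights $\beta^\tau$ across consecutive mini-batches, which produces the leading $(1-\beta)D\tfrac{N}{b}(\tfrac{N}{b}-1)$ coefficient. I will work with $K \triangleq N/b$ throughout.

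The key combinatorial observation that drives the argument is that, for \texttt{Inc-Signum}, any $K$ consecutive mini-batches partition $[N]$ exactly once, so that
\begin{align*}
\sum_{k=0}^{K-1}\bg_{s-k}(\bw) \;=\; K\,\nabla\Ls(\bw) \qquad \forall\,s \geq K-1,\; \bw\in\sR^d.
\end{align*}
Unrolling the EMA gives $\bm_t = (1-\beta)\sum_{\tau=0}^t \beta^\tau\,\bg_{t-\tau}(\bw_{t-\tau})$. Comparing with $\nabla\Ls(\bw_t) = (1-\beta^{t+1})\nabla\Ls(\bw_t) + \beta^{t+1}\nabla\Ls(\bw_t)$ and bounding $\|\nabla\Ls(\bw_t)\|_1 \leq D\gG(\bw_t)$ handles (i).

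For (ii), I will split $\bg_{t-\tau}(\bw_{t-\tau}) = \bg_{t-\tau}(\bw_t) + [\bg_{t-\tau}(\bw_{t-\tau}) - \bg_{t-\tau}(\bw_t)]$. Using the exponential-tail bound $|\ell'(z)-\ell'(z')|\leq(e^{|z-z'|}-1)|\ell'(z')|$ from \cref{lemma:losses_prop} together with $\|\bw_t-\bw_{t-\tau}\|_\infty\leq\sum_{\tau'=1}^\tau\eta_{t-\tau'}$ from the signum update, the drift contribution is bounded by
\begin{align*}
(1-\beta)\,D\,\gG(\bw_t)\sum_{\tau=0}^t \beta^\tau\bigl(e^{D\sum_{\tau'=1}^\tau\eta_{t-\tau'}}-1\bigr),
\end{align*}
which \cref{ass:lr}(b) collapses to $C_1\eta_t\gG(\bw_t)$ for all $t$ beyond some $t_0$. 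For (iii), evaluated at the common point $\bw_t$, I group the infinite-horizon EMA into blocks of $K$ indices and factor:
\begin{align*}
(1-\beta)\sum_{\tau=0}^\infty \beta^\tau\,\bg_{t-\tau}(\bw_t)
\;=\; \sum_{j=0}^\infty (1-\beta)\,\beta^{jK}\sum_{k=0}^{K-1}\beta^k\,\bg_{t-jK-k}(\bw_t).
\end{align*}
Writing $\beta^k = 1+(\beta^k-1)$, the $1$-part collapses under the window identity to $\tfrac{(1-\beta)K}{1-\beta^K}\nabla\Ls(\bw_t)$; the hypothesis $\beta>(N-b)/N$, i.e.\ $K(1-\beta)<1$, is used exactly here to guarantee $\tfrac{(1-\beta)K}{1-\beta^K} = 1 + O(K(1-\beta))$ via Bernoulli-type estimates, so that the deviation from $\nabla\Ls(\bw_t)$ is at worst $O((1-\beta))$. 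The residual $(\beta^k-1)$ contribution is then controlled using $|\beta^k-1|\leq k(1-\beta)\leq(K-1)(1-\beta)$ combined with $\sum_{k=0}^{K-1}\|\bg_{t-jK-k}(\bw_t)\|_1 \leq DK\,\gG(\bw_t)$, which follows from $\|\bx_i\|_1\leq D$ and the partition property. Summing $\sum_j\beta^{jK}=\tfrac{1}{1-\beta^K}$ and multiplying through by $(1-\beta)$ produces the headline $(1-\beta)DK(K-1)\gG(\bw_t)$ bound.

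The main obstacle, I expect, is the clean book-keeping of Step (ii): the raw drift bound produces an $e^{D\sum \eta}-1$ factor that must be funneled into a single $O(\eta_t)$ term, and this requires choosing $t_0$ depending on $\beta,N,b,D$ through \cref{ass:lr}(b). A secondary subtlety is handling the partial window adjacent to $\tau=t$ when passing from the finite EMA to the infinite-horizon decomposition: any mismatch there must be absorbed into the $C_2\beta^t$ truncation term rather than inflating the leading coefficient, which I plan to do by separating the first $\lfloor t/K\rfloor$ full windows from a tail of length $<K$ and bounding the tail by $\beta^{t-K}\cdot D\gG(\bw_t)$ via the same partition argument.
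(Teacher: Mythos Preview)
Your three–term split (truncation, weight drift, batch nonuniformity) is exactly the paper's decomposition into its terms (C), (A), (B), and your handling of (i) and (ii) matches the paper's (your drift prefactor is off by a harmless factor of $N/b$ that $C_1$ absorbs). The only packaging difference is in (iii): you pass to the infinite-horizon EMA, block it in windows of length $K$, and write $\beta^k = 1 + (\beta^k - 1)$; the paper instead groups the finite sum by residue class mod $K$ and bounds $(1-\beta)\max_j\bigl|\sum_{\tau\equiv j}\beta^\tau - \tfrac1K\sum_\tau\beta^\tau\bigr|$ directly via a Bernoulli estimate.

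Two slips in your (iii). First, the $1+(\beta^k-1)$ split creates \emph{two} error pieces of the same order: the ``1-part'' deviation satisfies $\bigl|\tfrac{K(1-\beta)}{1-\beta^K}-1\bigr|\le \tfrac12(1-\beta)K(K-1)$, not $O(K(1-\beta))$ as you state, so it is not negligible next to the residual's $(1-\beta)DK(K-1)\gG(\bw_t)$; as written you overshoot the lemma's stated leading constant by roughly $3/2$. The easy fix is to skip the split and bound $\sum_{k}\bigl|\tfrac{(1-\beta)\beta^k}{1-\beta^K}-\tfrac1K\bigr|\,\|\bg_{t-k}(\bw_t)\|_1$ directly (which in fact gives a constant no larger than the paper's). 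Second, the hypothesis $K(1-\beta)<1$ is not used where you place it: your residual bound only needs $\tfrac{1-\beta}{1-\beta^K}\le 1$, which holds for every $\beta\in(0,1)$. In the paper the hypothesis enters through the Bernoulli bound $1-\beta^K\ge \tfrac{K}{2}(1-\beta)$, which converts $\tfrac{1}{1-\beta^K}$ into $\tfrac{2}{K(1-\beta)}$ and is what produces the exact $\tfrac{K-1}{K}$ factor in the residue-class deviation; with your packaging the hypothesis is essentially unnecessary, which is fine but worth flagging.
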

\begin{proof}
    The momentum $\bm_t$ can be written as: \[
    \bm_t = (1-\beta) \sum_{\tau=0}^t \beta^\tau \bg_{t-\tau} = (1-\beta) \sum_{\tau=0}^t \beta^\tau \nabla \Ls_{\gB_{t-\tau}} (\bw_{t-\tau}),
    \]
    and the full-batch gradient $\nabla \Ls (\bw_t)$ can be written as: \[
    \nabla \Ls (\bw_t) = \beta^{t+1} \nabla L(\bw_t) + (1-\beta) \sum_{\tau=0}^t \beta^\tau \nabla \Ls (\bw_{t}),
    \]
    Consequently, the misalignment $\be_t = \bm_t - \nabla \Ls(\bw_t)$ can be decomposed as:
    \begin{align*}
    \be_t \,=&\,  (1-\beta) \sum_{\tau=0}^t \beta^\tau (\nabla \Ls_{\gB_{t-\tau}} (\bw_{t-\tau}) - \nabla \Ls_{\gB_{t-\tau}} (\bw_t)) \\
    &+ (1-\beta) \sum_{\tau=0}^t \beta^\tau (\nabla \Ls_{\gB_{t-\tau}} (\bw_t) - \nabla \Ls (\bw_t)) \\
    &- \beta^{t+1} \nabla \Ls (\bw_t),
    \end{align*}
    and thus
    \begin{align*}
    \|\be_t\|_1 \,=&\,  \underbrace{\left\| (1-\beta) \sum_{\tau=0}^t \beta^\tau (\nabla \Ls_{\gB_{t-\tau}} (\bw_{t-\tau}) - \nabla \Ls_{\gB_{t-\tau}} (\bw_t)) \right\|_1}_{\triangleq\textrm{ (A)}} \\
    &+ \underbrace{\left\| (1-\beta) \sum_{\tau=0}^t \beta^\tau (\nabla \Ls_{\gB_{t-\tau}} (\bw_t) - \nabla \Ls (\bw_t)) \right\|_1}_{\triangleq\textrm{ (B)}}\\
    & + \underbrace{\left\| \beta^{t+1} \nabla \Ls (\bw_t) \right\|_1}_{\triangleq\textrm{ (C)}}.
    \end{align*}
    We upper bound each term separately.

    First, the term (A) represents the misalignment by the weight movement, which can be bounded as:
    \begin{align*}
        \textrm{(A)} &= \left\| (1-\beta) \sum_{\tau=0}^t \beta^\tau (\nabla \Ls_{\gB_{t-\tau}} (\bw_{t-\tau}) - \nabla \Ls_{\gB_{t-\tau}} (\bw_t)) \right\|_1 \\
        &\leq (1-\beta) \sum_{\tau=0}^t \beta^\tau \|\nabla \Ls_{\gB_{t-\tau}}(\bw_{t-\tau}) - \nabla \Ls_{\gB_{t-\tau}}(\bw_{t})\|_1 \\
        &= (1-\beta) \sum_{\tau=0}^t \beta^\tau \left\| \frac{1}{b} \sum_{i\in \gB_{t-\tau}}(\ell'(\bw_{t-\tau}^\top \bx_i) - \ell'(\bw_t^\top \bx_i)) \bx_i \right \|_1 \\
        &\leq (1-\beta) \sum_{\tau=0}^t \beta^\tau \frac Db  \sum_{i\in \gB_{t-\tau}} |\ell'(\bw_{t-\tau}^\top \bx_i) - \ell'(\bw_t^\top \bx_i)| \\
        &\leq \frac{(1-\beta) D}{b} \sum_{\tau=0}^t \beta^\tau \sum_{i\in \gB_{t-\tau}} |\ell'(\bw_t^\top \bx_i)| \left| \frac{\ell'(\bw_{t-\tau}^\top \bx_i)}{\ell'(\bw_t^\top \bx_i)} -1 \right| \\
        &\leq \frac{(1-\beta) D N}{b} \gG(\bw_t) \sum_{\tau=0}^t \beta^\tau \sum_{i\in \gB_{t-\tau}} \left| \frac{\ell'(\bw_{t-\tau}^\top \bx_i)}{\ell'(\bw_t^\top \bx_i)} -1 \right|,
    \end{align*}
    where we used $N \gG(\bw) = - \sum_{i\in [N]}\ell'(\bw^\top\bx_i) = \sum_{i\in [N]}|\ell'(\bw^\top\bx_i)| \geq \max_{i\in [N]} |\ell'(\bw^\top \bx_i)|$ in the last inequality.
    For all $i\in [N]$,
    \begin{align*}
        \left| \frac{\ell'(\bw_{t-\tau}^\top \bx_i)}{\ell'(\bw_t^\top \bx_i)} -1 \right| \leq e^{|(\bw_t - \bw_{t-\tau})^\top \bx_i|}-1 \leq e^{\|\bw_t-\bw_{t-\tau}\|_\infty \|\bx_i\|_1}-1 \leq e^{D\sum_{\tau'=1}^{\tau}\eta_{t-\tau'}}-1.
    \end{align*}
    By \cref{ass:lr}, there exists $t_0\in \mathbb N$ and constant $c_1>0$ determined by $\beta$ and $D$ such that $\sum_{\tau=0}^t\beta^\tau (e^{D\sum_{\tau'=1}^\tau \eta_{t-\tau'}}-1) \leq c_1 \eta_t$ for all $t\geq t_0$. Then, for all $t\geq t_0$, we have
    \begin{align*}
        \textrm{(A)} &\leq \frac{(1-\beta) D N}{b} \gG(\bw_t) \sum_{\tau=0}^t \beta^\tau b ( e^{D\sum_{\tau'=1}^{\tau}\eta_{t-\tau'}}-1) \\
        &= (1-\beta) D N  \gG(\bw_t) \sum_{\tau=0}^t \beta^\tau e^{D\sum_{\tau'=1}^{\tau}\eta_{t-\tau'}}-1 \\
        &\leq (1-\beta) D N c_1 \eta_t \gG(\bw_t).    
    \end{align*}
    
    Second, the term (B) represents the misalignment by mini-batch updates. Denote the number of mini-batches in a single epoch as $m:=\tfrac Nb$. Since $\gB_t = \{(t\cdot b + i)\pmod N\}_{i=0}^{b-1}$, note that $\gB_i = \gB_j$ if and only if $i\equiv j \pmod m$. Now, the term (B) can be upper bounded as
    \begin{align*}
        \textrm{(B)} &= \left\| (1-\beta) \sum_{\tau=0}^t \beta^\tau (\nabla \Ls_{\gB_{t-\tau}} (\bw_t) - \nabla \Ls (\bw_t)) \right\|_1 \\
        &= \left\| (1-\beta) \sum_{\tau=0}^t \beta^\tau \left[\nabla \Ls_{\gB_{t-\tau}} (\bw_t) - \frac 1m \sum_{j=1}^m\nabla \Ls_{\gB_{j}} (\bw_t) \right] \right\|_1 \\
        &= \left\| (1-\beta) \sum_{j=1}^m \left( \sum_{\tau\leq t:\, (t-\tau)\equiv j \pmod m}\beta^\tau - \frac 1m \sum_{\tau=0}^t \beta^\tau\right)\nabla \Ls_{\gB_j} (\bw_t) \right\|_1 \\
        &\leq (1-\beta) m  \cdot \max_{j\in [m]} \left| \sum_{\tau\leq t:\, (t-\tau)\equiv j \pmod m}\beta^\tau - \frac 1m \sum_{\tau=0}^t \beta^\tau \right| \cdot \max_{j\in [m]} \|\nabla \Ls_{\gB_j}(\bw_t)\|_1 \\ 
        &\leq (1-\beta) D m^2 \gG(\bw_t) \cdot \max_{j\in [m]} \left| \sum_{\tau\leq t:\, (t-\tau)\equiv j \pmod m}\beta^\tau - \frac 1m \sum_{\tau=0}^t \beta^\tau \right|,
    \end{align*}
    where the last inequality holds since
    \begin{align*}
    \max_{j\in [m]} \|\nabla \Ls_{\gB_j}(\bw)\|_1 = \frac 1b \max_{j\in [m]} \left\| \sum_{i\in \gB_j} \ell'(\bw^\top \bx_i) \bx_i \right\|_1 \leq \frac{1}{b} \sum_{i=1}^N |\ell'(\bw^\top \bx_i)| \cdot D = \frac{DN}{b} \gG(\bw) = Dm \gG(\bw),
    \end{align*}
    for all $\bw\in \mathbb R^d$. 
    
    It remains to upper bound $\max_{j\in [m]}\left| \sum_{\tau\leq t:\, (t-\tau)\equiv j \pmod m}\beta^\tau - \frac 1m \sum_{\tau=0}^t \beta^\tau \right|$. Fix arbitrary $j\in [m]$. Note that
    \begin{align*}
        (1-\beta) & \left(\sum_{\tau\leq t:\, (t-\tau)\equiv j \pmod m}\beta^\tau - \frac 1m \sum_{\tau=0}^t \beta^\tau \right) \\
        &\leq  (1-\beta)\sum_{k=0}^{\lfloor \tfrac tm\rfloor} \beta^{mk} - (1-\beta) \frac 1m \sum_{\tau=0}^t \beta^\tau \\
        &= (1-\beta)\sum_{k=0}^{\lfloor \tfrac tm\rfloor} \beta^{mk} - (1-\beta) \sum_{k=0}^{\lfloor \tfrac tm\rfloor - 1}  \left(\frac 1m \beta^{mk}\sum_{\tau=0}^{m-1}\beta^\tau\right) - (1-\beta)\frac 1m \sum_{\tau=m(\lfloor \tfrac tm\rfloor-1)+1}^t \beta^\tau\\
        &\leq (1-\beta) \beta^{m\lfloor \tfrac tm\rfloor} + \sum_{k=0}^{\lfloor \tfrac tm\rfloor-1} \beta^{mk} \left[(1-\beta) - \frac{1}{m} (1-\beta^m) \right] \\
        &\overset{(*)}{\leq} (1-\beta) \beta^{t-m} + \sum_{k=0}^{\lfloor \tfrac tm\rfloor-1} \beta^{mk} \frac{(m-1)(1-\beta)^2}{2}  \\
        &\leq (1-\beta) \beta^{t-m} + \frac{1}{1-\beta^m} \cdot \frac{(m-1)(1-\beta)^2}{2} \\
        &\overset{(**)}{\leq} (1-\beta) \beta^{t-m} +  \frac{2}{m(1-\beta)} \cdot \frac{(m-1)(1-\beta)^2}{2} \\
        &= (1-\beta) \beta^{t-m} + \frac{m-1}{m} (1-\beta),
    \end{align*}
    where the inequalities $(*)$ and $(**)$ hold since $(1-\epsilon)^m \leq 1-m\epsilon+\tfrac{m(m-1)}{2}\epsilon^2 \leq 1- \tfrac m2\epsilon$ for all $0\leq \epsilon\leq \tfrac{1}{m-1}$ and choose $\epsilon = 1-\beta$.
    
    Similarly, we have
    \begin{align*}
        (1-\beta) & \left(\frac 1m \sum_{\tau=0}^t \beta^\tau - \sum_{\tau\leq t:\, (t-\tau)\equiv j \pmod m}\beta^\tau \right) \\
        &\leq  (1-\beta) \frac 1m \sum_{\tau=0}^t \beta^\tau - (1-\beta) \sum_{k=0}^{\lfloor \tfrac{t+1}{m} \rfloor -1} \beta^{m(k+1)-1} \\
        &= (1-\beta) \sum_{k=0}^{\lfloor \tfrac{t+1}{m} \rfloor -1} \left( \frac 1m \beta^{mk} \sum_{\tau=0}^{m-1} \beta^\tau  \right) + (1-\beta) \frac 1m \sum_{\tau = m\lfloor \tfrac{t+1}{m} \rfloor}^t \beta^\tau - (1-\beta) \sum_{k=0}^{\lfloor \tfrac{t+1}{m} \rfloor -1} \beta^{m(k+1)-1} \\
        & \leq  (1-\beta) \frac 1m \sum_{\tau = t-m+2}^t \beta^\tau +  \sum_{k=0}^{\lfloor \tfrac{t+1}{m} \rfloor -1} \beta^{mk} \left[\frac 1m (1-\beta^m) - (1-\beta) \beta^{m-1} \right] \\
        &=  \frac{1}{m}\beta^{t-m+2}(1-\beta^{m-1}) +  \sum_{k=0}^{\lfloor \tfrac{t+1}{m} \rfloor -1} \beta^{mk} \left[\frac 1m (1-\beta^m) - (1-\beta) \beta^{m-1} \right] \\
        &\leq \frac{1}{m}\beta^{t-m+2}(1-\beta^{m-1}) + \sum_{k=0}^{\lfloor \tfrac{t+1}{m} \rfloor -1} \beta^{mk} \frac{(m-1)(1-\beta)^2}{2} \\
        &\leq \frac{1}{m}\beta^{t-m+2}(1-\beta^{m-1}) + \frac{1}{1-\beta^m} \cdot \frac{(m-1)(1-\beta)^2}{2}  \\
        &\leq (1-\beta)\beta^{t-m} + \frac{m-1}{m} (1-\beta) .
    \end{align*}
    Combining the bounds, we get
    \[
    \textrm{(B)} \leq (1-\beta) Dm (\beta^{t-m}m+m-1) \gG(\bw_t).
    \]
    Finally,
    \[
    \mathrm{(C)} = \|\beta^{t+1} \nabla \Ls (\bw_t)\|_1 \leq \beta^{t+1} D \gG(\bw_t).
    \]
    Therefore, we conclude
    \[
    \|\be\|_1 \leq  \left[ (1-\beta)Dm(m-1) + C_1\eta_t + C_2 \beta^t \right] \gG(\bw_t)
    \]
    where $C_1, C_2>0$ are constants determined by $\beta$, $m$, and $D$.
\end{proof}

\begin{corollary}\label{cor:descent_inequality}
     Suppose that $\beta \in (\frac{N-b}{N}, 1)$. Then, there exists $t_0\in \mathbb N$ such that for all $t\geq t_0$, \textnormal{\texttt{Inc-Signum}} iterates $\{\bw_t\}$ satisfy
    \begin{align*}
        \Ls(\bw_{t+1}) \leq \Ls(\bw_t) - \eta_t (\gamma_\infty - 2(1-\beta)D \tfrac Nb (\tfrac Nb -1) - (2C_1+C_H)\eta_t - 2C_2\beta^t) \gG(\bw_t),
    \end{align*}
    where $C_H,C_1, C_2>0$ are constants in \cref{lemma:descent_inequality,lemma:ema_misalign}.
\end{corollary}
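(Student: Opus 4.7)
The goal is to merge the per-step descent inequality from \cref{lemma:descent_inequality} with the EMA misalignment bound from \cref{lemma:ema_misalign}. Since \cref{lemma:descent_inequality} already yields
\[
\Ls(\bw_{t+1}) \leq \Ls(\bw_t) - \eta_t \langle \nabla \Ls(\bw_t), \Delta_t\rangle + C_H \eta_t^2 \gG(\bw_t),
\]
it suffices to prove the lower bound
\[
\langle \nabla \Ls(\bw_t), \Delta_t\rangle \;\geq\; \Bigl( \gamma_\infty - 2(1-\beta)D\tfrac Nb (\tfrac Nb -1) - 2C_1 \eta_t - 2C_2 \beta^t \Bigr) \gG(\bw_t)
\]
for all sufficiently large $t$, after which one absorbs the $C_H\eta_t^2\gG(\bw_t)$ term into the $(2C_1+C_H)\eta_t$ coefficient.

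First I would decompose $\bm_t = \nabla \Ls(\bw_t) + \be_t$ using the misalignment vector $\be_t$ defined in \cref{lemma:ema_misalign}. Because $\Delta_t = \mathrm{sign}(\bm_t)$ satisfies $\langle \bm_t, \Delta_t\rangle = \|\bm_t\|_1$ and $\|\Delta_t\|_\infty \leq 1$, H\"older together with the reverse triangle inequality gives
\[
\langle \nabla \Ls(\bw_t), \Delta_t\rangle = \|\bm_t\|_1 - \langle \be_t, \Delta_t\rangle \;\geq\; \|\nabla \Ls(\bw_t)\|_1 - 2\|\be_t\|_1.
\]
Next I would invoke the dual characterization of the max $\ell_\infty$-margin: picking any $\bw^\star$ with $\|\bw^\star\|_\infty \leq 1$ that attains $\gamma_\infty$, one has $\langle -\nabla \Ls(\bw_t), \bw^\star\rangle = \tfrac{1}{N}\sum_{i} |\ell'(\bw_t^\top \bx_i)| (\bw^{\star\top} \bx_i) \geq \gamma_\infty \gG(\bw_t)$, and the variational formula $\|a\|_1 = \max_{\|b\|_\infty \leq 1} \langle a, b\rangle$ then yields $\|\nabla \Ls(\bw_t)\|_1 \geq \gamma_\infty \gG(\bw_t)$. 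Substituting the bound on $\|\be_t\|_1$ from \cref{lemma:ema_misalign} produces the displayed inequality, and plugging back into the descent inequality closes the argument.

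The argument is essentially a bookkeeping combination of the two preceding lemmas, so the only real conceptual ingredient is recognising that bounding $\langle \nabla \Ls(\bw_t), \Delta_t\rangle$ reduces, via the sign-inner-product identity, to controlling $\|\nabla\Ls(\bw_t)\|_1$ from below and $\|\be_t\|_1$ from above. The main obstacle hides in the preceding \cref{lemma:ema_misalign}: the constraint $\beta > (N-b)/N$ is precisely what ensures the misalignment from unequal mini-batch frequencies in the EMA stays on the order of $(1-\beta)$ rather than a constant, so once that lemma is available the corollary follows mechanically. A minor technicality is that $\mathrm{sign}(\cdot)$ is not uniquely defined at zero coordinates of $\bm_t$, but any measurable selection with $\|\Delta_t\|_\infty \leq 1$ works since the inequalities above only use this bound.
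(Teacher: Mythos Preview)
Your proposal is correct and follows essentially the same approach as the paper: decompose $\langle \nabla\Ls(\bw_t),\Delta_t\rangle$ via $\bm_t=\nabla\Ls(\bw_t)+\be_t$, use the sign identity $\langle\bm_t,\Delta_t\rangle=\|\bm_t\|_1$ together with H\"older and the reverse triangle inequality to obtain $\|\nabla\Ls(\bw_t)\|_1-2\|\be_t\|_1$, then apply the margin bound $\|\nabla\Ls(\bw_t)\|_1\geq\gamma_\infty\gG(\bw_t)$ (which the paper cites from \cref{lemma:proxy}(a) and you rederive explicitly) and plug in \cref{lemma:ema_misalign}. Your remark on the sign convention at zero coordinates is a nice addition but not needed for the paper's argument.
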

\begin{proof}
    By \cref{lemma:proxy}, we get
    \begin{align*}
        \langle \nabla \Ls (\bw_t), \Delta_t\rangle &= \langle \bm_t, \Delta_t\rangle - \langle \be_t, \Delta_t\rangle \\
        &\geq \|\bm_t\|_1 - \|\be_t\|_1 \|\Delta_t\|_\infty \\
        &\geq (\|\nabla \Ls(\bw_t)\|_1 -\|\be_t\|_1) - \|\be_t\|_1 \\
        &= \|\nabla \Ls(\bw_t)\|_1 - 2\|\be_t\|_1 \\
        &\geq \gamma_\infty \gG(\bw_t) - 2\|\be_t\|_1.
    \end{align*}
    Now using \cref{lemma:descent_inequality} and \cref{lemma:ema_misalign}, we conclude
    \begin{align*}
        \Ls(\bw_{t+1}) &\leq \Ls(\bw_t) - \eta_t \langle \nabla \Ls(\bw_t), \Delta_t\rangle + C_H \eta_t^2 \gG(\bw_t) \\
        &\leq  \Ls(\bw_t) -\eta_t (\gamma_\infty \gG(\bw_t) - 2\|\be_t\|_1) +C_H\eta_t^2 \gG(\bw_t)\\
        &\leq \Ls(\bw_t) - \eta_t  (\gamma_\infty - 2(1-\beta)D \tfrac Nb (\tfrac Nb -1) - (2C_1+C_H)\eta_t - 2C_2\beta^t) \gG(\bw_t),
    \end{align*}
    which ends the proof.
\end{proof}

\begin{proposition}[Loss convergence]\label{prop:loss_converge}
Suppose that $\beta\in(1-\tfrac{\gamma_\infty}{4C_0}, 1)$ if $b<N$ and $\beta\in(0, 1)$ if $b=N$, where $C_0:=D \tfrac Nb (\tfrac Nb -1)$. Then, $\Ls(\bw_t)\to 0$ as $t\to \infty$.
\end{proposition}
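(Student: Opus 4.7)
My plan is to apply the descent inequality of \cref{cor:descent_inequality}, whose right-hand side has the prefactor $\gamma_\infty - 2(1-\beta)C_0 - (2C_1 + C_H)\eta_t - 2C_2\beta^t$ multiplying $-\eta_t\,\gG(\bw_t)$, and show that under the hypothesis this prefactor stays uniformly positive. The condition on $\beta$ is calibrated precisely for this: when $b = N$ the constant $C_0 = D\cdot\tfrac{N}{b}(\tfrac{N}{b}-1)$ vanishes outright, and when $b < N$ the condition $\beta > 1 - \gamma_\infty/(4C_0)$ gives $2(1-\beta)C_0 < \gamma_\infty/2$. In both cases, $\gamma_\infty - 2(1-\beta)C_0 \geq \gamma_\infty/2$. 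Since $\eta_t \to 0$ and $\beta^t \to 0$ by \cref{ass:lr}, I would then pick $t_1 \geq t_0$ large enough that $(2C_1+C_H)\eta_t + 2C_2\beta^t \leq \gamma_\infty/4$ for all $t \geq t_1$, which reduces the descent inequality to
\begin{align*}
\Ls(\bw_{t+1}) \leq \Ls(\bw_t) - \tfrac{\gamma_\infty}{4}\,\eta_t\,\gG(\bw_t), \qquad \forall t \geq t_1.
\end{align*}

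Because $\gG \geq 0$, the resulting recurrence shows that $\{\Ls(\bw_t)\}_{t \geq t_1}$ is non-increasing and bounded below by zero, so it converges to some $\Ls^* \geq 0$. Telescoping over $t \geq t_1$ yields $\sum_{t \geq t_1} \eta_t\,\gG(\bw_t) \leq \tfrac{4}{\gamma_\infty}\,\Ls(\bw_{t_1}) < \infty$. Since $\sum_t \eta_t = \infty$, I would conclude $\Ls^* = 0$ by contradiction: assuming $\Ls^* > 0$, I would establish a uniform positive lower bound $\inf_t \gG(\bw_t) > 0$, which clashes with the summability above.

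This last step is the only nontrivial one, and it splits by the loss. For $\ell_{\text{exp}}$ one has $\gG(\bw) \equiv \Ls(\bw)$, so $\gG(\bw_t) \geq \Ls^* > 0$ immediately. For $\ell_{\text{log}}$, $\Ls(\bw_t) \geq \Ls^*$ forces (by averaging) some index $i$ with $\log(1+e^{-\bw_t^\top \bx_i}) \geq \Ls^*$, hence $\bw_t^\top \bx_i \leq M^* := -\log(e^{\Ls^*}-1)$, hence $-\ell_{\text{log}}'(\bw_t^\top \bx_i) = (1+e^{\bw_t^\top\bx_i})^{-1} \geq (1+e^{M^*})^{-1}$, giving $\gG(\bw_t) \geq \tfrac{1}{N(1+e^{M^*})} > 0$. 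The heavy lifting of the proposition is really absorbed into \cref{cor:descent_inequality} (and ultimately into the EMA-misalignment bound of \cref{lemma:ema_misalign}); the main obstacle here is just identifying that the hypothesis $\beta > 1 - \gamma_\infty/(4C_0)$ leaves a constant gap of $\gamma_\infty/2$ in the leading coefficient, which is then robust to the vanishing transient terms $\eta_t$ and $\beta^t$.
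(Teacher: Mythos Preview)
Your proposal is correct and follows essentially the same route as the paper: invoke \cref{cor:descent_inequality}, use the hypothesis on $\beta$ to leave a constant gap $\gamma_\infty/2$ in the leading coefficient, absorb the transient terms $\eta_t,\beta^t$ into a further $\gamma_\infty/4$, telescope to get $\sum_t \eta_t\gG(\bw_t)<\infty$, and conclude $\Ls(\bw_t)\to 0$. One small omission: \cref{cor:descent_inequality} requires $\beta\in(\tfrac{N-b}{N},1)$, which you should note follows automatically from the hypothesis together with $\gamma_\infty\le D$ (the paper's proof checks this in its first line). Your contradiction argument for the final step, splitting by loss to bound $\gG$ below, is actually more explicit than the paper's one-line assertion that $\sum_t\eta_t\gG(\bw_t)<\infty$ with $\sum_t\eta_t=\infty$ implies $\gG(\bw_t)\to 0$; the paper's statement is correct but relies implicitly on exactly the monotonicity-plus-lower-bound reasoning you spell out.
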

\begin{proof}
Note that $\beta\in (\tfrac{N-b}{N},1)$ since $\gamma_\infty = \max_{\|\bw\|_\infty \leq 1} \min_{i \in [N]} \bw^\top \bx_i \leq D$.
By \cref{cor:descent_inequality}, there exists $t_0\in \mathbb N$ such that for all $t\geq t_0$,
\[
\eta_t (\gamma_\infty - 2C_0(1-\beta) - (2C_1+C_H)\eta_t - 2C_2 \beta^t) \gG(\bw_t) \leq \Ls(\bw_t) - \Ls (\bw_{t+1}).
\]
Since $\eta_t, \beta^t\to 0$ as $t\to \infty$, there exists $t_1\geq t_0$ such that for all $t\geq t_1$, 
\[
(2C_1+C_H)\eta_t + 2C_2\beta^t < \frac{\gamma_\infty}{4}.
\]
Then,
\[
\frac{\gamma_\infty}{4}\sum_{t=t_1}^\infty \eta_t \gG(\bw_t) \leq \sum_{t=t_1}^\infty \eta_t (\gamma_\infty - 2C_0(1-\beta) - (2C_1+C_H)\eta_t - 2C_2 \beta^t) \gG(\bw_t) \leq \sum_{t=t_1}^\infty  \Ls(\bw_t) - \Ls (\bw_{t+1}) <\infty.
\]
Thus, $\sum_{t=t_0}^\infty \eta_t \gG(\bw_t) < \infty$ and since $\sum_{t=t_0}^\infty \eta_t = \infty$, this implies $\gG(\bw_t)\to 0$ and therefore $\Ls(\bw_t)\to 0$ as $t\to \infty$.
\end{proof}

\begin{proposition}[Unnormalized margin lower bound]\label{prop:unnormalized_margin}
    Suppose that $\beta\in(1-\tfrac{\gamma_\infty}{4C_0}, 1)$ if $b<N$ and $\beta\in(0, 1)$ if $b=N$, where $C_0:=D \tfrac Nb (\tfrac Nb -1)$. Then, there exists $t_s\in \mathbb N$ such that for all $t\geq t_s$,
    \begin{align*}
        \min_{i\in [N]}\bw^\top \bx_i \leq (\gamma_\infty - 2C_0(1-\beta)) \sum_{\tau=t_s}^{t-1} \eta_\tau \frac{\gG(\bw_\tau)}{\Ls(\bw_\tau)} - (2C_1+C_H)\sum_{\tau=t_s}^{t-1}\eta_\tau^2 -\frac{2C_2\eta_0}{1-\beta},
    \end{align*}
    where $C_0:=D \tfrac Nb (\tfrac Nb -1)$ and $C_H,C_1, C_2>0$ are constants in \cref{lemma:descent_inequality,lemma:ema_misalign}.
\end{proposition}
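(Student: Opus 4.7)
The plan is to convert the per-step descent inequality from \cref{cor:descent_inequality} into a telescoping bound on $\log \Ls(\bw_t)$, and then translate that bound into a lower bound on $\min_{i \in [N]} \bw_t^\top \bx_i$. (I read the displayed inequality as $\geq$ rather than $\leq$, matching the proposition's title of a \emph{lower} bound on the unnormalized margin, and take $\bw$ to mean $\bw_t$.)

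First I would divide the descent inequality through by $\Ls(\bw_t)>0$, giving
\begin{align*}
\frac{\Ls(\bw_{t+1})}{\Ls(\bw_t)} \;\leq\; 1 - \eta_t \bigopen{\gamma_\infty - 2C_0(1-\beta) - (2C_1+C_H)\eta_t - 2C_2\beta^t} \frac{\gG(\bw_t)}{\Ls(\bw_t)}.
\end{align*}
For $t$ past the threshold of \cref{cor:descent_inequality}, the bracket is close to $\gamma_\infty - 2C_0(1-\beta)>0$ and, since $\gG(\bw_t)/\Ls(\bw_t)\leq 1$ by \cref{lemma:proxy}, the right-hand side lies in $(0,1)$. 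Applying $\log(1+y)\leq y$ turns this into an additive inequality on $\log \Ls(\bw_{t+1})-\log\Ls(\bw_t)$; telescoping from $\tau=t_s$ to $t-1$, bounding the error term by $\gG/\Ls \leq 1$, and using $\sum_{\tau\geq t_s}\eta_\tau \beta^\tau \leq \eta_0/(1-\beta)$ yields
\begin{align*}
-\log \Ls(\bw_t) \;\geq\; (\gamma_\infty - 2C_0(1-\beta))\sum_{\tau=t_s}^{t-1} \eta_\tau \frac{\gG(\bw_\tau)}{\Ls(\bw_\tau)} - (2C_1+C_H)\sum_{\tau=t_s}^{t-1} \eta_\tau^2 - \frac{2C_2\eta_0}{1-\beta} - \log \Ls(\bw_{t_s}).
\end{align*}

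Next I would convert the loss bound into a margin bound. For the exponential loss, $\Ls(\bw) \geq \tfrac{1}{N}\exp(-\min_i \bw^\top \bx_i)$ gives $\min_i \bw^\top \bx_i \geq -\log\Ls(\bw) - \log N$; for the logistic loss, once the iterates have entered the regime $\min_i \bw^\top \bx_i \geq 0$, the inequality $\log(1+x)\geq x/(1+x)$ applied to $x=e^{-\bw^\top\bx_i}\in(0,1]$ gives $\ell_{\log}(z)\geq \tfrac{1}{2}e^{-z}$, so $\Ls_{\log}(\bw) \geq \tfrac{1}{2N}\exp(-\min_i \bw^\top\bx_i)$ and the same conclusion holds with an extra $-\log 2$. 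Combining with the telescoped bound leaves an additive constant of the form $-\log\Ls(\bw_{t_s}) - \log(cN)$ (with $c\in\{1,2\}$), which is non-negative once $\Ls(\bw_{t_s})\leq 1/(cN)$; by \cref{prop:loss_converge} this holds for all sufficiently large $t_s$, and dropping this non-negative term from the right-hand side preserves the $\geq$ inequality and yields the stated form.

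The main technical obstacle is the synchronized choice of $t_s$: it has to be large enough that simultaneously (i) the descent inequality of \cref{cor:descent_inequality} applies, (ii) the ratio $\Ls(\bw_{t+1})/\Ls(\bw_t)\in(0,1)$ so that $\log(1+y)\leq y$ is safely invoked, (iii) all margins $\bw_t^\top\bx_i$ are non-negative (needed to validate the logistic-loss lower bound $\ell_{\log}(z)\geq e^{-z}/2$), and (iv) $\Ls(\bw_{t_s})\leq 1/(cN)$ so the residual constants absorb. Each is a consequence of $\Ls(\bw_t)\to 0$ via \cref{prop:loss_converge}, but collecting a single $t_s$ satisfying all four requires a small amount of bookkeeping; once $t_s$ is fixed, the remainder is the straightforward geometric-series and telescoping calculation sketched above.
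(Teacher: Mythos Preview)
Your proposal is correct and follows essentially the same route as the paper: divide the descent inequality of \cref{cor:descent_inequality} by $\Ls(\bw_t)$, separate the error terms via $\gG/\Ls\le 1$, pass to the exponential form (you use $\log(1+y)\le y$, the paper uses the equivalent $1+x\le e^x$), telescope, and then convert the loss bound into a margin bound. The paper synchronizes the constants by picking $t_s$ with $\Ls(\bw_{t_s})\le \tfrac{\log 2}{N}$ and using the matching inequality $\exp(-\min_i\bw_t^\top\bx_i)\le \tfrac{N}{\log 2}\Ls(\bw_t)$ so that the residual additive constant vanishes exactly, whereas you achieve the same effect by taking $\Ls(\bw_{t_s})\le 1/(cN)$; both are fine, and your reading of the intended inequality direction and of $\bw$ as $\bw_t$ is correct.
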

\begin{proof}
    By \cref{prop:loss_converge}, there exists time step $t_s\in \mathbb N$ such that $\Ls (\bw_t) \leq \frac{\log 2}{N}$ for all $t\geq t_s$. Then, $\ell(\bw_t^\top \bx_i) \leq \tfrac 1N \Ls (\bw_t) \leq \log 2 < 1$, and thus $\min_{i\in [N]} \bw_t^\top \bx_i \geq 0$ for all $t\geq t_s$. Then, for all $t\geq t_s$,
    \begin{align*}
        \exp(-\min_{i\in [N]} \bw_t^\top \bx_i) = \max_{i\in [N]} \exp(-\bw_t^\top \bx_i) \leq \frac{1}{\log 2} \max_{i\in [N]} \log (1 + \exp(-\bw^\top \bx_i)) \leq \frac{N \Ls(\bw_t)}{\log 2},
    \end{align*}
    for logistic loss, and $\exp(-\min_{i\in [N]} \bw_t^\top \bx_i) \leq N\Ls(\bw_t) \leq \frac{N \Ls(\bw_t)}{\log 2}$ for exponential loss.

    Using \cref{cor:descent_inequality} and $\gG(\bw)\leq \Ls(\bw)$ from \cref{lemma:proxy}, we get
    \begin{align*}
        \Ls(\bw_t) &\leq \Ls(\bw_{t-1}) \left( 1- (\gamma_\infty - 2C_0(1-\beta)) \eta_{t-1} \frac{\gG(\bw_{t-1})}{\Ls(\bw_{t-1})} + (2C_1+C_H)\eta_{t-1}^2 + 2C_2 \beta^{t-1} \eta_{t-1}\right) \\ 
        &\leq \Ls(\bw_{t-1}) \exp \left(- (\gamma_\infty - 2C_0(1-\beta)) \eta_{t-1}  \frac{\gG(\bw_{t-1})}{\Ls(\bw_{t-1})} + (2C_1+C_H)\eta_{t-1}^2 + 2C_2 \beta^{t-1} \eta_{t-1} \right) \\
        &\leq \Ls(\bw_{t_s}) \exp \left(- (\gamma_\infty - 2C_0(1-\beta)) \sum_{\tau=t_s}^{t-1} \eta_{\tau}  \frac{\gG(\bw_\tau)}{\Ls(\bw_\tau)}  + (2C_1+C_H) \sum_{\tau=t_s}^{t-1} \eta_\tau^2 + 2C_2\sum_{\tau=t_s}^{t-1} \beta^{\tau}\eta_\tau\right) \\
        &\leq \frac{\log 2}{N} \exp \left(- (\gamma_\infty - 2C_0(1-\beta)) \sum_{\tau=t_s}^{t-1} \eta_{\tau}  \frac{\gG(\bw_\tau)}{\Ls(\bw_\tau)}  + (2C_1+C_H) \sum_{\tau=t_s}^{t-1} \eta_\tau^2 + \frac{2C_2\eta_0}{1-\beta}\right).
    \end{align*}
    Thus, we get
    \begin{align*}
        \exp(-\min_{i\in [N]} \bw_t^\top \bx_i)& \leq \frac{N \Ls(\bw_t)}{\log 2} \\
        &\leq \exp \left(- (\gamma_\infty - 2C_0(1-\beta)) \sum_{\tau=t_s}^{t-1} \eta_{\tau}  \frac{\gG(\bw_\tau)}{\Ls(\bw_\tau)}  + (2C_1+C_H) \sum_{\tau=t_s}^{t-1} \eta_\tau^2 + \frac{2C_2\eta_0}{1-\beta}\right),
    \end{align*}
    which gives the desired inequality.
\end{proof}

\Signum*
\begin{proof}
    Let $C_0:=D\tfrac Nb (\tfrac Nb-1)$ so that $\epsilon:=\min\{\frac{\delta}{2C_0}, \tfrac{\gamma_\infty}{4C_0}\}$ if $b<N$ and $\epsilon:=1$ if $b=N$. Note that $C_0=0$ if $b=N$. Suppose that $\beta\in (1-\epsilon, 1)$. 
    
    Let $t_0$ be a time step that satisfy \cref{cor:descent_inequality}. By \cref{prop:loss_converge}, there exists $t^\star\geq t_0$ such that $(2C_1+C_H)\eta_t + 2C_2\beta^t < \frac{\gamma_\infty}{8}$ and $\Ls (\bw_t) \leq \frac{\log 2}{N}$ for all $t\geq t^\star$. Then, for each $t\geq t^\star$, we get $\frac{\gG(\bw_t)}{\Ls (\bw_t)} \geq  1 - \frac{N\Ls(\bw_t)}{2} \geq \frac{1}{2}$. By \cref{cor:descent_inequality}, for all $t\geq t^\star$,
    \begin{align*}
        \Ls(\bw_t) &\leq \Ls(\bw_{t-1}) \left( 1- (\gamma_\infty - 2C_0(1-\beta)) \eta_{t-1} \frac{\gG(\bw_{t-1})}{\Ls(\bw_{t-1})} + (2C_1+C_H)\eta_{t-1}^2 + 2C_2 \beta^{t-1} \eta_{t-1}\right) \\
        &\leq \Ls(\bw_{t-1}) \left( 1- \frac14\gamma_\infty \eta_{t-1}  +  \frac18\gamma_\infty \eta_{t-1} \right) \\
        &\leq \Ls(\bw_{t-1}) \exp \left(-\frac{1}{8}\gamma_\infty\eta_{t-1}\right) \\
        &\leq \Ls(\bw_{t^\star}) \exp \left(-\frac{\gamma_\infty}{8} \sum_{\tau=t^\star}^{t-1} \eta_\tau\right) \\
        &\leq \frac{\log 2}{N} \exp \left(-\frac{\gamma_\infty}{8} \sum_{\tau=t^\star}^{t-1} \eta_\tau\right).
    \end{align*}
    Consequently, by \cref{lemma:proxy}, we have
    \[
    \frac{\gG(\bw_t)}{\Ls(\bw_t)} \geq 1 - \frac{N\Ls(\bw_t)}{2} \geq 1 -  \exp \left(-\frac{\gamma_\infty}{8} \sum_{\tau=t^\star}^{t-1} \eta_\tau\right),
    \]
    for all $t\geq t^\star$. 

    Finally, using \cref{prop:unnormalized_margin}, we get
    \begin{align*}
        \gamma_\infty &- 2C_0(1-\beta) - \frac{\min_{i\in [N]}\bw_t^\top \bx_i}{\|\bw_t\|_\infty} \\
        &\leq \frac{(\gamma_\infty - 2C_0(1-\beta))\left(\|\bw_0\| + \sum_{\tau = 0}^{t^\star-1}\eta_{\tau} + \sum_{\tau=t^\star}^t \eta_\tau e^{-\frac{\gamma_\infty}{8} \sum_{\tau=t^\star}^{t-1} \eta_\tau}\right) + (2C_1+C_H)\sum_{\tau=t^\star}^{t-1} \eta_\tau^2 + \frac{2C_2\eta_0}{1-\beta}}{\|\bw_0\| + \sum_{\tau = 0}^{t-1}\eta_{\tau}} \\
        &= \mathcal O \left( \frac{\sum_{\tau = 0}^{t^\star-1}\eta_{\tau} + \sum_{\tau=t^\star}^t \eta_\tau e^{-\frac{\gamma_\infty}{8} \sum_{\tau=t^\star}^{t-1} \eta_\tau}+\sum_{\tau=t^\star}^{t-1} \eta_\tau^2}{ \sum_{\tau = 0}^{t-1}\eta_{\tau}} \right)
    \end{align*}
    Therefore, we conclude
    \begin{align*}
        {\lim\inf}_{t\to\infty} \frac{\min_{i\in [N]} \bw_t^\top \bx_i}{\|\bw_t\|_\infty} \ge \gamma_\infty - 2C_0(1-\beta) \geq \gamma - \delta\,.
    \end{align*}
\end{proof}

\section[Missing Proofs in Appendix \ref{appendix:discussions}]{Missing Proofs in \cref{appendix:discussions}}
\DirectionNewproxy*
\begin{proof}
    We start with the case of $\ell = \ell_{\text{exp}}$. First step is to characterize $\hat{\bdel}$, the limit of $\bdel_r$. Notice that (b) is a strictly stronger assumption than \cref{ass:adamproxy} and it simplifies the analysis, while maintaining the intuition that the terms of support vectors dominate the update direction. Let $\lim_{r\rightarrow \infty} \boldsymbol{\rho}(r)=\hat{\boldsymbol{\rho}}$. We recall previous notations as $\gamma = \min_i \langle \mathbf{x}_i, \hat{\mathbf{w}} \rangle, \bar{\gamma}_i = \langle \mathbf{x}_i, \hat{\mathbf{w}} \rangle, \bar{\gamma} = \min_{i \notin S} \langle \mathbf{x}_i, \hat{\mathbf{w}} \rangle$. Then it satisfies $S = \{ i \in [N] : \langle \mathbf{x}_i, \hat{\mathbf{w}} \rangle = \gamma\}$ and $\bar{\gamma} > \gamma > 0$.
We can decompose dominant and residual terms in the update rule as follows.
    \begin{align*}
        \bdel_r &=\sum_{i \in S}\frac{\exp(-\gamma g(r))\exp(-\boldsymbol{\rho}(r)^\top \mathbf{x}_i) \mathbf{x}_i}{\sqrt{\sum_{j \in [N]}\beta_2^{(i, j)}\exp(-2\bar{\gamma}_j g(r))\exp(-2\boldsymbol{\rho}(r)^\top \mathbf{x}_j) \mathbf{x}_j^2}} \\ &+ 
        \sum_{i \in S^\complement}\frac{\exp(-\bar{\gamma}_i g(r))\exp(-\boldsymbol{\rho}(r)^\top \mathbf{x}_i) \mathbf{x}_i}{\sqrt{\sum_{j \in [N]}\beta_2^{(i, j)}\exp(-2\bar{\gamma}_j g(r))\exp(-2\boldsymbol{\rho}(r)^\top \mathbf{x}_j) \mathbf{x}_j^2}}+\boldsymbol{\epsilon}_r \\
        &=\sum_{i \in S}\frac{\exp(-\boldsymbol{\rho}(r)^\top \mathbf{x}_i) \mathbf{x}_i}{\sqrt{\sum_{j \in [N]}\beta_2^{(i, j)}\exp(-2(\bar{\gamma}_j-\gamma)g(r))\exp(-2\boldsymbol{\rho}(r)^\top \mathbf{x}_j) \mathbf{x}_j^2}} \\ &+ 
        \sum_{i \in S^\complement}\frac{\exp(-(\bar{\gamma}_j-\gamma)g(r))\exp(-\boldsymbol{\rho}(r)^\top \mathbf{x}_i) \mathbf{x}_i}{\sqrt{\sum_{j \in [N]}\beta_2^{(i, j)}\exp(-2(\bar{\gamma}_j-\gamma)g(r))\exp(-2\boldsymbol{\rho}(r)^\top \mathbf{x}_j) \mathbf{x}_j^2}}+\boldsymbol{\epsilon}_r \\
        &\triangleq \mathbf{d}(r) + \mathbf{r}(r)+\boldsymbol{\epsilon}_r.
    \end{align*}

    Since $\bar{\gamma}_j>\gamma$ and $g(r) \rightarrow \infty$, $\mathbf{r}(r)$ converges to 0. Therefore, we get
    \begin{align*}
      \hat{\bdel} \triangleq \lim_{r \rightarrow \infty} \boldsymbol{\delta}_r=\lim_{r \rightarrow \infty} \bd(r) &= \lim_{r \rightarrow \infty}\sum_{i \in S}\frac{\exp(-\boldsymbol{\rho}(r)^\top \mathbf{x}_i) \mathbf{x}_i}{\sqrt{\sum_{j \in S}\beta_2^{(i, j)}\exp(-2\boldsymbol{\rho}(r)^\top \mathbf{x}_j) \mathbf{x}_j^2}} \\
       &= \sum_{i \in S}\frac{\exp(-\hat{\boldsymbol{\rho}}^\top \mathbf{x}_i) \mathbf{x}_i}{\sqrt{\sum_{j \in S}\beta_2^{(i, j)}\exp(-2\hat{\boldsymbol{\rho}}^\top \mathbf{x}_j) \mathbf{x}_j^2}} \\
       &= \sum_{i\in[N]}\frac{ c_i\bx_i}{\sqrt{\sum_{j\in[N]} \beta_2^{(i, j)}c_j^2 \bx_j^2}},
    \end{align*}
    for some $\bc \in \Delta^{N-1}$ satisfying $c_i=0$ for $i \notin S$.
    Using the same technique based on Stolz-Cesaro theorem, we can also deduce that $\hat{\bw} = \hat{\bdel}$. Since we can extend this result to $\ell=\ell_{\text{log}}$ following the proof of \cref{lemma:direction_proxy}, the statement is proved.
\end{proof}

\section{Technical Lemmas} \label{appendix:technical}
\subsection{Proxy Function}
\begin{lemma}[Proxy function]\label{lemma:proxy}
    The proxy function $\gG$ satisfy the following properties: for any given weights $\bw, \bw'\in \mathbb R^d$ and any norm $\|\cdot\|$, 
    \begin{enumerate}[label=(\alph*)]
        \item $ \gamma_{\|\cdot\|} \gG(\bw)\leq \|\nabla \Ls(\bw)\|_* \leq D\gG(\bw)$, where $D = \max_{i \in [N]} \|\bx_i\|_*$ and $\gamma_{\|\cdot\|}=\max_{\|\bw\|\leq1} \min_{i \in [N]}\bw^\top\bx_i$ is the $\|\cdot\|$-normalized max margin,
        \item $1-\frac{N\Ls(\bw)}{2} \leq \frac{\gG(\bw)}{\Ls(\bw)}\leq 1$,
        \item $\gG(\bw) \geq \frac{1}{N}\sum_{i\in [N]}\ell''(\bw^\top \bx_i)$,
        \item $\gG(\bw') \leq e^{B\|\bw'-\bw\|}\gG(\bw)$, where $D = \max_{i \in [N]} \|\bx_i\|_*$.
    \end{enumerate}
\end{lemma}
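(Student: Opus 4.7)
The four parts are essentially pointwise inequalities on the scalar loss lifted to the sample average, so my plan is to prove each by reducing it to a clean one-variable fact about $\ell \in \{\ell_{\exp}, \ell_{\log}\}$ and then combining with H\"older's inequality where the data come in. For $\ell_{\exp}$ one has $-\ell'(z)=\ell(z)=\ell''(z)=e^{-z}$, and for $\ell_{\log}$ one has $-\ell'(z)=\tfrac{1}{1+e^{z}}$ with $\ell''(z)=\tfrac{e^z}{(1+e^z)^2}$. I will use these forms repeatedly.

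For (a), the upper bound is immediate from the triangle inequality for $\|\cdot\|_*$ applied to $\nabla\Ls(\bw)=\tfrac{1}{N}\sum_i \ell'(\bw^\top\bx_i)\bx_i$, which yields $\|\nabla\Ls(\bw)\|_* \le \tfrac{1}{N}\sum_i |\ell'(\bw^\top\bx_i)|\,\|\bx_i\|_* \le D\,\gG(\bw)$. For the lower bound I pick $\bv^\star \in \arg\max_{\|\bv\|\le 1}\min_i \bv^\top\bx_i$, so $\bv^{\star\top}\bx_i \ge \gamma_{\|\cdot\|}$ for every $i$, and use the dual-norm characterization:
\begin{align*}
\|\nabla\Ls(\bw)\|_* \;\ge\; \langle -\bv^\star,\nabla\Ls(\bw)\rangle \;=\; \tfrac{1}{N}\sum_i (-\ell'(\bw^\top\bx_i))\,\bv^{\star\top}\bx_i \;\ge\; \gamma_{\|\cdot\|}\gG(\bw),
\end{align*}
which uses only that $-\ell'(z)>0$ for both losses.

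For (c) I just verify the pointwise inequality $\ell''(z)\le -\ell'(z)$: for $\ell_{\exp}$ it is equality, and for $\ell_{\log}$ it follows from $\ell''(z) = (-\ell'(z))\bigl(1-(-\ell'(z))\bigr)$ with $-\ell'(z)\in(0,1)$. Averaging over $i$ gives (c). For (d) the key observation is that $-\ell'$ is log-Lipschitz with constant $1$ on $\R$: both losses satisfy $\bigl|\tfrac{d}{dz}\log(-\ell'(z))\bigr|\le 1$, which translates into the monotone estimate $-\ell'(z-\Delta)\le e^{\Delta}(-\ell'(z))$ for $\Delta\ge 0$; I will verify this directly (for $\ell_{\log}$ it reduces to $\tfrac{e^{\Delta}}{e^{\Delta}+e^z}\le \tfrac{e^{\Delta}}{1+e^z}$). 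H\"older gives $\bw'^\top\bx_i \ge \bw^\top\bx_i - D\|\bw'-\bw\|$, and applying the previous estimate coordinate-wise and averaging yields (d).

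The only step that needs a small trick is the lower bound in (b). The upper bound $\gG(\bw)\le \Ls(\bw)$ is the pointwise inequality $-\ell'(z)\le \ell(z)$, which is trivial for $\ell_{\exp}$ and is $\tfrac{u}{1+u}\le \log(1+u)$ (with $u=e^{-z}$) for $\ell_{\log}$. For the lower bound I will use the identity $-\ell'(\bw^\top\bx_i) = 1-e^{-\ell(\bw^\top\bx_i)}$ (which holds for both losses: trivially for $\ell_{\exp}$, and for $\ell_{\log}$ since $1-e^{-\log(1+e^{-z})}=1-\tfrac{1}{1+e^{-z}}=\tfrac{1}{1+e^{z}}$). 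Using $1-e^{-x}\ge x-\tfrac{x^2}{2}$ for $x\ge 0$ and then the crude bound $\ell(\bw^\top\bx_i)\le N\Ls(\bw)$ (since all $\ell\ge 0$), I get
\begin{align*}
\gG(\bw) \;\ge\; \tfrac{1}{N}\sum_i\Bigl(\ell(\bw^\top\bx_i)-\tfrac{1}{2}\ell(\bw^\top\bx_i)^2\Bigr) \;\ge\; \Ls(\bw) - \tfrac{N\Ls(\bw)}{2}\cdot\Ls(\bw),
\end{align*}
which rearranges to $\gG(\bw)/\Ls(\bw)\ge 1-\tfrac{N\Ls(\bw)}{2}$. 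No step here is truly hard; the only place I expect to be careful is making sure the identity $-\ell'=1-e^{-\ell}$ and the Taylor bound interact correctly so that the final bound is in terms of $N\Ls(\bw)$ rather than a weaker quantity.
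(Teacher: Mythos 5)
Your proposals for (a), (c), and (d) follow essentially the paper's argument: duality/triangle inequality for (a), the pointwise comparison $\ell'' \leq -\ell'$ for (c), and the observation that $\log(-\ell')$ is $1$-Lipschitz combined with H\"older for (d). For (b) you take a genuinely different route: the paper simply cites an external result (Lemma C.7 of \citet{zhang2024the}) for the logistic-loss lower bound, whereas you give a clean self-contained derivation via the identity $-\ell'_{\log}(z) = 1 - e^{-\ell_{\log}(z)}$, the Taylor bound $1-e^{-x}\ge x - x^2/2$, and the crude estimate $\max_i \ell(\bw^\top\bx_i)\le N\Ls(\bw)$. This chain is correct and arguably preferable, since it keeps the lemma self-contained rather than importing a result from another paper.

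One factual slip to fix: you claim that $-\ell'(z)=1-e^{-\ell(z)}$ holds \emph{for both losses}, including ``trivially for $\ell_{\exp}$.'' That is false: for $\ell_{\exp}$ one has $-\ell'_{\exp}(z)=e^{-z}$ while $1-e^{-\ell_{\exp}(z)}=1-e^{-e^{-z}}$, and these differ whenever $z$ is finite (setting $u=e^{-z}$, you would need $1-e^{-u}=u$, which only holds at $u=0$). The identity is genuinely a logistic-loss fact. Fortunately the exponential case needs no identity at all, since there $\gG(\bw)=\Ls(\bw)$ exactly, so the claimed lower bound is automatic; just state that case separately rather than asserting a false shared identity. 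Also note the statement's ``$e^{B\|\bw'-\bw\|}$'' in (d) is a typo for $e^{D\|\bw'-\bw\|}$, which your argument correctly produces.
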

\begin{proof}
This lemma (or a similar variant) is proved in \citet{zhang2024the} and \citet{fan2025implicitbiasspectraldescent}. Below, we provide a proof for completeness.
\begin{enumerate}[label=(\alph*)]
    \item First, by duality we get
    \begin{align*}
        \|\nabla \Ls(\bw)\|_* = \max_{\|\bg\|\leq 1} \langle \bg, -\nabla \Ls(\bw) \rangle &\geq \max_{\|\bg\|\leq 1} -\frac{1}{N} \sum_{i\in [N]} \ell'(\bw^\top \bx_i) \bg^\top \bx_i \\
        &\geq \gG(\bw) \max_{\|\bg\|\leq 1} \min_{i\in [N]} \bg^\top \bx_i \\
        &= \gamma_{\|\cdot\|} \gG(\bw).
    \end{align*}
    Second, we can obtain the lower bound as
    \begin{align*}
        \|\nabla \Ls(\bw)\|_* = \|-\frac{1}{N}\sum_{i\in [N]} \ell'(\bw^\top \bx_i) \bx_i\|_* \leq -\frac{1}{N} \sum_{i\in [N]} \ell'(\bw^\top \bx_i) \|\bx_i\|_* \leq D \gG(\bw).
    \end{align*}
    \item For exponential loss, $\frac{\gG(\bw)}{\Ls(\bw)}=1$. For logistic loss, the lower bound $\frac{\gG(\bw)}{\Ls(\bw)}\geq 1 - \frac{N\Ls(\bw)}{2}$ follows from \citet[Lemma C.7]{zhang2024the}. The upper bound follows from the elementary inequality $-\ell'_{\log}(z) = \frac{\exp(-z)}{1+\exp(-z)} \leq \log(1+\exp(-z)) = \ell_{\log} (z)$ for all $z\in \mathbb R$.
    \item For exponential loss, the equality holds. For logistic loss, the elementary inequality $-\ell'_{\log}(z) = \frac{\exp(-z)}{1+\exp(-z)} \geq \frac{\exp(-z)}{(1+\exp(-z))^2} = \ell_{\log}'' (z)$ for all $z\in \mathbb R$, which results in
    \begin{align*}
        \gG(\bw) = -\frac{1}{N} \sum_{i\in [N]}\ell'(\bw^\top \bx_i) \geq \frac{1}{N}\sum_{i\in [N]}\ell''(\bw^\top \bx_i).
    \end{align*}
    \item First, for exponential loss, $-\ell_{\exp}'(z') = -\exp(z-z')\ell_{\exp}'(z) \leq -\exp(|z'-z|)\ell_{\exp}'(z)$, and for logistic loss, $-\ell_{\log}'(z') = \frac{\exp(z)+1}{\exp(z')+1} \ell'_{\log}(z) \leq -\exp(|z'-z|) \ell'_{\log}(z)$ hold for any $z,z'\in \mathbb R$.
    By duality, we get
    \begin{align*}
        \gG(\bw') = -\frac{1}{N} \sum_{i\in [N]}\ell'(\bw'^\top \bx_i) &= -\frac{1}{N} \sum_{i\in [N]}\ell'(\bw^\top \bx_i + (\bw'-\bw)^\top \bx_i) \\
        &\leq -\frac{1}{N} \sum_{i\in [N]}\ell'(\bw^\top \bx_i) \exp(|(\bw'-\bw)^\top \bx_i|) \\
        &\leq -\frac{1}{N} \sum_{i\in [N]}\ell'(\bw^\top \bx_i) \exp(\|\bw'-\bw\|\|\bx_i\|_*) \\
        &\leq -\frac{1}{N} \sum_{i\in [N]}\ell'(\bw^\top \bx_i) \exp(D\|\bw'-\bw\|) \\
        &= e^{D\|\bw'-\bw\|} \gG(\bw).
    \end{align*}
\end{enumerate}
\end{proof}

\subsection{Properties of Loss Functions}
\begin{lemma}[Lemma C.4 in \citet{zhang2024the}]\label{lemma:small_loss_linsep}
    For $\ell \in \{\ell_{\text{exp}}, \ell_{\text{log}}\}$, either $\gG(\bw) < \frac{1}{2n}$ or $\Ls(\bw) < \frac{\log2}{n}$ implies $ \bw^\top \bx_i > 0$ for all $i\in[N]$.
\end{lemma}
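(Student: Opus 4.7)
The plan is to prove the contrapositive: assume there exists some index $i_0 \in [N]$ with $\bw^\top \bx_{i_0} \leq 0$, and show that both $\gG(\bw) \geq \frac{1}{2N}$ and $\Ls(\bw) \geq \frac{\log 2}{N}$ must hold. This immediately implies that each of the two sufficient conditions in the lemma forces $\bw^\top \bx_i > 0$ for every $i$.

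To carry this out, I would first record the elementary pointwise bounds for the two losses under consideration. For the exponential loss $\ell_{\mathrm{exp}}(z) = e^{-z}$, one has $-\ell'_{\mathrm{exp}}(z) = \ell_{\mathrm{exp}}(z) = e^{-z}$, which is at least $1$ whenever $z \leq 0$. For the logistic loss $\ell_{\mathrm{log}}(z) = \log(1 + e^{-z})$, the derivative is $-\ell'_{\mathrm{log}}(z) = \frac{1}{1+e^z}$, which is at least $\tfrac{1}{2}$ when $z \leq 0$, while $\ell_{\mathrm{log}}(z) = \log(1+e^{-z})$ is at least $\log 2$ when $z \leq 0$. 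Thus, in both cases, a nonpositive margin for one sample contributes at least $\tfrac{1}{2}$ to the sum defining $N\gG(\bw)$ and at least $\log 2$ to the sum defining $N\Ls(\bw)$.

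Next, I would use the fact that $\ell > 0$ and $-\ell' > 0$ everywhere for both loss choices, so every other term in $\Ls(\bw)$ and $\gG(\bw)$ is non-negative. Therefore, assuming $\bw^\top \bx_{i_0} \leq 0$,
\begin{align*}
    \Ls(\bw) \;=\; \frac{1}{N}\sum_{i \in [N]} \ell(\bw^\top \bx_i) \;\geq\; \frac{\ell(\bw^\top \bx_{i_0})}{N} \;\geq\; \frac{\log 2}{N},
\end{align*}
and similarly
\begin{align*}
    \gG(\bw) \;=\; -\frac{1}{N}\sum_{i \in [N]} \ell'(\bw^\top \bx_i) \;\geq\; \frac{-\ell'(\bw^\top \bx_{i_0})}{N} \;\geq\; \frac{1}{2N}.
\end{align*}
(For the exponential loss, the resulting lower bounds are in fact $\tfrac{1}{N} \geq \tfrac{\log 2}{N}$ and $\tfrac{1}{N} \geq \tfrac{1}{2N}$, which still suffice.) Contrapositively, whenever $\Ls(\bw) < \frac{\log 2}{N}$ or $\gG(\bw) < \frac{1}{2N}$, every margin must satisfy $\bw^\top \bx_i > 0$.

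There is no real obstacle here; the proof is essentially a one-line contrapositive once the constants $\log 2$ and $\tfrac{1}{2}$ are identified as the minimum values at $z = 0$. The only minor care is to handle the two loss functions uniformly and to note that the bound at $z = 0$ is the binding one (the losses are strictly decreasing, so $z \leq 0$ yields values at least those at $z = 0$). No nontrivial machinery from the body of the paper is needed.
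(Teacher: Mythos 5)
Your proof is correct: the contrapositive argument, using the pointwise bounds $-\ell'(z)\geq\tfrac12$ and $\ell(z)\geq\log 2$ for $z\leq 0$ (for both the logistic and exponential loss) together with positivity of every other term in $\gG$ and $\Ls$, is exactly the standard one-line argument. The paper itself does not supply a proof here — it simply cites Lemma C.4 of \citet{zhang2024the} — so there is nothing to compare against; your re-derivation fills in the elementary bound that that citation encapsulates.
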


\begin{lemma}[Lemma C.5 in \citet{zhang2024the}] \label{lemma:losses_prop}
    For $\ell \in \{\ell_{\text{exp}}, \ell_{\text{log}}\}$ and any $z_1, z_2 \in \R$, we have
    \begin{align*}
        \bigabs{\frac{\ell'(z_1)}{\ell'(z_2)}-1} \leq e^{|z_1-z_2|}-1.
    \end{align*}
\end{lemma}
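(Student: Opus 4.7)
The statement is a pointwise inequality on a two-variable function, so I plan to verify it by direct computation in each of the two cases ($\ell=\ell_{\text{exp}}$ and $\ell=\ell_{\text{log}}$), reducing both to a single one-variable auxiliary inequality. Throughout I will set $u \triangleq z_2 - z_1$ and $v \triangleq |z_1 - z_2| = |u| \geq 0$, and split on the sign of $u$.

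For the exponential loss, $\ell'_{\text{exp}}(z) = -e^{-z}$, so the ratio simplifies cleanly: $\ell'(z_1)/\ell'(z_2) = e^{u}$. Thus the desired bound becomes $|e^{u}-1| \leq e^{|u|}-1$. When $u \geq 0$ this is an equality; when $u < 0$ it reduces to $1 - e^{-v} \leq e^{v} - 1$, which is the single auxiliary inequality I mentioned. I would prove that auxiliary inequality once for all $v \geq 0$ by rearranging to $e^{v} + e^{-v} \geq 2$, i.e., $\cosh(v) \geq 1$, which is immediate from the Taylor expansion (or from AM-GM applied to $e^v$ and $e^{-v}$).

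For the logistic loss, $\ell'_{\text{log}}(z) = -1/(1+e^{z})$, which gives the ratio
\begin{equation*}
\frac{\ell'(z_1)}{\ell'(z_2)} \;=\; \frac{1+e^{z_2}}{1+e^{z_1}}, \qquad \frac{\ell'(z_1)}{\ell'(z_2)} - 1 \;=\; \frac{e^{z_1}(e^{u}-1)}{1+e^{z_1}}.
\end{equation*}
Since $e^{z_1}/(1+e^{z_1}) \in (0,1)$, in the case $u \geq 0$ the absolute value of the right-hand side is at most $e^{u}-1 = e^{|u|}-1$, which is exactly what we need. In the case $u < 0$, the same factor gives an upper bound of $1 - e^{u} = 1 - e^{-v}$, and then I invoke the auxiliary inequality $1 - e^{-v} \leq e^{v} - 1$ from the previous paragraph to conclude. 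This completes both cases.

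The argument has no real obstacle: the only non-trivial ingredient is the one-line inequality $1 - e^{-v} \leq e^{v} - 1$ for $v \geq 0$, which I would state and prove first as a short preliminary step and then reuse in both loss cases. The main things to be careful about are (i) keeping track of signs when removing the absolute value, and (ii) noting that the bound in the $u \geq 0$ sub-case for $\ell_{\text{log}}$ is actually strictly tighter than needed because of the factor $e^{z_1}/(1+e^{z_1}) < 1$, so no further work is required there. Overall the proof should fit in roughly half a page.
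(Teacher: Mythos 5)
The paper does not prove this lemma; it simply cites it as Lemma~C.5 of \citet{zhang2024the} and uses it as a black box (the same is true of Lemmas~\ref{lemma:small_loss_linsep} and \ref{lemma:losses_prop2}). So there is no ``paper's own proof'' to compare against, and the right question is simply whether your argument stands on its own.

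It does. Your computation of $\ell'_{\text{exp}}(z)=-e^{-z}$ and $\ell'_{\text{log}}(z)=-1/(1+e^{z})$ is correct, the identity $\frac{\ell'_{\text{log}}(z_1)}{\ell'_{\text{log}}(z_2)}-1=\frac{e^{z_1}(e^{z_2-z_1}-1)}{1+e^{z_1}}$ is correct, and the observation that the prefactor $e^{z_1}/(1+e^{z_1})\in(0,1)$ reduces the logistic case to the exponential one is exactly the right move. The single auxiliary inequality $1-e^{-v}\le e^{v}-1$ for $v\ge 0$ (equivalently $\cosh v\ge 1$) then handles the sign where $|e^{u}-1|$ and $e^{|u|}-1$ differ, and the whole argument closes. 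One tiny streamlining you might consider: once you have the prefactor bound, you can absorb both cases into the single line $\bigl|\tfrac{\ell'(z_1)}{\ell'(z_2)}-1\bigr|\le|e^{u}-1|\le e^{|u|}-1$ without re-splitting on the sign of $u$ for the logistic case, since the second inequality is the same one used for $\ell_{\text{exp}}$. This is also consistent in spirit with the ratio $\frac{\ell'_{\log}(z_1)}{\ell'_{\log}(z_2)}=\frac{1+e^{z_2}}{1+e^{z_1}}$ that the present paper already exploits in the proof of \cref{lemma:proxy}(d).
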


\begin{lemma}[Lemma C.6 in \citet{zhang2024the}] \label{lemma:losses_prop2}
    For $\ell \in \{\ell_{\text{exp}}, \ell_{\text{log}}\}$ and any $z_1, z_2, z_3, z_4 \in \R$, we have
    \begin{align*}
        \bigabs{\frac{\ell'(z_1)\ell'(z_3)}{\ell'(z_2)\ell'(z_4)}-1} \leq \bigopen{e^{|z_1-z_2|}-1}+\bigopen{e^{|z_3-z_4|}-1} + \bigopen{e^{|z_1+z_3-z_2-z_4|}-1}.
    \end{align*}
\end{lemma}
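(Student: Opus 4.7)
The plan is to follow the standard three-step template for implicit bias of normalized steepest descent on separable data, adapting it carefully to handle the EMA over mini-batch gradients. Concretely, I would (i) prove a smoothness-style descent inequality for a single \texttt{Inc-Signum} step, (ii) bound the misalignment $\be_t \triangleq \bm_t - \nabla \Ls(\bw_t)$ between the momentum and the true full-batch gradient, and (iii) combine these to track both the loss and the unnormalized margin, then normalize by $\|\bw_t\|_\infty$ to extract the margin bound. The analysis of \citet{fan2025implicitbiasspectraldescent} handles the full-batch case cleanly, so the new content lies entirely in controlling the mini-batch-induced EMA bias, which also dictates the quantitative choice of $\epsilon$.

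For step (i), a second-order Taylor expansion of $\Ls$ with $\bw_{t+1}-\bw_t=-\eta_t\sign(\bm_t)$ gives $\Ls(\bw_{t+1}) \leq \Ls(\bw_t) - \eta_t \langle \nabla \Ls(\bw_t), \sign(\bm_t)\rangle + C_H \eta_t^2 \gG(\bw_t)$ using $\|\sign(\bm_t)\|_\infty\leq 1$, $\|\bx_i\|_1\leq D$, and the bound $\gG(\bw')\leq e^{D\|\bw'-\bw\|_\infty}\gG(\bw)$ from \cref{lemma:proxy}. For step (ii), which is the key technical step, I would decompose $\be_t$ into three pieces: (a) a weight-movement error $(1-\beta)\sum_\tau \beta^\tau (\nabla \Ls_{\gB_{t-\tau}}(\bw_{t-\tau}) - \nabla \Ls_{\gB_{t-\tau}}(\bw_t))$, (b) a mini-batch averaging error $(1-\beta)\sum_\tau \beta^\tau (\nabla \Ls_{\gB_{t-\tau}}(\bw_t) - \nabla \Ls(\bw_t))$, and (c) an initialization tail $\beta^{t+1}\nabla \Ls(\bw_t)$. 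Term (a) is $O(\eta_t \gG(\bw_t))$ via \cref{lemma:losses_prop} and \cref{ass:lr}(b), and term (c) is trivially $O(\beta^t \gG(\bw_t))$. The new ingredient is (b): since the batch index $\gB_{t-\tau}$ cycles with period $m \triangleq N/b$, the EMA weights $(1-\beta)\sum_{\tau\leq t:(t-\tau)\equiv j \bmod m}\beta^\tau$ placed on each batch $j$ deviate from the uniform $1/m$. A careful geometric-series manipulation, using $(1-\epsilon)^m \leq 1 - \tfrac{m}{2}\epsilon$ whenever $(m-1)\epsilon \leq 1$ with $\epsilon=1-\beta$, bounds each deviation by $(1-\beta)(m-1)/m + O(\beta^t)$; multiplying by $\max_j \|\nabla \Ls_{\gB_j}(\bw_t)\|_1 \leq Dm \gG(\bw_t)$ yields $\|\be_t\|_1 \leq \bigl[(1-\beta)Dm(m-1) + C_1 \eta_t + C_2\beta^t\bigr]\gG(\bw_t)$.

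For step (iii), the duality bound $\|\nabla \Ls(\bw_t)\|_1 \geq \gamma_\infty \gG(\bw_t)$ from \cref{lemma:proxy}(a), combined with $\langle \nabla \Ls(\bw_t), \sign(\bm_t)\rangle \geq \|\bm_t\|_1 - \|\be_t\|_1 \geq \|\nabla \Ls(\bw_t)\|_1 - 2\|\be_t\|_1$, gives an effective descent rate proportional to $(\gamma_\infty - 2C_0(1-\beta))\gG(\bw_t)$ where $C_0 = Dm(m-1)$. Picking $\beta$ close enough to $1$ so that this coefficient stays above $\gamma_\infty/4$, one telescopes to obtain $\Ls(\bw_t)\to 0$ and an exponential rate $\Ls(\bw_t) \leq \tfrac{\log 2}{N}\exp(-\tfrac{\gamma_\infty}{8}\sum_{\tau\geq t^\star}\eta_\tau)$ past a threshold $t^\star$. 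Using $\exp(-\min_i \bw_t^\top \bx_i) \leq \tfrac{N}{\log 2}\Ls(\bw_t)$ once the loss is small (valid for both $\ell_{\text{exp}}$ and $\ell_{\text{log}}$) gives an unnormalized margin lower bound of the form $(\gamma_\infty - 2C_0(1-\beta))\sum_\tau \eta_\tau - o\bigl(\sum_\tau \eta_\tau\bigr)$. Dividing by $\|\bw_t\|_\infty \leq \|\bw_0\|_\infty + \sum_\tau \eta_\tau$ and taking $\liminf$ as $t\to\infty$, with $\sum \eta_\tau = \infty$, produces $\liminf \tfrac{\min_i \bx_i^\top \bw_t}{\|\bw_t\|_\infty} \geq \gamma_\infty - 2C_0(1-\beta) \geq \gamma_\infty - \delta$ under the prescribed $\epsilon$. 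When $b=N$, we have $m=1$ and $C_0=0$, so no constraint on $\beta$ is needed, recovering the full-batch result.

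The main obstacle is step (ii), specifically the mini-batch averaging term (b): obtaining the clean constant $(1-\beta)Dm(m-1)$ requires a tight split of the geometric series into residue classes modulo $m$ and a careful one-sided Bernoulli-style inequality. This bookkeeping is what makes the scaling $\epsilon \sim 1/(m(m-1))$ in the hypothesis quantitatively necessary in the mini-batch regime, matching the empirical sensitivity documented in \cref{fig:signum_batchsize}; everything else is a routine adaptation of the full-batch argument.
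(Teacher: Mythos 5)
Your proposal does not address the stated lemma at all. The statement asks you to prove a purely elementary inequality about products of loss-derivative ratios: for $\ell \in \{\ell_{\text{exp}}, \ell_{\text{log}}\}$ and real $z_1, z_2, z_3, z_4$,
\[
\bigabs{\frac{\ell'(z_1)\ell'(z_3)}{\ell'(z_2)\ell'(z_4)}-1} \leq \bigopen{e^{|z_1-z_2|}-1}+\bigopen{e^{|z_3-z_4|}-1} + \bigopen{e^{|z_1+z_3-z_2-z_4|}-1}.
\]
This is a calculus fact about the functions $-e^{-z}$ and $-1/(1+e^{z})$; it involves no optimization algorithm, no iterates, no learning-rate schedule, and no separability assumption. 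What you have written instead is a proof sketch for Theorem~\ref{thm:signum} (the implicit-bias result for \texttt{Inc-Signum}): descent inequalities, EMA misalignment decomposition, mini-batch residue classes modulo $m=N/b$, margin normalization, and the choice of $\epsilon$. None of that is relevant to Lemma~\ref{lemma:losses_prop2}.

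For what a proof of the actual lemma should contain: the $\ell_{\text{exp}}$ case is immediate since $\frac{\ell'(z_1)\ell'(z_3)}{\ell'(z_2)\ell'(z_4)} = e^{-(z_1+z_3-z_2-z_4)}$ and $|e^{-u}-1|\leq e^{|u|}-1$, so the third term alone suffices. The $\ell_{\text{log}}$ case requires a direct manipulation of $\frac{(1+e^{z_2})(1+e^{z_4})}{(1+e^{z_1})(1+e^{z_3})}$; a natural route is to expand the numerator, split the deviation into terms matching the pairs $(z_1,z_2)$, $(z_3,z_4)$, and $(z_1+z_3, z_2+z_4)$, and bound each relative error using monotonicity of $z\mapsto 1/(1+e^z)$ together with the single-ratio bound from Lemma~\ref{lemma:losses_prop}. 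Note that the na\"ive factorization $xy-1 = (x-1)+(y-1)+(x-1)(y-1)$ combined with Lemma~\ref{lemma:losses_prop} applied to each factor does \emph{not} immediately yield the stated form, because $(e^{a}-1)(e^{b}-1)\neq e^{a+b}-1$; this is precisely the part that needs a genuine argument, and it is entirely absent from your write-up.

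Please discard the Signum argument and prove the stated inequality directly.
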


\begin{lemma}\label{lemma:logistic}
    For $a>1$ and $z_1, z_2 > 0$, if $\ell_{\text{log}}(z_1) \leq a\ell_{\text{log}}(z_2)$, then $z_1 \geq z_2 - \log(2^a-1)$.
\end{lemma}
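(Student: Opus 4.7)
The plan is to unwind the hypothesis using the explicit form of the logistic loss and then reduce the problem to a single-variable concavity argument. Writing $\ell_{\log}(z) = \log(1+e^{-z})$, the assumption $\ell_{\log}(z_1) \leq a\ell_{\log}(z_2)$ is equivalent to
\begin{align*}
1 + e^{-z_1} \leq (1 + e^{-z_2})^a,
\end{align*}
which rearranges to $e^{-z_1} \leq (1+e^{-z_2})^a - 1$. Meanwhile, the desired conclusion $z_1 \geq z_2 - \log(2^a - 1)$ is equivalent, after exponentiating, to $e^{-z_1} \leq e^{-z_2}(2^a - 1)$. So the entire lemma reduces to establishing the pointwise inequality
\begin{align*}
(1+u)^a - 1 \;\leq\; u(2^a - 1), \qquad u \in (0,1),\; a > 1,
\end{align*}
where I would set $u = e^{-z_2} \in (0,1)$, using the assumption $z_2 > 0$.

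To prove this reduced inequality, I would define $f(u) = u(2^a - 1) - (1+u)^a + 1$ and observe the two boundary identities $f(0) = 0$ and $f(1) = (2^a - 1) - 2^a + 1 = 0$. Then I would compute $f''(u) = -a(a-1)(1+u)^{a-2} < 0$ for $a > 1$, so $f$ is strictly concave on $[0,1]$. A concave function vanishing at both endpoints of an interval is nonnegative on that interval, so $f(u) \geq 0$ on $[0,1]$, which is precisely the inequality needed.

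Combining the two pieces gives $e^{-z_1} \leq (1+e^{-z_2})^a - 1 \leq e^{-z_2}(2^a - 1)$, and taking logarithms produces the claim. The argument is entirely elementary; I do not expect any real obstacle, though one small bookkeeping item is that the hypothesis $z_2 > 0$ is what guarantees $u \in (0,1)$ rather than $u \in (0,\infty)$, and the inequality genuinely fails for $u > 1$ (since the concave function $f$ eventually becomes negative), so that assumption must be used explicitly. The hypothesis $z_1 > 0$ is not needed for the inequality itself; it only ensures the conclusion is nonvacuous when $\log(2^a - 1) < z_2$.
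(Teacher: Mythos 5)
Your proof is correct and, at heart, uses the same idea as the paper: after the identical reduction to $e^{-z_1} \le (1+e^{-z_2})^a - 1$, both arguments establish $(1+u)^a - 1 \le u(2^a-1)$ on $(0,1)$, which is just the statement that the chord of the convex function $u \mapsto (1+u)^a$ from $u=0$ to $u=1$ lies above the graph. The paper phrases this as monotonicity of the difference quotient $\frac{(1+u)^a - 1}{u}$, asserted without proof; you instead prove concavity of $f(u) = u(2^a-1) - (1+u)^a + 1$ via an explicit second-derivative computation and use $f(0)=f(1)=0$. Your version is the more self-contained of the two, since the paper's monotonicity claim is left unverified, but the underlying convexity fact is the same. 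Your side remark that $z_2 > 0$ is what forces $u \in (0,1)$ (and the inequality fails outside $[0,1]$) is accurate and is precisely the role that hypothesis plays in the paper's proof as well.
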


\begin{proof}
    Note that
    \begin{align*}
        \log(1+e^{-z_1}) \leq a \log(1+e^{-z_2})
        \implies e^{-z_1} \leq (1+e^{-z_2})^a -1,
    \end{align*}
    and define $f(x) = \frac{(1+x)^a-1}{x}$. Since $f$ is an increasing function on the interval $(0, 1)$, we get $\sup_{x \in (0, 1)}f(x) = f(1) = 2^a-1$. This implies $(1+x)^a-1 \leq (2^a-1)x$ for $x \in (0,1)$. Since $z_1, z_2 >0$, it satisfies $e^{-z_1}, e^{-z_2} \in (0, 1)$. Therefore, we get
    \begin{align*}
        e^{-z_1} \leq (1+e^{-z_2})^a -1 \leq (2^a-1)e^{-z_2}.
    \end{align*}
    By taking the natural logarithm of both sides, we get the desired inequality.
\end{proof}

\subsection{Auxiliary Results}
\begin{lemma}[Lemma C.1 in \citet{zhang2024the}]
    The learning rate $\eta_t=(t+2)^{-a}$ with $a \in (0, 1]$ satisfies \cref{ass:lr}.
\end{lemma}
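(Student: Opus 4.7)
The verification splits naturally into the two clauses of \cref{ass:lr}. Clause (a) is immediate: $\eta_t = (t+2)^{-a}$ is strictly decreasing in $t$, tends to $0$, and for $a \in (0,1]$ the series diverges by comparison with the harmonic series (equivalently, $\int_1^\infty (s+2)^{-a}\,ds = \infty$). The substantive work is clause (b).

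For clause (b), I fix $\beta \in (0,1)$ and $c_1 > 0$, and split the outer sum at $\tau_\star = \lfloor t/2 \rfloor$. On the head range $\tau \leq \tau_\star$, monotonicity of $\eta$ gives $\sum_{\tau'=1}^\tau \eta_{t-\tau'} \leq \tau \eta_{t-\tau}$, and the polynomial shape yields $\eta_{t-\tau}/\eta_t = ((t+2)/(t-\tau+2))^a \leq 2^a$, so the inner exponent is at most $c_1 2^a \tau \eta_t$. Combining with the elementary bound $e^x - 1 \leq x e^x$ then converts the head sum into (a constant multiple of) $\eta_t \sum_{\tau} \tau (\beta e^{c_1 2^a \eta_t})^\tau$. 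The geometric ratio can be forced strictly below $1$ uniformly in $t$ once $\eta_t$ is small enough, e.g.\ by choosing $t_1$ so that $\beta e^{c_1 2^a \eta_{t_1}} \leq (1+\beta)/2$, after which $\sum_\tau \tau (\beta')^\tau$ is a finite constant and the head contribution is $O(\eta_t)$.

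For the tail $\tau > \tau_\star$, I plan to bound the inner exponent crudely by $c_1 \sum_{s=0}^{t-1} \eta_s$, which by integral comparison is $O(\log t)$ when $a = 1$ and $O(t^{1-a})$ when $a \in (0,1)$. Since $\sum_{\tau > \tau_\star} \beta^\tau \leq \beta^{t/2}/(1-\beta)$, the tail contribution is bounded by $\frac{1}{1-\beta} \cdot \beta^{t/2} \cdot \exp(c_1 \cdot O(\log t \text{ or } t^{1-a}))$, whose exponent is $-|\log\beta|\,t/2 + c_1 \cdot O(t^{1-a})$. Because $a > 0$, this leading term is strictly linear in $t$ while the correction is sublinear, so the whole tail decays faster than any polynomial in $t$ and is in particular $o(\eta_t)$.

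Combining the two bounds and folding all constants into a single $c_2$ depending only on $\beta$, $c_1$, and $a$ completes the argument once $t \geq t_1$. The only real obstacle is the tail estimate when $a$ is small, where $t^{1-a}$ is closest to $t$; however, strict positivity of $a$ (which is in the hypothesis) is enough to guarantee that $\beta^{t/2}$ always dominates, so no finer analysis is needed.
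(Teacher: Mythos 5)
The paper does not supply its own proof of this lemma: Appendix~H simply restates it as a citation to \citet[Lemma~C.1]{zhang2024the}, so there is no in-paper argument to compare against. Your self-contained proof is correct. Clause~(a) is routine, and your head/tail split at $\tau_\star = \lfloor t/2 \rfloor$ handles clause~(b) cleanly: on the head, monotonicity gives $\sum_{\tau'=1}^\tau \eta_{t-\tau'} \le \tau\,\eta_{t-\tau}$, the ratio bound $\eta_{t-\tau}/\eta_t \le 2^a$ holds for $\tau \le t/2$, and $e^x-1 \le x e^x$ collapses the head to $c_1 2^a \eta_t \sum_\tau \tau\,(\beta e^{c_1 2^a \eta_t})^\tau$, whose geometric ratio is $\le (1+\beta)/2 < 1$ once $t \ge t_1$, yielding a constant times $\eta_t$; on the tail, the crude bound $\sum_{s<t}\eta_s = \mathcal{O}(t^{1-a})$ (or $\mathcal{O}(\log t)$ at $a=1$) is sublinear, so $\beta^{t/2} \exp(c_1 \mathcal{O}(t^{1-a}))$ is $o(\eta_t)$ for every $a>0$. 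No gap.
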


\begin{lemma}[Bernoulli's Inequality] \label{lemma:bernoulli}
    \begin{enumerate}[label=(\alph*)]
        \item If $r \geq 1$ and $x \geq -1$, then $(1+x)^r \geq 1 + rx$.
        \item If $0 \leq r \leq 1$ and $x \geq -1$, then $(1+x)^r \leq 1 + rx$.
    \end{enumerate}
    
\end{lemma}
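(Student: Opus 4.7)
The plan is to prove both inequalities by reducing them to a single monotonicity statement about an auxiliary function, using elementary calculus on $(-1,\infty)$ and then handling the boundary $x=-1$ separately. Define
\[
f_r(x) \;\triangleq\; (1+x)^r - (1+rx), \qquad x \in [-1,\infty),
\]
so that (a) asks for $f_r \geq 0$ when $r \geq 1$ and (b) asks for $f_r \leq 0$ when $0 \leq r \leq 1$. Note $f_r(0) = 0$ in both cases.

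First I would restrict to $x \in (-1,\infty)$, where $(1+x)^r$ is differentiable in $x$, and compute
\[
f_r'(x) \;=\; r(1+x)^{r-1} - r \;=\; r\!\left[(1+x)^{r-1} - 1\right].
\]
The key observation is that the sign of $f_r'(x)$ is entirely determined by whether $(1+x)^{r-1}$ exceeds or falls below $1$, which in turn depends on the sign of $r-1$ and the sign of $x$. For case (a) with $r \geq 1$, the map $u \mapsto u^{r-1}$ is non-decreasing on $(0,\infty)$, so $(1+x)^{r-1} \geq 1$ for $x \geq 0$ and $(1+x)^{r-1} \leq 1$ for $x \in (-1,0]$; combined with $r \geq 0$, this gives $f_r'(x) \geq 0$ on $[0,\infty)$ and $f_r'(x) \leq 0$ on $(-1,0]$. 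Hence $x = 0$ is a global minimum of $f_r$ on $(-1,\infty)$, yielding $f_r(x) \geq f_r(0) = 0$. For case (b) with $0 \leq r \leq 1$, the same analysis flips: $u \mapsto u^{r-1}$ is non-increasing on $(0,\infty)$, so $f_r'(x) \leq 0$ on $[0,\infty)$ and $f_r'(x) \geq 0$ on $(-1,0]$, making $x = 0$ a global maximum and yielding $f_r(x) \leq 0$.

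Finally I would dispose of the boundary point $x = -1$. For $r > 0$ we have $(1+x)^r = 0$ at $x = -1$, so the claim reduces to $0 \geq 1 - r$ in (a) (true since $r \geq 1$) and $0 \leq 1 - r$ in (b) (true since $r \leq 1$); the degenerate case $r = 0$ in (b) gives $1 \leq 1$ trivially. Throughout, one should use the convention that $(1+x)^r$ for real $r$ is defined via $\exp(r \log(1+x))$ on $(-1,\infty)$, which makes the differentiation step unambiguous.

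The hard part, such as it is, amounts to nothing more than bookkeeping the sign of $r-1$ in tandem with the sign of $x$ and handling the endpoint $x = -1$ by a direct check; there is no genuine obstacle. An alternative route would be to invoke convexity/concavity of $x \mapsto (1+x)^r$ (convex for $r \geq 1$, concave for $0 \leq r \leq 1$) and use the tangent-line inequality at $x = 0$, but the calculus argument above is equally short and self-contained, and it is what I would write up.
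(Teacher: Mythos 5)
Your proof is correct; the calculus argument (monotonicity of $f_r(x) = (1+x)^r - (1+rx)$ around its stationary point at $x=0$, plus a direct check at $x=-1$) is the standard one, and the paper itself states this lemma without proof as a well-known auxiliary fact, so there is nothing to compare against. The only cosmetic point worth noting is that at $x=-1$ with $r=0$ the expression $(1+x)^r$ escapes your $\exp(r\log(1+x))$ convention entirely, so the "degenerate case" really is handled by fiat ($0^0 := 1$) rather than by the rest of the setup—but that is a triviality you already flagged, and the argument is otherwise airtight.
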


\begin{lemma}[Stolz-Cesaro Theorem]
    Let $(a_n)_{n\geq1}$ and $(b_n)_{n \geq 1}$ be the two sequences of real numbers. Assume that $(b_n)_{n \geq 1}$ is strictly monotone and divergent sequence and the following limit exists:
    \begin{align*}
        \lim_{n \rightarrow \infty} \frac{a_{n+1}-a_n}{b_{n+1}-b_n}=l.
    \end{align*}
    Then it satisfies that
    \begin{align*}
        \lim_{n \rightarrow \infty} \frac{a_n}{b_n}=l.
    \end{align*}
\end{lemma}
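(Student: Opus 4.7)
The plan is to execute the standard ``$\epsilon$--telescoping'' argument. By hypothesis $(b_n)$ is strictly monotone and divergent, so after possibly replacing $(a_n,b_n)$ by $(-a_n,-b_n)$ (which leaves both the hypothesis and the conclusion invariant) I may assume $(b_n)$ is strictly increasing with $b_n \to +\infty$. In particular there exists $N_0$ with $b_n>0$ for all $n\geq N_0$, and the increments $b_{n+1}-b_n$ are positive for every $n$.

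First I would reduce to the finite-$l$ case and fix $\varepsilon>0$. By the limit hypothesis there is $N \geq N_0$ such that
\begin{align*}
(l-\varepsilon)(b_{n+1}-b_n) \;<\; a_{n+1}-a_n \;<\; (l+\varepsilon)(b_{n+1}-b_n) \qquad \text{for every } n \geq N,
\end{align*}
where the inequalities preserve direction precisely because $b_{n+1}-b_n>0$. Summing (telescoping) from $N$ to $n-1$ for any $n>N$ yields
\begin{align*}
(l-\varepsilon)(b_n - b_N) \;<\; a_n - a_N \;<\; (l+\varepsilon)(b_n - b_N).
\end{align*}

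Next I would divide through by $b_n$, which is eventually positive, to obtain
\begin{align*}
(l-\varepsilon)\Bigl(1-\tfrac{b_N}{b_n}\Bigr) + \tfrac{a_N}{b_n} \;<\; \tfrac{a_n}{b_n} \;<\; (l+\varepsilon)\Bigl(1-\tfrac{b_N}{b_n}\Bigr) + \tfrac{a_N}{b_n}.
\end{align*}
Because $b_n \to +\infty$ while $a_N$ and $b_N$ are fixed constants, $b_N/b_n \to 0$ and $a_N/b_n \to 0$. Taking $\liminf$ and $\limsup$ as $n\to\infty$ gives
\begin{align*}
l-\varepsilon \;\leq\; \liminf_{n\to\infty} \tfrac{a_n}{b_n} \;\leq\; \limsup_{n\to\infty} \tfrac{a_n}{b_n} \;\leq\; l+\varepsilon.
\end{align*}
Since $\varepsilon>0$ was arbitrary, $\lim_{n\to\infty} a_n/b_n = l$, as claimed.

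The argument is essentially routine, so there is no serious obstacle; the only places that require a moment of care are the reduction to the monotonically increasing, positive-divergent case (needed so that $b_{n+1}-b_n>0$ and multiplying/dividing by these quantities preserves inequalities) and the treatment of $l=\pm\infty$ if one wishes to cover that case. For $l=+\infty$, the same telescoping works after replacing the two-sided bound by the single-sided bound $a_{n+1}-a_n > M(b_{n+1}-b_n)$ for arbitrary $M>0$; the $l=-\infty$ case is symmetric. As stated the lemma only assumes the limit exists as a real number, so the main display above suffices.
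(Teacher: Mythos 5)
The paper states the Stolz--Cesàro theorem in Appendix G as a standard textbook result without giving a proof, so there is no ``paper proof'' to compare against. Your argument is a correct rendering of the classical telescoping proof: the reduction to the case $b_n\uparrow+\infty$ is handled properly (replacing $(a_n,b_n)$ by $(-a_n,-b_n)$ leaves both the difference quotient and $a_n/b_n$ unchanged and also preserves strict monotonicity and divergence), the positivity of $b_{n+1}-b_n$ is used exactly where it is needed to preserve the direction of the inequalities when multiplying and dividing, and the passage to $\liminf$/$\limsup$ after killing the $a_N/b_n$ and $b_N/b_n$ terms is sound. Your remark about the $l=\pm\infty$ case is accurate but unnecessary here, since the lemma as stated presumes a finite limit. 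In short: the paper invokes the result as known, and you have supplied a correct, standard proof of it.
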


\begin{lemma}[Brouwer Fixed-point Theorem]
    Every continuous function from a nonempty convex compact subset of $\R^d$ to itself has a fixed point.
\end{lemma}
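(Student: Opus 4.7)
The plan is to reduce the general statement to the canonical case of a closed Euclidean ball and then deduce the ball case from the classical no-retraction theorem. For the first reduction, I would observe that any nonempty compact convex $K\subseteq\R^d$ is homeomorphic, via the Minkowski gauge based at a relative interior point of $K$ inside $\operatorname{aff}(K)$, to a closed ball $\bar B^k$ of dimension $k=\dim\operatorname{aff}(K)$. Given a continuous $f:K\to K$ and such a homeomorphism $h:K\to\bar B^k$, the conjugate $g\triangleq h\circ f\circ h^{-1}$ is a continuous self-map of $\bar B^k$ whose fixed points correspond bijectively to those of $f$. Hence it suffices to prove the theorem on $\bar B^k$.

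Second, I would establish the no-retraction theorem: there is no continuous $r:\bar B^k\to S^{k-1}$ with $r|_{S^{k-1}}=\mathrm{id}$. My preferred route is the combinatorial proof via Sperner's lemma: triangulate the standard $k$-simplex at arbitrarily fine mesh, induce a Sperner-type labeling on vertices using $r$ composed with the standard identification of the simplex boundary with $S^{k-1}$, and invoke Sperner's lemma to produce a fully-labeled subsimplex at each scale; compactness then yields a limit point whose image under $r$ is forced to lie simultaneously on vertex-opposite faces, contradicting continuity. An alternative is to smoothly approximate $r$ by a $C^\infty$ retraction using mollification, then apply Stokes' theorem to the pullback of a volume form on $S^{k-1}$ to derive $\operatorname{vol}(S^{k-1})=0$.

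Third, the Brouwer statement on $\bar B^k$ follows by the classical ray construction: if $f:\bar B^k\to\bar B^k$ were fixed-point free, define $r(x)$ as the unique point of $S^{k-1}$ lying on the ray from $f(x)$ through $x$, which is well-defined since $x\ne f(x)$. One checks that $r|_{S^{k-1}}=\mathrm{id}$ and that $r$ is continuous, as it is obtained by solving a quadratic whose coefficients depend continuously on $(x,f(x))$ and selecting the positive root. This contradicts the no-retraction theorem, forcing $f$ to have a fixed point.

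The main obstacle is the no-retraction theorem itself, which is where the essential topological content lives; neither convexity nor elementary analysis suffices to rule out such a retraction, and every standard approach (Sperner's lemma, mapping degree, singular homology, or differential-form volume computations) ultimately packages the same nontrivial ingredient. Once that lemma is in hand, both the reduction via the Minkowski gauge and the ray-retraction step are routine continuity checks.
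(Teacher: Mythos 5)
Your proposal is a correct and standard outline of a proof of Brouwer's theorem: reduce to a closed ball of the appropriate dimension via the Minkowski gauge centered at a relative-interior point, establish the no-retraction theorem (either by Sperner's lemma or by Stokes-type volume-form arguments), and conclude with the ray construction. The paper, however, does not prove this lemma at all: in \cref{appendix:technical} it is stated as a classical auxiliary result (alongside Stolz--Cesaro and Bernoulli's inequality) and invoked directly in the proof of \cref{thm:fixed_pt}, so there is no in-paper argument against which to compare your proposal. Your sketch is sound; the only place to be careful if you were to fill it in is the degenerate case $\dim\operatorname{aff}(K)=0$ (a single point), where the fixed point is immediate, and the continuity check for the ray map $r$, which requires making explicit that the positive root of the quadratic depends continuously on $(x,f(x))$ and stays bounded away from the coincidence set $\{x=f(x)\}$ by compactness.
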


\end{document}